\pgfplotsset{width=10cm,compat=1.9}
\newenvironment{customthm}[1]
  {\innercustomthm}
  {\endinnercustomthm}
\newenvironment{customcor}[1]
{\innercustomcor}
{\endinnercustomcor}
\newtheorem{theorem}{Theorem}
\newtheorem{lemma}[theorem]{Lemma}
\newtheorem{proposition}[theorem]{Proposition}
\newtheorem{corollary}[theorem]{Corollary}
\newtheorem{definition}[theorem]{Definition}
\DeclareMathOperator*{\argmax}{arg\,max}
\newcommand{\nc}{\newcommand}
\nc{\ra}{\rightarrow}
\nc{\llin}{\ell^{{\rm lin}}}
\nc{\MN}{\mathcal{N}}
\nc{\MH}{\mathcal{H}}
\nc{\lng}{\langle}
\nc{\rng}{\rangle}
\nc{\MV}{\mathcal{V}}
\DeclareMathOperator{\thr}{thr}
\nc{\erma}{\zeta}
\nc{\ep}{\epsilon}
\nc{\MX}{\mathcal{X}}
\nc{\BZ}{\mathbb{Z}}
\nc{\st}{\star}
\nc{\BR}{\mathbb{R}}
\nc{\MC}{\mathcal{C}}
\nc{\MZ}{\mathcal{Z}}
\nc{\close}[3]{\MC_{{#1},{#2}}({#3})}
\nc{\bv}{\mathbf{v}}
\nc{\bz}{\mathbf{z}}
\nc{\bw}{\mathbf{w}}
\nc{\bb}{\mathbf{b}}
\nc{\MA}{\mathcal{A}}
\nc{\ME}{\mathcal{E}}
\nc{\ol}{\overline}
\nc{\YY}{[-1,1]}
\nc{\CLS}{\textbf{Cls}}
\nc{\REG}{\textbf{Reg}}
\nc{\condt}{\ | \ }
\nc{\MI}{\mathcal{I}}
\nc{\MB}{\mathcal{B}}
\renewcommand{\t}{\top}
\DeclareMathOperator{\E}{\mathbb{E}}
\newcommand{\One}{\mathbbm{1}}
\newcommand{\bx}{{\mathbf{x}}}
\nc{\x}{{\mathbf{x}}}
\newcommand{\by}{{\mathbf{y}}}
\newcommand{\cX}{\mathcal{X}}
\newcommand{\rr}{\mathbb{R}}
\newcommand{\pp}{\mathbb{P}}
\newcommand{\ee}{\mathbb{E}}
\newcommand{\R}{\mathcal{R}}
\newcommand{\F}{\mathcal{F}}
\newcommand{\MF}{\mathcal{F}}
\newcommand{\BN}{\mathbb{N}}
\newcommand{\fat}{\mathsf{fat}}
\newcommand{\z}{{\mathbf{z}}}
\newcommand{\norm}[1]{\left|\left| #1 \right|\right|}
\newcommand{\abs}[1]{\left| #1 \right|}
\DeclareMathOperator{\supp}{supp}
\newcommand{\V}{\mathcal{V}}
\newcommand{\prop}{\textrm{prop}}
\newcommand{\impr}{\textrm{improp}}
\newcommand{\p}{\mathfrak{P}}
\newcommand{\ps}[3]{\p_{#1}({#2},{#3})}
\newcommand{\tps}[3]{\widetilde{\p}_{#1}({#2},{#3})}
\newcommand{\ac}[1]{\textcolor{red}{[Adam: #1 ]}}
\newcommand{\noah}[1]{\textcolor{purple}{[Noah: #1]}}
\newcommand{\rel}{\mathbf{Rel}}
\DeclareMathOperator*{\argmin}{argmin}
\newcommand{\yhat}{\widehat{y}}
\newcommand{\fhat}{\widehat{f}}
\DeclareMathOperator{\reg}{Reg}
\newcommand{\vc}{\mathsf{vc}}
\renewcommand{\epsilon}{\varepsilon}
\newcommand{\sD}{\mathscr{D}}
\renewcommand{\fat}{\mathsf{fat}}
\DeclareMathOperator{\esssup}{ess\,sup}
\DeclareMathOperator{\sign}{sign}
\DeclareMathOperator{\unif}{Unif}
\nc{\rc}[2]{\mathcal{R}_{#1}({#2})}
\nc{\gc}[2]{\mathcal{G}_{#1}({#2})}
\title{Smoothed Online Learning is as Easy as Statistical Learning}
\author{Adam Block \\ MIT \and Yuval Dagan \\ MIT \and Noah Golowich \\MIT \and Alexander Rakhlin \\ MIT}
\begin{document}

\maketitle

\begin{abstract}%
  Much of modern learning theory has been split between two regimes: the classical \emph{offline} setting, where data arrive independently, and the \emph{online} setting, where data arrive adversarially. While the former model is often both computationally and statistically tractable, the latter requires no distributional assumptions. In an attempt to achieve the best of both worlds, previous work proposed the smooth online setting where each sample is drawn from an adversarially chosen distribution, which is \emph{smooth}, i.e., it has a bounded density with respect to a fixed dominating measure. Existing results for the smooth setting were known only for binary-valued function classes and were computationally expensive in general; in this paper, we fill these lacunae.  In particular, we provide tight bounds on the minimax regret of learning a nonparametric function class, with nearly optimal dependence on both the horizon and smoothness parameters.  Furthermore, we provide the first oracle-efficient, no-regret algorithms in this setting.
  In particular, we propose an oracle-efficient improper algorithm whose regret achieves optimal dependence on the horizon and a proper algorithm requiring only a \emph{single} oracle call per round whose regret has the optimal horizon dependence in the classification setting and is sublinear in general. Both algorithms have exponentially worse dependence on the smoothness parameter of the adversary than the minimax rate. We then prove a lower bound on the oracle complexity of any proper learning algorithm, which matches the oracle-efficient upper bounds up to a polynomial factor, thus demonstrating the existence of a statistical-computational gap in smooth online learning.  Finally, we apply our results to the contextual bandit setting to show that if a function class is learnable in the classical setting, then there is an oracle-efficient, no-regret algorithm for contextual bandits in the case that contexts arrive in a smooth manner.
\end{abstract}

\section{Introduction}

Modern learning theory has primarily focused on two regimes: batch and sequential settings.  In the former, data are independent and learning is easy while in the latter, Nature has the power to adversarially choose data to make learning as difficult as possible.  Much of the empirical success in machine learning has derived from assuming that independence is satisfied and applying the Empirical Risk Minimization (ERM) principle in the batch (offline) setting, for which algorithms such as gradient descent have been highly successful even with complex function classes.  However, in many settings independence is likely to fail, and the sequential regime is attractive due to the minimal assumptions made on the data generating process. Unfortunately, it suffers from poor algorithmic efficiency and an inability to learn some of the most basic function classes.  

A typical example of the gap in difficulty of learning in these two settings is furnished by threshold functions on the unit interval.  Classical theory tells us that in the batch setting, due to the combinatorial simplicity of the function class, a simple ERM approach efficiently and optimally learns the class of thresholds; in contradistinction, an adversarial data generation process precludes sequential (online) learning entirely \citep{littlestone1988learning}.  One way to escape the difficulty of adversarial learning is to apply the technique of \emph{smoothed analysis} introduced in a now-famous paper by \cite{spielman2004nearly}, which focused on solving linear programs with the simplex algorithm.  In this regime, 
one analyzes worst case inputs that are perturbed by a small amount of stochastic noise.  Smoothed analysis in the setting of online learning was first introduced in \cite{rakhlin2011online}, where the authors showed non-constructively that thresholds again become learnable in this setting.  More recently, a series of papers \citep{haghtalab2020smoothed,haghtalab2021smoothed} has demonstrated that the stochastic perturbation has beneficial effects in far greater generality than the class of thresholds; in fact, any classification task that is possible in the batch setting is also statistically tractable in the smoothed online setting.  

In the more modern formulation of the smoothed paradigm studied in \cite{haghtalab2020smoothed,haghtalab2021smoothed}, instead of choosing an input that is then perturbed, the adversary chooses a distribution that is restricted to be sufficiently anti-concentrated so as to not put too much mass on the set of ``hard'' instances.  This anti-concentration (referred to as \emph{$\sigma$-smoothness}; Definition \ref{def:smooth}) is quantified by a parameter $\sigma \leq 1$ that governs how far from independent the adversary can be.  When $\sigma = 1$, we are in the batch setting where the data arrive i.i.d.; as $\sigma$ tends to $0$, the adversary is given more and more power to choose bad instances, with the limit of $\sigma = 0$ being entirely adversarial. 

While we provide a rigorous definition of the problem setting below, we outline the broad strokes here.  We consider learning over the course of $T$ rounds where, at each round, Nature reveals a context $x_t$ sampled in a $\sigma$-smooth manner, the learner reveals a prediction $\yhat_t$, and then Nature reveals $y_t$.  Given a loss function, the objective of the learner is to minimize regret to the best predictor in some function class $\F$.  When $\F$ is binary-valued and has finite VC dimension $d$ (i.e. is learnable in the batch setting), \cite{haghtalab2021smoothed} proved that $O\left(\sqrt{d T \log\left(T / \sigma\right)}\right)$ regret is achievable, albeit with an inefficient algorithm.  Two natural questions arise: can we extend these results to nonparametric, real-valued classes? and, more importantly, are there efficient algorithms that can achieve comparable regret?  In this paper, we answer both questions.   With regard to the first question, the natural extension of the covering-based argument in \cite{haghtalab2021smoothed} would yield suboptimal dependence on $\sigma$ in the nonparametric regime; instead, we obtain a nonconstructive proof through careful application of combinatorial inequalities and an adaptation of the coupling lemma of \cite{haghtalab2020smoothed}.



For the question of practical algorithms, we need to more carefully consider what we mean by efficiency.  Certainly the algorithm of \cite{haghtalab2021smoothed} is not efficient as it requires constructing an $\epsilon$-net of $\F$ as a first step, which is exponential in the VC dimension of $\F$.  A natural choice is to look to the batch setting and try to leverage the success of ERM-based approaches to achieve reasonable runtime, as is done in \cite{kalai2005efficient,hazan2016computational}; this approach is supported further by the empirical success of various heuristics for ERM \citep{goodfellow2016deep}.  As such, we suppose the learner has access to an \emph{ERM oracle} (Definition \ref{def:oracle}) that can efficiently optimize some loss over our function class $\F$ given as input a dataset; we analyze the time complexity of our algorithms in terms of the number of calls to this oracle.

In our algorithmic results, we distinguish between \emph{proper} and \emph{improper} learners. Proper learners are required, before seeing $x_t$, to output a hypothesis $\fhat_t \in\MF$ that is used to produce the prediction $\yhat_t := \fhat_t(x_t)$, whereas improper learners can make any prediction $\yhat_t$ based on knowledge of $x_t$. There are many settings in which proper (as opposed to improper) online learning may be desirable for downstream applications, such as learning in games \citep{daskalakis2021fast} and reinforcement learning with function approximation \citep{foster2021statistical}. For proper learners, our oracle time complexity results are optimal up to a polynomial factor in all parameters; the question of optimality for improper learning remains an interesting open question.

We now briefly describe our key contributions (which are summarized in Table \ref{tab:results}):

\begin{itemize}
  \item In Section \ref{sec:minimaxvalue} we give tight upper bounds on the statistical rates of learning a real-valued function class in the smoothed online setting without regard to computational efficiency, while extending and providing a new proof to the case of binary classification treated in \cite[Theorems 3.1 \& 3.2]{haghtalab2021smoothed}.  Our bounds are tight both in their dependence on $T$ and $\sigma$, up to logarithmic factors, showing that the regret in the smooth setting is only a factor $\log(T/\sigma)$ away from that in the i.i.d.~setting (Theorem \ref{thm:chainingupperbound}).  In the process of doing this, we provide in Lemma \ref{lem:coupling} a more general and much simpler proof of the key technical step of coupling from \cite{haghtalab2021smoothed}.
  \item In Section \ref{sec:relax} we present an \emph{improper} algorithm based on the relaxation method of \cite{rakhlin2012relax} with tight dependence on the horizon $T$ but suboptimal dependence on $\sigma$: in particular, the regret in the smooth setting scales as $\sigma^{-1/2}$ times that in the i.i.d.~setting (Theorem \ref{thm:fastrelaxation}).  Our algorithm is efficient in the sense that it requires only $O\left(\sqrt{T} \log T\right)$ oracle calls per round in general and only $2$ oracle calls per round in the classification setting.  We then show in Proposition \ref{prop:nonparametriclowerbound} that the polynomial dependence on $\sigma$ is not an artifact of our analysis, but rather inherent to the method.
  \item In Section \ref{sec:ftpl} we present a \emph{proper} algorithm based on Follow the Perturbed Leader (FTPL) that exhibits optimal dependence on $T$ for classification and suboptimal dependence on $T$ in general (Theorem \ref{thm:generalftpl}). Further, the algorithm requires only $1$ oracle call per round.  We use a Gaussian white noise with intensity approximated by $\mu$ as our perturbation, which allows for optimization without enumeration of experts; to establish correctness, we overcome a major technical hurdle introduced by the complicated dependence structure of this perturbation.
  \item In Section \ref{sec:lowerbound} we show that the suboptimality of the FTPL learner is inherent for oracle-efficient algorithms: in particular, we provide a lower bound based on the method of \cite{hazan2016computational} that demonstrates that any \emph{proper} algorithm with access to an ERM oracle requires $\widetilde\Omega\left(\sigma^{- \frac 12}\right)$ time.  Combined with the upper bound of $\log 1/\sigma$ of Theorem \ref{thm:chainingupperbound}, this implies that there exists an exponential statistical-computational gap in smoothed online learning.
  \item Finally, in Appendix \ref{app:bandits}, we apply our results to the problem of Contextual Bandits.  We show in Theorem \ref{thm:cbregret} that whenever a function class $\F$ is learnable offline, we can get vanishing regret in the smooth contextual bandit setting with an oracle-efficient algorithm.
  \end{itemize}
  \begin{table}[ht]
      \begin{center}
    \begin{tabular}{c|cccc}
      \toprule   Reference  & Algorithm & Iterations & Oracle calls & Total time  \\\toprule

      Theorem \ref{thm:chainingupperbound} & \thead{Non-constructive,\\\emph{proper}} & \thead{\CLS\footnotemark: $\ep^{-2} d \log(1/\sigma)$ \\ \REG: $\ep^{-2} \log(1/\sigma)$} & NA & NA \\\hline
      
      Theorem \ref{thm:fastrelaxation} & \thead{Relaxation-based,\\\emph{improper}} & \thead{\CLS: $\ep^{-2} d \sigma^{-1}$ \\ \REG: $\ep^{-2} \sigma^{-1}$}  & \thead{\CLS: $\ep^{-2} d \sigma^{-1}$ \\ \REG: $\ep^{-3} \sigma^{-3/2}$} &  \thead{\CLS: $\ep^{-4} d^2 \sigma^{-3}$ \\ \REG: $\ep^{-4} \sigma^{-5/2}$}  \\\hline
      
      Theorem \ref{thm:generalftpl} & \thead{FTPL,\\\emph{proper}} & \thead{\CLS: $\ep^{-2} d \sigma^{-1}$ \\ \REG: $\ep^{-3} \sigma^{-1}$ } & \thead{\CLS: $\ep^{-2} d \sigma^{-1}$ \\ \REG: $\ep^{-3} \sigma^{-1}$} &  \thead{\CLS: $\ep^{-4} d^2 \sigma^{-5/2}$ \\ \REG:  $\ep^{-7} \sigma^{-3}$}  \\\midrule\midrule
      
    \thead{Theorems \ref{thm:oracle-lb} \& \ref{thm:oracle-lb-apx}, \\    Corollaries \ref{cor:stat-comp-gap} \& \ref{cor:sc-gap-apx}} &\thead{Computational \\ \textbf{lower bound} \\ for any \emph{proper} alg.} & NA & \thead{\CLS: \\ $\max\left\{ \sigma^{-1/2} \erma^2, \ep^{-2} d \right\}$ \\ with $\erma$-approx.~oracle} &  \thead{\CLS: \\ $\max \left\{ \sigma^{-1/2}, \ep^{-2} d \right\}$ \\ with exact oracle} \\ \hline
      
      \thead{Theorem 3.2 from \\ \cite{haghtalab2021smoothed}} & \thead{Statistical \\ \textbf{lower bound}  \\ for any algorithm} & \thead{\CLS: $\epsilon^{-2} d \log(1/\sigma)$ \\  \REG: $\epsilon^{-2} \log(1/\sigma)$} & NA & NA \\ \bottomrule
    \end{tabular}
  \end{center}
  \caption{{\small Overview of our main results. For each algorithm/lower bound, the number of iterations $T$ after which the algorithm achieves regret $\leq \ep T$ is shown for two cases: (a) \textbf{Cls:} the case of binary classification for a class $\MF$ of VC dimension $d$; (b) \textbf{Reg:} the case of regression for a class $\MF$ with scale-sensitive VC dimension bounded as $\vc(\MF, \alpha) \lesssim \alpha^{-p}$, for some $0 < p < 2$ (our results extend to the case of $p \geq 2$, which may be found in the theorem statements).  We take $L = 1$ and suppress logarithmic factors where possible. 
  }}
    \label{tab:results}
  \end{table}
   \footnotetext{This bound was shown in \cite{haghtalab2021smoothed}, though with slightly different log factors.
   }

 In an independent and concurrent work, \cite{haghtalab2022oracle} established several results similar to our own. In particular, they provided an analysis of an \emph{improper} algorithm similar to our oracle-efficient improper learner from Section \ref{sec:relax}, as well as computational lower bounds similar to those presented in our Section \ref{sec:lowerbound}. Additionally, they presented an oracle-efficient \emph{proper} learner in the case of binary classification that, while based on the principle of FTPL, has a substantially different analysis than our own in Section \ref{sec:ftpl}; note that our algorithm also applies for general, real-valued function classes in the nonparametric regime.
  
We discuss further related work in Appendix \ref{app:relatedwork}.

\section{Problem Setup and Notation}
\label{sec:prelim}
In this section we formally define the smoothed online learning setting.  We then introduce some concepts and notation we use throughout the paper.

\paragraph{Miscellaneous notation.}
For distributions $p,q$ on a measure space $\MX$, we write $p \ll q$ if $p$ is absolutely continuous with respect to $q$. For a positive integer $m$, let $[m] = \{1, 2, \ldots, m\}$.  For expressions $f,g$ we say $f \lesssim g$ if there is some universal constant $C$ such that $f \leq C g$.  We also use $f = O(g)$ to signify the same thing.

\subsection{Smoothed Online Learning}
We consider the setting of smoothed online learning, following \cite{haghtalab2021smoothed}. Consider a space $\MX$ of covariates equipped with some sigma-algebra. 
Let $\F \subset [-1, 1]^{\cX}$ be a function class and $\ell: [-1,1] \times [-1,1] \to [0,1]$ be an $L$-Lipschitz, convex loss function for some $L > 0$. Fix some $T \in \BN$ denoting the number of rounds of learning.  
We consider the following learning setting, making a distinction between \emph{improper learning} and \emph{proper learning}:
\begin{enumerate}
    \item For each $t \in [T]$, nature samples $x_t \sim p_t$ for some distribution $p_t$ that may depend in any way on the past samples $x_s$ for $s < t$ and the algorithm's past predictions. Nature also chooses $y_t \in [-1,1]$ adversarially (perhaps depending on $x_t$) in a similar manner. 
    \item The learner makes a (possibly random) prediction as follows:
      \begin{itemize}
      \item \textbf{Improper learner:}       The learner observes $x_t$ and makes a prediction $\yhat_t \in [-1,1]$.
      \item \textbf{Proper learner:} The learner chooses a hypothesis $\fhat_t \in \MF$, and its prediction is defined as $\yhat_t := \fhat_t(x_t)$; the learner then observes $x_t$. 
      \end{itemize}
    \item Nature reveals $y_t \in [-1,1]$ to the learner, and the learner incurs loss $\ell(\yhat_t, y_t)$.
\end{enumerate}
With no restrictions on $p_t$, the above online learning problem has been extensively studied, and essentially tight rates are known \citep{rakhlin2015sequential,pmlr-v134-block21a}; furthermore, exponential lower bounds are known for oracle-efficient proper learning algorithms \citep{hazan2016computational}. To circumvent these lower bounds, we consider the smooth setting. Our fundamental assumption (Definition \ref{def:smooth} below) is that there is a distinguished distribution $\mu$ on $\MX$, accessable to the learner through efficient sampling, so that the adversary is constrained to choose covariates $x_t$ according to some distribution with bounded Radon-Nikodym derivative with respect to $\mu$. This assumption, which follows that of \cite{haghtalab2020smoothed,haghtalab2021smoothed}, has been used extensively as well in the smoothed analysis of local search algorithms \citep{manthey2020smoothed} and discrete optimization problems \citep{beier2004typical}.  We emphasize the assumption that in all cases the learner has access to $\mu$ through efficient sampling; note that a typical example to keep in mind is that $\mu$ is uniform on some set, so the efficient sampling assumption is not very restrictive.
\begin{definition}[Adaptive smooth distributions]
  \label{def:smooth}
    Let $p, \mu$ be probability measures on a set $\cX$.  We say that $p$ is \emph{$\sigma$-smooth} with respect to $\mu$ if $p \ll \mu$ and
    \begin{equation}
        \esssup \frac{d p}{d \mu} \leq \frac 1\sigma.
    \end{equation}
    Let $\p(\sigma, \mu)$ denote the class of all distributions $p$ that are $\sigma$-smooth with respect to $\mu$.  We denote this class simply by $\p$ when $\sigma, \mu$ are clear from the context.  For any $T \in \BN$, we let $\ps{T}{\sigma}{\mu}$ denote the space of joint distributions $\sD$ on $x_1, \ldots, x_T \in \MX$ satisfying the following property: letting $p_t$ denote the law of $x_t$ conditional on $x_s$ for all $s <t$, for all $t \in [T]$, $p_t \in \p(\sigma, \mu)$ almost surely.  Similarly, we let $\tps{T}{\sigma}{\mu}$ denote the space of joint distributions $\sD$ on $(x_1, y_1), \ldots, (x_T, y_T) \in \MX \times \YY$ such that if $p_t$ is the law of $x_t$ conditional on $(x_s, y_s)$ for $s < t$, then $p_t \in \p(\sigma, \mu)$ almost surely.  Note that no constraints are placed on the distribution of $y_t$.
  \end{definition}
For $\sigma = 1$ we recover the notion of the data $x_1, \ldots, x_T$ being sampled iid from $\mu$; as $\sigma$ tends to zero, the notion of $\sigma$-smoothness becomes weaker and thus we consider $\sigma$-smoothness as interpolating between the favorable situation of i.i.d. data and the unfavorable adversarial situation.

\subsection{Minimax value}
The goal of the learner is to minimize expected regret to the best function in $\F$, defined as
\begin{equation}
    \ee\left[\reg_T(\F)\right] = \ee\left[\sum_{t = 1}^T \ell(\yhat_t, y_t) - \inf_{f \in \F} \sum_{t = 1}^T \ell(f(x_t), y_t)\right]
\end{equation}
with the expectation taken over both the sampling of $x_t \sim p_t$ and the learner's possibly randomized predictions.  For any function class, we consider the minimax regret for proper and improper learners, respectively, to be the values $\V_T^\prop(\F, \ps{}{\sigma}{\mu})$ and $\V_T^\impr(\F, \ps{}{\sigma}{\mu})$, respectively, where
\begin{align}\label{eq:mimimaxregret}
  \V_T^\prop (\F, \ps{}{\sigma}{\mu}) &= \left\langle \inf_{q_t \in \Delta(\MF)} \sup_{p_t \in \p(\sigma, \mu)}  \ee_{x_t \sim p_t} \sup_{y_t \in [-1,1]} \ee_{\fhat_t \sim q_t}\right\rangle_{t = 1}^T \left[\sum_{t = 1}^n \ell(\fhat_t(x_t), y_t) - \inf_{f \in \F} \sum_{t = 1}^T \ell(f(x_t), y_t)\right] \\
  \V_T^\impr (\F, \ps{}{\sigma}{\mu}) &= \left\langle \sup_{p_t \in \p(\sigma, \mu)} \ee_{x_t \sim p_t} \inf_{q_t \in \Delta([-1,1])} \sup_{y_t \in [-1,1]} \ee_{\yhat_t \sim q_t} \right\rangle_{t = 1}^T \left[\sum_{t = 1}^n \ell(\yhat_t ,y_t)- \inf_{f \in \F} \sum_{t = 1}^T \ell(f(x_t), y_t)\right].  \nonumber
\end{align}
where $\left\langle \cdot \right\rangle_{t = 1}^T$ denotes iterated application of the enclosed operators.  It is straightforward to see that $\V_T^\prop(\F, \p(\sigma,\mu)) \geq \V_T^\impr(\F, \p(\sigma,\mu))$.

\subsection{ERM oracle model}
\label{sec:erm-oracle}
To capture the notion of computational efficiency, we consider the following \emph{ERM oracle model}: 
\begin{definition}[ERM oracle]
  \label{def:oracle}
  For $\erma > 0$, a \emph{$\erma$-approximate (weighted) empirical risk minimization (ERM) oracle} takes as input a sequence $(x_1, y_1), \ldots, (x_m, y_m) \in \MX \times [-1,1]$ of data, a sequence $w_1, \dots, w_m \in \mathbb{R}$ of weights, and a sequence $\ell_1, \dots, \ell_m$ of $[-1,1]$-valued loss functions and outputs some $\fhat \in \MF$ satisfying
  \begin{align}
\sum_{i=1}^m w_i \ell_i(\fhat(x_i), y_i) \leq \inf_{f \in \MF} \sum_{i=1}^m w_i \ell_i(f(x_i), y_i) + {\erma} \cdot {\sum_{i=1}^m |w_i|}.\label{eq:erm-def}
  \end{align}
\end{definition}
We remark that while Definition \ref{def:oracle} allows for an arbitrary sequence $\ell_1, \ldots, \ell_m$ of loss functions, all of our algorithms will set each $\ell_i$ equal to either $\ell$ (the given loss function of the learning problem) or $\ell^\textrm{Id}$, where $\ell^{\textrm{Id}}(\yhat, y) := \yhat$. 

We will measure the computation cost of algorithms via the following two metrics: first, the \emph{number of calls to the ERM oracle}, and second, the \emph{total computation time}. To define the latter, we must specify the manner in which our algorithm interacts with the ERM oracle: in particular, we assume that there is a certain region of memory on which the algorithm lists tuples $(x_i, y_i, w_i)$. Listing (or modifying) any such tuple takes unit time, as does calling the ERM oracle, which performs the optimization in \eqref{eq:erm-def} in unit time. This convention mirrors that of oracle machines in complexity theory \citep{arora2009computational}. 
We refer to the case that $\zeta = 0$ the \emph{exact ERM oracle model}, and the case that $\zeta > 0$ the \emph{approximate ERM oracle model}. 

\subsection{Statistical complexities}
For a space $\MX$, an $n \in \BN$, and a function class $\MF \subset [-1,1]^\MX$, the Rademacher complexity, $\rc{n}{\MF}$, and Gaussian complexity, $\gc{n}{\MF}$, conditional on $x_1, \dots, x_n$, are defined as follows:
\begin{align}
\rc{n}{\MF} := \E_{\epsilon} \left[ \sup_{f \in \MF} \sum_{i=1}^n \ep_i f(x_i) \right], \qquad \gc{n}{\MF} := \E_{\gamma} \left[ \sup_{f \in \MF} \sum_{i=1}^n \gamma_i f(x_i) \right]\nonumber,
\end{align}
where the $\ep_i$ are i.i.d.~Rademacher random variables, and the $\gamma_i$ are i.i.d.~standard normal random variables. It is a well-known fact that there is a universal constant $C$ so that $\frac{1}{C} \cdot \rc{n}{\MF} \leq \gc{n}{\MF} \leq C \log n \cdot \rc{n}{\MF}$ for all $\MF$ and $n$.

Furthermore, we consider the notion of scale-sensitive VC dimension from \cite{kearns1994efficient,bartlett1996fat} that characterizes learnability of $\F$ in a batch setting.  For any $\alpha > 0$ and points $x_1, \dots, x_m \in \cX$, we say that $\F$ is shattered by the $x_i$ at scale $\alpha$ with witness $s_1, \dots, s_m \in \mathbb{R}$ if for all $(\epsilon_1, \dots, \epsilon_m) \in \{\pm 1\}^m$ there is an $f_\epsilon \in \F$ such that
\begin{equation}
  \epsilon_i (f_\epsilon(x_i) - s_i) \geq \frac \alpha 2 \qquad \text{for all } i
\end{equation}
We define the VC dimension of $\F$ at scale $\alpha$, denoted by $\vc(\F, \alpha)$ to be the largest $m$ such that there exists a shattering set of size $m$.  We let $\vc(\F) = \lim_{\alpha \downarrow 0} \vc(\F, \alpha)$ denote the VC dimension of $\F$.

\section{Minimax Value}\label{sec:minimaxvalue}

In this section, we provide tight bounds on $\V_T^\prop(\F, \p(\sigma, \mu))$ without regard to oracle efficiency.   
While our proof is nonconstructive, we emphasize that our results in Section \ref{sec:lowerbound} demonstrate that any proper algorithm based on ERM oracle calls which achieves the optimal dependence on $\sigma$ cannot be computationally efficient.  Our results show that $\V_T^\prop(\F, \ps{}{\sigma}{\mu})$ is always a $\mathrm{poly}(\log(T/ \sigma))$ factor away from the optimal statistical rates achievable in the batch setting.  We now present our bound:
\begin{theorem}\label{thm:chainingupperbound}
    Let $\F: \cX \to [-1, 1]$ be a real-valued function class and denote by $\vc(\F, \delta)$ the scale-sensitive VC dimension of $\F$ at scale $\delta > 0$.  Then, for some $c > 0$
    \begin{equation}
        \V_T^\prop(\F, \p(\sigma, \mu)) \lesssim L \log^{3/2}(T) \cdot \log\left(\frac{T}{\sigma}\right) \inf_{\alpha > 0} \left\{ T\alpha + \sqrt{T} \int_{\alpha}^2 \sqrt{\vc\left(\F, c \delta \right)} d \delta  \right\}
    \end{equation}
    In particular, if $\vc(\F, \delta) \leq d \log\left(\frac 1\delta\right)$ for all $\delta > 0$, then
    \begin{equation}
        \V_T^\prop(\F, \p(\sigma, \mu)) \lesssim L \log^{\frac 32}(T) \log\left(\frac{T}{\sigma}\right)\sqrt{d T}\label{eq:vprop-parametric}
    \end{equation}
    and if $\vc(\F, \delta) \lesssim \delta^{-p}$ for some $p < \infty$, then
    \begin{equation}
        \V_T^\prop(\F, \p(\sigma, \mu)) \lesssim L \log^{\frac 32}(T) \log\left(\frac{ T}{\sigma}\right) T^{\max\left(\frac 12, 1 - \frac 1p\right)} 
    \end{equation}
\end{theorem}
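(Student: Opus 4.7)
The plan is to apply the coupling lemma (Lemma~\ref{lem:coupling}) to reduce the adaptive smooth minimax value to a standard i.i.d.\ empirical-process quantity, then use Dudley chaining against the scale-sensitive VC dimension. The two corollaries are immediate calculations of the resulting entropy integral.

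First I would invoke Lemma~\ref{lem:coupling} to construct, for any sequence of adaptive $\sigma$-smooth distributions $p_1,\dots,p_T$, a coupling to an i.i.d.\ sample $z_1,\dots,z_m\sim\mu$ (with $m$ at most polynomial in $T$ and $1/\sigma$), arranged into $k=O(\log(T/\sigma))$ blocks of size $T$ each, such that the adversary's points $x_1,\dots,x_T$ are (with high probability) contained in the union of these blocks. The quantitative content is that the sup of any bounded empirical process indexed by $\MF$ over the adaptive $T$-sample is dominated by $\log(T/\sigma)$ times the analogous sup over a single i.i.d.\ $T$-sample from $\mu$. In this way, the adaptive sup in the definition of $\V_T^{\mathrm{prop}}$ is replaced by an i.i.d.\ empirical-process expression at the cost of only a $\log(T/\sigma)$ factor.

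Second, I would use the standard minimax/symmetrization machinery to reduce $\V_T^{\mathrm{prop}}(\MF,\ps{}{\sigma}{\mu})$ (with the $L$-Lipschitz loss $\ell$ peeled off using a contraction argument) to the sequential Rademacher complexity of $\MF$ against the smooth adversary, and then, via Step 1, to the classical i.i.d.\ Rademacher complexity $\rc{T}{\MF}$ of $\MF$ on $T$ draws from $\mu$, multiplied by a $\mathrm{polylog}(T)\cdot\log(T/\sigma)$ factor (the extra $\log^{3/2}T$ arises from passing from Rademacher to Gaussian complexity inside the chaining and from the usual $\sqrt{\log}$-overhead in converting sequential to i.i.d.\ arguments).

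Third, I would apply Dudley chaining to $\rc{T}{\MF}$. The classical Mendelson--Vershynin bound $\log N(\MF,L^2(\mu),\delta)\lesssim \vc(\MF,c\delta)\log(1/\delta)$ produces an integrand proportional to $\sqrt{\vc(\MF,c\delta)}$, giving the entropy-integral form
\begin{equation}
\rc{T}{\MF}\;\lesssim\;\inf_{\alpha>0}\Big\{\,T\alpha+\sqrt{T}\int_\alpha^2\sqrt{\vc(\MF,c\delta)}\,d\delta\,\Big\}.
\end{equation}
Combining with the $L\log^{3/2}(T)\log(T/\sigma)$ prefactor from Steps~1--2 gives the main inequality. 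The two corollaries are then a matter of direct computation: $\vc(\MF,\delta)\leq d\log(1/\delta)$ with $\alpha=1/T$ gives the $\sqrt{dT}$ bound, and $\vc(\MF,\delta)\lesssim\delta^{-p}$ reduces to the convergent case $p<2$ (take $\alpha\to 0$) and the divergent case $p\geq 2$ (balance $T\alpha$ against $\sqrt{T}\alpha^{1-p/2}$, giving $\alpha\asymp T^{-1/p}$ and the rate $T^{1-1/p}$).

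The main obstacle is Step~1: ensuring the overhead in moving from the adaptive smooth sequence to an i.i.d.\ sample is only $\mathrm{polylog}(T/\sigma)$, not polynomial in $1/\sigma$. A naive coupling would pair the $T$ smooth samples with an i.i.d.\ sample of size $m\asymp T/\sigma$, and bounding the empirical process of $\MF$ on this larger sample would introduce a $\sqrt{1/\sigma}$ factor. The improved coupling of Lemma~\ref{lem:coupling} instead couples the adaptive sequence to $O(\log(T/\sigma))$ \emph{blocks} each of size $T$, so that the relevant empirical process is still indexed by $T$-sized samples and the blow-up is only the number of blocks; the $\log^{3/2}T$ factor is the usual chaining loss and all remaining dependence on $\sigma$ is logarithmic, matching (up to log factors) the statistical lower bound of \cite{haghtalab2021smoothed}.
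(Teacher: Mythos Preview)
Your Step~1 contains the central error. Lemma~\ref{lem:coupling} couples the adaptive $T$-sequence to $kT$ i.i.d.\ $\mu$-samples with failure probability $Te^{-\sigma k}$; to make this $o(1)$ you need $k\gtrsim (\log T)/\sigma$, which is \emph{polynomial} in $1/\sigma$, not $O(\log(T/\sigma))$ as you assert (and Proposition~\ref{prop:couplinglowerbound} shows this is tight). So the ``number of blocks'' is $\Theta((\log T)/\sigma)$, and any reduction to $\ee_\mu[\R_T(\F)]$ via this route incurs a $\mathrm{poly}(1/\sigma)$ factor. More decisively, Proposition~\ref{prop:nonparametriclowerbound} shows that the bound you are aiming for is simply false: there exist nonparametric $\F$ and $\mu$ for which $\V_T^{\impr}(\F,\p(\sigma,\mu))\gtrsim \sigma^{-p/4}\cdot\ee_\mu[\R_T(\F)]$. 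Any argument that first passes to the classical i.i.d.\ Rademacher complexity on $\mu$ and then chains is therefore doomed to lose a polynomial in $1/\sigma$.

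The paper's proof uses the coupling for a different purpose. After reducing to the distribution-dependent sequential Rademacher complexity (Proposition~\ref{prop:rakhlinrademacher}) and peeling off $\ell$ via sequential contraction (Lemma~\ref{lem:contraction}, which is the source of the $\log^{3/2}T$), it conditions on the $kT$ coupled points $\{Z_t^j\}$ and observes that on the high-probability event the adversary's tree is supported on this \emph{finite} set. One then bounds the worst-case \emph{sequential} Rademacher complexity of $\F|_{\{Z_t^j\}}$ via its sequential fat-shattering dimension (using the chaining bound of \cite{pmlr-v134-block21a}). The key combinatorial step is Lemma~\ref{lem:finitedomain}: on a finite domain of size $|\MX|$, one has $\fat_\delta(\F)\lesssim \vc(\F,c\delta)\cdot\log^{1+o(1)}(|\MX|/\delta)$. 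Since $|\MX|=kT=O(T\log T/\sigma)$, this logarithm is $O(\log(T/\sigma))$, and that is precisely how the $\log(T/\sigma)$ prefactor enters---never through the number of blocks, and never through $\ee_\mu[\R_T(\F)]$.
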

Our result extends \cite[Theorem 3.1]{haghtalab2021smoothed} to the cases of real-valued and nonparametric function classes.  In that paper, in order to prove their regret bounds for smoothed online classification, the authors introduced the clever approach of coupling, showing that if a distribution $p$ is smooth with respect to the uniform distribution on a discrete set, then in expectation we may pretend the data comes independently from the uniform distribution.
In Appendix \ref{app:minimaxvalue}, we generalize their result in Lemmas \ref{lem:coupling} and \ref{lem:coupling2} with a dramatically simpler proof\footnote{Already in the case of discrete distributions, the proof of \cite[Theorem 2.1]{haghtalab2021smoothed} first demonstrates their claim for uniform measures and then proceeds to apply convex analysis for the general case.  They then claim that Choquet's Theorem \cite[Corollaire 8]{choquet1963existence} implies the general case when $\cX$ is not discrete; however, it is not \emph{a priori} obvious that $\p(\sigma, \mu)$ is compact in the relevant topology.}.  Namely, for any $k \in \BN$ we construct a coupling between $(x_1, \dots, x_T) \sim \sD \in \p_T(\sigma, \mu)$ and $\{Z_t^j\}_{\substack{1 \leq t \leq T \\ 1 \leq j \leq k}} \sim \mu^{\otimes kT}$ such that $\{x_1, \dots, x_T\} \subset \{Z_t^j\}_{\substack{1 \leq t \leq T \\ 1 \leq j \leq k}}$ with probability at least $1 - e^{- \sigma k}$.

Curiously, our proof of Theorem \ref{thm:chainingupperbound} is quite different from that of \cite[Theorem 3.1]{haghtalab2021smoothed}, although we still use the coupling.  In that paper, the authors apply the coupling to analyze covering numbers with respect to $L^2(\mu)$ (i.e. in the i.i.d. sense); 
the natural extension of this technique would be to apply chaining \citep{dudley1967sizes} and bound $\V_T^{prop}(\F, \sD)$ by $\ee_\mu\left[\R_T(\F)\right]$. Unfortunately, Proposition \ref{prop:nonparametriclowerbound} in the sequel shows that such a bound is not possible without a suboptimal, polynomial dependence on $\sigma$.  
We instead go by a different approach, which is based on the observation that the sequential fat-shattering dimension (see Definition \ref{def:seqfat}) is bounded above by the scale-sensitive VC dimension times a logarithmic factor in the domain size (Lemma \ref{lem:finitedomain}). Though the true domain $\MX$ may be infinite, we show that it is possible adapt the coupling lemma of \cite{haghtalab2021smoothed} to bound an ``effective'' domain size. In combination with the non-constructive bounds using distribution-dependent sequential Rademacher complexity from \cite{rakhlin2011online,pmlr-v134-block21a}, this provides a tight characterization of the minimax regret's dependence on the horizon $T$, the complexity of the function class, and the smoothness parameter $\sigma$. 
A full proof is in Appendix \ref{app:minimaxvalue}.  Combining Theorem \ref{thm:chainingupperbound} with \cite[Theorem 3.2]{haghtalab2021smoothed}, we have a complete characterization of the statistical rates of smooth online learning.  We further note that our proof applied to the results of \cite{rakhlin2015square} immediately extends to nonconstructively show that the dependence of the minimax value on $T$ for squared loss are the expected ``fast rates'' from \cite{rakhlin2017empirical}.  Unfortunately, our efficient algorithms below do not provably achieve these rates; resolving this disparity is an interesting future direction with applications to the study of contextual bandits, as described in Appendix \ref{app:bandits}.

Finally, before we proceed to consider oracle-efficient algorithms, we note that the requirement that the learner has access to $\mu$ cannot be dropped without a substantial loss in the regret.  In particular, we have the following result:
\begin{proposition}\label{prop:unknownmu}
    There exists a function class $\F: [0,1] \to \{\pm 1\}$ with $\vc(\F) = 1$ and an adversary that is $\sigma$-smooth with respect to some unknown $\mu$ such that, no matter how the learner chooses $\yhat_t$, it holds for $T \leq \frac 1\sigma$ that
    \begin{equation}
        \ee\left[\reg_T\right] \geq \frac{T}{2}.
    \end{equation}
\end{proposition}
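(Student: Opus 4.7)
We take $\F = \{f_\theta(x) = \sign(x-\theta) : \theta \in [0,1]\}$, the class of thresholds on $[0,1]$, which has $\vc(\F) = 1$. The idea is that, without access to $\mu$, the learner cannot use the support of $\mu$ to discretize $\F$, so smoothness provides no advantage and the classical fully-adversarial $T/2$ lower bound for thresholds is recovered. By Yao's minimax principle, it suffices to consider an arbitrary deterministic learner $L$ and exhibit an adversary (whose $\mu$ may depend on $L$) that forces regret at least $T/2$.

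Against a fixed deterministic $L$, the adversary performs an adaptive binary search. Initialize $[L_0, U_0] := [0,1]$, and at each round $t = 1, \ldots, T$: \emph{(i)} play $x_t := (L_{t-1} + U_{t-1})/2$ as a point mass $p_t = \delta_{x_t}$; \emph{(ii)} upon observing $\yhat_t$, set $y_t := -\sign(\yhat_t) \in \{\pm 1\}$; \emph{(iii)} update $(L_t, U_t) := (L_{t-1}, x_t)$ if $y_t = +1$, and $(L_t, U_t) := (x_t, U_{t-1})$ if $y_t = -1$. Because $L$ is deterministic the entire trajectory $(x_1, \ldots, x_T)$ is pre-computable, so the adversary can commit in advance to $\mu := \frac{1}{T} \sum_{t=1}^T \delta_{x_t}$. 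Then $\mu(\{x_t\}) = 1/T \geq \sigma$ (using $T \leq 1/\sigma$), so $dp_t/d\mu = T \leq 1/\sigma$ and each $p_t$ is $\sigma$-smooth with respect to $\mu$.

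Next, I would verify both components of the regret bound. For the learner's cumulative loss, using the absolute loss $\ell(\yhat, y) = |\yhat - y|/2$, the choice $y_t = -\sign(\yhat_t)$ yields $\ell(\yhat_t, y_t) = (1 + |\yhat_t|)/2 \geq 1/2$, so the total learner loss is at least $T/2$. For the best-in-class loss, a straightforward induction on $t$ shows the invariant that every $\theta^* \in (L_t, U_t)$ satisfies $\sign(x_s - \theta^*) = y_s$ for all $s \leq t$: this is immediate from the update rule since $x_t \in (L_{t-1}, U_{t-1})$ and the interval shrinks to the correct side of $\theta^*$. Since $U_T - L_T = 2^{-T} > 0$, some $\theta^* \in (L_T, U_T)$ exists, so $f_{\theta^*} \in \F$ achieves zero cumulative loss and hence $\reg_T(\F) \geq T/2$. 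Applying Yao's minimax then promotes the bound to $\E[\reg_T(\F)] \geq T/2$ for arbitrary randomized learners.

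The main subtlety (and the point worth stressing) is that the measure $\mu$ constructed above depends on the learner's predictions through the midpoints $x_t$, and so is not a single universal measure across all learners; this dependence is exactly what the proposition's ``unknown $\mu$'' clause permits. The contrast with the known-$\mu$ setting is instructive: had the learner access to $\mu$, it could discretize $\F$ to the at most $T+1$ thresholds separating consecutive support points of $\mu$ and run the halving algorithm on this finite class to incur only $O(\log T)$ mistakes. It is precisely the absence of this discretization that allows the lower bound to scale linearly with $T$ up to the smoothness budget $T \leq 1/\sigma$ (beyond which the adversary would need more than $1/\sigma$ atoms, each of mass $\geq \sigma$, which is infeasible).
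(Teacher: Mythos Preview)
Your proof is correct and takes essentially the same approach as the paper: thresholds on $[0,1]$, a binary-search sequence of contexts, and $\mu$ taken to be the empirical measure on the resulting $T$ points. The only difference is in how the labels are generated: the paper draws the $y_t$ as independent Rademacher variables (making the adversary oblivious, so $\mu$ is determined by the adversary's own coins), whereas you set $y_t = -\sign(\yhat_t)$ adaptively and invoke Yao to reduce to deterministic learners so that the trajectory---and hence $\mu$---can be precomputed. Both variants are standard and reach the same conclusion; your Yao step is in fact dispensable, since your adaptive adversary already forces $\ell(\yhat_t, y_t) \geq 1/2$ at every round against \emph{any} (randomized) learner, and the paper itself permits $\mu$ to be realization-dependent (it works with a family $\mathscr{P}_T$ of candidate measures rather than a single fixed one).
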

Proposition \ref{prop:unknownmu} is proved by letting $\F$ be thresholds on the unit interval and allowing the $\mu$ with respect to which the adversary is $\sigma$-smooth adapt to the data sequence.  The details can be found in Appendix \ref{app:minimaxvalue}.  In particular, the result shows that for the learner to be able to ensure regret scaling as in Theorem \ref{thm:chainingupperbound}, he needs to have access to $\mu$ in some way.  Critically, the size of the set of possible $\mu$ in the lower bound of Proposition \ref{prop:unknownmu} is growing exponentially with $T$; indeed, if we know that our adversary is $\sigma$-smooth with respect to some $\mu \in \mathscr P$, a finite class of distributions, then we can use Hedge \citep{freund1997decision} to aggregate predictions assuming smoothness with respect to each $\mu \in \mathscr P$ and add an additive term of size $O\left(\sqrt{\log(\abs{\mathscr P}) T}\right)$ to our regret.



\section{Relaxations and Oracle-Efficient Algorithms}\label{sec:relax}

In\footnote{In an earlier version of the paper, we used a slightly different relaxation with a worse rate.  With a minor modification, we get a quadratic improvement in the dependence of the regret on $\sigma$.  While our improvement was independent of other work, we note that \cite{haghtalab2022oracle} present the same final relaxation and analysis.} the previous section, we derived sharp bounds for the minimax regret in the smoothed online setting, with sharp dependence on the key parameter $\sigma$.  A natural next step is to design an algorithm that achieves these bounds.  One possibility, suggested in \cite{haghtalab2021smoothed}, constructs a $\frac 1{\sqrt T}$-net on $\ell \circ \F$ with respect to $L^2(\mu)$ and runs Hedge \citep{freund1997decision} on the resulting covering.  Unfortunately, in the nonparametric case, after optimizing $\delta$ this approach yields suboptimal rates, corresponding to one-step discretization.  Ideally, an algorithm achieving optimal regret would construct nets at multiple scales and aggregate the resulting predictions in some way.  While there has been some progress on how to do this \citep{cesa1999prediction,gaillard2015chaining,daskalakis2021fast}, optimal rates are not yet achievable; in any case, relying on the construction of $\delta$-nets is inefficient as these can be exponentially large.

Thus in order to bring the smoothed online learning paradigm from the world of theory into that of practice, we need more efficient algorithms.  Presently, we describe an oracle-efficient improper learning procedure.  As we shall see, the algorithm has regret with optimal dependence on the horizon, $T$, but suboptimal dependence on $\sigma$.  We will improve the dependence on $\sigma$ with a proper learning algorithm in the following section, at the cost of worse dependence on $T$ in general.  Here, we leverage the relaxation approach, studied in \cite{rakhlin2012relax}.
\begin{definition}
Fix $T \in \BN$.    A sequence of real valued functions $\rel_T(\F | x_1, \dots, x_t) : \cX^{t} \to \rr$, $t \leq T$, is a relaxation if for any $x_{1:T} \in \cX$, we have the following two properties:
    \begin{align}
        \sup_{p_t \in \p(\sigma, \mu)} &\ee_{x_t' \sim p_t}\inf_{q_t \in \Delta([-1,1])} \sup_{y_t' \in [-1,1]} \left[\ee_{\yhat_t \sim q_t}[\ell(\yhat, y_t')] + \rel_T(\F| x_1, y_1, \dots, x_t', y_t')\right]  \label{eq:relaxation} \\
        &\leq \rel_T(\F | x_1, y_1, \dots, x_{t-1}, y_{t-1}) \\
        - \inf_{f \in \F} \sum_{t = 1}^T \ell(f(x_t), y_t) &\leq \rel_T(\F| x_1, y_1, \dots, x_T, y_T)
    \end{align}
\end{definition}
As established in \cite[Proposition 1]{rakhlin2012relax}, a relaxation gives rise to both an algorithm and an associated regret bound; indeed, any $q_t$ guaranteeing \eqref{eq:relaxation} at each time $t$ yields a regret at most $\rel_T(\F)$; the challenge, of course, is to define the relaxation.  We have the following result:
\begin{proposition}\label{prop:slowrelaxation}
    Suppose that $\sD \in \p(\sigma, \mu)$.  Then, for any function class $\F$ and $L$-Lipschitz, convex loss function $\ell$, and any $k \in \mathbb{N}$,
    \begin{equation}\label{eq:slowrelaxation}
        \rel_T(\F| x_1, y_1, \dots, x_t, y_t) = \ee_{\mu, \epsilon}\left[\sup_{f \in \F} 2 L \sum_{j = 1}^k \sum_{s = t + 1}^T \epsilon_{s,j} f(x_{s,j}) - \sum_{s = 1}^t \ell(f(x_s), y_s)\right] + (T - t)^3 e^{- \sigma k}
    \end{equation}
    is a relaxation, where the expectation is over independent $x_{s,j} \sim \mu$ and Rademacher random variables $\epsilon_{s,j}$ for $s > t$.
\end{proposition}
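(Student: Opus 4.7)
The plan is to verify the two defining properties of a relaxation. The terminal condition at $t = T$ is immediate: the Rademacher sum is empty and $(T-T)^3 e^{-\sigma k} = 0$, so $\rel_T(\F | x_1, y_1, \ldots, x_T, y_T) = -\inf_f \sum_{s=1}^T \ell(f(x_s), y_s)$, as required. The substantive part is the admissibility inequality \eqref{eq:relaxation}. Fix a history $x_1, y_1, \ldots, x_{t-1}, y_{t-1}$ and define
\[ G(f) := 2L \sum_{j=1}^k \sum_{s=t+1}^T \epsilon_{s, j} f(x_{s, j}) - \sum_{s=1}^{t-1} \ell(f(x_s), y_s), \]
so that $\rel_T(\F | x_{\leq t-1}, y_{\leq t-1}, x_t', y_t') = \ee_{\mu, \epsilon}[\sup_f (G(f) - \ell(f(x_t'), y_t'))] + (T-t)^3 e^{-\sigma k}$.

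\textbf{Step 1 (one-step symmetrization).} For each fixed $x_t'$, I would apply the standard relaxation argument from \cite{rakhlin2012relax}: use Sion's minimax theorem to exchange $\inf_{q_t}$ with $\sup_{y_t'}$; invoke convexity of $\ell$ in its first argument to replace the optimal randomized prediction by a deterministic $\yhat_t = \ee_{f^*}[f^*(x_t')]$ for an appropriately chosen random $f^*$; and combine a ghost copy of $y_t'$ with Ledoux--Talagrand contraction to introduce a fresh Rademacher sign $\epsilon'$. This yields
\[ \inf_{q_t} \sup_{y_t'} \Bigl[\ee_{\yhat \sim q_t}[\ell(\yhat, y_t')] + \ee_{\mu, \epsilon}[\sup_f (G(f) - \ell(f(x_t'), y_t'))]\Bigr] \leq \ee_{\mu, \epsilon, \epsilon'}[\sup_f (G(f) + 2L \epsilon' f(x_t'))]. \]

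\textbf{Step 2 (smooth-to-iid coupling).} Take $\sup_{p_t} \ee_{x_t' \sim p_t}$ of both sides and apply the coupling lemma (Lemmas \ref{lem:coupling} and \ref{lem:coupling2}) to obtain iid samples $x_{t, 1}, \ldots, x_{t, k} \sim \mu$ jointly distributed with $x_t'$ so that $x_t' \in \{x_{t, j}\}_{j=1}^k$ on an event of probability at least $1 - e^{-\sigma k}$. On this good event $x_t' = x_{t, J}$ for some (random) index $J \in [k]$ independent of the fresh Rademachers, and Jensen's inequality---padding the single Rademacher $\epsilon'$ into a full vector $(\epsilon_{t, 1}, \ldots, \epsilon_{t, k})$ by inserting fresh independent signs on mean-zero summands inside $\sup_f$---gives
\[ \ee_{\epsilon'}[\sup_f (G(f) + 2L \epsilon' f(x_t'))] \leq \ee_{\epsilon_{t, 1:k}}\Bigl[\sup_f \Bigl(G(f) + 2L \sum_{j=1}^k \epsilon_{t, j} f(x_{t, j})\Bigr)\Bigr]. \]
On the bad event both sides are bounded pointwise by some polynomial in $T, k, L$, contributing an overhead of $O(\mathrm{poly}(T, k, L)) \cdot e^{-\sigma k}$.

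Combining both steps, the LHS of \eqref{eq:relaxation} is at most $\ee[\sup_f (G(f) + 2L \sum_j \epsilon_{t, j} f(x_{t, j}))] + (T-t)^3 e^{-\sigma k} + O(\mathrm{poly}(T, k, L)) \cdot e^{-\sigma k}$; the first summand is exactly the Rademacher part of $\rel_T(\F | x_{\leq t-1}, y_{\leq t-1})$, and one checks that the cubic polynomial $(T-t)^3$ was chosen so its discrete increment $\Theta((T-t)^2)$ absorbs the coupling overhead uniformly in $t$, yielding total error $(T-t+1)^3 e^{-\sigma k}$. The main obstacle is twofold: Step 1 must avoid any Lipschitz assumption on $\ell$'s second argument, requiring a careful setup of the ghost-sample--contraction chain; and the bookkeeping in the final combination is delicate, as one has to choose the pointwise bound on the integrand so that its product with $e^{-\sigma k}$ fits inside the allocated per-round slack.
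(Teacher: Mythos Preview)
Your proposal is essentially correct and follows the same two-stage architecture as the paper: a one-step symmetrization producing the term $2L\epsilon' f(x_t')$, followed by the coupling of Lemma~\ref{lem:coupling} to replace the smooth draw $x_t'$ by $k$ i.i.d.~$\mu$-samples (with the Jensen/mean-zero padding trick and a bad-event cost absorbed by the discrete increment $(T-t+1)^3 - (T-t)^3$). Step~2 and the bookkeeping match the paper almost verbatim.

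The one place where your sketch differs from the paper is the mechanism in Step~1. The paper does \emph{not} symmetrize over a ghost $y_t'$ and then contract. Instead it linearizes via convexity in the first argument,
\[
\ell(\yhat_t, y_t) - \ell(f(x_t), y_t) \le \partial\ell(\yhat_t, y_t)\,(\yhat_t - f(x_t)),
\]
bounds the subgradient by $g_t \in [-L,L]$, observes the resulting expression is convex in $g_t$ and hence extremized at $g_t \in \{-L,L\}$, applies minimax over $d_t \in \Delta(\{-L,L\})$, and then symmetrizes over $g_t$ with a ghost $g_t'$ to land at $2L\epsilon_t f(x_t)$. This route dispenses with $y_t$ entirely before any symmetrization takes place, so the Lipschitz-in-$y$ issue you flag never arises. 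Your phrasing ``ghost copy of $y_t'$ with Ledoux--Talagrand contraction'' suggests symmetrizing over $y_t$ first and then contracting on $\ell(\cdot,y_t)$; that would indeed be delicate without regularity in the second argument, and it is not what the paper does. The ``careful setup'' you allude to is precisely this subgradient linearization.
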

We provide a proof in Appendix \ref{app:relax} that uses the minimax theorem, symmetrization and Lemma \ref{lem:coupling}.  Applying \cite[Lemma 5.1]{rakhlin2014statistical} to reduce to deterministic predictions, Proposition \ref{prop:slowrelaxation} gives rise to an algorithm that plays
\begin{equation}
    \yhat_t = \argmin_{\yhat \in [-1,1] } \sup_{y_t \in [-1, 1]}  \ell(\yhat, y_t) + \ee_{\mu, \epsilon}\left[\sup_{f \in  \F} 2 L \sum_{j = 1}^k \sum_{s = t + 1}^T \epsilon_{s,j} f(x_{s,j}) - L_t(f)\right]
\end{equation}
where we drop the additive constant from \eqref{eq:slowrelaxation} because it does not depend on $f$ and we let $L_t(f) = \sum_{s = 1}^t \ell(f(x_s), y_s)$.  After optimizing the resulting regret bound with respect to $k$, we get that the regret of this algorithm is $O\left( \R_{T\frac{\log T}{\sigma}}(\F)\right)$, which has an optimal dependence on $T$ up to logarithmic factors, but is suboptimal with respect to $\sigma$.  Note that while the supremum inside of the expectation in \eqref{eq:slowrelaxation} can be solved with an ERM oracle by letting $\ell_s(f(x_{s,j}), y_s) = f(x_{s,j})$ for $x > t$, we still require a costly integration in order to find $\yhat_t$.  We can compute this expectation by sampling from $\mu$ and applying concentration but this approach requires many calls to the ERM oracle.  Motivated by the random playout idea in \cite{rakhlin2012relax}, we propose a much more efficient algorithm:
\begin{theorem}\label{thm:fastrelaxation}
    Suppose that $\sD \in \p(\sigma, \mu)$, $\F: \cX \to [-1,1]$ is a function class, and $\ell$ is an $L$-Lipschitz, convex loss function.  At each time $t$, for $1 \leq j \leq k$, sample $x_{t+1,j}, \dots, x_{T,j} \sim \mu$ and $\epsilon_{t+1,j}, \dots, \epsilon_{T,j}$ independently.  After observing $x_t$, predict
    \begin{equation}\label{eq:fastrelaxation}
        \yhat_t = \argmin_{\yhat \in [-1,1]} \sup_{y_t \in [-1,1]}\left\{\ell(\yhat, y_t) + \sup_{f \in \F}\left[6L\sum_{j = 1}^k \sum_{s = t+1}^T \epsilon_{s,j} f(x_{s,j}) - L_t(f)\right] \right\}
    \end{equation} 
    Then the expected regret against any smooth adversary is
    \begin{equation}
        \ee\left[\reg_T\right] \leq 6L \ee_\mu\left[\R_{kT}(\F)\right] + \sqrt{T} + T^3 e^{- \sigma k}
    \end{equation}
    Moreover, this regret can be achieved with $O\left(\sqrt{T} \log T \right)$ calls to the ERM oracle per round in general and only $2$ calls per round in the special case that $\ell(\yhat, y) = \frac{1 - \yhat y}2$.  In particular, when $\vc(\F, \delta) \lesssim \delta^{- p}$ for some $p > 0$, we have
    \begin{equation}
        \ee\left[\reg_T\right] \lesssim  L \left(\frac{T\log \left(T\right)}{\sigma}\right)^{\max\left(\frac 12, 1 - \frac 1p\right)}
    \end{equation}
\end{theorem}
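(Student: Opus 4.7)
The plan is to treat \eqref{eq:fastrelaxation} as the random-playout instantiation of the admissible relaxation from Proposition~\ref{prop:slowrelaxation}, with an inflated constant ($6L$ rather than $2L$) providing slack to absorb the variance introduced by the single-sample approximation, and to separately analyze the oracle complexity of the per-round update.

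First I would verify that the \emph{expected} version of \eqref{eq:fastrelaxation}---namely, $\ee_{\mu,\epsilon}[\sup_{f \in \F} 6L \sum_{j,s>t} \epsilon_{s,j} f(x_{s,j}) - L_t(f)] + (T-t)^3 e^{-\sigma k}$---is admissible. This is Proposition~\ref{prop:slowrelaxation} with constant $6L$ in place of $2L$, and the same proof template applies: the minimax theorem (valid since $\ell$ is convex in the first argument and $\Delta([-1,1])$ is compact) exchanges $\inf_{q_t}$ with $\sup_{y_t}$; symmetrization absorbs $\ee_{x_t \sim p_t}$ into a tangent Rademacher process; and Lemma~\ref{lem:coupling} couples the smooth draw to the i.i.d.\ sample from $\mu$ at cost $e^{-\sigma k}$. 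The extra slack in the constant is a budget to be spent in the next step.

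Next I would argue that replacing this expectation by a single fresh sample $\xi^{(t)} = \{(x_{s,j}, \epsilon_{s,j})\}_{s>t, j \leq k}$ at each round preserves admissibility in expectation. The key point is that $\xi^{(t)}$ is independent of the history $(x_{1:t-1}, y_{1:t-1}, \xi^{(1)}, \ldots, \xi^{(t-1)})$ and of the adversary's distribution $p_t$, so by Fubini one can pass $\ee_{\xi^{(t)}}$ through the outer $\sup_{p_t} \ee_{x_t}$. Combined with the definition of $\yhat_t$ as the minimizer over $\yhat$ of $\sup_y[\ell(\yhat, y) + \widehat\rel_t(\xi^{(t)}; y)]$, this yields the one-step inequality in expectation, with the inflated constant absorbing the $O(1/\sqrt T)$ per-round fluctuation of the random-playout relaxation around its mean; summing the per-round contributions gives the $\sqrt T$ additive term in the regret. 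Telescoping, combined with the boundary conditions $\widehat\rel_T = -\inf_f L_T(f)$ and $\ee[\widehat\rel_0] = 6L \ee_\mu[\R_{kT}(\F)]$, gives the claimed regret bound. The nonparametric corollary then follows by choosing $k \asymp \log T / \sigma$ so $T^3 e^{-\sigma k} = O(1)$ and applying $\R_{kT}(\F) \lesssim (kT)^{\max(1/2, 1 - 1/p)}$ under $\vc(\F, \delta) \lesssim \delta^{-p}$.

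For the oracle complexity, the inner $\sup_{f \in \F}$ in \eqref{eq:fastrelaxation} is a weighted ERM over the past $t$ losses together with $kT$ linear terms $\pm 6L f(x_{s,j})$---the linear terms fit Definition~\ref{def:oracle} via $\ell_s(u, \cdot) := u$---so it can be computed in a single oracle call. For $\ell(\yhat, y) = (1 - \yhat y)/2$ the whole objective is bilinear in $(\yhat, y) \in [-1,1]^2$, so both $\sup_y$ and $\min_\yhat$ are attained at $\pm 1$ and only $2$ oracle calls are needed per round. For a general $L$-Lipschitz convex loss, Lipschitzness in $y$ allows discretizing $y \in [-1,1]$ on an $O(\sqrt T)$ grid (contributing only $O(\sqrt T)$ to total regret), while convexity in $\yhat$ allows a binary search in $O(\log T)$ evaluations, for a total of $O(\sqrt T \log T)$ oracle calls per round. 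The main obstacle I anticipate is the random-playout step itself: verifying that the inflated constant really suffices to absorb the per-round variance with cumulative cost $O(\sqrt T)$ rather than $O(T)$ should require a tangent-sequence / symmetrization-within-symmetrization argument inside the conditional expectation over $\xi^{(t)}$, closely mirroring Proposition~\ref{prop:slowrelaxation} but tracking per-round variance contributions rather than just expectations.
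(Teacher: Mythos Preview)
Your proposal contains a genuine conceptual gap in the random-playout step. You posit that the inflated constant ($6L$ versus $2L$) is there to ``absorb the $O(1/\sqrt T)$ per-round fluctuation'' of the sampled relaxation around its mean, and that summing these fluctuations yields the $\sqrt T$ additive term. This is not how the argument works, and it is unclear your variance-tracking route would even close: the random object $\sup_{f}[\,6L\sum_{j,s>t}\epsilon_{s,j}f(x_{s,j}) - L_t(f)\,]$ does not have $O(1/\sqrt T)$ fluctuations, so the claimed per-round budget is not available.

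The actual mechanism is a single application of Jensen's inequality and requires no variance argument at all. With $\yhat_t$ defined as the argmin for the \emph{realized} draw $\xi^{(t)}$, one has
\[
\sup_{y_t}\Big\{\ell(\yhat_t,y_t)+\ee_{\mu,\epsilon}[\cdots]\Big\}
\;\le\;
\ee_{\mu,\epsilon}\Big[\sup_{y_t}\big\{\ell(\yhat_t,y_t)+\cdots\big\}\Big]
\;=\;
\ee_{\mu,\epsilon}\Big[\inf_{\yhat}\sup_{y_t}\big\{\ell(\yhat,y_t)+\cdots\big\}\Big],
\]
the first step being Jensen (since $\sup_{y_t}$ is convex) and the equality using that $\yhat_t$ is the pointwise minimizer for the sampled realization. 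From here the entire chain of manipulations in Proposition~\ref{prop:slowrelaxation} (minimax, linearization via the subgradient, symmetrization, and the coupling from Lemma~\ref{lem:coupling}) applies verbatim \emph{inside} the outer $\ee_{\mu,\epsilon}$, yielding admissibility of $\yhat_t$ with respect to the relaxation of Proposition~\ref{prop:slowrelaxation}. No inflated constant is needed for this; the $6L$ in the statement is slack, not an analytical device. Consequently the $\sqrt T$ term in the regret bound has nothing to do with random-playout variance: it arises purely from the $\delta$-discretization of $[-1,1]$ used to make the per-round min--max oracle-efficient, with $\delta = 1/(L\sqrt T)$ contributing $L\delta T = \sqrt T$ to the regret. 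Your oracle-complexity analysis (grid over $y$, convex search over $\yhat$, and the bilinear special case) is essentially correct, but the main ``obstacle'' you anticipate does not exist.
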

Theorem \ref{thm:fastrelaxation} is proved in Appendix \ref{app:relax}.  To understand why \eqref{eq:fastrelaxation} is oracle-efficient, note that we can discretize the interval $[-1,1]$ at scale $\frac{1}{L \sqrt{T}}$ to produce a set $S$ of size $2 L \sqrt{T}$.  For each $\yhat \in S$, we can exhaustively search $S$ for the optimal $y_t$ with $O(\sqrt{T})$ calls to the (value of the) ERM oracle.  Because the problem is convex in $\yhat$ (due to the convexity of $\ell$), we can run zeroth order optimization as in \cite{agarwal2011stochastic} to find $\yhat_t$ up to $\frac 1{\sqrt T}$ error in $O\left(\sqrt T \log T\right)$ calls to the oracle.  If the losses are linear, then the problem is convex in $y_t$ and is thus extremized on the boundary; further leveraging the linear loss allows the problem to be solved in 2 oracle calls per round.  Note that in the case of $y$ being binary-valued, we can think of $\ell(\yhat, y) = \frac{1 - \yhat y}2$ as the indicator loss when guessing $\yhat \in \{\pm 1\}$ and thus, for classification, we can get optimal regret with 2 oracle calls per round.

While Theorem \ref{thm:chainingupperbound} demonstrates that regret can depend on $\sigma$ only through a logarithmic factor, our relaxation-based algorithm has a polynomial dependence on $\sigma$.  Unfortunately, this polynomial dependence cannot in general be eliminated for any relaxation relying on the classical Rademacher complexity.  To see this, note that the regret of any algorithm is bounded below by $\V_T^{improp}(\F, \p(\sigma, \mu))$.  The following proposition shows that the value is in turn bounded below by a polynomial factor of $\sigma$.
\begin{proposition}\label{prop:nonparametriclowerbound}
    For any $\sigma \leq 1$ and $0 < p < 2$, there exists a measure $\mu$ and a function class $\F$ satisfying $\vc(\F, \delta) \lesssim \delta^{-p}$ such that for all $T \gtrsim \frac 1\sigma \log\left(\frac 1\sigma\right)$ with $\ell$ the absolute loss,
    \begin{equation}
        \sigma^{- \frac p4} \ee_\mu\left[\R_T(\F)\right] \lesssim \V_T^{improp}(\F, \p(\sigma, \mu))
    \end{equation}
\end{proposition}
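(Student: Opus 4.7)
The plan is to construct $\F$ as a union of nonparametric pieces localized to disjoint blocks $I_1,\ldots,I_N$ of $\mu$-mass $\sigma$ each, exploiting the fact that iid sampling from $\mu$ spreads its mass thinly across the $N=1/\sigma$ blocks, while a smooth adversary is free to concentrate entirely on a single block.

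Set $\MX=[0,1]$ with $\mu$ Lebesgue, partition $[0,1]$ into $N=\lceil 1/\sigma\rceil$ intervals $I_i=[(i-1)\sigma,i\sigma]$, and on each $I_i$ let $\F_i$ be the rescaled Hölder ball $\{g((\cdot-a_i)/\sigma): g\in C^{1/p}([0,1]),\,\|g\|_\infty,[g]_{1/p}\leq 1\}$; since $p<2$, the smoothness is $\beta=1/p>1/2$. Extend each $f\in\F_i$ by zero off $I_i$ and set $\F := \bigcup_i\F_i$. Because every $f\in\F$ vanishes off a single block, a shattering argument---any pattern on blocks disjoint from $\supp(f)$ is forced by the signs of the witness values $s_k$---shows $\vc(\F,\delta)=\max_i\vc(\F_i,\delta)\lesssim\delta^{-p}$.

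For the iid Rademacher upper bound, write $B_i=\{t: x_t\in I_i\}$ and $Y_i=\sup_{f\in\F_i}\sum_{t\in B_i}\epsilon_t f(x_t)$. Disjoint supports give $\ee_\mu[\R_T(\F)]=\E[\max_i Y_i]$. Chernoff (using $T\gtrsim\sigma^{-1}\log(1/\sigma)$) yields $|B_i|\asymp T\sigma$ uniformly in $i$; Dudley's entropy integral for Hölder with $p<2$ gives $\E[Y_i\mid|B_i|]\lesssim\sqrt{T\sigma}$; and $Y_i$ is sub-Gaussian with parameter $O(\sqrt{T\sigma})$ in the Rademacher signs, so the sub-Gaussian maximal inequality over the $N=1/\sigma$ blocks yields $\ee_\mu[\R_T(\F)]\lesssim\sqrt{T\sigma\log(1/\sigma)}$.

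For the value lower bound, use the non-adaptive smooth adversary that sets $p_t=\unif(I_1)$ for every $t$ (which is $\sigma$-smooth since $dp_t/d\mu=\sigma^{-1}\One_{I_1}$) and draws $y_t\in\{\pm 1\}$ uniformly and independently. For absolute loss with $y_t\in\{\pm 1\}$, $\ell(\yhat,y)=1-y\yhat$, so
\begin{equation*}
\E[\reg_T] = \E\bigl[\sup_{f\in\F}\sum_t y_t f(x_t)\bigr] - \E\bigl[\sum_t y_t\yhat_t\bigr],
\end{equation*}
and the second term vanishes because $\yhat_t$ is chosen before $y_t$ is revealed. All samples lie in $I_1$, so the sup is attained in $\F_1$, and the first term is the iid Rademacher complexity of $\F_1$ on $T$ uniform samples of $I_1$, which is $\gtrsim\sqrt{T}$ because $\F_1$ contains the constants $\pm 1$. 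Hence $\V^{improp}\gtrsim\sqrt{T}$ and $\V^{improp}/\ee_\mu[\R_T(\F)]\gtrsim \sigma^{-1/2}/\sqrt{\log(1/\sigma)}$, which dominates $\sigma^{-p/4}$ for $p<2$ and $\sigma$ small. The main technical obstacle is the iid Rademacher upper bound: the $\sqrt{\log(1/\sigma)}$ factor from maxing over $N=1/\sigma$ blocks must not swallow the polynomial-in-$\sigma$ gap, which is precisely what the hypothesis $T\gtrsim\sigma^{-1}\log(1/\sigma)$ ensures (by guaranteeing every block receives $\Theta(T\sigma)$ iid samples so that the per-block Rademacher saturates).
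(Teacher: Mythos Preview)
Your argument is correct and takes a genuinely different route from the paper's.

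The paper's construction is discrete: it fixes a class with $\vc(\F,\delta)\asymp\delta^{-p}$, takes a set $x_1,\dots,x_m$ shattering $\F$ at scale $\alpha=\sqrt\sigma$ with $m\asymp\sigma^{-p/2}$, and sets $\mu=(1-\sigma)\delta_{x^\ast}+\sigma\cdot\unif\{x_1,\dots,x_m\}$ where all $f\in\F$ vanish at $x^\ast$. The iid upper bound then comes from the fact that only $\Theta(\sigma T)$ samples are informative, giving $\E_\mu[\R_T(\F)]\lesssim\sqrt{\sigma T}$, while the lower bound $\E_{p_t}[\R_T(\F)]\gtrsim\alpha\sqrt{mT}=\sqrt{\sigma\cdot\sigma^{-p/2}T}$ uses shattering explicitly (their Lemma~\ref{lem:rademacherlowerbound}). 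The resulting ratio is exactly $\sigma^{-p/4}$.

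Your block construction with disjoint-support H\"older pieces trades this for a more natural base measure (Lebesgue) and a much simpler lower bound (constants $\pm1$ give $\V\gtrsim\sqrt T$ directly, no shattering argument needed). The price is a max over $N=1/\sigma$ sub-Gaussian blocks in the iid upper bound, costing a $\sqrt{\log(1/\sigma)}$ factor; your ratio is thus $\sigma^{-1/2}/\sqrt{\log(1/\sigma)}$ rather than $\sigma^{-p/4}$. Since $p<2$, this is actually a \emph{larger} separation than the paper claims, so the proposition follows, though your bound does not track the $p$-dependence of the gap. The paper's version is tighter in that it shows the gap degrades gracefully as $p\uparrow2$, whereas yours gives a uniform $\sigma^{-1/2}$ gap for every $p<2$; conversely, your argument avoids the somewhat delicate Rademacher-from-shattering lower bound entirely.
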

We construct a measure such that the learning problem is easy when the population distribution is $\mu$ by having $\mu$ concentrate a lot of mass on a distinguished point $x^\ast$ on which all the functions in $\F$ agree; thus, a sample from $\mu$ will include many copies of $x^\ast$, which incur no regret.  We then consider an i.i.d. adversary and let $p_t$ be uniform over a set of points that shatters $\F$ at scale $\sqrt{\sigma}$; in this way, we make it so a sample from $p_t$ will incur high regret and the gap between the performance on samples from $p_t$ and $\mu$ is relatively large.  A complete proof can be found in Appendix \ref{app:minimaxvalue}.  Note that Proposition \ref{prop:nonparametriclowerbound} is not in conflict with Theorem \ref{thm:chainingupperbound} because the example described above makes $\R_T(\F)$ polynomially small in $\sigma$ for a carefully designed $\mu$; in essence, the separation is created by making the Rademacher complexity much smaller than expected based on the complexity of the function class $\F$.

In the improper procedure \eqref{eq:fastrelaxation}, we have our first efficient algorithm for the smoothed online learning setting that works for a generic function class and achieves an optimal regret dependence on the horizon $T$.  There are three drawbacks to Theorem \ref{thm:fastrelaxation}.  First, $\yhat_t$ is \emph{improper}.  Second, our dependence on $\sigma$ is significantly worse than the optimal statistical dependence explored in Section \ref{sec:minimaxvalue}.  Third, while the algorithm is efficient, we may hope to have an algorithm that makes only $1$ oracle call per round in general.  We address these issues in the following section.


\section{Follow the Perturbed Leader and Oracle-Efficient Proper Learning}\label{sec:ftpl}

In the previous section, we provided an improper oracle-efficient algorithm that achieves optimal dependence on the horizon $T$, but is improper and requires more than one oracle call per round;  here we demonstrate that a proper learner can have similar regret in some situations with only 1 oracle call per time step.  In the following section, we will show that  
our algorithm's regret is optimal up to a polynomial factor for any oracle-efficient algorithm.  

In \cite{rakhlin2012relax}, the authors make use of the connection between relaxations, random playout, and the Follow the Perturbed Leader (FTPL) style algorithms pioneered in \cite{kalai2005efficient} to make the relaxation approach more efficient in some cases.  We expand upon this approach, using Theorem \ref{thm:fastrelaxation} as a starting point.  Indeed, the prediction $\yhat_t$ in \eqref{eq:fastrelaxation} is cosmetically very similar to that of FTPL, were we recall that the FTPL approach introduces a noise process $\omega(f)$ and, at each time step, sets
\begin{equation}\label{eq:ftpldef}
    f_t \in \argmin_{f \in \F} L_{t-1}(f) + \eta \omega(f)
\end{equation}
for some real-valued parameter $\eta$.  The perturbation $\eta \omega(f)$ acts to regularize the predictions; typically, the noise $\omega$ is independent across functions, with the classic example being exponential noise in \cite{kalai2005efficient}.  On the other hand, up to a sign, the supremum in \eqref{eq:fastrelaxation} returns the optimal value of $L_t(f) + \eta \omega(f)$ with appropriate values of $\eta$ and letting $\omega$ be the Rademacher process.  It is natural to wonder, then, if the min-max problem that is the source of the extra oracle calls is really necessary; we show below that it is not in the sense that FTPL provides an efficient, proper algorithm.

Were we to apply existing FTPL results, using independent perturbations for each $f$, we would require the enumeration of representative ``experts,'' which would preclude the desired oracle-efficiency.  Above, we saw that a Rademacher process perturbation is motivated by the relaxations of the previous section, but analysis is much easier with a Gaussian process.  We first treat the case of binary classification:
\begin{theorem}\label{thm:ftplclassification}
    Suppose that $\F: \cX \to [-1,1]$ is a function class and $\ell$ a loss function that is Lipschitz in both arguments.  Suppose further that we are in the smoothed online learning setting where $x_i$ are drawn from a distribution that is $\sigma$-smooth with respect to some distribution $\mu$ on $\cX$.  Let
    \begin{equation}
        \hat{\omega}_{t,n}(f) = \frac 1{\sqrt n}\sum_{ i =1}^n \gamma_{t,i} f(Z_{t,i})
    \end{equation}
    where the $Z_{t,i} \sim \mu$ are independent and the $\gamma_{t,i}$ are indpendent standard normal random variables.  Suppose that $\zeta \geq 0$ and consider the algorithm which uses the approximate ERM oracle to choose $f_t$ according to\footnote{Note that we have not included the total weight multiplying $\erma$ in \eqref{eq:ftpl-erm}, as in \eqref{eq:erm-def}; thus we are technically using a $\frac{\zeta}{T + \log(1/\delta) \cdot n}$-approximate ERM oracle with probability $1-O(\delta)$.}
    \begin{equation}
        L_{t-1}(f_t) + \eta \hat{\omega}_{t,n}(f_t) \leq \inf_{f \in \F} L_{t-1}(f) + \hat{\omega}_{t,n}(f) + \zeta \label{eq:ftpl-erm}
    \end{equation}
    and let $\yhat_t = f_t(x_t)$.  If $\F$ and $y_t$ are binary valued, $\vc(\F) \leq d$, then for appropriate choices of $n$ and $\eta$\footnote{Specified in Proposition \ref{prop:ftplclassificationbound} in Appendix \ref{subsec:conclude}}
    \begin{equation}
        \ee\left[\reg_T(f_t)\right] \lesssim \sqrt{\frac{d T \log T}{\sigma}}  + \zeta T
    \end{equation}
    More generally, if we let
    \begin{equation}
        \hat{\omega}_{t,n}(f) = \frac 1{\sqrt n} \sum_{i = 1}^n \gamma_{t,i} \ell(f(Z_{t,i}), y_{t,i})
    \end{equation}
    with $y_{t,i}$ drawn uniformly from $\epsilon \mathbb{Z} \cap [-1,1]$ and $\vc(\F, \delta) \lesssim \delta^{-p}$ for some $p < 2$, then for appropriate choices of the parameters\footnote{Given in Corollary \ref{cor:ftplsimplealgo} in Appendix \ref{subsec:conclude}.}, if $f_t$ is chosen according to \eqref{eq:ftpl-erm}, 
    \begin{equation}
        \ee\left[\reg_T(f_t)\right] \lesssim T^{\frac 34} \sigma^{- \frac 14} \log\left(\frac T \sigma\right) + \zeta T
    \end{equation}
\end{theorem}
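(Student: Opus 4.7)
The plan is to analyze the FTPL algorithm by combining the classical decomposition of regret into a stability term and a be-the-leader (BTL) term, while handling the correlated Gaussian-process perturbation $\hat\omega_{t,n}$ by exploiting two features: the perturbations are redrawn independently each round, and the covariate $x_t$ is $\sigma$-smooth with respect to $\mu$. Concretely, I would write
\[
\reg_T \;\leq\; \sum_{t=1}^T [\ell(f_t(x_t),y_t) - \ell(f_{t+1}(x_t),y_t)] \;+\; \Bigl[\sum_{t=1}^T \ell(f_{t+1}(x_t),y_t) - \inf_{f\in\MF}\sum_{t=1}^T \ell(f(x_t),y_t)\Bigr],
\]
where the first piece is the stability term and the second is the BTL term. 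Because the perturbations are fresh each round, the usual inductive BTL argument must be modified: I replace $f_{t+1}$ by the minimizer using a single common perturbation (to which BTL applies directly) and absorb the resulting discrepancy into the stability term, leaving a BTL contribution of size $\eta\,\E[\sup_f\hat\omega_{1,n}(f)]$.

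The stability term is where the main technical hurdle arises, and I plan to use a coupling trick. Define $\tilde f_t := \argmin_f L_{t-1}(f) + \eta\hat\omega_{t+1,n}(f)$. Since $\hat\omega_{t,n}$ and $\hat\omega_{t+1,n}$ are i.i.d.\ and independent of everything else, and since $(x_t,y_t)$ is drawn from a distribution depending only on the prior history (not on $\fhat_t$ itself), the joint law of $(\tilde f_t, x_t, y_t)$ agrees with that of $(f_t, x_t, y_t)$; hence $\E[\ell(f_t(x_t),y_t) - \ell(f_{t+1}(x_t),y_t)] = \E[\ell(\tilde f_t(x_t),y_t) - \ell(f_{t+1}(x_t),y_t)]$. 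Now $\tilde f_t$ and $f_{t+1}$ are $\zeta$-approximate minimizers of objectives differing only by the single loss $\ell(\cdot(x_t),y_t)$, so a variational argument bounds their loss gap by the probability that the Gaussian-process perturbation fails to dominate the inserted loss. Crucially, $\sigma$-smoothness of $x_t$ enters by letting the point evaluation at $x_t$ be replaced by an empirical average over the $n$ anchor samples $Z_{t+1,i}\sim\mu$ at a cost of a $1/\sigma$ factor, which is what calibrates the perturbation's resolution to the size of the inserted loss.

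The BTL contribution is bounded via $\E[\sup_f\hat\omega_{t,n}(f)] \lesssim \gc{n}{\MF}$ (or the Gaussian complexity of $\ell\circ\MF$ in the regression case), which I would control by Dudley's entropy integral applied to the scale-sensitive VC dimension assumption, yielding $O(\sqrt{d\log n})$ in the classification case and a suitable polynomial-in-$n$ bound under $\vc(\MF,\delta)\lesssim \delta^{-p}$. Assembling, the expected regret scales roughly as $T\cdot \eta\,\gc{n}{\MF}/(\sqrt n\,\sigma) + \eta\,\E[\sup_f\hat\omega_{1,n}(f)] + \zeta T$; optimizing $\eta$ and $n$ in the two cases yields the claimed rates $\sqrt{dT\log T/\sigma}$ and $T^{3/4}\sigma^{-1/4}\log(T/\sigma)$, with the regression case additionally needing a discretization of $y$-values onto an $\epsilon$-grid so that Dudley's bound can be applied cleanly to the loss-composed class.

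The main obstacle is turning the correlated Gaussian-process perturbation into a useful stability bound: standard FTPL analyses employ perturbations that are independent across experts and therefore permit per-expert anti-concentration, whereas here the shared samples $\{Z_{t,i}\}$ couple the perturbations of all $f\in\MF$, so such a decoupling is unavailable. The plan is to measure change-of-argmin probabilities via anti-concentration of Gaussian-process increments $\hat\omega_{t+1,n}(f) - \hat\omega_{t+1,n}(f')$, whose variance is set by the $L^2(\mu)$-pseudometric on $\MF$, and to use $\sigma$-smoothness as the glue linking the point insertion at $x_t$ to this averaged $L^2(\mu)$ geometry. Extending the binary-classification argument to the real-valued case is the most delicate step, since $\ell(f(Z),y)$ depends on $f$ in a more complex way than $f(Z)$ alone; the $y$-discretization together with a union bound over a suitable cover is expected to close this gap.
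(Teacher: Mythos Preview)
Your high-level architecture matches the paper's: a be-the-leader decomposition, a perturbation term controlled by Gaussian complexity, and a stability term attacked via anti-concentration of the Gaussian-process perturbation together with $\sigma$-smoothness. However, the stability step as you set it up has a genuine gap.

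After your coupling you must bound $\E[\ell(\tilde f_t(x_t),y_t)-\ell(f_{t+1}(x_t),y_t)]$, where $\tilde f_t$ and $f_{t+1}$ share the perturbation $\hat\omega_{t+1,n}$ but differ by the single loss at $(x_t,y_t)$. You propose to (i) show via anti-concentration that $\|\tilde f_t-f_{t+1}\|_{L^2(\mu)}$ is small, then (ii) use smoothness to pass from $L^2(\mu)$ to the point evaluation at $x_t$. Step (ii) fails because $f_{t+1}$ depends on $(x_t,y_t)$: smoothness gives $\E_{x\sim p_t}[g(x)]\le\sigma^{-1}\E_{x\sim\mu}[g(x)]$ only for functions $g$ not themselves varying with the integration variable. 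Concretely, even if you establish uniformly over $(x,w)$ that the disagreement set $D_{x,w}:=\{x':\tilde f_t(x')\neq f_{t+1,x,w}(x')\}$ satisfies $\mu(D_{x,w})\le\alpha^2$, this says nothing about the diagonal probability $\Pr_{x_t\sim p_t}[x_t\in D_{x_t,y_t}]$; one can build binary examples where each $D_{x,1}=\{x\}$, so the uniform bound is $1/N$ yet the diagonal probability is $1$. This is not a technicality: adding the loss at $(x_t,y_t)$ is precisely what can make $f_{t+1}$ disagree with $\tilde f_t$ \emph{at} $x_t$.

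The paper handles this by splitting the stability term once more via a tangent sample $(x_t',y_t')$ drawn independently of $(x_t,y_t)$. The first piece, $\E[\ell(f_t(x_t'),y_t')-\ell(f_{t+1}(x_t'),y_t')]$, is where your anti-concentration-plus-smoothness plan is carried out (Lemmas \ref{lem:stabilityftplprob}--\ref{lem:stabilityftplregression}); it works because $f_{t+1}$ is independent of the evaluation point $(x_t',y_t')$. The second piece, the \emph{generalization error} $\E[\ell(f_{t+1}(x_t'),y_t')-\ell(f_{t+1}(x_t),y_t)]$, captures exactly the diagonal dependence above and needs a separate argument (Lemma \ref{lem:gen-error}): a variant of the coupling lemma embeds $\Theta(\sigma n/\log T)$ i.i.d.\ $p_t$-samples among the perturbation anchors $Z_{t+1,i}$, so that $f_{t+1}$ effectively sees many fresh $p_t$-samples through the Gaussian noise, and a symmetrization step reduces the term to a Rademacher complexity at that sample size. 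This second argument is also what forces the constraint $\eta\ge\sqrt n$ (each anchor's Gaussian weight must dominate the inserted loss), which is why the single-perturbation regression case lands at $T^{3/4}\sigma^{-1/4}$ rather than $T^{2/3}\sigma^{-1/3}$.

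A smaller point: your assembled bound places $\eta$ in the numerator of the stability contribution, but a larger perturbation makes the argmin more stable; the correct trade-off puts $1/\eta$ there (cf.\ Lemmas \ref{lem:stabilityftplclassification}--\ref{lem:stabilityftplregression}), and it is this balance together with the $n$-dependent generalization-error term that yields the stated rates after optimizing $\eta$ and $n$.
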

In order to improve the regret for the case of real-valued labels, we introduce a second, stabilizing perturbation.  The following result bounds the regret of the resulting algorithm:
\begin{theorem}\label{thm:generalftpl}
    Suppose that $\F : \cX \to [-1,1]$ is a function class and $\ell: [-1,1] \times [-1,1] \to [0,1]$ is a loss function that is $L$-Lipschitz in \emph{both} arguments.  Suppose further that we are in the smooth online learning setting where $x_t$ is chosen from a distribution that is $\sigma$-smooth with respect to some $\mu$.  Fix $\epsilon > 0$ and consider the following two processes:
    \begin{align}
        \hat{\omega}_{t,m}(f) = \frac{1}{\sqrt{m}} \sum_{i = 1}^m \gamma_{t,i} f(Z_{t,i}) && \hat{\omega}_{t,n}'(f) = \sum_{j = 1}^n \gamma_{t,j}' \ell(f(Z_{t,j}'), y_{t,j}')
    \end{align}
    where $\gamma_{t,i}, \gamma_{t,j}'$ are independent standard normal random variables, $Z_{t,i}, Z_{t,j}' \sim \mu$, and $y_{t,j}'$ are independent and uniform on $\epsilon \mathbb{Z} \cap [-1,1]$.  Suppose that $f_t$ is chosen according to
    \begin{equation}
        L_{t-1}(f_t) + \eta \hat{\omega}_{t,m}(f_t) + \hat{\omega}_{t,n}(f_t) \leq \inf_{f \in \F} L_{t-1}(f) + \eta \hat{\omega}_{t,m}(f) + \hat{\omega}_{t,n}(f) + \zeta
    \end{equation}
    If there is some $p < 2$ such that $\vc(\F, \delta) \lesssim \delta^{-p}$ then for appropriate choices of the parameters\footnote{Outlined in Corollary \ref{cor:ftplparameters} in Appendix \ref{subsec:conclude}.}, we have:
    \begin{align}
        \ee\left[\reg_T(f_t)\right] \lesssim \frac{T^{\frac 23} \log T}{\sigma^{\frac 13}} + \zeta T
    \end{align}
    If $2 \leq p < \infty$, then there are appropriate choices of parameters such that $\ee\left[\reg_T(f_t)\right] = o(T)$.

  \end{theorem}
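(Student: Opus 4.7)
The plan is to extend the FTPL analysis of Theorem \ref{thm:ftplclassification} to real-valued labels by exploiting the second, stabilizing perturbation $\hat{\omega}_{t,n}'$, whose structure directly regularizes loss values rather than function values. First, I apply the standard be-the-leader argument to decompose the regret into a stability term $\sum_t[\ell(f_t(x_t),y_t) - \ell(f_{t+1}(x_t),y_t)]$, a perturbation overhead controlled by $\ee[\sup_{f\in\F}(\eta\hat{\omega}_{t,m}(f) + \hat{\omega}_{t,n}'(f))]$, and the approximation slack $\zeta T$. The overhead is bounded via Dudley-style chaining against the scale-sensitive VC dimension $\vc(\F,\delta)\lesssim\delta^{-p}$, yielding contributions of order $\eta \cdot m^{\max(1/2,1-1/p)}$ and $n^{\max(1/2,1-1/p)}$ respectively, with only a logarithmic inflation from the $y'$-discretization that effectively enlarges the indexing class for $\hat{\omega}_{t,n}'$.

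Next I reduce from the smooth to the iid setting via Lemma \ref{lem:coupling}, at additive cost $T^3 e^{-\sigma k}$; choosing $k\sim\log(T)/\sigma$ introduces a $k/\sigma$-scale factor into the subsequent per-round stability analysis, since the anchors in the coupled world are drawn iid from $\mu$. I also discretize $y_t$ to the grid $\epsilon\mathbb{Z}\cap\YY$ at cost $L\epsilon T$; this places $(x_t,y_t)$ and the stabilizing anchors $(Z_{t,j}',y_{t,j}')$ in the same finite-alphabet product space, which is essential for coupling them in the stability step.

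The core of the proof is the per-round stability bound. In Theorem \ref{thm:ftplclassification} the anchor-value perturbation $\hat{\omega}_{t,m}$ suffices because the associated pseudometric on $\F$ dominates the binary classification loss gap. This fails for real-valued labels: two functions that agree on $Z_{t,1},\dots,Z_{t,m}$ can differ arbitrarily at the true $x_t$, so we need a perturbation whose pseudometric controls $|\ell(f(x_t),y_t)-\ell(f'(x_t),y_t)|$ at the actual query. The stabilizing perturbation $\hat{\omega}_{t,n}'$ accomplishes exactly this by sampling loss evaluations. The key lemma is a change-of-measure / Gaussian anti-concentration argument: after coupling $(x_t,y_t)$ to one of the pairs $(Z_{t,j}',y_{t,j}')$ (using $\sigma$-smoothness for the $x$-coordinate and the matching discretizations for the $y$-coordinate), the conditional law of the single Gaussian coefficient $\gamma_{t,j}'$ given the leader identity $f_t$ remains close in total variation to a standard normal, with a location shift of order $1/\sqrt{n}$. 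Plugging this into the stability decomposition gives a per-round gap of order $L/\sqrt{n}$, and summing over $t$, absorbing the coupling and discretization costs, and optimizing $m,n,\eta,\epsilon,k$ against the perturbation overhead yields the rate $T^{2/3}\log(T)/\sigma^{1/3}$ when $p<2$; for $p\geq 2$ the same template still closes but with $m,n$ taken polynomial in $T$, giving sublinear but slower rates.

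The main obstacle is precisely the stability step. Unlike classical FTPL with independent perturbations per expert, both $\hat{\omega}_{t,m}$ and $\hat{\omega}_{t,n}'$ are globally correlated across $\F$ through single Gaussian-process draws, so one cannot simply invoke anti-concentration for each function separately. Controlling the leverage that conditioning on ``$f_t$ is the $\zeta$-approximate minimizer of $L_{t-1} + \eta\hat{\omega}_{t,m} + \hat{\omega}_{t,n}'$'' places on a single coordinate $\gamma_{t,j}'$ of the Gaussian process requires a careful change-of-measure estimate that tracks the approximation slack (forcing the $\zeta/(T+n\log(1/\delta))$-scaled oracle precision flagged in the footnote) as well as the dependence induced by the random anchor sample itself. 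Combining this with the Gaussian supremum bound and the $\epsilon$-discretization, and verifying that the two perturbations do not interfere destructively in the conditional anti-concentration, is the principal technical hurdle.
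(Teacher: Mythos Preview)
Your proposal has the right high-level shape (BTL decomposition, discretize $y$, balance parameters), but the roles of the two perturbations are inverted relative to the paper, and this leads you to miss a crucial intermediate decomposition.

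In the paper, the stability term is first split via a \emph{tangent sequence} (Lemma~\ref{lem:btl}): writing $(x_t',y_t')$ for an independent copy of $(x_t,y_t)$,
\[
\ee[\ell(f_t(x_t),y_t)-\ell(f_{t+1}(x_t),y_t)] = \underbrace{\ee[\ell(f_t(x_t'),y_t')-\ell(f_{t+1}(x_t'),y_t')]}_{\text{stability}} + \underbrace{\ee[\ell(f_{t+1}(x_t'),y_t')-\ell(f_{t+1}(x_t),y_t)]}_{\text{generalization error}}.
\]
These two pieces are handled by \emph{different} perturbations. The stability piece is controlled by $\hat\omega_{t,m}$ via a process-level anti-concentration bound on the infimum of the Gaussian process (Lemma~\ref{lem:stabilityftplprob}), which gives $\pp(\|f_t-f_{t+1}\|_{L^2(\hat\mu_m)}>\alpha)\lesssim \alpha^{-4}\eta^{-2}+\alpha^{-2}\eta^{-1}\ee[\sup_f\hat\omega_{t,m}(f)]$; Lipschitzness plus $\sigma$-smoothness then converts $L^2(\mu)$-closeness into loss-closeness at $x_t'$. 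This works for real-valued $\F$ just as in the binary case---your claim that ``this fails for real-valued labels'' is incorrect, because smoothness of $p_t$ gives $\ee_{x_t'\sim p_t}|f(x_t')-g(x_t')|\le\sigma^{-1/2}\|f-g\|_{L^2(\mu)}$ regardless of the range of $\F$.

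The second perturbation $\hat\omega_{t,n}'$ is used for the \emph{generalization error} term (Lemma~\ref{lem:gen-error}), not stability. Here one does couple $(x_t,y_t^\epsilon)$ to one of the anchors $(Z_{t,j}',y_{t,j}')$, but the mechanism is not anti-concentration on a single Gaussian coordinate; rather, one uses exchangeability of the coupled pair together with the linear-in-$f$ structure of $\hat\omega_{t,n}'$ (after passing to $\ell\circ\F$) to show $\ee[\ell(f_{t+1}(x_t),y_t)]\ge\ee[\ell(f_{t+1}(x_{I_j}),\hat y_{I_j})]-2\zeta$ (Lemma~\ref{lem:switch-sample}), and then symmetrizes over the $n/m$ independent coupled indices. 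The $y$-discretization is needed precisely so that $(x_t,y_t^\epsilon)$ is $\epsilon\sigma$-smooth with respect to $\mu\otimes\mathrm{Unif}(\epsilon\mathbb{Z}\cap[-1,1])$, enabling this coupling.

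Your proposed single-coordinate total-variation argument on $\gamma_{t,j}'$ conditioned on the leader identity would have to confront exactly the global correlation problem you flag, and you do not indicate how to resolve it; the paper sidesteps this entirely by the tangent-sequence split and by handling the two resulting terms with structurally different arguments.
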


In Appendix \ref{subsec:conclude} we provide a slightly more general form of the above regret bounds, including in the case when the labels are smooth with respect to a known measure.  Further, we give the precise dependence of our regret bounds on $n$ and $\eta$; the optimal values of these parameters are polynomial in $L, T, 1/\sigma$, and the complexity of the function class.  Interestingly, we can still achieve regret with the same dependence on $T$ by setting the parameters independently of $\sigma$ and $L$; this is useful for applications where we can assume $\sigma$-smoothness but do not know what $\sigma$ is.

The proofs of Theorems \ref{thm:ftplclassification} and \ref{thm:generalftpl} proceed similarly.  First, we apply a variant of the classic ``Be-the-Leader'' approach \cite[Lemma 3.1]{cesa2006prediction}, which leads to a regret decomposition into a perturbation term and a stability term.  The perturbation term is easily controlled with classical empirical process theory.  For the stability term, we further decompose the regret into a term quantifying the difference in losses of $f_t$ and $f_{t+1}$ on a tangent sequence and another quantifying the dependence of $f_{t+1}$ on $x_t, y_t$.  For the former term, we prove a novel anti-concentration inequality for the infimum of a Gaussian process, which may be of independent interest, and apply this inequality to control the Wasserstein distance between $f_t$ and $f_{t+1}$.    
We bound the latter term in the case of linear loss in a similar way as \cite{haghtalab2022oracle} did for the corresponding term in their algorithm.  This suffices for the binary labels case, but to extend to the more general setting, we use a discretization scheme to reduce to the case that the labels are also chosen in a smooth manner with respect to some distribution; we then reduce this setting to the case of linear loss and apply our earlier bound.  The details can be found in Appendix \ref{app:ftpl}.

The stability estimate was a significant technical challenge due to the complex dependence structure of $\hat{\omega}_{t,n}$ accross $\F$; most regret bounds for FTPL-style algorithms are simplified by independent perturbations.  To the best of our knowledge, Theorem \ref{thm:generalftpl} constitutes the first proof of an FTPL regret bound where the algorithm uses a generic Gaussian process as the perturbation.  


\section{Computational Lower Bounds}\label{sec:lowerbound}
Comparing the results of Theorem \ref{thm:chainingupperbound} and Theorem \ref{thm:generalftpl}, we notice that the requirement of oracle efficiency incurs an exponential loss in the regret's dependence on $\sigma$. In this section, we show that this exponential gap is necessary for any oracle-efficient algorithm. 
\begin{theorem}
   \label{thm:oracle-lb}
  Fix any $T \in \BN$ and $\sigma \in (0,1]$. 
In the ERM oracle model, any randomized algorithm cannot guarantee expected regret  smaller than $\frac{T}{200}$ against a $\sigma$-smooth online adversary over $T$ rounds and any  binary $\MF$ with $|\MF| \leq 1/\sigma$ in total time smaller than $\widetilde{O}(1/\sqrt{\sigma})$. 
\end{theorem}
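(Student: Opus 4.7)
My plan is to adapt the oracle-complexity lower bound of \cite{hazan2016computational} to the $\sigma$-smooth online setting via Yao's minimax principle. I would first reduce the lower bound for randomized algorithms against adversarial inputs to one for deterministic algorithms under a distribution $\sD$ over hard instances. An instance in the support of $\sD$ specifies a binary function class $\MF$ with $|\MF| \leq 1/\sigma$, a distinguished measure $\mu$ on $\MX$, a $\sigma$-smooth sequence of labeled contexts, and an adversarial implementation of the ERM oracle consistent with a hidden target $f^{\ast} \in \MF$ drawn uniformly from $\MF$ at the outset.

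The construction takes $N := \lfloor 1/\sigma \rfloor$ and sets $\MX$ to be of polynomial size with $\mu$ uniform. The class $\MF = \{f_1, \ldots, f_N\}$ would be designed (via a random or algebraic construction) so that any two distinct functions disagree on a constant fraction of $\MX$. The adversary plays contexts $x_t \sim \mu$, which is trivially $\sigma$-smooth, with labels $y_t = f^{\ast}(x_t)$. Under this choice, $f^{\ast}$ attains zero cumulative loss while every other $f \in \MF$ incurs a per-round expected loss bounded below by a universal constant; hence an algorithm that fails to predict in agreement with $f^{\ast}$ on more than a $(1-\Omega(1))$-fraction of rounds suffers regret $\Omega(T)$, and one can tune constants to reach the $T/200$ threshold.

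The adversarial ERM oracle maintains a set $S \subseteq \MF$ of hypotheses still compatible with the commitments forced by prior answers, and on each query returns a weighted-ERM minimizer from $S$ via adversarial tie-breaking chosen to keep $|S|$ large; the target $f^{\ast}$ is then revealed by drawing uniformly from the final $S$, so that $f^{\ast}$ remains nearly uniform conditional on the algorithm's transcript. The crux is a birthday-paradox-style counting: after $k$ total operations, the algorithm has listed at most $k$ tuples and observed at most $k$ functions from the oracle, and one shows that these observations can collectively exclude at most $\widetilde{O}(k^{2})$ hypotheses from $S$. Choosing $k = \widetilde{o}(\sqrt{1/\sigma}) = \widetilde{o}(\sqrt{N})$ then leaves $|S| \geq N/2$, so the algorithm cannot track $f^{\ast}$ on more than a vanishing fraction of rounds, yielding the desired regret lower bound.

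The main obstacle will be designing $\MF$ so that the adversarial oracle's sequence of answers is \emph{consistent}, i.e.\ realizable post-hoc as the exact weighted ERM for a single fixed $f^{\ast}$ drawn from the residual set $S$. This requires a combinatorial construction, analogous to the random Boolean classes in \cite{hazan2016computational}, in which for every small query set the algorithm can pose, many $f \in \MF$ are simultaneously weighted-ERM-optimal, and every partial set of commitments can be extended to a full function in $\MF$. A secondary step is to convert the information-theoretic indistinguishability of $f^{\ast}$ into the concrete $T/200$ regret bound, which I would handle by a direct coupling between the algorithm's predictions and the distribution of $f^{\ast}$-labels.
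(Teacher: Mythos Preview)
Your proposal has a genuine gap that breaks the argument for the interesting regime of $T$. In your construction, the adversary plays contexts $x_t \sim \mu$ with labels $y_t = f^\ast(x_t)$, where the functions in $\MF$ pairwise disagree on a constant fraction of $\MX$. But then the labels themselves identify $f^\ast$ after $O(\log|\MF|) = O(\log(1/\sigma))$ rounds, completely independently of what the ERM oracle does: the learner can simply call the ERM oracle on the observed pairs $(x_1,y_1),\ldots,(x_{t-1},y_{t-1})$ with $0$--$1$ loss, and once $t \gtrsim \log(1/\sigma)$, with high probability only $f^\ast$ is consistent with all of them, so the oracle is forced to return $f^\ast$. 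This gives regret $O(\log(1/\sigma))$ with one oracle call per round, which is far below $T/200$ whenever $T \gg \log(1/\sigma)$. Your deferred-decision trick for the oracle answers does not help, because the labels $y_t$ must be revealed to the learner each round and cannot be deferred; they are the dominant source of information about $f^\ast$, not the oracle. The tension is fundamental: for regret to be $\Omega(T)$ you need the functions to disagree often on the sampled $x_t$, but exactly that disagreement makes the version space collapse from the labels alone.

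The paper resolves this tension by abandoning the realizable single-target picture entirely. It reduces from Aldous' local-search problem on the hypercube $\{0,1\}^{\log N}$: from a hard instance $\phi$ it builds a two-player zero-sum game $G^\phi \in \{-1,1\}^{N \times N}$ whose Nash value encodes the parity of $\max \phi$, and it shows that approximating this value given only value and best-response oracles requires $\widetilde\Omega(\sqrt{N})$ queries to $\phi$. The link to online learning is the standard self-play reduction: two copies of the learner playing against each other for $T$ rounds yield an approximate equilibrium, and each ERM oracle call becomes a best-response query to $G^\phi$. Because the ``right'' prediction at round $t$ depends on the other copy's play rather than on a fixed hidden $f^\ast$, the labels revealed during the interaction carry no information that short-circuits the search. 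The additional technical work in the paper is handling the fact that the ERM oracle of Definition~\ref{def:oracle} allows arbitrary real (and negative) weights, which forces a more careful construction of the best-response simulation than in \cite{hazan2016computational}.
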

Theorem \ref{thm:oracle-lb} is proved in Appendix \ref{app:aldous} by constructing a family of function classes on a space $\cX$ of size $1/\sigma$ and noting that a worst-case adaptive adversary is $\sigma$-smooth in this setting.  The construction then mirrors that in \cite{hazan2016computational}, which reduces from Aldous' problem \citep{aldous1983minimization}; the main difference being that Definition \ref{def:oracle} allows for negative weights in the ERM oracle, which complicates the proof.


As an immediate corollary of Theorem \ref{thm:oracle-lb}, we obtain the following regret lower bound for computationally efficient algorithms in the ERM oracle model, i.e., those whose total time after $T$ time steps is $\textrm{poly}(T)$:
\begin{corollary}
  \label{cor:stat-comp-gap}
  Fix any $\alpha \geq 1$, $\ep < 1/200, \sigma \in (0,1]$, and $d \geq \log 1/\sigma$. 
Any algorithm whose total time in the ERM oracle model over $T$ rounds is bounded as $T^\alpha$ requires that $T \geq \widetilde{\Omega}\left(\max \left\{ \frac{d}{\ep^2}, \sigma^{-\frac{1}{2\alpha}} \right\} \right)$ to achieve regret $\ep T$ for classes $\MF$ of VC dimension $d$ against a $\sigma$-smooth adversary.
\end{corollary}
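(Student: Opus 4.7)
The plan is to establish the two lower bounds $T \geq \widetilde{\Omega}(d/\ep^2)$ and $T \geq \widetilde{\Omega}(\sigma^{-1/(2\alpha)})$ separately, and then take their maximum. The first is purely statistical and already appears in the literature; the second is a direct contrapositive of Theorem \ref{thm:oracle-lb}.

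First, I would invoke the classical (distribution-free) statistical lower bound for online (or even i.i.d.\ batch) learning of a class of VC dimension $d$, which shows that achieving regret $\ep T$ requires $T = \Omega(d/\ep^2)$. This is precisely the bottom row of Table \ref{tab:results} specialized to $\sigma = 1$, and is given for the smooth setting (with an additional $\log(1/\sigma)$ factor, which we can absorb into the $\widetilde{\Omega}$) by \cite[Theorem 3.2]{haghtalab2021smoothed}. Since the bound holds for any (even unbounded-time) learner, it applies a fortiori to any algorithm whose total time is $T^\alpha$.

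Next, I would derive the $\sigma^{-1/(2\alpha)}$ bound by contrapositive application of Theorem \ref{thm:oracle-lb}. Suppose the algorithm runs in the ERM oracle model with total time bounded by $T^\alpha$ and attains expected regret at most $\ep T$ over $T$ rounds against every $\sigma$-smooth adversary on the hard instance from Theorem \ref{thm:oracle-lb} (which uses $|\MF| \leq 1/\sigma$, consistent with the assumption $d \geq \log 1/\sigma$). Because $\ep < 1/200$, the regret is in particular strictly less than $T/200$, so Theorem \ref{thm:oracle-lb} forces the total time to be at least $\widetilde{\Omega}(1/\sqrt{\sigma})$. Combining this with the time budget $T^\alpha$ gives $T^\alpha \geq \widetilde{\Omega}(\sigma^{-1/2})$, and solving for $T$ yields $T \geq \widetilde{\Omega}(\sigma^{-1/(2\alpha)})$.

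Taking the maximum of the two lower bounds gives the claim. There is essentially no technical obstacle here: the only thing to be slightly careful about is making sure the hard family of function classes furnished by Theorem \ref{thm:oracle-lb} can be embedded into the hypothesis classes of VC dimension $d$ permitted by the corollary (which is why we assume $d \geq \log 1/\sigma$, so that a class on $1/\sigma$ points of VC dimension $\log 1/\sigma$ fits inside some class of VC dimension $d$), and that the regret threshold $T/200$ of Theorem \ref{thm:oracle-lb} is compatible with the assumption $\ep < 1/200$ so the contrapositive applies.
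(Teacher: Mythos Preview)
Your proposal is correct and mirrors the paper's own proof: the paper also splits into the statistical lower bound $T \gtrsim d/\ep^2$ (obtained there via an online-to-batch reduction and the classical i.i.d.\ lower bound of \cite{vapnik1974theory}, rather than Haghtalab et al.'s smoothed bound) and the computational bound obtained by contraposing Theorem \ref{thm:oracle-lb} using $\vc(\MF) \leq \log(1/\sigma) \leq d$ and $\ep < 1/200$. The only cosmetic difference is that the paper phrases it as a two-case analysis rather than taking a maximum of the two bounds.
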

Corollary \ref{cor:stat-comp-gap} and Theorem \ref{thm:chainingupperbound} show that there is an exponential statistical-computational gap for smoothed online learning in the ERM oracle model: for general classes $\MF$, it is possible to achieve regret proportional to $\log(1/\sigma)$, but the regret must be polynomial in $1/\sigma$ if the algorithm is required to be oracle-efficient.  While Theorem \ref{thm:oracle-lb} and Corollary \ref{cor:stat-comp-gap} get a lower bound only on the total computation time, as opposed to the number of oracle calls, we provide analogous results obtaining lower bounds on the number of oracle calls with an \emph{approximate} ERM oracle in Theorem \ref{thm:oracle-lb-apx} and Corollary \ref{cor:sc-gap-apx} in the appendix.  In particular, we show that any algorithm with $T^{O(1)}$ oracle calls with a $1/T^{O(1)}$-approximate ERM oracle needs $T \geq \max\{ d, 1/\ep, 1/\sigma \}^{\Omega(1)}$ to obtain sublinear regret against binary classes of VC dimension $d$.


\section*{Acknowledgements}
AB acknowledges support from the National Science Foundation Graduate Research Fellowship
under Grant No. 1122374. 
NG is supported by a Fannie \& John Hertz Foundation Fellowship and an NSF Graduate
Fellowship. 
AR acknowledges support from the ONR through awards N00014-20-1-2336 and N00014-20-1-2394, from the NSF through awards DMS-2031883 and DMS-1953181, and from the ARO through award W911NF-21-1-0328.

\bibliographystyle{alpha}
\bibliography{smoothedonlinelearning}

\newcommand{\etalchar}[1]{$^{#1}$}
\begin{thebibliography}{FRSLX20}

\bibitem[Aar06]{aaronson2006lower}
Scott Aaronson.
\newblock Lower bounds for local search by quantum arguments.
\newblock {\em SIAM Journal on Computing}, 35(4):804--824, 2006.

\bibitem[AB09]{arora2009computational}
Sanjeev Arora and Boaz Barak.
\newblock {\em Computational complexity: a modern approach}.
\newblock Cambridge University Press, 2009.

\bibitem[AFH{\etalchar{+}}11]{agarwal2011stochastic}
Alekh Agarwal, Dean~P Foster, Daniel~J Hsu, Sham~M Kakade, and Alexander
  Rakhlin.
\newblock Stochastic convex optimization with bandit feedback.
\newblock {\em Advances in Neural Information Processing Systems},
  24:1035--1043, 2011.

\bibitem[AGH19]{agarwal2019learning}
Naman Agarwal, Alon Gonen, and Elad Hazan.
\newblock Learning in non-convex games with an optimization oracle.
\newblock In {\em Conference on Learning Theory}, pages 18--29. PMLR, 2019.

\bibitem[Ald83]{aldous1983minimization}
David Aldous.
\newblock Minimization algorithms and random walk on the $ d $-cube.
\newblock {\em The Annals of Probability}, 11(2):403--413, 1983.

\bibitem[ALST14]{abernethy2014online}
Jacob Abernethy, Chansoo Lee, Abhinav Sinha, and Ambuj Tewari.
\newblock Online linear optimization via smoothing.
\newblock In {\em Conference on Learning Theory}, pages 807--823. PMLR, 2014.

\bibitem[ALT15]{abernethy2015fighting}
Jacob Abernethy, Chansoo Lee, and Ambuj Tewari.
\newblock Fighting bandits with a new kind of smoothness.
\newblock {\em arXiv preprint arXiv:1512.04152}, 2015.

\bibitem[AV06]{arthur2006worstcase}
David Arthur and Sergei Vassilvitskii.
\newblock Worst-case and smoothed analysis of the icp algorithm, with an
  application to the k-means method.
\newblock In {\em 2006 47th Annual IEEE Symposium on Foundations of Computer
  Science (FOCS'06)}, pages 153--164, 2006.

\bibitem[BDR21]{pmlr-v134-block21a}
Adam Block, Yuval Dagan, and Alexander Rakhlin.
\newblock Majorizing measures, sequential complexities, and online learning.
\newblock In Mikhail Belkin and Samory Kpotufe, editors, {\em Proceedings of
  Thirty Fourth Conference on Learning Theory}, volume 134 of {\em Proceedings
  of Machine Learning Research}, pages 587--590. PMLR, 15--19 Aug 2021.

\bibitem[BKM20]{boodaghians2020smoothed}
Shant Boodaghians, Rucha Kulkarni, and Ruta Mehta.
\newblock Smoothed efficient algorithms and reductions for network coordination
  games.
\newblock In Thomas Vidick, editor, {\em 11th Innovations in Theoretical
  Computer Science Conference, {ITCS} 2020, January 12-14, 2020, Seattle,
  Washington, {USA}}, volume 151 of {\em LIPIcs}, pages 73:1--73:15. Schloss
  Dagstuhl - Leibniz-Zentrum f{\"{u}}r Informatik, 2020.

\bibitem[BLW96]{bartlett1996fat}
Peter~L Bartlett, Philip~M Long, and Robert~C Williamson.
\newblock Fat-shattering and the learnability of real-valued functions.
\newblock {\em journal of computer and system sciences}, 52(3):434--452, 1996.

\bibitem[BV04]{beier2004typical}
Rene Beier and Berthold V\"{o}cking.
\newblock Typical properties of winners and losers in discrete optimization.
\newblock In {\em Proceedings of the Thirty-Sixth Annual ACM Symposium on
  Theory of Computing}, STOC '04, page 343–352, New York, NY, USA, 2004.
  Association for Computing Machinery.

\bibitem[CBCG04]{cesa2004generalization}
Nicolo Cesa-Bianchi, Alex Conconi, and Claudio Gentile.
\newblock On the generalization ability of on-line learning algorithms.
\newblock {\em IEEE Transactions on Information Theory}, 50(9):2050--2057,
  2004.

\bibitem[CBL99]{cesa1999prediction}
Nicolo Cesa-Bianchi and G{\'a}bor Lugosi.
\newblock On prediction of individual sequences.
\newblock {\em The Annals of Statistics}, 27(6):1865--1895, 1999.

\bibitem[CBL06]{cesa2006prediction}
Nicolo Cesa-Bianchi and G{\'a}bor Lugosi.
\newblock {\em Prediction, learning, and games}.
\newblock Cambridge university press, 2006.

\bibitem[CH15]{cohen2015following}
Alon Cohen and Tamir Hazan.
\newblock Following the perturbed leader for online structured learning.
\newblock In {\em Proceedings of the 32nd International Conference on
  International Conference on Machine Learning - Volume 37}, ICML'15, page
  1034–1042. JMLR.org, 2015.

\bibitem[CM63]{choquet1963existence}
Gustave Choquet and Paul-Andr{\'e} Meyer.
\newblock Existence et unicit{\'e} des repr{\'e}sentations int{\'e}grales dans
  les convexes compacts quelconques.
\newblock In {\em Annales de l'institut Fourier}, volume~13, pages 139--154,
  1963.

\bibitem[DG21]{daskalakis2021fast}
Constantinos Daskalakis and Noah Golowich.
\newblock Fast rates for nonparametric online learning: From realizability to
  learning in games.
\newblock {\em arXiv preprint arXiv:2111.08911}, 2021.

\bibitem[DHL{\etalchar{+}}17]{dudik2017oracle}
Miroslav Dud{\'\i}k, Nika Haghtalab, Haipeng Luo, Robert~E Schapire, Vasilis
  Syrgkanis, and Jennifer~Wortman Vaughan.
\newblock Oracle-efficient online learning and auction design.
\newblock In {\em 2017 ieee 58th annual symposium on foundations of computer
  science (focs)}, pages 528--539. IEEE, 2017.

\bibitem[DLN13]{devroye2013prediction}
Luc Devroye, G{\'a}bor Lugosi, and Gergely Neu.
\newblock Prediction by random-walk perturbation.
\newblock In {\em Conference on Learning Theory}, pages 460--473. PMLR, 2013.

\bibitem[Dud67]{dudley1967sizes}
Richard~M Dudley.
\newblock The sizes of compact subsets of hilbert space and continuity of
  gaussian processes.
\newblock {\em Journal of Functional Analysis}, 1(3):290--330, 1967.

\bibitem[ER17]{etscheid2017smoothed}
Michael Etscheid and Heiko R\"{o}glin.
\newblock Smoothed analysis of local search for the maximum-cut problem.
\newblock {\em ACM Trans. Algorithms}, 13(2), mar 2017.

\bibitem[Fer75]{fernique1975regularite}
Xavier Fernique.
\newblock Regularit{\'e} des trajectoires des fonctions al{\'e}atoires
  gaussiennes.
\newblock In {\em Ecole d’Et{\'e} de Probabilit{\'e}s de Saint-Flour
  IV—1974}, pages 1--96. Springer, 1975.

\bibitem[FKQR21]{foster2021statistical}
Dylan~J Foster, Sham~M Kakade, Jian Qian, and Alexander Rakhlin.
\newblock The statistical complexity of interactive decision making.
\newblock {\em arXiv preprint arXiv:2112.13487}, 2021.

\bibitem[FR20]{foster2020beyond}
Dylan Foster and Alexander Rakhlin.
\newblock Beyond ucb: Optimal and efficient contextual bandits with regression
  oracles.
\newblock In {\em International Conference on Machine Learning}, pages
  3199--3210. PMLR, 2020.

\bibitem[FRSLX20]{foster2020instance}
Dylan~J Foster, Alexander Rakhlin, David Simchi-Levi, and Yunzong Xu.
\newblock Instance-dependent complexity of contextual bandits and reinforcement
  learning: A disagreement-based perspective.
\newblock {\em arXiv preprint arXiv:2010.03104}, 2020.

\bibitem[FS97]{freund1997decision}
Yoav Freund and Robert~E Schapire.
\newblock A decision-theoretic generalization of on-line learning and an
  application to boosting.
\newblock {\em Journal of computer and system sciences}, 55(1):119--139, 1997.

\bibitem[FS99]{freund1999adaptive}
Yoav Freund and Robert~E Schapire.
\newblock Adaptive game playing using multiplicative weights.
\newblock {\em Games and Economic Behavior}, 29(1-2):79--103, 1999.

\bibitem[GBC16]{goodfellow2016deep}
Ian Goodfellow, Yoshua Bengio, and Aaron Courville.
\newblock {\em Deep learning}.
\newblock MIT press, 2016.

\bibitem[GG15]{gaillard2015chaining}
Pierre Gaillard and S{\'e}bastien Gerchinovitz.
\newblock A chaining algorithm for online nonparametric regression.
\newblock In {\em Conference on Learning Theory}, pages 764--796. PMLR, 2015.

\bibitem[Han16]{hannan20164}
James Hannan.
\newblock 4. approximation to rayes risk in repeated play.
\newblock In {\em Contributions to the Theory of Games (AM-39), Volume III},
  pages 97--140. Princeton University Press, 2016.

\bibitem[HHSY22]{haghtalab2022oracle}
Nika Haghtalab, Yanjun Han, Abhishek Shetty, and Kunhe Yang.
\newblock Oracle-efficient online learning for beyond worst-case adversaries.
\newblock {\em arXiv preprint arXiv:2202.08549}, 2022.

\bibitem[HK16]{hazan2016computational}
Elad Hazan and Tomer Koren.
\newblock The computational power of optimization in online learning.
\newblock In {\em Proceedings of the forty-eighth annual ACM symposium on
  Theory of Computing}, pages 128--141, 2016.

\bibitem[HRS20]{haghtalab2020smoothed}
Nika Haghtalab, Tim Roughgarden, and Abhishek Shetty.
\newblock Smoothed analysis of online and differentially private learning.
\newblock {\em arXiv preprint arXiv:2006.10129}, 2020.

\bibitem[HRS21]{haghtalab2021smoothed}
Nika Haghtalab, Tim Roughgarden, and Abhishek Shetty.
\newblock Smoothed analysis with adaptive adversaries.
\newblock {\em arXiv preprint arXiv:2102.08446}, 2021.

\bibitem[HW07]{helmbold2007learning}
David~P. Helmbold and Manfred~K. Warmuth.
\newblock Learning permutations with exponential weights.
\newblock In {\em COLT}, 2007.

\bibitem[KM72]{klee1972good}
Victor Klee and George~J Minty.
\newblock How good is the simplex algorithm.
\newblock {\em Inequalities}, 3(3):159--175, 1972.

\bibitem[KS94]{kearns1994efficient}
Michael~J Kearns and Robert~E Schapire.
\newblock Efficient distribution-free learning of probabilistic concepts.
\newblock {\em Journal of Computer and System Sciences}, 48(3):464--497, 1994.

\bibitem[KV05]{kalai2005efficient}
Adam Kalai and Santosh Vempala.
\newblock Efficient algorithms for online decision problems.
\newblock {\em Journal of Computer and System Sciences}, 71(3):291--307, 2005.

\bibitem[Lit88]{littlestone1988learning}
Nick Littlestone.
\newblock Learning quickly when irrelevant attributes abound: A new
  linear-threshold algorithm.
\newblock {\em Machine learning}, 2(4):285--318, 1988.

\bibitem[Man20]{manthey2020smoothed}
Bodo Manthey.
\newblock {\em Smoothed Analysis of Local Search}, pages 285--308.
\newblock Cambridge University Press, United States, December 2020.

\bibitem[Men02]{mendelson2002rademacher}
Shahar Mendelson.
\newblock Rademacher averages and phase transitions in glivenko-cantelli
  classes.
\newblock {\em IEEE transactions on Information Theory}, 48(1):251--263, 2002.

\bibitem[Rou21]{roughgarden2021beyond}
Tim Roughgarden, editor.
\newblock {\em Beyond the Worst-Case Analysis of Algorithms}.
\newblock Cambridge University Press, 2021.

\bibitem[RS13]{pmlr-v30-Rakhlin13}
Alexander Rakhlin and Karthik Sridharan.
\newblock Online learning with predictable sequences.
\newblock In Shai Shalev-Shwartz and Ingo Steinwart, editors, {\em Proceedings
  of the 26th Annual Conference on Learning Theory}, volume~30 of {\em
  Proceedings of Machine Learning Research}, pages 993--1019, Princeton, NJ,
  USA, 12--14 Jun 2013. PMLR.

\bibitem[RS14]{rakhlin2014statistical}
Alexander Rakhlin and Karthik Sridharan.
\newblock Statistical learning and sequential prediction.
\newblock {\em Book Draft}, 2014.

\bibitem[RS15]{rakhlin2015square}
Alexander Rakhlin and Karthik Sridharan.
\newblock Online nonparametric regression with general loss functions.
\newblock {\em arXiv preprint arXiv:1501.06598}, 2015.

\bibitem[RSS12]{rakhlin2012relax}
Alexander Rakhlin, Ohad Shamir, and Karthik Sridharan.
\newblock Relax and randomize: From value to algorithms.
\newblock {\em Advances in neural information processing systems}, 25, 2012.

\bibitem[RST11]{rakhlin2011online}
Alexander Rakhlin, Karthik Sridharan, and Ambuj Tewari.
\newblock Online learning: Stochastic, constrained, and smoothed adversaries.
\newblock {\em Advances in neural information processing systems},
  24:1764--1772, 2011.

\bibitem[RST15a]{rakhlin2015online}
Alexander Rakhlin, Karthik Sridharan, and Ambuj Tewari.
\newblock Online learning via sequential complexities.
\newblock {\em J. Mach. Learn. Res.}, 16(1):155--186, 2015.

\bibitem[RST15b]{rakhlin2015sequential}
Alexander Rakhlin, Karthik Sridharan, and Ambuj Tewari.
\newblock Sequential complexities and uniform martingale laws of large numbers.
\newblock {\em Probability Theory and Related Fields}, 161(1-2):111--153, 2015.

\bibitem[RST17]{rakhlin2017empirical}
Alexander Rakhlin, Karthik Sridharan, and Alexandre~B Tsybakov.
\newblock Empirical entropy, minimax regret and minimax risk.
\newblock {\em Bernoulli}, 23(2):789--824, 2017.

\bibitem[RV06]{rudelson2006combinatorics}
Mark Rudelson and Roman Vershynin.
\newblock Combinatorics of random processes and sections of convex bodies.
\newblock {\em Annals of Mathematics}, pages 603--648, 2006.

\bibitem[Sau72]{sauer1972density}
Norbert Sauer.
\newblock On the density of families of sets.
\newblock {\em Journal of Combinatorial Theory, Series A}, 13(1):145--147,
  1972.

\bibitem[She72]{shelah1972combinatorial}
Saharon Shelah.
\newblock A combinatorial problem; stability and order for models and theories
  in infinitary languages.
\newblock {\em Pacific Journal of Mathematics}, 41(1):247--261, 1972.

\bibitem[SLX21]{simchi2021bypassing}
David Simchi-Levi and Yunzong Xu.
\newblock Bypassing the monster: A faster and simpler optimal algorithm for
  contextual bandits under realizability.
\newblock {\em Mathematics of Operations Research}, 2021.

\bibitem[SS{\etalchar{+}}11]{shalev2011online}
Shai Shalev-Shwartz et~al.
\newblock Online learning and online convex optimization.
\newblock {\em Foundations and trends in Machine Learning}, 4(2):107--194,
  2011.

\bibitem[SST10]{srebro2010smoothness}
Nathan Srebro, Karthik Sridharan, and Ambuj Tewari.
\newblock Smoothness, low noise and fast rates.
\newblock {\em Advances in neural information processing systems}, 23, 2010.

\bibitem[ST04]{spielman2004nearly}
Daniel~A Spielman and Shang-Hua Teng.
\newblock Nearly-linear time algorithms for graph partitioning, graph
  sparsification, and solving linear systems.
\newblock In {\em Proceedings of the thirty-sixth annual ACM symposium on
  Theory of computing}, pages 81--90, 2004.

\bibitem[Sud71]{sudakov1971gaussian}
Vladimir~Nikolaevich Sudakov.
\newblock Gaussian random processes and measures of solid angles in hilbert
  space.
\newblock In {\em Doklady Akademii Nauk}, volume 197, pages 43--45. Russian
  Academy of Sciences, 1971.

\bibitem[TW02]{takimoto2002path}
Eiji Takimoto and Manfred~K. Warmuth.
\newblock Path kernels and multiplicative updates.
\newblock In {\em Proceedings of the 15th Annual Conference on Computational
  Learning Theory}, COLT '02, page 74–89, Berlin, Heidelberg, 2002.
  Springer-Verlag.

\bibitem[VC74]{vapnik1974theory}
Vladimir Vapnik and Alexey Chervonenkis.
\newblock {\em Theory of pattern recognition}.
\newblock Nauka, Moscow, 1974.

\bibitem[Wai19]{wainwright2019high}
Martin~J Wainwright.
\newblock {\em High-dimensional statistics: A non-asymptotic viewpoint},
  volume~48.
\newblock Cambridge University Press, 2019.

\end{thebibliography}

\appendix

\section{Applications to Contextual Bandits}\label{app:bandits}
We apply our results to the study of contextual bandits.  A series of recent papers \citep{foster2020beyond,simchi2021bypassing,foster2020instance} has focused on reducing the contextual bandit framework to that of online learning, with \cite{foster2020beyond} introducing an efficient and optimal reduction, \textsf{SquareCB}, that turns an online regression oracle into a fast, no-regret contextual bandit algorithm.  One of the key advantages of this reduction is the fact that the learner ``only'' has to design algorithms that have small regret in the full-information setting, thought to be an easier task than one that requires a careful balance of exploration and exploitation.  Unfortunately, there is still a dearth of oracle-efficient online algorithms with provably good regret in general, limiting the broader application of these results.  In \cite{simchi2021bypassing}, the authors use a similar reduction, but with an offline regression oracle, for which there are many practical algorithms; unfortunately, the result requires the contexts to arrive in i.i.d. fashion, unlike the more general result of \cite{foster2020beyond}.  Here, we show that whenever a function class is learnable in the offline setting, we can still use the \textsf{SquareCB} reduction to get a no-regret algorithm that is efficient with respect to an ERM oracle in the smooth contextual bandit setting.

We consider the setting described in \cite{foster2020instance} with the modification that contexts arrive in a $\sigma$-smooth manner.  Formally, at each $1 \leq t \leq T$, Nature selects a $\sigma$-smooth distribution $p_t$, and samples $x_t \sim p_t$, then samples a loss function $\ell_t$ independently from some distribution depending on $x_t$.  The learner selects an action $a_t \in [K]$ and observes $\ell_t(a_t)$.  We are given a function class $\F: \cX \times [K] \to [0,1]$ and suppose that there is some unknown $f^\ast \in \F$ such that $\ee\left[\ell_t(a) | x_t = x\right] = f^\ast(x, a)$ for all $x \in \cX$ and $a \in [K]$.  The goal is to minimize regret to the best policy induced by $\F$, where for any $f \in \F$, we define $\pi_f(x) = \argmin_{a \in [K]} f(x, a)$, i.e., we wish to minimize
\begin{equation}
    \reg_{CB}(T) = \sum_{t = 1}^T \ell_t(a_t) - \ell_t(\pi_{f^\ast}(x_t))
\end{equation}
We have the following result:
\begin{theorem}\label{thm:cbregret}
    Suppose we are in the $\sigma$-smooth Contextual Bandit setting described above.  If we run \textsf{SquareCB} with the relaxation-induced online regressor from \eqref{eq:fastrelaxation}, we can achieve
    \begin{equation}
        \ee\left[\reg_{CB}(T)\right] \leq 12 \frac{K \log T}{\sqrt{\sigma}} \sqrt{T \R_T(\F)}
    \end{equation}
    with $O\left(\sqrt T \log T\right)$ calls to the ERM oracle per round.  If we instead instantiate \textsf{SquareCB} with the FTPL algorithm from Theorem \ref{thm:generalftpl} and $\R_T(\F) = o(T)$, then $\ee\left[\reg_{CB}(T)\right] = o(T)$ as well with only $1$ call to the ERM oracle per round.
\end{theorem}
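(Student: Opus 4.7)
The plan is to invoke the \textsf{SquareCB} reduction, which turns any online regression oracle achieving expected squared-loss regret $\reg_{\mathrm{Sq}}(T)$ on $\F$ into a contextual bandit algorithm with expected regret at most $O(\sqrt{K T \cdot \ee[\reg_{\mathrm{Sq}}(T)]})$. Since the squared loss on $[-1,1]$ is convex and $O(1)$-Lipschitz, the online learning machinery from Sections \ref{sec:relax} and \ref{sec:ftpl} applies, provided we correctly identify the smoothness parameter for the regression subproblem and then feed the resulting bound into the reduction.

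The key structural step is a smoothness-transfer observation: although Nature only controls the distribution of $x_t$, \textsf{SquareCB} samples $a_t$ as a function of $x_t$ and internal randomness, so the augmented context $(x_t, a_t)$ used by the online regressor is also smooth. Specifically, conditional on the full history, $x_t$ has density at most $1/\sigma$ with respect to $\mu$, and any conditional distribution on $[K]$ has density at most $K$ with respect to $\unif([K])$. Multiplying the bounds yields that $(x_t, a_t)$ is $(\sigma/K)$-smooth with respect to $\mu \otimes \unif([K])$, placing the regression subproblem on the augmented space within the scope of Theorems \ref{thm:fastrelaxation} and \ref{thm:generalftpl}.

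For the relaxation-based bound, I apply Theorem \ref{thm:fastrelaxation} with smoothness $\sigma/K$ and $L = O(1)$, choosing the internal parameter $k \asymp K \log T / \sigma$ so that the residual $T^3 e^{-(\sigma/K)k}$ becomes negligible. Combining with the crude subadditivity inequality $\R_{kT}(\F) \leq k \cdot \R_T(\F)$, which follows by splitting the Rademacher sum into $k$ blocks of length $T$, gives $\reg_{\mathrm{Sq}}(T) \lesssim (K \log T / \sigma)\, \R_T(\F) + \sqrt{T}$; plugging this into the \textsf{SquareCB} guarantee yields the claimed $O((K \log T / \sqrt{\sigma}) \sqrt{T \R_T(\F)})$ regret, with the per-round oracle complexity $O(\sqrt{T} \log T)$ inherited directly from Theorem \ref{thm:fastrelaxation}. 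For the FTPL variant, I apply Theorem \ref{thm:generalftpl} in its place: under the hypothesis $\R_T(\F) = o(T)$, the $(\sigma/K)$-smoothness still gives $\reg_{\mathrm{Sq}}(T) = o(T)$ (absorbing the polynomial $K/\sigma$ degradation into the sublinear rate), so $\sqrt{KT \cdot o(T)} = o(T)$, and the single oracle call per round carries over.

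The main obstacle is making the smoothness-transfer step fully rigorous. Because \textsf{SquareCB} draws $a_t$ via inverse-gap weighting based on the predictions of the regression oracle itself, the action depends in a complicated way on the entire past --- Nature's samples, the regressor's internal randomness, and the bandit's own randomization --- so one must condition carefully to verify that the marginal law of $x_t$ remains $\sigma$-smooth with respect to $\mu$ under the enlarged filtration that also captures the algorithm's state. Once this is checked, the $(\sigma/K)$-smoothness of the pair is immediate, and the remainder of the proof is a routine concatenation of the \textsf{SquareCB} reduction with the regression regret bounds already established.
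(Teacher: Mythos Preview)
Your proposal is correct and follows essentially the same approach as the paper's proof: establish $(\sigma/K)$-smoothness of $(x_t,a_t)$ via the product argument (the paper formalizes this as Lemma~\ref{lem:smoothproduct}), apply the regression bound of Theorem~\ref{thm:fastrelaxation} (respectively Theorem~\ref{thm:generalftpl}) with $k \asymp (K/\sigma)\log T$ and the subadditivity $\R_{kT}(\MF)\le k\,\R_T(\MF)$, and then plug into the \textsf{SquareCB} guarantee of \cite{foster2020beyond} and optimize $\gamma$. The filtration concern you flag is handled exactly as you suggest---the paper simply notes that regardless of how $a_t$ is chosen, any conditional law on $[K]$ is $1/K$-smooth with respect to the uniform measure, so the product bound applies.
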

To prove Theorem \ref{thm:cbregret}, we observe that if the contexts arrive in a $\sigma$-smooth manner with respect to $\mu$, then the context-action pairs can be taken to be $\frac \sigma K$-smooth with respect to $\mu \otimes \text{Unif}([K])$.  We then apply \cite[Theorem 1]{foster2020beyond}.  The details and precise rates in the case of the FTPL instantiation can be found in Appendix \ref{app:cbregret}.

Note that the regret bound does not have optimal rates with respect to either $T$ or $K$.  In order to recover optimal rates with respect to $T$, we would need to find an algorithm that exhibits fast rates with square loss in the smoothed-online setting.  This is an interesting further direction in its own right, in addition to the practical implications on better rates for efficient algorithms for contextual bandits.


\section{Related Work}\label{app:relatedwork}
Here we describe some recent of the recent literature and how it relates to our work.

\textbf{Smoothed Analysis.}  Smoothed analysis was first introduced in \cite{spielman2004nearly}, where it was proposed as an explanation for the gap between theoretical lower bounds and excellent empirical performance of the 
simplex algorithm \citep{klee1972good}. Since then, smoothed analysis has been applied to analyze the performance of algorithms for many other problems which are known to be hard in the worst-case, such as the $k$-means algorithm for clustering \citep{arthur2006worstcase}, the flip algorithm for finding a local max-cut \citep{etscheid2017smoothed}, and more generally better-response algorithms for finding Nash equilibria in network coordination games \citep{boodaghians2020smoothed} (see also \cite{roughgarden2021beyond} for a more comprehensive overview). 

In the context of learning theory, \cite{rakhlin2011online} gave a nonconstructive proof demonstrating its utility for the specific case of threshold functions, while \cite{haghtalab2020smoothed,haghtalab2021smoothed} proved that the minimax regret of binary classification in smoothed online learning is governed by VC dimension.

\textbf{Online Learning.}
The optimal statistical rates attainable by online learning algorithms was shown to be characterized by sequential complexity measures of the function class in \cite{rakhlin2015sequential,pmlr-v30-Rakhlin13,rakhlin2015online}. This characterization was extended to the case of constrained adversaries (including the special case of smoothed adversaries) in \cite{rakhlin2011online}. Several subsequent papers have established further refined regret bounds \cite{pmlr-v134-block21a,rakhlin2015square}.  The profusion of publications relating to algorithmic questions about online learning is too large to enumerate here, but notable relevant work includes \cite{hazan2016computational}, which provides lower bounds on oracle-efficiency and \cite{rakhlin2012relax} which introduces a general framework for constructing algorithms.

\textbf{Follow The Perturbed Leader.} Our proper learning algorithm is motivated by Follow the Perturbed Leader (FTPL) \cite{kalai2005efficient,hannan20164}.  FTPL has been successful for many problems, including learning from experts \citep{kalai2005efficient}, multi-armed bandits \citep{abernethy2015fighting}, and online structured learning \citep{cohen2015following}, which includes as special cases problems such as online shortest path \citep{takimoto2002path} and online learning of permutations \citep{helmbold2007learning}.  There is a similar diversity in methods of proving regret bounds for FTPL style algorithms, including potential-based analysis \citep{abernethy2014online,cohen2015following} and relaxation methods \citep{rakhlin2011online}.  A common approach, which we adopt, is to show that the algorithm is stable \citep{kalai2005efficient,agarwal2019learning,devroye2013prediction,agarwal2011stochastic}.  One of the primary advantages of our FTPL approach is the fact that we do not generate independent noise for each function in our class.  In \citep{dudik2017oracle}, the authors present an FTPL-style algorithm which aims to do something similar, mitigating the computational burden by sharing randomness between functions.  Their method, however, is very different from ours in that they rely on their new notions of \emph{admissability} and \emph{implementability} of a matrix to transform low-dimensional independent noise into a more structured form; in contradistinction, we directly use a Gaussian Process on the function class to ensure stability of our algorithm.

\textbf{Contextual Bandits.} There is a rich history of studying contextual bandits.  Most relevant to our work is the series of papers \cite{foster2020beyond,simchi2021bypassing,foster2020instance} which provides a reduction from contextual bandits to an online learning oracle.  See these papers for further references.



\section{Proofs from Section \ref{sec:minimaxvalue}}\label{app:minimaxvalue}
\subsection{Proofs Related to the Coupling}

We first extend \cite[Theorem 2.1]{haghtalab2021smoothed} by providing a simpler and more general proof of the coupling between $\sD \in \p_T(\sigma, \mu)$ and independent random variables drawn according to $\mu$.  While we use a slightly different version (Lemma \ref{lem:coupling2}) in the proof of Theorem \ref{thm:chainingupperbound}, the following lemma is both simpler for exposition and is used in the proofs of the results in Section \ref{sec:relax}.
\begin{lemma}\label{lem:coupling}
    Suppose that $\sD \in \ps{T}{\sigma}{\mu}$.  Then for any $T$ there exists a measure $\Pi$ with random variables $(x_t, Z_t^j)_{\substack{1 \leq t \leq T \\ 1 \leq j \leq k}}$ satisfying the following properties:
    \begin{enumerate}
        \item $x_t$ is distributed according to $p_t(\cdot | x_1, \dots, x_{t-1})$ induced by $\sD$.
        \item $\{Z_t^j\}_{\substack{1 \leq t \leq T \\ 1 \leq j \leq k}}$ are iid according to $\mu$
        \item With probability at least $1 - Te^{-\sigma k}$, we have $x_t \in \{Z_t^j\}_{1 \leq j \leq k}$ for all $t$
    \end{enumerate}
\end{lemma}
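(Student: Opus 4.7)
The plan is to build the coupling round-by-round via rejection sampling, leveraging the fact that every $p \in \p(\sigma,\mu)$ has Radon-Nikodym derivative bounded by $1/\sigma$, so acceptance with probability $\sigma \cdot dp/d\mu$ is well-defined and yields per-trial acceptance rate exactly $\sigma$. Concretely, I would define the joint law $\Pi$ inductively on $t = 1, \ldots, T$: given $x_1,\ldots,x_{t-1}$, let $p_t$ be the conditional law of $x_t$ under $\sD$ (a regular conditional distribution in $\p(\sigma,\mu)$) and write $f_t := dp_t/d\mu$, so that $f_t \leq 1/\sigma$ almost surely. Independently of the history and of the previous blocks, draw $Z_t^1,\ldots,Z_t^k \sim \mu$ i.i.d.\ and $U_t^1,\ldots,U_t^k \sim \mathrm{Unif}[0,1]$ i.i.d., and set $j^\star := \min\{j \in [k] : U_t^j \leq \sigma f_t(Z_t^j)\}$ if this set is nonempty. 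On the event $\{j^\star \text{ exists}\}$ put $x_t := Z_t^{j^\star}$; on the complementary event draw $x_t$ from $p_t$ independently of everything sampled so far.

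The verification has two pieces. For the failure probability, each $U_t^j \leq \sigma f_t(Z_t^j)$ occurs with probability $\int \sigma f_t\, d\mu = \sigma$, independently across $j$, so $\Pr(j^\star \text{ does not exist}) = (1-\sigma)^k \leq e^{-\sigma k}$; a union bound over $t \in [T]$ gives the claimed $1 - Te^{-\sigma k}$. For distributional correctness, compute
\begin{equation}
\Pr(j^\star = j,\ Z_t^j \in A \mid \text{history}) \;=\; (1-\sigma)^{j-1}\!\int_A \sigma f_t(z)\, d\mu(z) \;=\; (1-\sigma)^{j-1}\sigma\, p_t(A),
\end{equation}
so summing over $j \in [k]$ gives $\Pr(j^\star \text{ exists},\ Z_t^{j^\star} \in A) = (1-(1-\sigma)^k)\, p_t(A)$, while the failure branch contributes $(1-\sigma)^k p_t(A)$ by construction; together these show that the conditional law of $x_t$ given the history is exactly $p_t$, as required. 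The $Z_t^j$ are globally i.i.d.\ $\mu$ since they are drawn independently of the history in each round and independently across rounds.

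The main technical point, rather than a deep obstacle, is the measurable selection of $f_t$ as the history varies: one uses the existence of a regular conditional distribution $p_t(\cdot \mid x_1,\ldots,x_{t-1})$ and then takes a jointly measurable version of the Radon-Nikodym derivative, which is standard. Everything else reduces to the one-round rejection sampling identity above, iterated $T$ times with independent fresh blocks of $(Z_t^j, U_t^j)$; this simplicity is the reason the argument avoids the Choquet-style compactness considerations needed in the approach of \cite{haghtalab2021smoothed}.
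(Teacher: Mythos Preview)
Your proposal is correct and takes essentially the same approach as the paper: both build the coupling round-by-round by rejection sampling, accepting each $Z_t^j$ with probability $\sigma f_t(Z_t^j)$ and falling back to a fresh draw from $p_t$ if all $k$ trials fail. The only cosmetic difference is that the paper forms the random set $S_t$ of all accepted indices and selects $x_t$ uniformly from $\{Z_t^j : j \in S_t\}$, whereas you take the first accepted index $j^\star$; the failure probability $(1-\sigma)^k$ and the conditional law computation are identical in both versions.
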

\begin{proof}
    We construct the coupling recursively.  For any $t$, suppose that $Z_s^j, x_s$ has been constructed for $s < t$.  If $t = 0$ then this is the empty set.  Now, sample $Z_t^j$ iid according to $\mu$.  Let $\pi_t^j = \sigma \frac{d p_t}{d \mu}(Z_t^j)$.  Note that $\pi_t^j \leq 1$ by the assuption of $\sigma$-smoothness.  Construct the random set $S_t \subset [k]$ by adding $j$ to $S_t$ iwth probability $\pi_t^j$ independently for each $1 \leq j \leq k$.  If $S_t$ is nonempty, then sample $x_t$ uniformly from $S_t$.  Otherwise, sample $x_t$ independently from $p_t$.  We now show that this process exhibits the desired properties.

    It is clear form the construction that $Z_t^j$ are iid according to $\mu$.  To verify that $x_t \in \{Z_t^j\}$, we note that for any $t, j$, we have
    \begin{align}\label{eq:coupling1}
        \pp(Z_t^j \in S_t) = \ee_\mu\left[\sigma \frac{d p_t}{d \mu}(Z_t^j)\right] = \sigma
    \end{align}
    Because the $Z_t^j$ are added to $S_t$ independently, the probability that $S_t$ is empty is $(1 - \sigma)^k$.  Thus, by a union bound, the probability that there exists some $t \leq T$ such that any $S_t$ is empty is bounded by $T (1 - \sigma)^k \leq T e^{- \sigma k}$.

    Finally, to see that $x_t$ are distributed according to $p_t$, let $A \subset \cX$ be measurable and $\chi_A$ denote the indicator for $A$.  We compute:
    \begin{align}
        \pp(Z_t^j \in A | Z_t^j \in S_t) &= \frac{\pp\left(Z_t^j \in A \text{ and } Z_t^j \in S_t\right)}{\pp\left(Z_t^j \in S_t\right)} \\
        &= \frac{\pp\left(Z_t^j \in A \text{ and } Z_t^j \in S_t\right)}{\sigma} \\
        &= \frac{\ee_\mu\left[\chi_A \sigma \frac{d p_t}{d \mu}(Z_t^j)\right]}{\sigma} \\
        &= \ee_{p_t}\left[\mathbf{1}_A\right] = p_t(A)
    \end{align}
    where the second equality follows from \eqref{eq:coupling1}, the third equality follows from the construction of $S_t$, and the penultimate equality following from the definnition of the Radon-Nikodym derivative.  The result follows.
\end{proof}
We further note that the coupling in Lemma \ref{lem:coupling} is optimal with respect to the dependence on $k$ in the third requirement, as seen in the following proposition.
\begin{proposition}\label{prop:couplinglowerbound}
    For any $\sigma < 1$ and non-atomic measure $\mu$ on $\cX$, there exists a measure on $\cX$, $p$ such that $p$ is $\sigma$-smooth with respect to $\mu$ and the following property holds.  For any coupling $\Pi$ which has random variables $Z^j$ for $1 \leq j \leq k$ and $X$ such that $Z^j \sim \mu$ are independent and $X \sim p$, the probability that $X \in \{Z^j\}$ is bounded below by $1 - (1 - \sigma)^k$.
\end{proposition}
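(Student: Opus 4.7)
The plan is to exhibit a specific smooth measure $p$ on which the coupling problem is rigid, and then identify the unavoidable ``failure event.'' Since $\mu$ is non-atomic, standard measure theory guarantees a measurable set $A \subseteq \cX$ with $\mu(A) = \sigma$ (this is the intermediate-value property for non-atomic measures). I would choose $p$ to be the renormalization of $\mu$ restricted to $A$, i.e., the probability measure with density $dp/d\mu = \sigma^{-1}\mathbf{1}_A$. Then $\esssup \, dp/d\mu = \sigma^{-1}$, so $p \in \p(\sigma,\mu)$.

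The key structural observation is that $X \sim p$ forces $X \in A$ almost surely under any coupling $\Pi$. Consequently, for each $j$, $\{X = Z^j\} \subseteq \{Z^j \in A\}$, and taking the union over $j \in [k]$,
\begin{equation*}
\{X \in \{Z^1,\ldots,Z^k\}\} \;\subseteq\; \bigcup_{j=1}^k \{Z^j \in A\}.
\end{equation*}
Since the coupling is required to have $Z^1,\ldots,Z^k$ i.i.d.~from $\mu$, the probability of the right-hand side is $1 - (1-\mu(A))^k = 1 - (1-\sigma)^k$, regardless of how $X$ is coupled to the $Z^j$. Taking complements yields
\begin{equation*}
\Pi\!\left(X \notin \{Z^1,\ldots,Z^k\}\right) \;\geq\; (1-\sigma)^k,
\end{equation*}
which is the matching lower bound on the failure probability. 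Since $e^{-\sigma k}$ and $(1-\sigma)^k$ agree up to constants in the exponent, this shows that the success probability $1 - e^{-\sigma k}$ achieved by Lemma \ref{lem:coupling} is optimal in its dependence on $k$.

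A remark on the wording: the proposition as stated asserts a lower bound on $\Pi(X \in \{Z^j\})$, but this direction cannot hold universally since the independent-product coupling yields probability $0$ for non-atomic $\mu$ and hence violates any positive lower bound. I interpret the intended content of the proposition — and what the above argument delivers — as the matching lower bound $(1-\sigma)^k$ on the complementary event, i.e., the claim that no coupling improves on Lemma \ref{lem:coupling}'s rate. There is no significant technical obstacle: the entire argument rests on the single observation that $\sigma$-smoothness allows $p$ to concentrate on a set of $\mu$-measure $\sigma$, after which the union bound on the i.i.d.~marginals of the $Z^j$ is immediate.
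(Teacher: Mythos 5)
Your proof is correct and is essentially the same as the paper's: both choose a set $A$ with $\mu(A)=\sigma$ (using non-atomicity), take $dp/d\mu = \sigma^{-1}\mathbf{1}_A$, and note that with probability $(1-\sigma)^k$ all the independent $Z^j \sim \mu$ miss $A$, on which event $X \in A$ cannot lie in $\{Z^1,\ldots,Z^k\}$. Your reading of the statement is also the intended one—the paper's own proof likewise establishes $\Pr\left(X \notin \{Z^j\}\right) \geq (1-\sigma)^k$, i.e., an upper bound of $1-(1-\sigma)^k$ on the success probability, so the phrase ``bounded below'' in the proposition is a mis-wording that your remark correctly identifies.
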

\begin{proof}
    Given $\mu$, let $A \subset \cX$ denote a measurable set such that $\mu(A) = \sigma$.  Let $p$ be a measure on $\cX$ such that $\frac{dp}{d \mu} = \frac 1\sigma \chi_A$.  Then $p$ is $\sigma$-smooth with respect to $\mu$.  Note that if $Z^j \sim \mu$ then with probability $1 - \sigma$, $Z^j \not\in A$.  Thus with probability $(1 - \sigma)^k$, none of $Z^j$ are in $A$.  Thus with probability at least $(1 - \sigma)^k$, $X \not\in \{Z^j| 1 \leq j \leq k\}$ if $X \sim p$.  The result follows.
\end{proof}

\subsection{Preliminaries on Distribution-Dependent Sequential Rademacher Complexity}
In this section, we recall the definition of the distribution-dependent sequential Rademacher complexity from \cite{rakhlin2011online} and how it relates to the minimax regret.  To begin, we formally construct a measure $\rho_{\sD}$ used in the definition of distribution-dependent sequential Rademacher complexity from \cite{rakhlin2011online}.

Throughout, we follow \cite{rakhlin2015sequential,rakhlin2011online} and introduce as our basic object in analyzing sequential complexities a tree.  Specifically, we consider complete binary trees $\z$ of depth $T$ with each vertex of the tree labelled by some element of $\cX$.  We associate each $\epsilon \in \{\pm 1\}^T$ to a path in the tree from the root to a leaf, where the path is constructed recursively by beginning at the root and at each level going to the left if $\epsilon_{t-1} = 1$ and to the right otherwise.  For a given tree $\z$, we denote by $\z_t(\epsilon)$ the label of the $t^{th}$ vertex along the path $\epsilon$.

Let $\sD$ be the joint distribution of $z_1, \dots, z_T \in \MZ$.  Define $p_t(\cdot, | z_1, \dots, z_{t-1})$ as the distribution under $\sD$ of $z_t$, given $z_s$ for $ s < t $.  We recursively construct the measure $\rho_{\sD}$ on pairs of binary trees as follows.  We first construct the roots of each tree by sampling $\z_0(\epsilon), \z_0'(\epsilon) \sim p_0$ independently.  Suppose we have $\z_{1:t-1}, \z'_{1:t-1}$ already constructed.  For any $s < t$, let
\begin{equation}
    \chi_s(\epsilon) = \begin{cases}
        \z_s(\epsilon) & \epsilon_s = 1 \\

        \z_s'(\epsilon) & \epsilon_s = -1
    \end{cases}
\end{equation}
then sample $\z_t(\epsilon), \z_t'(\epsilon)$ independently from $p_t(\cdot | \chi_1(\epsilon), \dots, \chi_{t-1}(\epsilon))$.  In this way, we can recursively construct the measure $\rho_{\sD}$.

With the definition of $\rho_{\sD}$ completed, we can now define the key notion of complexity.
\begin{definition}[Definition 2 from \cite{rakhlin2011online}]
    \label{def:distr-rc}
      Given a space $\MZ$, a function class $\F \subset [-1,1]^\MZ$, and a joint distribution $\sD$, let $\rho_\sD$ be the measure on an ordered pair of binary trees of depth $T$ with values in $\MZ$, defined above.  Then, we define the distribution-dependent sequential Rademacher complexities as
      \begin{align}
          \R_T^{seq}(\F, \sD) = \ee_{(\z, \z') \sim \rho_\sD} \ee_\epsilon\left[\sup_{f \in \F} \sum_{t = 1}^T \epsilon_t f(\z_t(\epsilon))\right] 
      \end{align}
      If $\p$ is a class of distributions $\sD$, we define
      \begin{equation}
        \R_T^{seq}(\F, \p) = \sup_{\sD \in \p} \R_T^{seq}(\F, \sD)
      \end{equation}
      for any class of distributions $\p$.
  \end{definition}
  Intuitively, depending on the nature of the class $\p$, $\R_T^{seq}(\F, \p)$ interpolates between the classical batch Rademacher complexity (if we force $\sD$ to be iid) and the fully adversarial sequential Rademacher complexity from \cite{rakhlin2015sequential}.  In the special case that $\p = \p(\sigma, \mu)$, we see that we are much closer to the classical Rademacher complexity than to the fully adversarial analogue.  Indeed, using Lemma \ref{lem:coupling2}, which is an extension of the coupling result contained in Lemma \ref{lem:coupling} above, we can bound the distribution-dependent sequential Rademacher complexity by that of the classical Rademacher complexity:
  \begin{lemma}
      Let $\F \subset [-1,1]^{\cX}$ be a function class.  Then, for any $k \in \mathbb{N}$,
      \begin{equation}
          \R_T^{seq}(\F, \p(\sigma, \mu)) \leq \left(\frac 4\sigma \log T\right) \ee_\mu\left[\R_{kT}(\F)\right] + 2 T^2 e^{- \sigma k}
      \end{equation}
      In particular, in the case that $\vc(\F) \leq d$, we have:
      \begin{equation}
        \R_T^{seq}(\F, \p(\sigma, \mu)) \lesssim \sqrt{T d \log\left(\frac T\sigma\right)}
      \end{equation}
      and in the case that $\vc(\F, \delta) \lesssim \delta^{-p}$, we have:
      \begin{equation}
        \R_T^{seq}(\F, \p(\sigma, \mu)) \lesssim \left(T \log\left(\frac T\sigma\right)\right)^{\max\left(\frac 12, 1 - \frac 1p\right)}
      \end{equation}
  \end{lemma}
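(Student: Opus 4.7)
The plan is to leverage the coupling between $\sigma$-smooth processes and iid $\mu$-samples (Lemma \ref{lem:coupling2}, the tree-adapted extension of Lemma \ref{lem:coupling}) to reduce the distribution-dependent sequential Rademacher complexity to an iid Rademacher complexity over an enlarged sample, at the cost of $\mathrm{poly}(\log T, 1/\sigma)$ factors. First I would construct, for each $\sD \in \p_T(\sigma,\mu)$, a joint distribution on $(\z, \z') \sim \rho_\sD$ and $\{W_{t,j}\}_{t \in [T], j \in [k]} \sim \mu^{\otimes kT}$ such that, on a ``good event'' of probability at least $1 - 2T^2 e^{-\sigma k}$ (via a union bound over the $O(T^2)$ nodes of the two trees, each of which is a $\sigma$-smooth conditional sample to which Lemma \ref{lem:coupling} applies), every tree label lies in the finite random set $\MW = \{W_{t,j}\}$. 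On the complementary bad event I would use the trivial bound $\sup_f |\sum_t \epsilon_t f(\z_t(\epsilon))| \leq T$, which contributes the additive $2T^2 e^{-\sigma k}$ term after multiplication by the failure probability.

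On the good event, the distribution-dependent sequential Rademacher complexity along the tree is dominated by the sequential Rademacher complexity of $\F$ restricted to the finite domain $\MW$ of size at most $kT$. To convert this quantity to an iid Rademacher complexity, I would invoke Lemma \ref{lem:finitedomain}, which bounds the sequential fat-shattering dimension of $\F$ on a finite domain by the scale-sensitive VC dimension times a logarithmic factor in the domain size, i.e., $\log(kT)$. Combined with the sequential Dudley chaining bound of \cite{rakhlin2015sequential} and a comparison of sequential and iid $L^2(\mu)$ covering numbers on $\MW$, this yields a bound on the sequential complexity of $\F|_\MW$ that is at most a factor $O(\log T/\sigma)$ larger than the iid Rademacher complexity $\R_{kT}(\F|_\MW)$; taking the expectation over $\MW \sim \mu^{\otimes kT}$ then produces the $\ee_\mu[\R_{kT}(\F)]$ term on the right-hand side. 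The $1/\sigma$ factor should arise from the fact that the coupling of each $\sigma$-smooth step with iid $\mu$-samples is only efficient to the extent that a fraction $\sigma$ of the iid pool is ``used''; equivalently, each of the $T$ smoothed rounds corresponds statistically to $k\sigma$ useful iid observations, so comparing against $\R_{kT}$ costs a factor of $1/\sigma$.

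The main obstacle will be the bookkeeping in the sequential-to-iid conversion on the finite domain $\MW$. The delicate point is that the finite domain is itself random (depending on the iid draw $\{W_{t,j}\}$), so I must carefully interchange expectations and worst-case suprema to land on $\ee_\mu[\R_{kT}(\F)]$ rather than a worst-case quantity. Managing the interaction between (i) the conditional dependence structure of $\rho_\sD$ across paths, (ii) Lemma \ref{lem:finitedomain}'s domain-size-dependent bound on sequential fat-shattering, and (iii) the chaining integral needed to pass from fat-shattering to the Rademacher complexity itself is the technical heart of the argument.

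Finally, the two corollaries follow by plugging in standard bounds on the classical iid Rademacher complexity. For $\vc(\F) \leq d$, $\ee_\mu[\R_{kT}(\F)] \lesssim \sqrt{dkT}$; for $\vc(\F,\delta) \lesssim \delta^{-p}$, $\ee_\mu[\R_{kT}(\F)] \lesssim (kT)^{\max(1/2,\,1-1/p)}$ up to log factors. Choosing $k \asymp \log(T)/\sigma$ balances the failure-probability term $2T^2 e^{-\sigma k}$ (which becomes $O(1)$ for a sufficiently large constant) against the main term, yielding the claimed dependence on $T$, $d$, and $\sigma$ after simplification of the $\log(T/\sigma)$ factors.
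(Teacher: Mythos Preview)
Your proposal conflates the proof of this lemma with that of Theorem \ref{thm:chainingupperbound}. The route you sketch---restrict to the finite domain $\MW$, invoke Lemma \ref{lem:finitedomain} to bound sequential fat-shattering, and chain---is exactly how the paper proves Theorem \ref{thm:chainingupperbound}, and it lands on a bound in terms of the \emph{combinatorial dimension} $\vc(\F,\delta)$. It does \emph{not} land on $\ee_\mu[\R_{kT}(\F)]$ for a general $\F$: there is no generic inequality saying that the sequential Rademacher complexity on a finite domain is within a $\mathrm{polylog}$ factor of the iid Rademacher complexity on the same domain. So your argument would recover the two corollaries (under VC-type assumptions) but would not establish the first display of the lemma as stated.

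The paper's actual argument is far more direct and avoids chaining entirely. On the good event $A$ from Lemma \ref{lem:coupling2}, each $\z_t(\epsilon)$ equals one of the $Z_t^1,\dots,Z_t^k$. For the remaining $k-1$ indices $j$ at each time $t$, the paper adds the identically-zero term $\ee_{\epsilon_{t,j}}[\epsilon_{t,j} f(Z_t^j)]$ inside the supremum, and then pulls these expectations outside the $\sup$ by Jensen. This converts the path sum $\sum_{t=1}^T \epsilon_t f(\z_t(\epsilon))$ directly into the full iid Rademacher sum $\sum_{t=1}^T\sum_{j=1}^k \epsilon_{t,j} f(Z_t^j)$, yielding $\ee_\mu[\R_{kT}(\F)]$ with \emph{no} multiplicative constant. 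Together with the trivial bound $T$ on $A^c$ (whose probability is $\leq 2Te^{-\sigma k}$ by Lemma \ref{lem:coupling2}), one obtains $\R_T^{seq}(\F,\p(\sigma,\mu)) \leq \ee_\mu[\R_{kT}(\F)] + 2T^2 e^{-\sigma k}$; the choice $k \asymp (\log T)/\sigma$ is made only for the corollaries. In particular, the $1/\sigma$ factor you try to justify heuristically does not multiply $\R_{kT}$ at all---it enters only through $k$.

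Two smaller points: the failure probability is controlled by a union bound over the $T$ steps along the \emph{random} path $\epsilon$ (Lemma \ref{lem:coupling2}), not over ``$O(T^2)$ tree nodes'' (the tree has $O(2^T)$ nodes); and the second $T$ in the $2T^2 e^{-\sigma k}$ term comes from the trivial bound $T$ on the supremum, not from the union bound.
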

  \begin{proof}
      Let $A$ be the high probability event in Lemma \ref{lem:coupling2} below, i.e., the event that $x_t \in \left\{Z_t^j\right\}_{1 \leq j \leq k}$ for all $t$.  We have for any $\sD \in \p(\sigma, \mu)$,
      \begin{align}
          \R_T^{seq}(\F, \sD) &= \ee_{(\z, \z') \sim \rho_{\sD}} \ee_\epsilon\left[\sup_{f \in \F} \sum_{t = 1}^T \epsilon_t f(\z_t(\epsilon))\right] \\
          &= \ee_{\Pi} \ee_\epsilon\left[\chi_A\sup_{f \in \F} \sum_{t = 1}^T \epsilon_t f(\z_t(\epsilon))\right] + \ee_{\Pi} \ee_\epsilon\left[\chi_{A^c}\sup_{f \in \F} \sum_{t = 1}^T \epsilon_t f(\z_t(\epsilon))\right] \\
          &\leq \ee_{\Pi} \ee_\epsilon\left[\chi_A\sup_{f \in \F} \sum_{t = 1}^T \epsilon_t f(\z_t(\epsilon)) + \sum_{j \text{ such that } Z_t^j \neq \z_t(\epsilon)} \ee_{\epsilon_{t,j}}\left[\epsilon_{t,j} f(Z_t^j)\right]\right] + 2 T^{2} e^{-\sigma k} \\
          &\leq \ee_{\Pi} \ee_\epsilon\left[\sup_{f \in \F} \sum_{j = 1}^k\sum_{t = 1}^T \epsilon_{t,j} f(Z_t^j)\right] + 2 T^2 e^{- \sigma k} \\
          &\leq 2 T^2 e^{-\sigma k} + \ee_\mu\left[\R_{kT}(\F)\right]
      \end{align}
      where $\Pi$ is the coupling in Lemma \ref{lem:coupling2}, the first inequality follows because $\epsilon_t$ is mean zero, the second inequality follows by Jensen's, and the lastfollows by definition of Rademacher complexity.  Setting $k = \frac{2}{\sigma} \log T$ concludes the proof.
  \end{proof}
  As is the case in both the fully adversarial and classical regimes, we see that $\V_T^{prop}(\F, \p)$ is determined up to constants by $\R_T^{seq}(\F, \p)$:
    \begin{proposition}[Theorem 3 and Lemma 20 from \cite{rakhlin2011online}]\label{prop:rakhlinrademacher}
      For any $\F$, we have
      \begin{equation}
          \V_T^\prop(\F, \p(\sigma, \mu)) \leq 2 \sup_{\sD \in \tps{T}{\sigma}{\mu}} \R_T^{seq}(\ell \circ \F, \sD).
      \end{equation}
      where we recall from Definition \ref{def:smooth} that $\tps{T}{\sigma}{\mu}$ is the class of distributions on $(x_t, y_t)$ such that the $x_t$ are chosen in a $\sigma$-smooth way and the $y_t$ are adversarial.  In the special case where $\ell$ is absolute loss, we also have
      \begin{equation}
        \sup_{\sD \in \tps{T}{\sigma}{\mu}} \R_T^{seq}(\ell \circ \F, \sD) \leq \V_T^\prop(\F, \p(\sigma, \mu))
      \end{equation}
    \end{proposition}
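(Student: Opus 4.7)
The plan is to follow the standard minimax-plus-symmetrization template of Rakhlin--Sridharan--Tewari, adapted to the smooth adversary class $\p(\sigma,\mu)$. The upper bound proceeds by unfolding $\V_T^\prop$ into its nested saddle form, swapping operators via the minimax theorem, introducing a tangent sequence to match the learner's Bayes-optimal choice, and applying Rademacher symmetrization to produce the tree-indexed complexity. The lower bound, which holds for absolute loss, exploits the linearization $|f(x) - y| = 1 - y f(x)$ valid for $y \in \{\pm 1\}$ and $f(x) \in [-1, 1]$, together with an adversary that plays Rademacher labels independently of $x_t$.

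\textbf{Upper bound.} I would first write
\[
\V_T^\prop(\F, \p(\sigma,\mu)) = \left\langle \inf_{q_t \in \Delta(\F)} \sup_{p_t \in \p(\sigma,\mu)} \E_{x_t \sim p_t} \sup_{y_t \in \YY} \E_{\fhat_t \sim q_t} \right\rangle_{t=1}^T \Bigl[\sum_t \ell(\fhat_t(x_t), y_t) - \inf_{f \in \F} \sum_t \ell(f(x_t), y_t)\Bigr].
\]
At each round the objective is linear in $q_t \in \Delta(\F)$ while the adversary's action space (a convex set of smooth distributions paired with labels) is convex, so Sion's minimax theorem permits swapping $\inf_{q_t}$ with $\sup_{p_t, y_t}$. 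Iterating, one obtains the value with the learner playing \emph{after} seeing the adversary's move at each step. I would then introduce a tangent sequence $(x_t', y_t')$ with the same conditional law as $(x_t, y_t)$ given the past: the Bayes-optimal learner's expected loss equals $\E[\ell(f^\star(x_t'), y_t')]$ for the comparator $f^\star$, so by subtracting this inside the supremum over $f$ one lands on the symmetrized comparison $\ell(f(x_t'), y_t') - \ell(f(x_t), y_t)$. Rademacher symmetrization then attaches signs $\ep_t$ to the differences, pairs the trajectories $(x_t, y_t)$ and $(x_t', y_t')$ into two trees distributed exactly according to $\rho_\sD$ for some $\sD \in \tps{T}{\sigma}{\mu}$, and yields the factor $2$ together with $\R_T^{seq}(\ell \circ \F, \sD)$ on the right-hand side.

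\textbf{Lower bound for absolute loss.} Fix any $\sD \in \tps{T}{\sigma}{\mu}$ and consider an adversary that samples $x_t$ from the conditional $x$-marginals of $\sD$ (which is smooth by hypothesis) and plays $y_t \in \{\pm 1\}$ as a symmetric Rademacher variable conditionally independent of $x_t$ given the past; this is admissible because $\tps{T}{\sigma}{\mu}$ places no constraint on the $y_t$. For any proper prediction $\fhat_t \in \F$, the identity $|\fhat_t(x_t) - y_t| = 1 - y_t \fhat_t(x_t)$ gives $\E_{y_t}[|\fhat_t(x_t) - y_t|] = 1$, so the learner's expected cumulative loss is exactly $T$ regardless of its strategy. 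Applying the same linearization to the comparator,
\[
\E[\reg_T] = T - \E\Bigl[\inf_{f \in \F} \sum_t (1 - y_t f(x_t))\Bigr] = \E\Bigl[\sup_{f \in \F} \sum_t y_t f(x_t)\Bigr].
\]
Unfolding the tree construction $\rho_\sD$ and using that the Rademacher signs $\ep_t$ in the definition of $\R_T^{seq}(\ell \circ \F, \sD)$ can be absorbed into the symmetric $y_t$ along each branch of the tree, one sees that the right-hand side coincides with $\R_T^{seq}(\ell \circ \F, \sD)$. Since the adversary described above is admissible against a proper learner, this lower-bounds $\V_T^\prop(\F, \p(\sigma,\mu))$, and taking $\sup_\sD$ finishes the argument.

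\textbf{Main obstacle.} The principal technical difficulty is rigorously justifying the iterated minimax swap against the infinite-dimensional action set $\p(\sigma,\mu)$: one must verify enough weak-$\star$ compactness on bounded-density distributions and measurability of the iterated operators to invoke Sion's theorem. A standard remedy, which I would use, is to first reduce to finitely supported smooth distributions by approximating $\MX$ by a finite measurable partition (on which $\p(\sigma,\mu)$ becomes a compact polytope and the swap is elementary), prove the inequality there, and pass to the limit via monotone approximation; the tangent sequence and symmetrization steps then carry through verbatim. This mirrors the approach of Rakhlin et al.\ (2011), so no genuinely new technology is required.
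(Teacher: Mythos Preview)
The paper does not provide its own proof of this proposition: it is stated with an explicit citation to Theorem~3 and Lemma~20 of \cite{rakhlin2011online} and is simply invoked as a black box in the proof of Theorem~\ref{thm:chainingupperbound}. Your sketch follows exactly the approach of that reference---iterated minimax swap followed by tangent-sequence symmetrization for the upper bound, and a Rademacher-label adversary exploiting the linearization $|f(x)-y|=1-yf(x)$ for the lower bound under absolute loss---so there is nothing in the present paper to compare against, and your outline is faithful to the cited source. Your identification of the compactness/measurability issue in the minimax swap as the main technical obstacle, and the finite-approximation workaround, also matches how these results are handled in the constrained-adversary literature.
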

In the statement of Proposition \ref{prop:rakhlinrademacher}, $\ell \circ \MF$ denotes the class of functions in $[0,1]^{\MX \times \YY}$ of the form $(x,y) \mapsto \ell(f(x), y)$, for $f \in \MF$. 
By Proposition \ref{prop:rakhlinrademacher}, it suffices to provide upper and lower bounds on $\R_T^{seq}(\ell \circ \F, \tps{T}{\sigma}{\mu})$, which is significantly more tractable than working with the iterated operators involved in $\V_T^\prop(\F, \p)$.  

In the proof below, we will also need a sequential analogue of $\vc(\F, \alpha)$:
\begin{definition}[Definition 7 from \cite{rakhlin2015sequential}] \label{def:seqfat}
    We say that a $\cX$-valued binary tree of depth $T$, $\x$, is shattered by $\F$ at scale $\delta \geq 0$ if there exists an $\mathbb{R}$-valued binary tree $\mathbf{s}$ of depth $T$ such that for all $\epsilon \in \{\pm 1\}^T$, there exists an $f_\epsilon \in \F$ such that
    \begin{equation}
        \epsilon_t\left(f(\x_t(\epsilon)) - \mathbf{s}_t(\epsilon)\right) \geq \frac \alpha 2
    \end{equation}
    Define the sequential fat-shattering dimension of $\F$, $\fat_\delta(\F)$ as the maximal $T$ such that there exists a tree of depth $T$ shattering $\F$ at scale $\delta$.
\end{definition}

Finally, we require a structural result showing that worst-case sequential Rademacher complexity contracts with Lipschitz loss functions:
\begin{lemma}[Lemma 13 from \cite{rakhlin2015sequential}] \label{lem:contraction}
    Let $\F$ be a function class with values in $[-1,1]$ and let $\ell$ be $L$-Lipschitz.  Then,
    \begin{equation}
        \sup_{\sD \in \Delta\left(\cX^{\times T}\right)} \R_T^{seq}(\ell \circ \F, \sD) \lesssim L \log^{\frac 32}(T) \sup_{\sD \in \Delta\left(\cX^{\times T}\right)} \R_T^{seq}(\F, \sD)
    \end{equation}
\end{lemma}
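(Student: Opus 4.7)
The plan is to apply sequential Dudley's chaining to bound $\R_T^{seq}(\ell\circ\F,\sD)$ from above by an integral of sequential covering numbers, transport these covers back to $\F$ through the Lipschitz property, and then return to $\R_T^{seq}(\F,\sD)$ using the (looser, tree-based) near-equivalence between sequential Rademacher complexity and Dudley-type integrals. The factor $L$ will drop out from a change of variables at the covering-number level, while the $\log^{3/2}(T)$ slack reflects the gap between sequential covers, sequential fat-shattering, and sequential Rademacher complexity.

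Concretely, I would first invoke the sequential Dudley chaining bound of Rakhlin--Sridharan--Tewari,
\begin{equation}
\sup_\sD \R_T^{seq}(\ell\circ\F,\sD) \lesssim \inf_{\alpha \geq 0}\left\{T\alpha + \sqrt{T}\int_\alpha^1 \sqrt{\log \MN_2^{seq}(\delta,\ell\circ\F,T)}\, d\delta\right\},
\end{equation}
where $\MN_2^{seq}(\delta,\MG,T)$ denotes the worst-case sequential $L^2$-covering number of $\MG$ at scale $\delta$. Next I would transport covers through the Lipschitz map: if $\{g_1,\ldots,g_N\}$ is a sequential $L^2$-cover of $\F$ at scale $\delta/L$ along a tree $\x$, then for each $f\in\F$ and each path $\epsilon$, letting $g_i$ witness the $\F$-cover yields, pointwise from $|\ell(u)-\ell(v)|\leq L|u-v|$,
\begin{equation}
\frac{1}{T}\sum_{t=1}^T \bigl(\ell(f(\x_t(\epsilon))) - \ell(g_i(\x_t(\epsilon)))\bigr)^2 \leq L^2 \cdot \frac{1}{T}\sum_{t=1}^T \bigl(f(\x_t(\epsilon)) - g_i(\x_t(\epsilon))\bigr)^2 \leq \delta^2.
\end{equation}
Hence $\{\ell\circ g_i\}$ sequentially $L^2$-covers $\ell\circ\F$ at scale $\delta$, giving $\MN_2^{seq}(\delta,\ell\circ\F,T) \leq \MN_2^{seq}(\delta/L,\F,T)$. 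Substituting into the Dudley integral and changing variable $u=\delta/L$ pulls a factor of $L$ outside, leaving the Dudley integral of $\F$ itself on the right-hand side.

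It remains to convert that Dudley integral for $\F$ back into $\sup_\sD \R_T^{seq}(\F,\sD)$. For this I would invoke Rakhlin--Sridharan--Tewari's combinatorial inequality $\log\MN_2^{seq}(\delta,\F,T) \lesssim \fat_{c\delta}(\F)\log(T/\delta)$ together with the matching lower bound $\sup_\sD\R_T^{seq}(\F,\sD)\gtrsim \sup_\alpha \alpha\sqrt{T\fat_{c\alpha}(\F)}/\log T$. Pulling $\sqrt{\log(T/\alpha)}$ out of the Dudley integral and optimizing over $\alpha$ shows that the Dudley integral for $\F$ is bounded by $\log^{3/2}(T)\cdot \sup_\sD\R_T^{seq}(\F,\sD)$: one factor of $\sqrt{\log T}$ from the $\log(T/\delta)$ inside the square root in the covering-number/fat-shattering correspondence, and an additional factor of $\log T$ from the slack in expressing sequential Rademacher complexity through its fat-shattering characterization. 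The main obstacle is exactly this log-factor bookkeeping --- unlike the classical Ledoux--Talagrand contraction, which proceeds by a clean sign-cancellation at the level of individual Rademacher signs, the tree-based structure of $\R_T^{seq}$ forces one to take a round trip through covering and fat-shattering, and the $\log^{3/2}(T)$ multiplier is the toll for that detour.
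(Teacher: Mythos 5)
Your proposal is essentially the argument behind the cited result: the paper itself does not reprove Lemma \ref{lem:contraction} but quotes it from \cite{rakhlin2015sequential}, where the proof proceeds exactly as you describe --- Lipschitz composition preserves sequential $L^2$-covering numbers at scale $\delta/L$, the worst-case sequential Rademacher complexity is bounded by the sequential Dudley entropy integral, and the return trip uses the covering-number/fat-shattering bound together with the fat-shattering lower bound on sequential Rademacher complexity, which is where the $\log^{3/2}(T)$ arises. Modulo routine technicalities (covers are trees rather than functions, so one composes the covering trees with $\ell(\cdot,\by_t(\epsilon))$, and one uses that the supremum over distributions coincides with the worst-case tree complexity), your reconstruction is correct and matches the source's proof.
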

Because the supremum in Lemma \ref{lem:contraction} is taken over all distributions on the product space $\cX^{\times T}$, the above distribution-dependent sequential Rademacher complexities are reduced to the adversarial sequential Rademacher complexities of \cite{rakhlin2015sequential}.  In the following section, we show that on small domains, we can control $\fat_\delta(\F)$ by $\vc(\F, \delta)$.

\subsection{Sequential and Batch Complexities}
In this section, we prove the following lemma, which bounds the sequential fat-shattering dimension by the scale-sensitive VC dimension when the domain is small:
\begin{lemma}\label{lem:finitedomain}
    Let $\F$ be a function class from $\cX$ to $[-1,1]$ and let $\fat_\delta(\F)$ denote the sequential fat-shattering dimension of $\F$ (Definition \ref{def:seqfat}).  Then, for any $\alpha > 0$,
    \begin{equation}
        \fat_\delta(\F) \lesssim \vc\left(\F, c \alpha \delta\right) \log^{1 + \alpha}\left(\frac{C\abs{\cX}}{\vc\left(\F, c \delta\right) \delta}\right)
    \end{equation}
\end{lemma}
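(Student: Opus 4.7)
The plan is to reduce the sequential fat-shattering bound to a statement about ordinary $L^\infty$ covering numbers of $\MF$ restricted to the finite domain $\MX$, and then invoke the Mendelson--Vershynin bound expressing such covering numbers in terms of the scale-sensitive VC dimension. The $\log^{1+\alpha}$ factor in the conclusion strongly suggests that this is the combinatorial inequality doing the work.

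First, I would show that the $2^T$ shattering functions $\{f_\epsilon\}_{\epsilon \in \{\pm 1\}^T}$ obtained from a tree $\x$ of depth $T := \fat_\delta(\MF)$ form a $\delta$-separated packing of $\MF|_\MX$ in $L^\infty(\MX)$. For distinct paths $\epsilon \neq \epsilon'$, let $t^\ast$ be the smallest index at which they disagree. Since $\epsilon$ and $\epsilon'$ coincide on levels $1,\ldots,t^\ast-1$, they reach the same vertex $v := \x_{t^\ast}(\epsilon) = \x_{t^\ast}(\epsilon')$ with the same witness $s := \mathbf{s}_{t^\ast}(\epsilon)$. The shattering conditions $\epsilon_{t^\ast}(f_\epsilon(v) - s) \geq \delta/2$ and $\epsilon'_{t^\ast}(f_{\epsilon'}(v) - s) \geq \delta/2$, combined with $\epsilon'_{t^\ast} = -\epsilon_{t^\ast}$, yield $|f_\epsilon(v) - f_{\epsilon'}(v)| \geq \delta$; since $v \in \MX$, this gives $\|f_\epsilon - f_{\epsilon'}\|_{L^\infty(\MX)} \geq \delta$. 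The packing--covering duality then implies $2^T \leq \mathcal{N}_\infty(\MF|_\MX, \delta/2)$, hence $T \leq \log_2 \mathcal{N}_\infty(\MF|_\MX, \delta/2)$.

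To finish, I would invoke the Mendelson--Vershynin bound, which states that for any $\alpha > 0$,
\begin{equation*}
\log \mathcal{N}_\infty(\MF|_\MX, \beta) \lesssim \vc(\MF, c_\alpha \beta) \, \log^{1+\alpha}\!\left(\frac{C |\MX|}{\vc(\MF, c \beta)\, \beta}\right),
\end{equation*}
for appropriate absolute constants $c, C$ and an $\alpha$-dependent constant $c_\alpha$. Setting $\beta = \delta/2$ and rescaling constants produces exactly the advertised bound, with the $c_\alpha$ accounting for the $c\alpha\delta$ appearing in the VC-dimension exponent of the theorem statement.

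The main technical subtlety, and the only place where any real work beyond bookkeeping is required, is the invocation of Mendelson--Vershynin: the constants $c_\alpha$ degrade as $\alpha \to 0$, which is precisely why the conclusion is parameterized over $\alpha > 0$ rather than stated with a fixed power of the logarithm. A minor secondary point is that our packing is extracted on the set $V \subseteq \MX$ of vertices of $\x$ rather than on all of $\MX$, but this is harmless since an $L^\infty$-cover of $\MF$ on $\MX$ automatically restricts to an $L^\infty$-cover on $V$. No probabilistic extraction or sampling over random paths is needed — the combinatorial packing argument directly exploits the rigidity of the tree at its divergence points.
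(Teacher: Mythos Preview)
Your proposal is correct and essentially identical to the paper's proof: both use the first-disagreement argument on the shattered tree to show that the $2^{\fat_\delta(\MF)}$ shattering functions are $\Omega(\delta)$-separated in $L^\infty(\MX)$, then invoke Rudelson's (Mendelson--Vershynin) bound on $L^\infty$ covering numbers. The only cosmetic difference is that the paper phrases the separation as injectivity of the map $\epsilon \mapsto v_{f_\epsilon}$ into a fixed $\delta/3$-net, whereas you phrase it directly as a packing plus packing--covering duality; your formulation is in fact slightly cleaner.
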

In order to prove this result, we require a generalization of the Sauer-Shelah lemma \citep{sauer1972density,shelah1972combinatorial}.  We first define covering numbers with respect to the $\sup$ norm:
\begin{definition}
    Let $\F$ be a class of functions on $\cX$.  A set $S$ of functions on $\cX$ is a $\delta$ covering if for all $f \in \F$, there exists a $s_f \in S$ such that
    \begin{equation}
        \sup_{x \in \cX} \abs{s_f(x) - f(x)} \leq \delta
    \end{equation}
    We let $N(\F, \delta)$ to be the minimal size of a $\delta$-covering of $\F$.
\end{definition}
In order to bound $\fat_\delta(\F)$ by $\vc(\F, \delta)$, we first recall a result that bounds $N(\F, \delta)$ by $\vc(\F, \delta)$:
\begin{theorem}[Theorem 4.4 from \cite{rudelson2006combinatorics}]\label{thm:rudelson}
    Let $\F$ be a function class on $\cX$, a finite set, to $[-1,1]$.  Then for any $\alpha > 0$, there are constants $c, C > 0$ such that
    \begin{equation}
        \log N(\F, \delta) \lesssim \vc(\F, c \alpha \delta) \log^{1+\alpha}\left(\frac{C \abs{\cX}}{\vc(\F, c \delta) \delta}\right)
    \end{equation}
\end{theorem}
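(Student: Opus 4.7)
Since this is a sup-norm covering estimate on a finite domain in terms of the scale-sensitive VC dimension, my approach would follow the classical route: reduce to a packing bound, apply the real-valued Sauer--Shelah lemma as the combinatorial backbone, then use a dyadic extraction across scales to sharpen the exponent from the naive $\log^{2}$ of Mendelson--Vershynin to the claimed $\log^{1+\alpha}$.

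\emph{Step 1 (packing reduction and the real-valued Sauer--Shelah lemma).} Let $\F_0 \subset \F$ be a maximal $\delta$-separated subset in sup-norm and set $N := |\F_0| \geq N(\F,\delta)/2$ (after adjusting constants); every pair of distinct $f,g \in \F_0$ admits a witness $x \in \cX$ with $|f(x)-g(x)| \geq \delta$. For any finite $Y \subset \cX$ with $|Y|=m$, the Alon--Ben David--Cesa-Bianchi--Haussler / Bartlett--Long bound gives that the number of sup-norm $\delta$-separated functions on $Y$ is at most $(em/(d\delta))^{K d}$, where $d = \vc(\F, c\delta)$ and $K$ is absolute. Applied at $Y = \cX$, $m = |\cX|$, this already delivers a weaker version of the statement:
\begin{equation*}
\log N(\F, \delta) \;\lesssim\; \vc(\F, c\delta)\, \log\!\left(\frac{C|\cX|}{\vc(\F, c\delta)\,\delta}\right).
\end{equation*}

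\emph{Step 2 (dyadic extraction for the $\log^{1+\alpha}$ refinement).} To convert the single $\log$ into $\log^{1+\alpha}$ while paying only a multiplicative loss of $\alpha$ inside the fat-shattering argument, I would implement a Rudelson--Vershynin-style iterated probabilistic restriction. Choose a geometric sequence of scales $\delta_k := \delta \cdot 2^{-k\alpha}$ and at stage $k$ randomly subsample $\cX$ down to a set $Y_k$ of size roughly $|\cX|\cdot 2^{-k}$. A first-moment/Chernoff argument shows that with positive probability a large fraction of the surviving pairs in the current packing retain a witness at the slightly degraded scale $\delta_{k+1}$; what does not survive is pushed to the next scale. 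Because the series $\sum_k 2^{-k\alpha}$ converges with rate depending on $\alpha$, the accumulated loss in separation is a constant multiple of $\delta$, while the accumulated loss in the dimension-counting factor is a single extra power of the log, giving precisely $\log^{1+\alpha}$.

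\emph{Main obstacle.} The crux is the extraction step: a pair $(f,g)$ whose separation is witnessed on only $O(1)$ points of $\cX$ can lose its witness entirely under random subsampling. The remedy is to stratify pairs by the cardinality of their witness set $\{x : |f(x)-g(x)| \geq \delta_k/2\}$: ``well-witnessed'' pairs survive the extraction by concentration, and ``barely-witnessed'' pairs are handled by coarsening the scale, which is affordable thanks to the $\alpha$-budget built into $\delta_k$. Once the probabilistic extraction closes, Step 1 applied to the residual subclass on $Y_k$ gives the stated bound with $c\alpha\delta$ in the fat-shattering argument and $\log^{1+\alpha}$ outside; summing the two contributions yields Theorem \ref{thm:rudelson}.
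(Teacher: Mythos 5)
First, a point of context: the paper does not prove this statement at all --- it is imported verbatim as Theorem 4.4 of \cite{rudelson2006combinatorics} and used as a black box in the proof of Lemma \ref{lem:finitedomain}. So there is no internal proof to compare against; what you have written is a from-scratch sketch of a known, genuinely difficult result, and it has to be judged on its own.

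There are two real gaps. In Step 1 you misquote the combinatorial backbone: the Alon--Ben-David--Cesa-Bianchi--Haussler (and Bartlett--Long) bound for \emph{sup-norm} packing on a domain of size $m$ has the form $\log \mathcal{M}(\delta) \lesssim \vc(\F, c\delta)\,\log\bigl(\tfrac{m}{\vc(\F,c\delta)\delta}\bigr)\cdot\log\bigl(\tfrac{m}{\delta}\bigr)$, i.e.\ a $\log^{2}$ factor, not the single-log bound $(em/(d\delta))^{Kd}$ you assert. The single-log statements of Mendelson--Vershynin type hold for $L_p$ (empirical) covering numbers, not for $\ell_\infty$ on $m$ points, where the dependence on $m$ is unavoidable. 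Note also that if your Step 1 were correct, it would already be \emph{stronger} than the theorem (since $\log \leq \log^{1+\alpha}$) and Step 2 would be superfluous --- the entire content of Rudelson--Vershynin's Theorem 4.4 is precisely the passage from the elementary $\log^{2}$ to $\log^{1+\alpha}$. Second, Step 2 is where all of that difficulty lives, and as written it is a description of the obstacle rather than an argument: a pair that is $\delta$-separated at only $O(1)$ coordinates simply loses its witness under random subsampling, and ``coarsening the scale'' does not manufacture a witness in the subsample --- such barely-witnessed pairs can dominate the packing. The actual Rudelson--Vershynin proof replaces sup-norm separation by a quantitative notion of separation on a proportional set of coordinates, proves a counting/selection lemma at each scale via the fat-shattering Sauer--Shelah machinery, and runs a delicate induction in which the number of scales is balanced against the degradation $\delta \mapsto \alpha\delta$ to trade one factor of $\log$ for $\log^{\alpha}$; none of this is recovered by the first-moment/Chernoff stratification you sketch. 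For the purposes of this paper you would not need to reprove the theorem --- citing it, as the authors do, suffices --- but as a proof proposal the sketch does not close.
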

The above theorem is an intermediate result, so our bound will come down to comparing $\fat_\delta(\F)$ to the covering numbers.  We can now provide the main proof in the section.
\begin{proof}[(Lemma \ref{lem:finitedomain})]
    We first note that $2^{\fat_\delta(\F)} \leq N\left(\F, \frac \delta 3\right)$.  To see this, let $d = \fat_\delta(\F)$ and let $\x$ denote a depth $d$ tree that shatters $\F$ at scale $\delta$ with witness tree $\mathbf{s}$.  Let $S$ be a $\frac{\delta}{2}$ net for $\F$.  For each $\epsilon \in \{\pm 1\}^d$, let $f_\epsilon$ be the function that realizes the shattering on path $\epsilon$.  If $v_{f_\epsilon} \in \F$ is the projection into $S$, then we note that the function $\epsilon \mapsto v_{f_\epsilon}$ is injective.  Indeed, if there are two different $\epsilon, \epsilon'$ mapping to the same $v \in S$, then there is some $t$ such that $\epsilon_t = - \epsilon_t'$ but $\epsilon_s = \epsilon_s'$ for $s < t$.  Thus $\x_t(\epsilon) = \x_t(\epsilon')$ and $\mathbf{s}_t(\epsilon) = \mathbf{s}_t(\epsilon')$.  We know, however, that
    \begin{equation}
        \abs{f_{\epsilon}(\x_t(\epsilon)) - f_{\epsilon'}(\x_t(\epsilon'))} \leq \abs{f_\epsilon(\x_t(\epsilon)) - v(\x_t(\epsilon))} + \abs{v(\x_t(\epsilon)) - f_{\epsilon'}(\x_t(\epsilon))} \leq \frac{2 \delta}{3}
    \end{equation}
    Thus we have by the shattering assumption,
    \begin{align}
        \frac \delta 2 &\leq \epsilon_t\left(f_\epsilon(\x_t(\epsilon) - \mathbf{s}_t(\x_t(\epsilon)))\right) \\
        &= - \epsilon_t' \left(f_{\epsilon}(\x_t(\epsilon) - \mathbf{s}_t(\x_t(\epsilon))) \right) \\
        &\leq - \epsilon_t' \left(f_{\epsilon'}(\x_t(\epsilon) - \epsilon_t' \frac{2\delta}3 - \mathbf{s}_t(\x_t(\epsilon))) \right) \\
        &\leq - \frac \delta 2 + \frac{2 \delta}{3} \\
        &\leq \frac{\delta}{6} 
    \end{align}
    Thus we have a contradiction and the mapping is injective.  But this means then that $2^d \leq N\left(\F, \frac \delta 2\right)$ as desired.  By Theorem \ref{thm:rudelson}, for any $\alpha > 0$, we have
    \begin{align}
        \fat_\delta(\F) &\lesssim \log\left(2^{\fat_\delta(\F)}\right) \lesssim \log\left({N\left(\F, \frac \delta 3\right)}\right) \\
        &\lesssim \vc\left(\F, c \alpha \delta \right) \log^{1 + \alpha}\left(\frac{C\abs{\cX}}{\vc\left(\F,c \alpha \delta\right) \delta}\right)
    \end{align}
    as desired.
\end{proof}
We are now ready to prove the main results from Section \ref{sec:minimaxvalue}.
\subsection{Proof of Theorem \ref{thm:chainingupperbound}}
By Proposition \ref{prop:rakhlinrademacher}, it suffices to control the distribution-dependent sequential Rademacher complexity of Definition \ref{def:distr-rc}, specialized to the case that $\MZ = \MX \times \YY$, and $\sD \in \ps{T}{\sigma}{\mu}$. We first adapt Lemma \ref{lem:coupling} to construct a coupling with $\epsilon, \rho_{\sD}$ and independent samples from $\mu$; this will allow us to move from sequential Rademacher complexity to standard Rademacher complexity.  The lemma is again a variant of the coupling in \cite{haghtalab2021smoothed}, albeit simpler to describe and preserving independence between $\epsilon$ and $Z_t^j$.  We have the following lemma:
\begin{lemma}\label{lem:coupling2}
    Suppose that $\sD \in \tps{T}{\sigma}{\mu}$.  Then for any $T$ there exists a measure $\Pi$ with random variables $(\epsilon_{1:T}, \z(\epsilon), \z'(\epsilon), Z_t^j, Z_t^{j'})_{\substack{1 \leq t \leq T \\ 1 \leq j \leq k}}$ satisfying the following properties, where we write $\z(\ep) = (\bx(\ep), \by(\ep)), \z'(\ep) = (\bx'(\ep), \by'(\ep))$ to separate the $\MX$- and $\YY$-components of $\z(\ep) \in \MZ$:
    \begin{enumerate}
        \item $\epsilon_{1:T}$ are iid Rademacher random variables.
        \item $(\z, \z')$ is distributed according $\rho$.
        \item $\{Z_t^j, Z_t^{j'}\}$ are iid according to $\mu$
        \item $\{\epsilon, Z_t^j, Z_t^{j'}\}$ are independent
        \item With probability at least $1 - 2T(1 - \sigma)^{k}$, $\bx_t(\epsilon) \in \{Z_t^j\}_{1 \leq j \leq k}$ for all $t$
    \end{enumerate}
\end{lemma}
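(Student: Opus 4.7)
The plan is to mirror the acceptance-rejection coupling of Lemma \ref{lem:coupling} within the recursive tree measure $\rho_\sD$, drawing $\epsilon$ and the $Z$'s up front so that properties 1, 3, and 4 are automatic. First I would sample, independently of one another, the Rademacher signs $\epsilon_{1:T}$, the samples $Z_t^j, Z_t^{j'} \sim \mu$ for $t \in [T], j \in [k]$, and auxiliary uniforms $U_t^j, U_t^{j'} \sim \unif[0,1]$. By construction $(\epsilon_{1:T}, Z_t^j, Z_t^{j'})$ is a product measure, which gives properties 1, 3, and 4 regardless of how these ingredients are subsequently used to build $(\z,\z')$.

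Next I would build $(\z,\z')$ by induction on the depth $t = 1, \dots, T$, following the recursive definition of $\rho_\sD$. For each prefix $\epsilon' \in \{\pm 1\}^{t-1}$, write $p_t^{\epsilon'} := p_t(\,\cdot \mid \chi_1(\epsilon'), \dots, \chi_{t-1}(\epsilon'))$; by hypothesis $p_t^{\epsilon'} \in \p(\sigma,\mu)$. For prefixes $\epsilon' \neq \epsilon|_{t-1}$, draw $\z_t(\epsilon'), \z_t'(\epsilon')$ independently from $p_t^{\epsilon'}$ using fresh randomness. For the unique on-path prefix $\epsilon|_{t-1}$, apply the coupling of Lemma \ref{lem:coupling} twice in parallel: set $\pi_t^j := \sigma \cdot (dp_t^\epsilon/d\mu)(Z_t^j) \in [0,1]$, form $S_t := \{ j : U_t^j \leq \pi_t^j \}$, and take $\z_t(\epsilon)$ to be a uniform element of $\{Z_t^j : j \in S_t\}$ when $S_t \neq \emptyset$, else draw $\z_t(\epsilon)$ directly from $p_t^\epsilon$. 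Construct $\z_t'(\epsilon)$ symmetrically from $Z_t^{j'}, U_t^{j'}$.

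Property 2 then follows by induction: the Radon-Nikodym computation of Lemma \ref{lem:coupling} shows that the conditional law of $\z_t(\epsilon)$ (and of $\z_t'(\epsilon)$) given the past is exactly $p_t^\epsilon$, while off-path nodes are drawn from their correct conditionals by construction, so $(\z,\z') \sim \rho_\sD$. For property 5, the event $\{\bx_t(\epsilon) \notin \{Z_t^j\}\}$ is precisely $\{S_t = \emptyset\}$, and $\pp[S_t = \emptyset \mid \text{past}] = \ee_\mu[\prod_j (1-\pi_t^j)] = (1-\sigma)^k$; a union bound over $t \in [T]$ together with the analogous events on the $\z'$-side accounts for the factor of $2$ and yields the claimed $2T(1-\sigma)^k$ bound.

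The main obstacle is the bookkeeping behind property 4, since the coupling uses $\epsilon$ to select \emph{which} $p_t^\epsilon$ the $Z_t^j$'s are coupled against. The resolution is the order of sampling adopted above: $\epsilon$ and the $Z$'s are drawn jointly as a product measure at the outset, and every subsequent random choice (the $U$'s, the fresh resamples from $p_t^{\epsilon'}$ for off-path or failure nodes) uses auxiliary randomness independent of both. Because the joint marginal of $(\epsilon, Z)$ is fixed before the tree is built, no functional dependence introduced during the inductive construction of $(\z,\z')$ can disturb it.
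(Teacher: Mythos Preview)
Your proposal is essentially correct and mirrors the paper's acceptance-rejection construction; the only cosmetic difference is that you draw $\epsilon$ and the $Z$'s as a product measure up front (making property 4 immediate) rather than recursively at each depth, and you fill in the off-path nodes explicitly whereas the paper only tracks the on-path values. One small omission to patch: since $\z_t(\epsilon) \in \MZ = \MX \times \YY$ while $Z_t^j \in \MX$, the coupling only produces the covariate component $\bx_t(\epsilon)$; you must additionally draw $\by_t(\epsilon)$ from the label conditional $q_t(\cdot \mid \chi_1(\epsilon), \dots, \chi_{t-1}(\epsilon), \bx_t(\epsilon))$ before assembling $\z_t(\epsilon) = (\bx_t(\epsilon), \by_t(\epsilon))$, exactly as the paper does.
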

\begin{proof}
  Given $\sD$, let:
  \begin{itemize}
  \item  $p_t(\cdot | (x_1, y_1), \ldots, (x_{t-1}, y_{t-1}))$ denote the distribution of $x_t$ under $\sD$ given $(x_s, y_s)$ for $s < t$ (since $\sD \in \tps{T}{\sigma}{\mu}$, $p_t$ is $\sigma$-smooth with respect to $\mu$ a.s.);
  \item $q_t(\cdot | (x_1, y_1), \ldots, (x_{t-1}, y_{t-1}), x_t)$ denote the distribution of $y_t$ under $\sD$ given $(x_s, y_s)$ for $s < t$ and $x_t$. 
  \end{itemize}
  
  We construct the coupling recursively.  For any $t$, suppose that $Z_s^j, Z_s^{j'}, \epsilon_s, \z_s(\epsilon)$ has been constructed for $s < t$.  If $t = 0$ then this is the empty set.  Now, sample $Z_t^j, Z_t^{j'}$ iid according to $\mu$ and $\epsilon_t$ a Rademacher random variable.  Let $\pi_t^j = \sigma \frac{d p_t}{d \mu}(Z_t^j)$.  Note that $\pi_t^j \leq 1$ by the assumption of $\sigma$-smoothness.  As in the proof of Lemma \ref{lem:coupling}, construct the random set $S_t$ by adding each $Z_t^j$ to $S_t$ with probability $\pi_t^j$. If $S_t$ is nonempty, then sample $\bx_t(\ep)$ independently from $S_t$ uniformly at random; if $S_t$ is empty, sample $\bx_t(\ep)$ from $p_t(\cdot | \z_1(\ep), \ldots, \z_{t-1}(\ep))$. 
  Then sample $\by_t(\ep)$ independently from $q_t(\cdot | \bz_1(\ep), \ldots, \bz_{t-1}(\ep), \bx_t(\ep))$, and set $\bz_t(\ep) = (\bx_t(\ep), \by_t(\ep))$.
  
  We may construct the $\z_t'(\epsilon)$ similarly by constructing a set $S_t'$ in the same way, using $Z_t^{j'}$ instead of $Z_t^j$. Finally sample $\ep_t$ independently. 

    It is clear from the construction that $\epsilon_t$, $t \in [T]$, are independent Rademacher random variables.  Similarly, it is clear that $Z_t^j, Z_t^{j'}$ are iid according to $\mu$ and independent of $\epsilon$.  The remainder of the properties are proved in the same way as in Lemma \ref{lem:coupling}.
\end{proof}
We are now ready to prove the theorem.
\begin{proof}(Theorem \ref{thm:chainingupperbound})
    By Theorem \ref{prop:rakhlinrademacher}, it suffices to bound the distribution-dependent sequential Rademacher complexity.  Let $\Pi$ be the coupling in Lemma \ref{lem:coupling2} and let $A$ denote the event that $\bx_t(\epsilon) \in \{Z_t^j\}_{1 \leq j \leq k}$ for all $t$ (we continue to write $\z_t(\ep) = (\bx_t(\ep), \by_t(\ep))$).  Then we compute for a fixed $k$,
    \begin{align}
      &\sup_{\sD \in \p} \ee_{\rho_{\sD}, \epsilon}\left[\sup_{f \in \F} \sum_{t = 1}^T \epsilon_t \ell(f(\x_t(\epsilon)), \by_t(\ep)) \right] \\
      &\leq \ee_{\Pi, \epsilon}\left[\sup_{f \in \F} \sum_{t = 1}^T \epsilon_t \ell(f(\x_t(\epsilon)), \by_t(\ep)) \right] \\
        &\leq \ee_{\Pi, \epsilon}\left[\chi_{A^c}\sup_{f \in \F} \sum_{t = 1}^T \epsilon_t \ell(f(\x_t(\epsilon)), \by_t(\ep)) \right] + \ee_{\Pi, \epsilon}\left[\chi_A \sup_{f \in \F} \sum_{t = 1}^T \epsilon_t \ell(f(\x_t(\epsilon)),\by_t(\ep)) \right] \\
        & \leq  2 T^2 e^{- \sigma k} + \ee_{\Pi, \epsilon}\left[\chi_A\sup_{f \in \F} \sum_{t = 1}^T\epsilon_t \ell(f(\x_t(\epsilon)),\by_t(\ep)) \right] 
    \end{align}
    By the tower property of expectations, denoting by $\F|_{\{Z_t^j\}}$ the restriction of $\F$ to the set of all $Z_t^j$, we have
    \begin{align}
        \ee_{\Pi, \epsilon}\left[\chi_A\sup_{f \in \F} \sum_{t = 1}^T \epsilon_t \ell(f(\x_t(\epsilon)) ,\by_t(\ep))\right]  &= \ee_{Z_t^j \stackrel{iid}{\sim} \mu}\left[\ee_\Pi\left[\chi_A\sup_{f \in \F} \sum_{t = 1}^T \epsilon_t \ell(f(\x_t(\epsilon)),\by_t(\ep)) \bigg|  \{Z_t^j\}_{\substack{1 \leq t \leq T \\ 1 \leq j \leq k}} \right] \right] \\
        &= \ee_{Z_t^j \stackrel{iid}{\sim} \mu}\left[\ee_\Pi\left[\chi_A\sup_{f \in \F|_{\{Z_t^j\}}} \sum_{t = 1}^T \epsilon_t \ell(f(\x_t(\epsilon)),\by_t(\ep)) \bigg|  \{Z_t^j\}_{\substack{1 \leq t \leq T \\ 1 \leq j \leq k}} \right] \right] \\
        \lesssim L \log^{\frac 32}(T) &\sup_{\left\{Z_t^j\right\}_{\substack{1 \leq t \leq T \\ 1 \leq j \leq k}}} \sup_{\x} \ee_\epsilon\left[\sup_{f \in \F|_{\{Z_t^j\}}} \sum_{t = 1}^T \epsilon_t f(\x_t(\epsilon)) \right]
    \end{align}
    where the last inner supremum is over all $\x$ such that $\x$ is a $\{Z_t^j\}$-labelled binary tree of depth $T$; the last inequality follows, then, from Lemma \ref{lem:contraction}.  Let $\fat_\delta(\F)$ denote the sequential fat-shattering dimension in Definition \ref{def:seqfat}.  We may apply \cite[Corollary 10 and Proposition 15]{pmlr-v134-block21a}, which bounds the worst-case sequential Rademacher complexity by the sequential fat-shattering dimension to get
    \begin{align}
        \sup_{\left\{Z_t^j\right\}_{\substack{1 \leq t \leq T \\ 1 \leq j \leq k}}} \sup_{\x} \ee_\epsilon\left[\sup_{f \in \F|_{\{Z_t^j\}}} \sum_{t = 1}^T \epsilon_t f(\x_t(\epsilon)) \right] \lesssim \sup_{\left\{Z_t^j\right\}_{\substack{1 \leq t \leq T \\ 1 \leq j \leq k}}}\inf_{\alpha > 0}\left\{\alpha T + \sqrt{T} \int_{\alpha}^1 \sqrt{\fat_\delta\left(\F_{\{Z_t^j\} }\right)} d \delta    \right\}
    \end{align}
    By Lemma \ref{lem:finitedomain}, we have for any $\beta > 0$,
    \begin{equation}
        \fat_\delta\left(\F_{\{Z_t^j\} }\right) \lesssim \vc(\F, c\beta \delta) \log^{1 + \beta}\left(\frac{C\abs{\{Z_t^j\}_{\substack{1 \leq t \leq T \\ 1 \leq j \leq k}}}}{\vc\left(\F, c \beta \delta\right) \delta} \right) \lesssim \vc(\F, c \beta \delta) \log^{1 + \beta}\left(\frac{C k T}{\vc\left(\F, c \beta \delta\right)\delta}\right)
    \end{equation}
    independent of the realization of $Z_t^j$.  Thus we have
    \begin{align}
        \ee_{Z_t^j \stackrel{iid}{\sim} \mu}&\left[\ee_\Pi\left[\chi_A\sup_{f \in \F|_{\{Z_t^j\}}} \sum_{t = 1}^T \epsilon_t \ell(f(\x_t(\epsilon)),\by_t(\ep)) \bigg|  \{Z_t^j\}_{\substack{1 \leq t \leq T \\ 1 \leq j \leq k}} \right] \right] \\
        &\lesssim L \log^{\frac 32}(T)\inf_{\alpha > 0} \left\{\alpha T + \sqrt{T} \int_{\alpha}^1 \sqrt{\vc(\F, c \beta \delta) \log^{1 + \beta}\left(\frac{C k T}{\vc\left(\F, c \beta \delta \right)\delta}\right))} d \delta \right\}
    \end{align}
    Putting everything together, we have
    \begin{align}
        \sup_{\sD \in \p} &\ee_{\rho_{\sD}, \epsilon}\left[\sup_{f \in \F} \sum_{t = 1}^T \epsilon_t \ell(f(\x_t(\epsilon)), \by_t(\ep)) \right] \\
        &\lesssim 2 T^2 e^{- \sigma k} + L \log^{\frac 32}(T)\inf_{\alpha > 0} \left\{\alpha T + \sqrt{T \log^{1+\beta}\left(\frac{3 k T}{\vc\left(\F,  c \beta \alpha\right)\alpha}\right)} \int_{\alpha}^1 \sqrt{\vc(\F, c \beta \delta)} d \delta \right\}
    \end{align}
    Setting $k = \frac{2\log T}{\sigma}$ and $\beta = 1$ concludes the proof.

\end{proof}

\subsection{Proof of Proposition \ref{prop:unknownmu}}
Let $\F$ be the class of thresholds on the unit interval, i.e.,
\begin{equation}
    \F = \left\{x \mapsto \sign(x - \theta) | \theta \in [0,1]\right\}
\end{equation}
It is well-known that $\vc(\F) = 1$.  Consider an adversary that sets $x_1 = 0$, $x_2 = 1$, $y_1 = -1$, $y_2 = 1$ and for all $t > 2$, sets
\begin{align}
    x_t = x_{t-1} - y_{t-1} 2^{-(t - 2)}
\end{align}
and the $y_t$ are independent Rademacher random variables.  Note that by construction, the adversary is realizable with respect to $\F$ in the sense that for any realization of the $(x_t, y_t)$, there is some $f \in \F$ with $f(x_t) = y_t$ for all $t$.  Also by construction, we see that the expected number of mistakes in $T$ rounds is $\frac T2$.  For fixed $T$, let
\begin{equation}
    \mathscr P_T = \left\{\frac 1T \sum_{t = 1}^T \delta_{x_t} \right\}
\end{equation}
be the set of empirical distributions generated by the contexts over all realizations of $x_1, \dots, x_T$.  Note that for each $\mu \in \mathscr P_T$, the support has size $T$ and thus the adversary constructed above is $\left(\frac 1T\right)$-smooth with respect to some $\mu \in \mathscr P_T$.  The result follows by noting that if $T \leq \frac 1\sigma$ then the adversary is $\sigma$-smooth with respect to some $\mu \in \mathscr P_T$.


\section{Proofs from Section \ref{sec:relax}}\label{app:relax}
\subsection{Proofs Related to Relaxations}
\begin{proof}(Proposition \ref{prop:slowrelaxation})
It suffices to prove \eqref{eq:relaxation} as the other property follows immediately from the construction.  For the sake of convenience, we denote
\begin{equation}
    L_t(f) = \sum_{s = 1}^t \ell(f(x_t), y_t)
\end{equation}
We begin by noting that \cite[Lemma 5.1]{rakhlin2014statistical} tells us that due to the convexity of $\ell$ in the first argument, it suffices to replace distributions $q_t$ over $[-1,1]$ with values $\yhat_t \in [-1,1]$.  In particular,
\begin{align}
    \inf_{q_t \in \Delta([-1,1])} &\sup_{y_t \in [-1,1]} \ee_{q_t}\left[\ell(\yhat_t, y_t)\right] + \ee_{\mu, \epsilon}\left[\sup_{f \in \F} 2  L\sum_{j = 1}^k \sum_{s = t+1}^T \epsilon_{s,j} f(x_{s,j}) - L_{t}(f)\right] \\
    &= \inf_{\yhat_t \in [-1,1]} \sup_{y_t \in [-1,1]} \ell(\yhat_t, y_t) + \ee_{\mu, \epsilon}\left[\sup_{f \in \F} 2  L\sum_{j = 1}^k \sum_{s = t+1}^T \epsilon_{s,j} f(x_{s,j}) - L_{t}(f)\right]
\end{align}
Now, omitting the feasible set for $\yhat_t, y_t$ to ease the notational load, we plug in our relaxation:
\begin{align}
    \inf_{\yhat_t} &\sup_{y_t} \ell(\yhat_t, y_t) + \ee_{\mu, \epsilon}\left[\sup_{f \in \F} 2  L\sum_{j = 1}^k \sum_{s = t+1}^T \epsilon_{s,j} f(x_{s,j}) - L_{t}(f)\right] \\
    &= \inf_{\yhat_t} \sup_{y_t} \ee_{\mu, \epsilon} \left[\sup_{f \in \F} 2  L\sum_{j = 1}^k \sum_{s = t+1}^T \epsilon_{s,j} f(x_{s,j}) - L_{t-1}(f) + \ell(\yhat_t, y_t) - \ell(f(x_t), y_t) \right]  \\
    &\leq \inf_{\yhat_t} \sup_{y_t} \ee_{\mu, \epsilon} \left[\sup_{f \in \F} 2  L\sum_{j = 1}^k \sum_{s = t+1}^T \epsilon_{s,j} f(x_{s,j}) - L_{t-1}(f) + \partial \ell(\yhat_t, y_t)(\yhat_t - f(x_t))] \right] \\
    &\leq \inf_{\yhat_t} \sup_{y_t} \sup_{g_t \in [-L, L]} \ee_{\mu, \epsilon} \left[\sup_{f \in \F} 2  L\sum_{j = 1}^k \sum_{s = t+1}^T \epsilon_{s,j} f(x_{s,j}) - L_{t-1}(f) + g_t (\yhat_t - f(x_t)) \right] \\
    &= \inf_{\yhat_t} \max_{g_t \in \{-L, L\} }\ee_{\mu, \epsilon}\left[g_t \yhat_t  + \sup_{f \in \F} 2  L\sum_{j = 1}^k \sum_{s = t+1}^T \epsilon_{s,j} f(x_{s,j}) - L_{t-1}(f) - g_t f(x_t) \right]
\end{align}
where we let $\partial\ell$ denote a subgradient of $\ell$ with respect to the first argument.  The first inequality follows by convexity, the second inequality follows by Lipschitzness, and the last equality follows because the inner expectation is convex as a function of $g_t$ and so obtains its maximum on the boundary.  Let $d_t$ denote a distribution on $\{-L, L\}$; the $y_t$ vanishes because it only appeared in the $\partial \ell(\yhat_t, y_t)$ and this was bounded by $g_t$.  Then by the minimax theorem, we have
\begin{align}
    \inf_{\yhat_t} &\max_{g_t \in \{-L, L\} }\ee_{\mu, \epsilon}\left[g_t \yhat_t  + \sup_{f \in \F} 2  L\sum_{j = 1}^k \sum_{s = t+1}^T \epsilon_{s,j} f(x_{s,j}) - L_{t-1}(f) - g_t f(x_t) \right] \\
    &= \sup_{d_t} \inf_{\yhat_t} \ee_{g_t \sim d_t} \ee_{\mu, \epsilon}\left[ g_t \yhat_t + \sup_{f \in \F} 2  L\sum_{j = 1}^k \sum_{s = t+1}^T \epsilon_{s,j} f(x_{s,j}) - L_{t-1}(f) - g_t f(x_t)\right] \\
    &\leq \sup_{d_t} \ee_{g_t \sim d_t} \ee_{\mu, \epsilon}\left[\inf_{\yhat_t} \ee_{g_t' \sim d_t}[g_t' \yhat_t] + \sup_{f \in \F} 2  L\sum_{j = 1}^k \sum_{s = t+1}^T \epsilon_{s,j} f(x_{s,j}) - L_{t-1}(f) - g_t f(x_t) \right] \\
    &\leq \sup_{d_t} \ee_{\mu, \epsilon} \ee_{g_t \sim d_t}\left[\sup_{f \in \F} 2  L\sum_{j = 1}^k \sum_{s = t+1}^T \epsilon_{s,j} f(x_{s,j}) - L_{t-1}(f) + (\ee_{g_t' \sim d_t}[g_t'] - g_t) f(x_t)\right] \\
    &\leq \sup_{d_t} \ee_{\mu, \epsilon} \ee_{g_t, g_t' \sim d_t}\left[\sup_{f \in \F} 2  L\sum_{j = 1}^k \sum_{s = t+1}^T \epsilon_{s,j} f(x_{s,j}) - L_{t-1}(f) + \epsilon_t (g_t' - g_t) f(x_t)\right] \\
    &\leq \sup_{d_t} \ee_{\mu, \epsilon} \ee_{g_t \sim d_t}\left[\sup_{f \in \F} 2  L\sum_{j = 1}^k \sum_{s = t+1}^T \epsilon_{s,j} f(x_{s,j}) - L_{t-1}(f) + 2 \epsilon_t g_t  f(x_t)\right] \\
    &= \ee_{\mu, \epsilon} \left[\sup_{f \in \F} 2  L\sum_{j = 1}^k \sum_{s = t+1}^T \epsilon_{s,j} f(x_{s,j}) - L_{t-1}(f) + 2 L \epsilon_t f(x_t)\right]
\end{align}
Now, it would seem that we are done, but note that $x_t \sim p_t$ while $x_s \sim \mu$ for $s > t$.  We thus apply $\sup_{p_t \in \p_t} \ee_{x_t \sim p_t}$ to all of the preceding equations and, adding back in the additive constant, we have shown
\begin{align}
    \sup_{p_t \in \p} &\ee_{x_t \sim p_t}\inf_{q} \sup_{y_t} \left[\ee_{\yhat_t \sim q}[\ell(\yhat, y)] + \rel_T(\F| x_1, y_1, \dots, x_t, y_t)\right] \\
    &\leq \sup_{p_t \in \p} \ee_{x_t \sim p_t} \ee_{\mu, \epsilon}\left[\sup_{f \in \F} 2  L\sum_{j = 1}^k \sum_{s = t+1}^T \epsilon_{s,j} f(x_{s,j}) - L_{t-1}(f) + 2 L \epsilon_t f(x_t)\right] + (T - t)^3 e^{- \sigma k}
\end{align}
Now, applying the coupling  $\Pi$ from Lemma \ref{lem:coupling}, we have
\begin{align}
    \ee_{x_t \sim p_t} &\ee_{\mu, \epsilon} \left[\sup_{f \in \F} 2  L\sum_{j = 1}^k \sum_{s = t+1}^T \epsilon_{s,j} f(x_{s,j}) - L_{t-1}(f) + 2 L \epsilon_t f(x_t)\right] \\
    &= \ee_{x_t \sim \Pi} \ee_{\mu, \epsilon} \left[\sup_{f \in \F} 2  L\sum_{j = 1}^k \sum_{s = t+1}^T \epsilon_{s,j} f(x_{s,j}) - L_{t-1}(f) + 2 L \epsilon_t f(x_t)\right] \\
    &= \ee_{x_t \sim \Pi} \ee_{\mu, \epsilon} \left[\chi_{A^c}\sup_{f \in \F} 2  L\sum_{j = 1}^k \sum_{s = t+1}^T \epsilon_{s,j} f(x_{s,j}) - L_{t-1}(f) + 2 L \epsilon_t f(x_t)\right] \\
    &+ \ee_{x_t \sim \Pi} \ee_{\mu, \epsilon} \left[\chi_{A}\sup_{f \in \F} 2  L\sum_{j = 1}^k \sum_{s = t+1}^T \epsilon_{s,j} f(x_{s,j}) - L_{t-1}(f) + 2 L \epsilon_t f(x_t)\right] 
\end{align}
where $A$ is the event that $x_t \in \{Z_t^j\}$ for $1 \leq j \leq k$ and $\chi_A$ is the indicator.  For the first term, we have
\begin{align}
    \ee_{x_t \sim \Pi} &\ee_{\mu, \epsilon} \left[\chi_{A^c}\sup_{f \in \F} 2  L\sum_{j = 1}^k \sum_{s = t+1}^T \epsilon_{s,j} f(x_{s,j}) - L_{t-1}(f) + 2 L \epsilon_t f(x_t)\right] \\
    &\leq \pp(A^c) (n - t+1) \leq (n - t + 1)^2 e^{- \sigma k}
\end{align}
For the second term, we have
\begin{align}
    \ee_{x_t \sim \Pi} &\ee_{\mu, \epsilon} \left[\chi_{A}\sup_{f \in \F} 2  L\sum_{j = 1}^k \sum_{s = t+1}^T \epsilon_{s,j} f(x_{s,j}) - L_{t-1}(f) + 2 L \epsilon_t f(x_t)\right]  \\
    &\leq \ee_{\Pi, \mu, \epsilon} \chi_A \sup_{f \in \F}\left[2  L\sum_{j = 1}^k \sum_{s = t+1}^T \epsilon_{s,j} f(x_{s,j}) - L_{t-1}(f) + \sum_{j = 1}^k \epsilon_{t,j} f(Z_t^j) \right] \\
    &= \ee_{\mu, \epsilon}\left[\sup_{f \in \F} 2  L\sum_{j = 1}^k \sum_{s = t+1}^T \epsilon_{s,j} f(x_{s,j}) - L_{t-1}(f) \right]
\end{align}
Putting this back together, we have
\begin{align}
    \sup_{p_t \in \p_t} &\ee_{x_t \sim p_t}\inf_{q} \sup_{y_t} \left[\ee_{\yhat_t \sim q}[\ell(\yhat, y)] + \rel_T(\F| x_1, y_1, \dots, x_t, y_t)\right] \\
    &\leq \ee_{\mu, \epsilon}\left[\sup_{f \in \F} 2  L\sum_{j = 1}^k \sum_{s = t+1}^T \epsilon_{s,j} f(x_{s,j}) - L_{t-1}(f) \right] + (T - t + 1)^2  e^{- \sigma k} + (T - t)^3  e^{- \sigma k} \\
    &\leq \ee_{\mu, \epsilon}\left[\sup_{f \in \F} 2  L\sum_{j = 1}^k \sum_{s = t+1}^T \epsilon_{s,j} f(x_{s,j}) - L_{t-1}(f) \right] + (T - t + 1)^3  e^{- \sigma k} \\
    &= \rel_T(\F| x_1, y_1, \dots, x_{t-1}, y_{t-1})
\end{align}
as desired.  Thus we have an admissable relaxation.
\end{proof}

\begin{proof}(Theorem \ref{thm:fastrelaxation})
    It suffices to show the following claim:
    \begin{align}
        \sup_{p_t \in \p} &\ee_{x_t \sim p_t}\left[\sup_{y_t \in [-1,1]} \ell(\yhat_t, y_t)+ \ee_{\mu, \epsilon}\left[\sup_{f \in \F} 2L \sum_{s = t+1}^T \sum_{j=1}^k \epsilon_{s,j} f(x_{s,j}) - L_t(f) + (T - t)^3 e^{- \sigma k}  \right]\right] \\
        &\leq \rel_T(\F | x_1, y_1, \dots, x_{t-1}, y_{t-1})
    \end{align}
    Indeed, if this is the case, then $\yhat_t$ is admissable with respect to $\rel_T(\F| \cdot)$, for which we already have a regret bound in Proposition \ref{prop:slowrelaxation}. To prove the stated claim, we have
    \begin{align}
        \sup_{y_t} &\ee_{\mu, \epsilon}\left[\ell(\yhat_t, y_t) + \sup_{f \in \F}\left[2  L\sum_{j = 1}^k \sum_{s = t+1}^T \epsilon_{s,j} f(x_{s,j}) - L_t(f) \right]\right] \\
        &\leq \ee_{\mu, \epsilon}\left[\sup_{y_t} \ell(\yhat_t, y_t) + \sup_{f \in \F}\left[2  L\sum_{j = 1}^k \sum_{s = t+1}^T \epsilon_{s,j} f(x_{s,j}) - L_t(f) \right]\right] \\
        &= \ee_{\mu, \epsilon}\left[\inf_{\yhat}\sup_{y_t} \ell(\yhat, y_t) + \sup_{f \in \F}\left[2  L \sum_{s = t+1}^n \sum_{j=1}^k \epsilon_{s,j} f(x_{s,j}) - L_t(f) \right]\right]
    \end{align}
    where the inequality follows by Jensen's and the equality follow from the construction of $\yhat_t$ in \eqref{eq:fastrelaxation}.  Now we may apply the proof of Proposition \ref{prop:slowrelaxation} with the expectation with respect to $\mu, \epsilon$ taking place outside of the minimax operation.  This shows that
    \begin{align}
        \ee_{\mu, \epsilon}&\left[\inf_{\yhat}\sup_{y_t} \ell(\yhat, y_t)] + \sup_{f \in \F}\left[2  L \sum_{s = t+1}^n \sum_{j=1}^k \epsilon_{s,j} f(x_{s,j}) - L_t(f) \right]\right] + (T - t)^3 e^{- \sigma k} \\
        &\leq \ee_{\mu, \epsilon}\left[\sup_{f \in \F} 2  L\sum_{j = 1}^k \sum_{s = t+1}^T \epsilon_{s,j} f(x_{s,j}) - L_{t-1}(f) \right] + (T - t + 1)^3  e^{- \sigma k} \\
        &= \rel_T(\F| x_1, y_1, \dots, x_{t-1}, y_{t-1})
    \end{align}
    as desired and the claim holds.

    To prove the oracle efficiency claims, for a fixed $\delta$, let $S$ be a $\delta$-discretization of $[-1,1]$ of size $\frac 2\delta$.  If we solve the minimax problem \eqref{eq:fastrelaxation} over $S$, then by the assumption of $\ell$ being $L$-Lipshitz, we the regret bound for our approximate solution is greater than that of the exact solution by at most an additive constant of $L \delta T$.  In general, for any fixed $\yhat$, we can optimize
    \begin{equation}\label{eq:outersup}
        \sup_{y_t \in S} \left\{\ell(\yhat, y_t) + \sup_{f \in \F}\left[6L \sum_{j = 1}^k  \sum_{s = t+1}^T \epsilon_{s,j} f(x_{s,j}) - L_t(f)\right] \right\}
    \end{equation}
    with $\abs{S}$ calls to the ERM oracle.  Now, note that \eqref{eq:outersup} is convex in $\yhat$ by the assumption of convexity in the first argument of $\ell$.  By a simple three point method from zeroth order optimization \cite{agarwal2011stochastic}, which we prove as Lemma \ref{lem:threepointmethod} for the sake of completeness below, we can minimize \eqref{eq:outersup} with respect to $\yhat \in S$ with $O(\log \abs{S})$ evaluations of the supremum.  Each evaluation of the supremum requires $O(\abs{S})$ calls to the ERM oracle so we require $O\left(\abs{S} \log \abs{S}\right)$ calls in total.  Noting that $\abs{S} \leq \frac 1\delta$, we can get
    \begin{equation}\label{eq:approxregret}
        \reg_T(\yhat) \leq 2 L \R_{kT}(\F) + T^3 e^{- \sigma k} + L T \delta
    \end{equation}
    with $O\left(\frac{T}{\delta} \log\left(\frac 1\delta\right)\right)$ calls to the ERM oracle.  Setting $\delta = \frac 1{L \sqrt{T}}$ recovers the bound in the theorem statement.  The last statement, on linear losses, is proven in Lemma \ref{lem:linearloss}, where we give an explicit representation of the solution using only 2 oracle calls.  Optimizing $k$ concludes the proof.
\end{proof}
\begin{lemma}\label{lem:threepointmethod}
    Let $f : [0,1] \to \mathbb{R}$ be a convex function and let $S \subset [0, 1]$.  Then
    \begin{equation}
        x_0 \in \argmin_{S} f
    \end{equation}
    can be found with $O(\log \abs{S})$ calls to a value oracle that returns $f(x)$ given input $x \in S$.
\end{lemma}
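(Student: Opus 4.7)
The plan is to exploit the fact that the restriction of a convex function to any finite ordered subset $S \subset [0,1]$ is unimodal, and then run a ternary-search style procedure on the sorted list of $S$. Sorting $S$ requires no oracle calls, so the entire cost is in evaluations of $f$.

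First I would record the following structural observation: if $s_1 < s_2 < \ldots < s_n$ enumerates $S$ and $f$ is convex on $[0,1]$, then the sequence $f(s_1), f(s_2), \ldots, f(s_n)$ is unimodal, i.e.\ there is an index $i^\st$ with $f(s_1) \geq \cdots \geq f(s_{i^\st}) \leq \cdots \leq f(s_n)$. This follows because convexity implies that the slopes $(f(s_{i+1}) - f(s_i))/(s_{i+1} - s_i)$ are nondecreasing in $i$, so once the slope becomes nonnegative it stays nonnegative.

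Given unimodality, I would maintain an index interval $[\ell, r] \subseteq [1, n]$ known to contain $i^\st$, initialized to $[1, n]$. At each step, choose two indices $\ell \leq a < b \leq r$ with $a = \ell + \lfloor (r-\ell)/3 \rfloor$ and $b = r - \lfloor (r-\ell)/3 \rfloor$, query $f(s_a)$ and $f(s_b)$, and update as follows: if $f(s_a) < f(s_b)$, set $r \leftarrow b - 1$ (the minimizer cannot lie strictly to the right of $b$ by unimodality, and if it equals $b$ then it also equals some index $\leq b-1$ or we detect it in the next round by handling the boundary); if $f(s_a) > f(s_b)$, set $\ell \leftarrow a + 1$; and if $f(s_a) = f(s_b)$, set $\ell \leftarrow a$ and $r \leftarrow b$. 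Each iteration shrinks $r - \ell$ by at least a factor of $2/3$, so after $O(\log n) = O(\log |S|)$ iterations the interval has length $\leq 1$ and the remaining index is returned as $x_0$.

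The only subtlety that I expect to need a line or two of care is the handling of ties and the rounding so that $a < b$ always holds (e.g.\ when $r - \ell \leq 2$, just evaluate the endpoints directly), but this is routine. No deep obstacle arises because convexity of $f$ on the continuous interval immediately yields the discrete unimodality that drives the ternary search, and the oracle-call budget is dominated by the geometric shrinkage of the search interval.
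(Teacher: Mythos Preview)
Your proposal is correct and follows essentially the same approach as the paper: both exploit that convexity forces the restriction of $f$ to the sorted points of $S$ to be unimodal, and then run a bisection-style search. The only cosmetic difference is that you use the standard two-probe ternary search (shrinking by a factor $\approx 2/3$ per round), whereas the paper probes three quartile points $z_1,z_2,z_3$ per round and uses a five-way case analysis on their relative order to shrink by a factor $1/2$; either variant yields $O(\log|S|)$ oracle calls.
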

\begin{proof}
    Motivated by \cite{agarwal2011stochastic}, we describe the following recursive algorithm that shrinks $S$ until it contains only one point, yet always includes the minimizer.  Let $S_0 = S$.  To construct $S_{i+1}$ from $S_i$, order the points $x_1, \dots, x_m \in S_i$ such that $x_i < x_j$ for all $i < j$.  Let $z_1, z_2, z_3$ be the $\frac 14$, $\frac 12$, and $\frac 34$ quanatiles of $S$ respectively and evaluate $f(z_1)$, $f(z_2)$, and $f(z_3)$ with three calls to the value oracle.  There are several cases:
    \paragraph{$\mathbf{f(z_1) > f(z_2) < f(z_3)}$} If the middle point is smaller than either point on the end, then be the convexity of $f$ we know that the minimum must occcur for some $x$ such that $z_1 < x < z_3$.  In this case let $S_{i+1}$ contain all the points $x \in S_i$ such that $z_1 < x < z_3$.  Note that $\abs{S_{i+1}} \leq \frac 12 \abs{S_i}$.

    \paragraph{$\mathbf{f(z_1) < f(z_2) > f(z_3)}$}  This case corresponds to the middle point being higher than the end points.  This is not possible, however, as $f$ is convex.

    \paragraph{$\mathbf{f(z_1) < f(z_2) < f(z_3)}$}  In this case, convexity assures us that the minimizer cannot be at any point $x \geq z_2$ and so we let $S_{i+1}$ to be the set of all points $x \in S_i$ such that $x < z_2$.  Note that $\abs{S_{i+1}} \leq \frac 12\abs{S_i}$.

    \paragraph{$\mathbf{f(z_1) > f(z_2) > f(z_3)}$}  This is the mirror image of the previous case and can be handled similarly.

    \paragraph{$\mathbf{f(z_1) = f(z_2) = f(z_3)}$}  In this case, convexity ensures that the minimizer must have $x \leq z_1$ or $x \geq z_3$ and so we let $S_{i+1}$ be the set of $x \in S_i$ satisfying this constraint.  Again, $\abs{S_{i+1}} \leq \frac 12 \abs{S_i}$.

    In any case, with three calls to the value oracle, we reduce the size of $S_i$ by a factor of 2.  Thus we can find $x_0$ in $O(\log \abs{S})$ calls as claimed.
\end{proof}
\begin{lemma}\label{lem:linearloss}
    Suppose we are in the situation of Theorem \ref{thm:fastrelaxation} and the loss $\ell(\yhat, y) = \frac{1 - \yhat y}{2}$.  Then the problem \eqref{eq:fastrelaxation} can be solved with two calls to the ERM oracle.  In fact, $\yhat_t$ is given by
    \begin{align}
        \frac 12&\sup_{f \in \F}\left[ 6L \sum_{j = 1}^k  \sum_{s = t+1}^T \epsilon_{s,j} f(x_{s,j}) - L_{t-1}(f) - \ell(f(x_t), 1)\right]\\ &- \frac 12\sup_{f \in \F}\left[ 6L\sum_{j = 1}^k  \sum_{s = t+1}^T \epsilon_{s,j} f(x_{s,j}) - L_{t-1}(f) - \ell(f(x_t), -1)\right] 
    \end{align}
\end{lemma}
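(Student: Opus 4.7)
The plan is to exploit the bilinearity of $\ell(\yhat, y) = \frac{1 - \yhat y}{2}$ to solve the min--max problem \eqref{eq:fastrelaxation} in closed form. Write $S(f) := 6L \sum_{j=1}^k \sum_{s = t+1}^T \epsilon_{s,j} f(x_{s,j}) - L_{t-1}(f)$, and for $y \in \{-1, 1\}$ let $R(y) := \sup_{f \in \F}[S(f) - \ell(f(x_t), y)]$. Each of $R(1)$ and $R(-1)$ is a single weighted ERM instance in the sense of Definition \ref{def:oracle}: the data are the points $(x_{s,j}, \cdot)$ for $s > t$ carrying $\ell^{\textrm{Id}}$-losses with weights $6L\epsilon_{s,j}$, together with the past $(x_s, y_s)$ for $s \leq t-1$ and the single point $(x_t, y)$, all carrying $\ell$-losses with weight $-1$ (negative weights being permitted by the oracle model). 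Thus each of $R(\pm 1)$ is obtained by one oracle call, for a total of two calls.

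The first substantive step is to show that, for fixed $\yhat$, the inner supremum over $y_t \in [-1,1]$ reduces to a maximum over endpoints $y_t \in \{-1, 1\}$. This is because $\ell(\yhat, y_t)$ is affine in $y_t$ and $\sup_{f \in \F}[S(f) - \ell(f(x_t), y_t)]$ is a pointwise supremum of affine functions of $y_t$ (using that $\ell(f(x_t), \cdot)$ is affine), hence convex in $y_t$. A convex function on $[-1,1]$ attains its supremum at an endpoint. Plugging in $y_t = \pm 1$ yields the two affine functions of $\yhat$
\begin{equation}
A_{+}(\yhat) := \frac{1-\yhat}{2} + R(1), \qquad A_{-}(\yhat) := \frac{1+\yhat}{2} + R(-1),
\end{equation}
with slopes $-\tfrac12$ and $+\tfrac12$ respectively.

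The outer problem becomes $\min_{\yhat \in [-1,1]} \max\{A_{+}(\yhat), A_{-}(\yhat)\}$, i.e., the minimization of the maximum of a strictly decreasing and a strictly increasing affine function. This is minimized at the (unique) crossing point $\yhat^{*}$ determined by $A_{+}(\yhat^{*}) = A_{-}(\yhat^{*})$, provided $\yhat^{*} \in [-1,1]$. Solving this linear equation produces precisely the expression stated in the lemma, writing $\yhat_t$ as an affine combination of $R(1)$ and $R(-1)$. Feasibility $\yhat^{*} \in [-1,1]$ follows from $|R(1) - R(-1)| \leq 1$, which in turn follows because the two sup-problems defining $R(1)$ and $R(-1)$ differ only through the term $\ell(f(x_t), 1) - \ell(f(x_t), -1) = f(x_t)$, bounded by $1$ in absolute value uniformly in $f \in \F$. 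The argument is essentially elementary algebra once the endpoint reduction in the second paragraph is established, so there is no serious obstacle; the only item requiring any care is confirming that the intermediate inequality $|R(1) - R(-1)| \leq 1$ makes the crossing point interior, so no boundary clipping is needed.
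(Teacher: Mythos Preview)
Your proposal is correct and follows essentially the same route as the paper's proof: reduce the inner supremum over $y_t$ to the endpoints by convexity, obtain the minimum of two affine functions of $\yhat$ with opposite slopes, solve for the crossing point, and verify it lies in $[-1,1]$ via the bound $|R(1)-R(-1)|\le 1$. One small remark: actually solving $A_+(\yhat)=A_-(\yhat)$ yields $\yhat^\ast=R(1)-R(-1)$ rather than $\tfrac12(R(1)-R(-1))$, so the factor $\tfrac12$ in the displayed formula (which the paper's own proof also carries) appears to be a typo; your feasibility bound $|R(1)-R(-1)|\le 1$ is in fact exactly what is needed for $\yhat^\ast\in[-1,1]$.
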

\begin{proof}
    We are trying to minimize with respect to $\yhat \in [-1,1]$.
    \begin{equation}\label{eq:outersup2}
        \sup_{y_t \in S} \left\{\ell(\yhat, y_t) + \sup_{f \in \F}\left[6L \sum_{j = 1}^k  \sum_{s = t+1}^T \epsilon_{s,j} f(x_{s,j}) - L_t(f)\right] \right\}
    \end{equation}
    Independent of $\yhat$, if $\ell$ is linear in $y$, then the expresion over which we are taking the supremum in \eqref{eq:outersup2} is convex in $y$ and thus $y_t \in \{\pm 1\}$.  For the sake of simplicity, suppose that
    \begin{equation}
        \sup_{f \in \F} \left[6L \sum_{j = 1}^k  \sum_{s = t+1}^T \epsilon_{s,j} f(x_{s,j}) - L_t(f)\right]
    \end{equation}
    is maximized by functions $f_+$ and $f_-$ depending on if $y_t = 1$ or $y_t = -1$ (in the general case, we could take a sequence of functions attaining the supremum).  
    Let
    \begin{align}
        a_+ &=  6L \sum_{j = 1}^k \sum_{s = t+1}^T \epsilon_{s,j} f_+(x_{s,j}) - L_t(f_+) \\
        a_- &=  6L \sum_{j = 1}^k \sum_{s = t+1}^T \epsilon_{s,j} f_-(x_{s,j}) - L_t(f_-)
    \end{align}
    Then the solution to \eqref{eq:fastrelaxation} is given by
    \begin{equation}
        \min_{\yhat \in [-1,1]} \max\left(\frac{1 - \yhat}{2} +  a_+, \frac{1 + \yhat}{2} + a_-  \right)
    \end{equation}
    The maximum is taken over two linear functions of $\yhat$ with opposite slope and so the minimax result is where they intersect, assuming they intersect somewhere in $[-1,1]$, which they do if $\abs{a_+ - a_-} \leq 1$.  Note that
    \begin{align}
        a_+ &= 6L \sum_{j = 1}^k  \sum_{s = t+1}^T \epsilon_{s,j} f_+(x_{s,j}) - L_t(f_+) \\
        &= 6L \sum_{j = 1}^k  \sum_{s = t+1}^T \epsilon_{s,j} f_+(x_{s,j}) - L_{t-1}(f_+) - \ell(f_+(x_t), 1) + \ell(f_+(x_t), -1) - \ell(f_+(x_t), -1) \\
        &\geq 6L \sum_{j = 1}^k  \sum_{s = t+1}^T \epsilon_{s,j} f_-(x_{s,j}) - L_t(f_-) - \ell(f_+(x_t), 1) + \ell(f_+(x_t), -1) \\
        &\geq a_- - 1
    \end{align}
    By symmetry, $\abs{a_- - a_+} \leq 1$.  Thus,
    \begin{equation}
        \yhat_t = \frac{a_+ - a_-}{2} \in [-1,1]
    \end{equation}
    solves the minimax problem with two calls to the ERM oracle.
\end{proof}

\subsection{Proof of Proposition \ref{prop:nonparametriclowerbound}}
In order to prove Proposition \ref{prop:nonparametriclowerbound}, we consider the following setting.  Fix an $\alpha > 0$, suppose that $\vc(\F, \alpha) = m$ and let $x_1, \dots, x_m$ shatter $\F$ at scale $\alpha$.  We let $p_t$ be uniform on $x_1, \dots, x_m$ and let $\mu = (1 - \sigma) \delta_{x^\ast} + \sigma p_t$, where $x^\ast$ is a distinguished point satisfying $f(x^\ast) = 0$ for all $f \in \F$.  Note that $p_t$ is $\sigma$-smooth with respect to $\mu$.  We compare the expected Rademacher complexity sampling $n$ points according to $\mu$ to that when sampling according to $p_t$.  We require two lemmata.
\begin{lemma}\label{lem:numpoints1}
    Suppose we are in the above setting and $X_1, \dots, X_T$ are sampled independently according to $\mu$.  Then, with probability at least $1 - \delta$, the number of indices $i$ such that $X_i \neq x^\ast$ is at most $2 \sigma T $ if $T \geq \frac 3\sigma \log\left(\frac 1\delta\right)$.
\end{lemma}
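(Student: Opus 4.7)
The plan is to recognize the quantity of interest as a sum of i.i.d.\ Bernoulli indicators and apply a standard multiplicative Chernoff bound. Let $N = \sum_{i=1}^T \mathbf{1}[X_i \neq x^\ast]$. Since each $X_i \sim \mu = (1-\sigma)\delta_{x^\ast} + \sigma p_t$, we have $\mathbf{1}[X_i \neq x^\ast] \sim \mathrm{Bernoulli}(\sigma)$ independently across $i$, so $N \sim \mathrm{Binomial}(T, \sigma)$ with mean $\E[N] = \sigma T$.

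The goal is to show $\pp(N > 2\sigma T) \leq \delta$ whenever $T \geq (3/\sigma)\log(1/\delta)$. First I would invoke the standard multiplicative Chernoff bound, which states that for any $\varepsilon \in (0,1]$,
\begin{equation}
\pp(N \geq (1+\varepsilon)\sigma T) \leq \exp\!\left(-\tfrac{\varepsilon^2 \sigma T}{3}\right).
\end{equation}
Applying this with $\varepsilon = 1$ yields $\pp(N \geq 2\sigma T) \leq \exp(-\sigma T/3)$.

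Finally I would substitute the hypothesis $T \geq (3/\sigma)\log(1/\delta)$, which gives $\exp(-\sigma T/3) \leq \delta$, completing the proof. There is no real obstacle here; the result is a one-line Chernoff application once one identifies $N$ as a binomial with the correct parameter, and the constant $3$ in the hypothesis is precisely tuned to the Chernoff exponent used.
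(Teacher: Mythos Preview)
Your proposal is correct and essentially identical to the paper's own proof: both define the indicators $Y_i = \mathbf{1}[X_i \neq x^\ast]$, observe they are i.i.d.\ $\mathrm{Bernoulli}(\sigma)$, apply the multiplicative Chernoff bound with $\varepsilon=1$ to get $\pp(\sum Y_i \geq 2\sigma T) \leq e^{-\sigma T/3}$, and conclude using the hypothesis on $T$.
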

\begin{proof}
    Let $Y_i = \mathbf{1}[X_i \neq x^\ast]$.  Then $Y_i$ are independent Bernoulli random variables with parameter $\sigma$ and the number of such indices is the sum of $Y_i$.  Applying Chernoff's inequality, we have
    \begin{equation}
        \pp\left(\sum_{i = 1}^T Y_i \geq 2 \sigma T\right) \leq e^{- \frac{\sigma T}{3}}
    \end{equation}
    The assumption of $T$ large enough concludes the proof.
\end{proof}
\begin{lemma}\label{lem:numpoints2}
    Suppose that we are in the setting described above and $X_1, \dots, X_T$ are sampled according to $p_t$.  Suppose that $T \geq 8 m \log\left(\frac m\delta\right)$.  Then with probability at least $1 - \delta$, for each $1 \leq j \leq m$, there are at least $\frac{T}{2m}$ indices $i$ such that $X_i = x_j$.
\end{lemma}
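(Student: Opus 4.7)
The plan is to apply a standard Chernoff bound to each of the $m$ points $x_j$ and then take a union bound. Concretely, for each $j \in [m]$, let $N_j := \sum_{i=1}^T \mathbf{1}[X_i = x_j]$. Since $p_t$ is uniform on $\{x_1, \dots, x_m\}$ and the $X_i$ are independent, $N_j$ is distributed as $\mathrm{Binomial}(T, 1/m)$, so $\ee[N_j] = T/m$.

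Next I would invoke the multiplicative Chernoff lower tail bound: for a sum of independent $\{0,1\}$ variables with mean $\nu$ and any $\epsilon \in (0,1)$,
\begin{equation}
    \pp\left(N_j \leq (1-\epsilon) \nu\right) \leq \exp\!\left(-\epsilon^2 \nu / 2\right).
\end{equation}
Taking $\epsilon = 1/2$ and $\nu = T/m$ yields
\begin{equation}
    \pp\!\left(N_j \leq \tfrac{T}{2m}\right) \leq \exp\!\left(-\tfrac{T}{8m}\right).
\end{equation}

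Then I would apply a union bound across the $m$ points, giving
\begin{equation}
    \pp\!\left(\exists\, j \in [m] \colon N_j \leq \tfrac{T}{2m}\right) \leq m \exp\!\left(-\tfrac{T}{8m}\right).
\end{equation}
Under the assumption $T \geq 8 m \log(m/\delta)$ we have $\exp(-T/(8m)) \leq \delta / m$, so the right hand side is at most $\delta$. Taking complements delivers the stated conclusion. There is no real obstacle here; the only points requiring a sentence of care are citing (or recalling a one-line derivation of) the lower-tail Chernoff inequality and verifying the constant $8$ matches the hypothesis on $T$.
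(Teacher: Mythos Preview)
Your proposal is correct and matches the paper's proof essentially line for line: the paper also applies the multiplicative Chernoff lower-tail bound with $\epsilon = 1/2$ to each coordinate count (getting the same $e^{-T/(8m)}$), then takes a union bound over the $m$ points and uses $T \ge 8m\log(m/\delta)$ to conclude.
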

\begin{proof}
    Fix $j$ and let $Y_i = \mathbf{1}[X_i = x_j]$.  Then the $Y_i$ are independent Bernoulli random variables with parameter $\frac 1m$.  Letting $S_T$ denote the sum of the $Y_i$, which is the desired number of indices, we may apply Chernoff's inequality to get
    \begin{equation}
        \pp\left(S_T \leq \frac 12 \frac{T}{m}\right) \leq e^{- \frac{T}{8 m}}
    \end{equation}
    and so with probability at least $1 - \frac \delta m$, there are at least $\frac{T}{2m}$ indices $i$ such that $X_i = x_j$.  Applying a union bound concludes the proof.
\end{proof}
We may now adapt an argument from \cite{mendelson2002rademacher} and \cite[Lemma A.2]{srebro2010smoothness} to lower bound the Rademacher complexity according to $p_t$:
\begin{lemma}\label{lem:rademacherlowerbound}
    Suppose that we are in the above setting and suppose that $T \geq 8 m \log(2m)$.  Then,
    \begin{equation}
        \ee_{p_t}\left[\R_T(\F)\right] \geq \frac \alpha 8 \sqrt{\vc(\F, \alpha) T}
    \end{equation}
\end{lemma}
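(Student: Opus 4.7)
The plan is to use Lemma \ref{lem:numpoints2} to ensure that, with high probability over $X_1, \dots, X_T \sim p_t$, every shattered point $x_j$ appears in the sample at least $T/(2m)$ times. On this event, one can lower bound the conditional Rademacher complexity using the shattering property. Specifically, let $I_j = \{i : X_i = x_j\}$ and $S_j = \sum_{i \in I_j} \ep_i$, and decompose
\begin{equation}
    \sum_{i=1}^T \ep_i f(X_i) = \sum_{j=1}^m f(x_j) S_j.
\end{equation}
Setting $\eta_j := \sign(S_j)$ (with an arbitrary convention at $0$), the shattering hypothesis yields $f_\eta \in \F$ with $(f_\eta(x_j) - s_j)\eta_j \geq \alpha/2$, so the pointwise lower bound
\begin{equation}
    \sup_{f \in \F}\sum_{j=1}^m f(x_j) S_j \;\geq\; \sum_{j=1}^m s_j S_j + \frac{\alpha}{2}\sum_{j=1}^m |S_j|
\end{equation}
holds for every realization of $\ep$.

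Next I would take expectation over $\ep$. The witness term vanishes since $\ee[S_j] = 0$, and a standard Khintchine-type inequality gives $\ee_\ep|S_j| \geq \sqrt{|I_j|/2}$ for a Rademacher sum. Combining with the lower bound $|I_j| \geq T/(2m)$ on the good event $E$,
\begin{equation}
    \ee_\ep\!\left[\sup_{f \in \F}\sum_{i=1}^T \ep_i f(X_i)\right] \;\geq\; \frac{\alpha}{2} \sum_{j=1}^m \sqrt{|I_j|/2} \;\geq\; \frac{\alpha}{2} \cdot m \cdot \sqrt{\frac{T}{4m}} \;=\; \frac{\alpha\sqrt{mT}}{4}.
\end{equation}

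Finally, to produce an unconditional bound, I would take expectation over $X_1,\ldots,X_T$, using the trivial lower bound $\R_T(\F) \geq -T$ on the complement $E^c$ together with $\pp(E^c) \leq \delta$ from Lemma \ref{lem:numpoints2}. This gives $\ee_{p_t}[\R_T(\F)] \geq (1-\delta)\frac{\alpha\sqrt{mT}}{4} - \delta T$, and the hypothesis $T \geq 8m\log(2m)$ provides enough slack in $\delta$ (since Lemma \ref{lem:numpoints2} can be invoked with $\delta$ as small as $m\,e^{-T/(8m)}$) to conclude $\ee_{p_t}[\R_T(\F)] \geq \frac{\alpha}{8}\sqrt{mT}$.

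The main obstacle is bookkeeping of constants: the clean pointwise bound on the good event gives the factor $\alpha/4$, so afford the loss from the bad event one must verify that $\delta T$ remains negligible compared to $\alpha\sqrt{mT}/8$. A secondary subtlety is the convention $\sign(0)$: any fixed choice is fine because the term $|S_j|$ vanishes in that case, and shattering still provides a valid $f_\eta$.
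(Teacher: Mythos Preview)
Your argument is essentially correct up to the conditional bound on the event $E$, and is in fact slightly more direct than the paper's: the paper restricts to a fixed-size subset $\Phi$ of indices (exactly $T/(2m)$ occurrences of each $x_j$) and then symmetrizes with a second function $f_\ep'$ to cancel the witness offsets $s_j$, whereas you use all occurrences and dispose of the $\sum_j s_j S_j$ term simply by $\ee_\ep[S_j]=0$. Both routes arrive at $\ee_\ep[\sup_f \sum_i \ep_i f(X_i)] \geq \tfrac{\alpha}{4}\sqrt{mT}$ on the good event.

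The gap is in the last step. The hypothesis $T \geq 8m\log(2m)$ is exactly the threshold at which Lemma~\ref{lem:numpoints2} yields $\pp(E^c)\leq \tfrac12$; there is no further slack, so $\delta$ cannot be taken smaller than $1/2$ in general. With your bound $\R_T(\F)\geq -T$ on $E^c$ you obtain
\[
\ee_{p_t}[\R_T(\F)] \;\geq\; \tfrac12\cdot\tfrac{\alpha}{4}\sqrt{mT} - \tfrac12 T \;=\; \tfrac{\alpha}{8}\sqrt{mT} - \tfrac{T}{2},
\]
which is negative for large $T$ and does not deliver the claim. The fix is immediate and is exactly what the paper uses: the conditional Rademacher complexity is always nonnegative, since by Jensen $\ee_\ep[\sup_{f}\sum_i \ep_i f(X_i)] \geq \sup_f \ee_\ep[\sum_i \ep_i f(X_i)] = 0$. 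Replacing your $-T$ by $0$ on $E^c$ gives
\[
\ee_{p_t}[\R_T(\F)] \;\geq\; \pp(E)\cdot \tfrac{\alpha}{4}\sqrt{mT} \;\geq\; \tfrac{\alpha}{8}\sqrt{mT},
\]
and the bookkeeping obstacle you flagged disappears entirely.
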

\begin{proof}
  Let $\chi_A$ denote the indicator of the high probability event $A$ from Lemma \ref{lem:numpoints2}. Consider any fixed choice of $X_1, \ldots, X_T$ so that the event $A$ holds. For each $j \in [m], k \in [T/(2m)]$, let $\phi(j,m) \in [T]$ denote the $k$th smallest value of $i$ so that $X_i = x_j$ (that all such $\phi(j,m)$ exist is guaranteed by $A$). Furthermore let $\Phi \subset [T]$ denote the image of $\phi$, so that $|\Phi| = T/2$. 
  Next, for any $\epsilon \in \{-1,1\}^n$, define $f^\epsilon := \argmax_{f \in \MF} \sum_{j=1}^m \sum_{k=1}^{T/(2m)} \ep_{\phi(j,m)} f(x_j)$.\footnote{If the argmax does not exist, we may instead consider a sequence of functions that approximates the argmax to arbitrarily small precision.}  
    \begin{align}
        \ee_{p_t}\left[\R_T(\F)\right] &= \ee_{p_t}\left[\ee_\epsilon\left[\sup_{f \in \F} \sum_{i = 1}^T \epsilon_i f(X_i) \bigg| X_1, \dots, X_T\right]\right] \\
                                       &\geq \ee_{p_t}\left[\chi_A \ee_\epsilon\left[\sup_{f \in \F} \sum_{i = 1}^T \epsilon_if(X_i) \bigg| X_1, \dots, X_T\right]\right] \label{eq:wc-rc-nn}\\
       &\geq \ee_{p_t}\left[\chi_A \ee_\epsilon\left[\sum_{i = 1}^T \epsilon_i f^\epsilon(X_i) \bigg| X_1, \dots, X_T\right]\right]  \\
      &  \geq  \ee_{p_t}\left[\chi_A \ee_\epsilon\left[ \sum_{j = 1}^m \sum_{k = 1}^{\frac{T}{2 m}} \epsilon_{jk} f^\ep(x_j)\bigg| X_1, \dots, X_T\right]\right] + \ee_{p_t}\left[\chi_A \ee_\epsilon\left[\sum_{i \not \in \Phi} \epsilon_if^\ep(X_i) \bigg| X_1, \dots, X_n\right]\right] \\
        &\geq \ee_{p_t}\left[\chi_A \ee_\epsilon\left[\sup_{f \in \F} \sum_{j = 1}^m \sum_{k = 1}^{\frac{T}{2 m}} \epsilon_{\phi(j,k)} f(x_j)\bigg| X_1, \dots, X_T\right]\right] \label{eq:remove-0term}\\
        &= \pp(A)\ee_\epsilon\left[\sup_{f \in \F} \sum_{j = 1}^m \sum_{k = 1}^{\frac{T}{2 m}} \epsilon_{\phi(j,k)} f(x_j)\right]
    \end{align}
    where \eqref{eq:wc-rc-nn} follows by Jensen's inequality coupled with the fact that the $\epsilon_i$ are mean zero, and \eqref{eq:remove-0term} follows since $\{\ep_i :\ i \in \Phi\}$ are independent of $\{ \ep_i :\ i \not \in \Phi\}$ (so that $\ee_{p_t}\left[\chi_A \ee_\epsilon\left[\sum_{i \not \in \Phi} \epsilon_if^\ep(X_i) \bigg| X_1, \dots, X_T\right]\right] = 0$), and by definition of $f^\ep$. 
    By the triangle inequality and symmetry of the $\ep_i$, $i \in [T]$, we have
    \begin{align}
        \ee_\epsilon\left[\sup_{f \in \F} \sum_{j = 1}^m \sum_{k = 1}^{\frac{T}{2 m}} \epsilon_{\phi(j,k)} f(x_j)\right] &\geq \frac 12 \ee_\epsilon\left[\sup_{f, f' \in \F} \sum_{j = 1}^m \sum_{k = 1}^{\frac{T}{2 m}} \epsilon_{\phi(j,k)} (f(x_j) - f'(x_j))\right] \\
        &\geq \frac 12 \ee_\epsilon\left[ \sum_{j = 1}^m \sum_{k = 1}^{\frac{T}{2 m}} \epsilon_{\phi(j,k)} (f_\epsilon(x_j) - f_\epsilon'(x_j))\right]
    \end{align}
    where $f_\ep, f_\ep'$ are chosen so that 
    \begin{align}
        \sign\left(\sum_{k = 1}^{\frac T{2m}} \epsilon_{\phi(j,k)}\right)\left(f_\epsilon(x_j) - s_j\right) \geq \frac \alpha 2 && \sign\left(\sum_{k = 1}^{\frac T{2m}} \epsilon_{\phi(j,k)}\right)\left(f_\epsilon'(x_j) - s_j\right) \leq -\frac \alpha 2
    \end{align}
    for some $s_1, \dots, s_m \in \rr$.  Note that there exist such $f_\epsilon, f_\epsilon' \in \F$ by the assumption that $x_1, \dots, x_m$ shatter $\F$ at scale $\alpha$.  We thus have
    \begin{align}
        \ee_\epsilon\left[ \sum_{j = 1}^m \sum_{k = 1}^{\frac{T}{2 m}} \epsilon_{\phi(j,k)} (f_\epsilon(x_j) - f_\epsilon'(x_t))\right] &\geq \frac 12 \ee_\epsilon\left[ \sum_{j = 1}^m \abs{\sum_{k = 1}^{\frac{T}{2 m}} \epsilon_{\phi(j,k)}} \alpha \right] \\
        &\geq \frac m2 \alpha \ee_\epsilon\left[\abs{\sum_{i = 1}^{\frac T{2m}} \epsilon_i}\right] \\
        &\geq \frac m2 \alpha \sqrt{\frac{T}{4m}} \label{eq:khintchine}\\
        &= \frac{\alpha}{4} \sqrt{T m},
    \end{align}
    where \eqref{eq:khintchine} follows from Khintchine's inequality. 
    By Lemma \ref{lem:numpoints2}, $\pp(A) \geq \frac 12$.  Thus, putting everything together, we have
    \begin{equation}
        \ee_{p_t}\left[\R_T(\F)\right] \geq \frac{\alpha}{8}\sqrt{mT}
    \end{equation}
We finally recall that $m = \vc(\F, \alpha)$ and conclude the proof.
\end{proof}
The last thing we need to do is provide an upper bound on $\ee_\mu\left[\R_T(\F)\right]$.  We can do this using chaining and Lemma \ref{lem:numpoints1}.
\begin{lemma}\label{lem:donskerrademacherupperbound}
    Suppose we are in the setting above and suppose that $T \geq \frac 3\sigma \log T$.  Then there is an absolute constant $C$ such that
    \begin{equation}
        \ee_\mu\left[\R_T(\F)\right] \leq C \sqrt{\sigma T}
    \end{equation}
\end{lemma}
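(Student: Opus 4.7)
The plan is to exploit the fact that $f(x^\ast) = 0$ for every $f \in \F$: every sample drawn from $\mu$ lands on $x^\ast$ with probability $1 - \sigma$, and such samples contribute nothing to the Rademacher complexity. Consequently, writing $N := \#\{i \in [T] : X_i \neq x^\ast\}$, the Rademacher sum collapses to the contributions from the at most $N \approx \sigma T$ points falling in the shattering set, conditional on which these points are i.i.d.\ from $p_t$.

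First I would formalize this reduction. Since $f(x^\ast) = 0$ identically,
\begin{equation}
\R_T(\F) = \ee_\epsilon\!\left[\sup_{f \in \F} \sum_{i=1}^T \epsilon_i f(X_i)\right] = \ee_\epsilon\!\left[\sup_{f \in \F} \sum_{i : X_i \neq x^\ast} \epsilon_i f(X_i)\right],
\end{equation}
and conditional on $N$ the non-$x^\ast$ samples are i.i.d.\ from $p_t$. Taking expectations yields $\ee_\mu[\R_T(\F)] = \ee_N\!\left[\ee_{p_t}[\R_N(\F)]\right]$ with $N \sim \mathrm{Binomial}(T,\sigma)$.

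Second, I would apply a standard Dudley entropy-integral bound to $\ee_{p_t}[\R_N(\F)]$. Under the hypothesis $\vc(\F,\delta) \lesssim \delta^{-p}$ with $p < 2$, Theorem \ref{thm:rudelson} gives $\log \mathcal{N}(\F,\delta,L^2) \lesssim \delta^{-p}\log(1/\delta)$, so
\begin{equation}
\ee_{p_t}[\R_N(\F)] \lesssim \sqrt{N} \int_0^1 \sqrt{\delta^{-p}\log(1/\delta)}\,d\delta \lesssim C_p \sqrt{N},
\end{equation}
where the integral is finite precisely because $p < 2$. This bound is uniform over the underlying distribution, so in particular holds under $p_t$.

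Finally, I would combine this with Lemma \ref{lem:numpoints1} applied with failure probability $1/T$ (justified by the hypothesis $T \geq (3/\sigma)\log T$) to split the expectation at the event $\{N \leq 2\sigma T\}$. On the good event $\ee_{p_t}[\R_N(\F)] \leq C_p\sqrt{2\sigma T}$; on the bad event, which has probability at most $1/T$, we use the trivial bound $\R_N(\F) \leq N \leq T$ coming from $\F \subset [-1,1]^\cX$. Adding the two contributions gives
\begin{equation}
\ee_\mu[\R_T(\F)] \leq C_p \sqrt{2\sigma T} + \tfrac{1}{T}\cdot T \leq C\sqrt{\sigma T},
\end{equation}
using $\sigma T \geq 3\log T \geq 1$ in the last step. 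The only subtlety is checking that the Dudley bound really does produce a $p$-dependent constant when $p < 2$; this is routine but requires carefully tracking the $\log(1/\delta)$ factor coming from Theorem \ref{thm:rudelson} so that the entropy integral converges.
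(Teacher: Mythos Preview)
Your proposal is correct and follows essentially the same approach as the paper: both exploit $f(x^\ast)=0$ to discard the $x^\ast$-samples, use Lemma~\ref{lem:numpoints1} with failure probability $1/T$ to cap the number of remaining points at $2\sigma T$, apply a Dudley/Rudelson entropy bound to the surviving Rademacher sum, and finish by noting the entropy integral converges for $p<2$. The only cosmetic difference is that the paper bounds the restricted Rademacher complexity by a supremum over all possible data points rather than appealing to the conditional distribution being $p_t$, but this is immaterial since the Dudley bound is worst-case anyway.
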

\begin{proof}
    Let
    \begin{equation}
        A = \left\{\abs{\{i | X_i \neq x^\ast \}} \leq 2\sigma T \right\}
    \end{equation}
    and let $\chi_A$ denote the indicator for this event.  By Lemma \ref{lem:numpoints1}, $\pp(A) \geq 1 - \frac 1T$.  Note that as $\F$ is uniformly bounded by $1$, we always have the trivial upper bound of $\R_T(\F) \leq T$ on any data set.  We can thus compute
    \begin{align}
        \ee_\mu\left[\R_T(\F)\right] &= \ee_\mu\left[\chi_A \R_T(\F)\right] + (1 - \pp(A)) T \leq \ee_\mu\left[\chi_A \R_T(\F)\right] + 1
    \end{align}
    By the definition of the event $A$, we have:
    \begin{align}
        \ee_\mu\left[\chi_A \R_T(\F)\right] &= \ee_\mu\left[\chi_A \ee_\epsilon\left[\sup_{f \in \F} \sum_{X_i \neq x^\ast} \epsilon_i f(X_i) + \sum_{X_i = x^\ast} \epsilon_i f(X^\ast) \bigg| X_1, \dots, X_T \right]\right] \\
        &=  \ee_\mu\left[\chi_A \ee_\epsilon \left[\sup_{f \in \F} \sum_{X_i \neq x^\ast} \epsilon_i f(X_i)\bigg| X_1, \dots, X_T\right] \right]\\
        &\leq \sup_{X_i} \ee_\epsilon\left[\sup_{f \in \F} \sum_{i = 1}^{2\sigma T} \epsilon_i f(X_i)\right]
    \end{align}
    We may now apply \cite[]{rudelson2006combinatorics} to get that
    \begin{equation}
        \sup_{X_i} \ee_\epsilon\left[\sup_{f \in \F} \sum_{i = 1}^{2 \sigma T } \epsilon_i f(X_i)\right] \leq C \sqrt{2\sigma T } \int_0^{1} \sqrt{\vc(\F, \alpha)} d \alpha
    \end{equation}
    Because $\vc(\F, \alpha) \leq C \alpha^{- p}$ for some $p < 2$, the result follows.
\end{proof}

We are now ready to prove the main bound.
\begin{proof}(Proposition \ref{prop:nonparametriclowerbound})
    Let $\alpha = \sqrt{\sigma}$ and set $\F, \mu, p_t, \cX$ as above.  By Lemma \ref{lem:rademacherlowerbound} and Lemma \ref{lem:donskerrademacherupperbound}, we have
    \begin{align}
        \frac{\ee_{p_t}[\R_T(\F)]}{\ee_{\mu}[\R_T(\F)]} &\geq \frac{c \sqrt{\sigma \vc(\F, \sqrt \sigma) T}}{C \sqrt{\sigma T}}  = c \sqrt{\vc(\F, \sqrt{\sigma})} \geq c \sigma^{- \frac p4}
    \end{align}
    where the last inequality follows from the assumption on the complexity of $\F$.  We may now apply the lower bound in Proposition \ref{prop:rakhlinrademacher} with $\sD$ just independent copies of $p_t$.  The result follows.
\end{proof}


\section{Proof of Theorem \ref{thm:generalftpl}}\label{app:ftpl}

In this section we prove Theorem \ref{thm:generalftpl}, which gives a regret bound for the follow-the-perturbed-leader (FTPL) algorithm \eqref{eq:ftpl-erm} with respect to general classes $\F$ for convex, Lipschitz loss functions.  This result bounds the regret of an FTPL style algorithm by a stability term and a term corresponding to the size of the perturbation.  In particular, we bound the stability term by controlling the Wasserstein distance between the laws of $\yhat_t$ and $\yhat_{t+1}$.  The techniques involved are of independent interest as we develop a novel Gaussian anti-concentration inequality that applies even when the labels are not assumed smooth.  
We begin by stating and proving the relevant variant of the BTL lemma.  Then, in Appendix \ref{subsec:wasserstein} we provide the stability bound, using our Gaussian anti-concentration approach.  We continue in Appendix \ref{subsec:generalizationerror} by controlling the final stability term in the below decomposition, in the special case of linear loss.  Finally, we conclude the proof in Appendix \ref{subsec:conclude} by extending from linear loss to general loss in the case of smooth labels and applying a discretization approach to recover full generality.

We consider the smoothed online setting with distribution $\mu$.
More specifically, we consider  the following setting: for some parameter $n \in \mathbb{N}$, for each time step $t \in [T+1]$, consider points  $X_{t,1}, \ldots, X_{t,n} \in \MX$, and define
\begin{align}
\forall f \in \MF: \qquad \hat \omega_{t,n}(f) := \frac{1}{\sqrt n} \cdot \sum_{i=1}^n \gamma_{t,i} \cdot f(X_{t,i})\label{eq:hat-omega},
\end{align}
where $\gamma_{t,1}, \ldots, \gamma_{t,n}$ are i.i.d.~standard normal random variables. We define $\hat \mu_{t,n}$ to be the distribution $\hat \mu_{t,n} := \frac{1}{n} \sum_{i=1}^n \delta_{X_{t,i}}$, where $\delta_{X_{t,i}}$ denotes the point mass at $X_{t,i}$. In what follows we will consider iterates $f_t$, $1 \leq t \leq T+1$, satisfying, for some $\erma > 0$,
\begin{equation}
L_{t-1}(f_t) + \eta \cdot \hat \omega_{t,n}(f_t) \leq \argmin_{f \in \MF} L_{t-1}(f) + \eta \cdot \hat \omega_{t,n}(f) + \erma \label{eq:ft-hat-define}.
\end{equation}
We begin with a classic regret decomposition based on the well-known ``Be-the-Leader'' Lemma \citep{kalai2005efficient,cesa2006prediction}.  We first prove a related, auxiliary result that allows us to deal with different perturbations at each time step:
 \begin{lemma}\label{lem:btl-real}
   Suppose $f_t$, for $t \in [T]$, is defined as in \eqref{eq:ft-hat-define}, for any (adaptively chosen) sequence $(x_1,y_1), \ldots, (x_T, y_T) \in \MX \times [-1,1]$. Then it holds that
   \begin{align}
     \E \left[ \sum_{t=1}^T \ell(f_{t+1}(x_t), y_t) - \inf_{f \in \MF}\sum_{t=1}^T \ell(f(x_t), y_t) \right] \leq \zeta \cdot (T+1) + 2 \cdot \E \left[ \sup_{f \in \MF} \hat \omega_{1,n} \right].
   \end{align}
 \end{lemma}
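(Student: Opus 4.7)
The plan is to reduce to a standard approximate Be-the-Leader (BTL) lemma by constructing a telescoping auxiliary loss sequence that absorbs the step-to-step change of perturbation. Set $\tilde\ell_0(f) := \eta \hat\omega_{1,n}(f)$ and, for $t \geq 1$, $\tilde\ell_t(f) := \ell(f(x_t), y_t) + \eta[\hat\omega_{t+1,n}(f) - \hat\omega_{t,n}(f)]$. The perturbation increments telescope so that $\sum_{s=0}^t \tilde\ell_s(f) = L_t(f) + \eta\hat\omega_{t+1,n}(f)$, and by \eqref{eq:ft-hat-define}, $f_{t+1}$ is a $\zeta$-approximate minimizer of this prefix sum. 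A standard inductive BTL argument with $\zeta$ slack per step then yields
\begin{equation*}
\sum_{t=0}^T \tilde\ell_t(f_{t+1}) \leq \inf_f \sum_{t=0}^T \tilde\ell_t(f) + (T+1)\zeta.
\end{equation*}

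Bounding the right-hand infimum by $L_T(f^*) + \eta\hat\omega_{T+1,n}(f^*)$ for any $f^* \in \argmin_f L_T(f)$ and rearranging isolates the regret as
\begin{equation*}
\sum_{t=1}^T \ell(f_{t+1}(x_t), y_t) - \inf_f L_T(f) \leq \eta\hat\omega_{T+1,n}(f^*) - \eta\hat\omega_{1,n}(f_1) + \eta\sum_{t=1}^T\bigl[\hat\omega_{t,n}(f_{t+1}) - \hat\omega_{t+1,n}(f_{t+1})\bigr] + (T+1)\zeta.
\end{equation*}
Taking expectations, $\E[\hat\omega_{T+1,n}(f^*)] = 0$ because $\hat\omega_{T+1,n}$ is independent of the $\sigma$-algebra $\mathcal{F}_T := \sigma(\hat\omega_{1,n}, \ldots, \hat\omega_{T,n})$ together with the adversary's moves (and hence of $f^*$), while for any fixed $f$ the centered Gaussian functional $\hat\omega_{T+1,n}(f)$ has mean zero. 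The term $\E[-\eta\hat\omega_{1,n}(f_1)]$ is bounded by $\eta \E[\sup_f\hat\omega_{1,n}(f)]$ via the $\zeta$-approximate optimality of $f_1$ together with the distributional symmetry $\hat\omega_{1,n} \stackrel{d}{=} -\hat\omega_{1,n}$.

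The main obstacle is to control the cross-term sum $\eta\sum_{t=1}^T \E[\hat\omega_{t,n}(f_{t+1}) - \hat\omega_{t+1,n}(f_{t+1})]$, showing that it is nonpositive (indeed vanishing) in expectation. The argument exploits the i.i.d.\ structure of the perturbations: conditional on $\mathcal{F}_t$ together with the adversary's play through round $t$, the iterate $f_{t+1}$ is a deterministic function of the fresh perturbation $\hat\omega_{t+1,n}$ alone, which is independent of $\hat\omega_{t,n}$. Combining this with linearity of the functional $\hat\omega_{t,n}(\cdot)$ in $f$ and an exchangeability argument that swaps the i.i.d.\ coordinates $\hat\omega_{t,n}$ and $\hat\omega_{t+1,n}$ should yield the desired cancellation. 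Assembling these pieces gives the stated bound $\zeta(T+1) + 2 \E[\sup_f \hat\omega_{1,n}]$ (up to the factor of $\eta$, which the lemma statement appears to absorb into the scaling of $\hat\omega$).
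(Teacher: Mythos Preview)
Your pointwise BTL inequality (step 4) is fine, and the displayed decomposition after rearranging is algebraically correct. The gap is in the treatment of the cross-term
\[
\eta\sum_{t=1}^T \E\bigl[\hat\omega_{t,n}(f_{t+1}) - \hat\omega_{t+1,n}(f_{t+1})\bigr].
\]
This quantity does \emph{not} vanish, and no exchangeability argument will make it do so. The second piece $\E[-\hat\omega_{t+1,n}(f_{t+1})]$ is systematically positive, because $f_{t+1}$ is selected to (approximately) minimize $L_t(\cdot)+\eta\hat\omega_{t+1,n}(\cdot)$ and is therefore strongly negatively correlated with $\hat\omega_{t+1,n}$; in the extreme $L_t\equiv 0$ it equals $\E[\sup_f \hat\omega_{1,n}(f)]$ up to $\zeta/\eta$. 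Even in the oblivious case where the first piece $\E[\hat\omega_{t,n}(f_{t+1})]$ does vanish, you are left with a sum of $T$ such positive terms, and your bound degrades to $(T{+}1)\cdot\eta\E[\sup_f\hat\omega_{1,n}]$ rather than the claimed $2\eta\E[\sup_f\hat\omega_{1,n}]$. The proposed swap of $\hat\omega_{t,n}$ and $\hat\omega_{t+1,n}$ fails because the two play asymmetric roles: $f_{t+1}$ is a function of $\hat\omega_{t+1,n}$, not of $\hat\omega_{t,n}$, so swapping turns $\hat\omega_{t+1,n}(f_{t+1})$ into $\hat\omega_{t,n}(\tilde f)$ for a \emph{different} $\tilde f$. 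For the adaptive adversary the situation is worse still, since $(x_t,y_t)$ may depend on $\hat\omega_{t,n}$ through $f_t$, so even $\E[\hat\omega_{t,n}(f_{t+1})]=0$ is unavailable.

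The paper's proof sidesteps this by carrying the expectation through the induction rather than deferring it. The key step is the equality
\[
\E\Bigl[\inf_{f\in\MF}\bigl\{L_{T-1}(f)+\eta\hat\omega_{T,n}(f)\bigr\}\Bigr]
\;=\;
\E\Bigl[\inf_{f\in\MF}\bigl\{L_{T-1}(f)+\eta\hat\omega_{T+1,n}(f)\bigr\}\Bigr],
\]
valid because conditional on the data through round $T{-}1$ both $\hat\omega_{T,n}$ and $\hat\omega_{T+1,n}$ are fresh i.i.d.\ draws. The swap is performed \emph{under the infimum}, before any $f$ has been committed to based on the perturbation; this is exactly what preserves the factor of $2$ instead of $T{+}1$. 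Your argument commits to $f_{t+1}$ first and then attempts the swap, which is where the cancellation is lost.
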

 \begin{proof}
   We first use induction on $T \geq 0$ to show the following statement:
   \begin{align}
\E \left[ \sum_{t=1}^T \ell(f_{t+1}(x_t), y_t) \right] \leq \E \left[ \sum_{t=1}^T \ell(f_{T+1}(x_t), y_t) + \hat \omega_{T+1,n}(f_{T+1}) \right] + \zeta \cdot T + \E \left[ \sup_{f \in \MF} \hat \omega_{1,n}(f) \right]\label{eq:btl-induction}.
   \end{align}
To establish the base case $T = 0$, we note that $0 \leq \E \left[ \hat \omega_{1,n}(f_1) \right] +  \E \left[ \sup_{f \in \MF} \hat \omega_{1,n}(f) \right]$, which follows because $\E \left[ \sup_{f \in \MF} - \hat \omega_{1,n}(f) \right] = \E \left[ \sup_{f \in \MF}  \hat \omega_{1,n}(f) \right]\geq 0$ (by symmetry of the process $\hat \omega_{1,n}$). 

   Now assume that \eqref{eq:btl-induction} holds at some step $T-1$, namely that
   \begin{align}
\E \left[ \sum_{t=1}^{T-1} \ell(f_{t+1}(x_t), y_t) \right] \leq \E \left[ \sum_{t=1}^{T-1} \ell(f_{T}(x_t), y_t) + \hat \omega_{T,n}(f_{T}) \right] + \zeta \cdot (T-1) +\E \left[ \sup_{f \in \MF} \hat \omega_{1,n}(f) \right] .\label{eq:inductive-case-tm1}
   \end{align}
   By definition of $f_{T}$ in \eqref{eq:ft-hat-define}, we have
   \begin{align}
     \E \left[ \sum_{t=1}^{T-1} \ell(f_{T}(x_t), y_t) + \hat \omega_{T,n}(f_{T}) \right] \leq & \E \left[ \inf_{f \in \MF} \sum_{t=1}^{T-1} \ell(f(x_t), y_t) + \hat \omega_{T,n}(f) \right] + \zeta \\
     = &  \E \left[\inf_{f \in \MF} \sum_{t=1}^{T-1} \ell(f(x_t), y_t) + \hat \omega_{T+1,n}(f) \right] + \zeta\label{eq:important-equality}\\
     \leq &  \E \left[ \sum_{t=1}^{T-1} \ell(f_{T+1}(x_t), y_t) + \hat \omega_{T+1,n}(f_{T+1}) \right]  + \zeta\label{eq:use-omega-tt1},
   \end{align}
   where \eqref{eq:important-equality} follows because, conditioned on $(x_1, y_1), \ldots, (x_{T-1}, y_{T-1})$, the process $\hat \omega_{T,n}$ is drawn independently, as is the process $\hat \omega_{T+1,n}$, and both have the same conditional distribution. From \eqref{eq:inductive-case-tm1} and \eqref{eq:use-omega-tt1} we have that
   \begin{align}
\E \left[ \sum_{t=1}^{T-1} \ell(f_{t+1}(x_t), y_t) \right] \leq \E \left[ \sum_{t=1}^{T-1} \ell(f_{T+1}(x_t), y_t) + \hat \omega_{T+1,n}(f_{T+1}) \right] + \zeta \cdot T + \E \left[ \sup_{f \in \MF} \hat \omega_{1,n}(f) \right].
   \end{align}
   Adding $\E[\ell(f_{T+1}(x_T), y_T)]$ to both sides establishes \eqref{eq:btl-induction}, thus completing the inductive hypothesis.
   
   To complete the proof of the lemma, we note that
   \begin{align}
     & \E \left[ \sum_{t=1}^T \ell(f_{T+1}(x_t), y_t) + \hat \omega_{T+1,n}(f_{T+1}) \right]\\
     \leq & \E \left[ \inf_{f \in \MF} \sum_{t=1}^T \ell(f(x_t), y_t) + \hat \omega_{T+1,n}(f) \right] + \zeta \\
     \leq & \E \left[ \inf_{f \in \MF} \sum_{t=1}^T \ell(f(x_t), y_t) +   \sup_{f' \in \MF} \hat \omega_{T+1,n}(f')\right] + \zeta \\ 
     = & \E \left[ \inf_{f \in \MF} \sum_{t=1}^T \ell(f(x_t), y_t) \right] +  \E \left[ \sup_{f \in \MF} \hat \omega_{T+1,n}(f) \right] + \zeta,
   \end{align}
   which implies, combined with \eqref{eq:btl-induction} and the fact that $\hat \omega_{1,n}$ and $\hat \omega_{T+1,n}$ are identically distributed, that
   \begin{align}
\E \left[ \sum_{t=1}^T \ell(f_{t+1}(x_t), y_t) - \inf_{f \in \MF}\sum_{t=1}^T \ell(f(x_t), y_t) \right] \leq \zeta \cdot (T+1) + 2 \cdot \E \left[ \sup_{f \in \MF} \hat \omega_{1,n} \right].
   \end{align}
 \end{proof}
Using Lemma \ref{lem:btl-real}, we get a decomposition of the regret into a stability term and a perturbation size term.  The stability term is further decomposed for the future analysis.
\begin{lemma}\label{lem:btl}
  Let $f_t$ be defined as in \eqref{eq:ft-hat-define} and let $(x_1, y_1), \dots, (x_T, y_T)$ be any sequence of elements in $\cX \times[-1,1]$.  Let $(x_1', y_1'), \dots, (x_T', y_T')$ be a tangent sequence, meaning that for all $1 \leq t \leq T$, $(x_t', y_t')$ is independent and identically distributed as $(x_t, y_t)$ conditioned on $(x_s, y_s)$ for $s < t$.  Then we may upper bound the expected regret by the following expression:
  \begin{align}
    2 \eta \ee\left[\sup_{f \in \F} \hat{\omega}_{1,n}(f) \right] + \sum_{t = 1}^T \ee\left[\ell(f_t(x_t'), y_t') - \ell(f_{t+1}(x_t'), y_t')\right] + \sum_{t = 1}^T \ee\left[\ell(f_{t+1}(x_t'), y_t') - \ell(f_{t+1}(x_t), y_t)\right] \label{eq:btl}
  \end{align}
\end{lemma}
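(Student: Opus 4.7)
The approach is to split the expected regret by adding and subtracting $\sum_t \ell(f_{t+1}(x_t), y_t)$ and to handle the two resulting pieces separately. Concretely, write
\begin{align*}
\sum_{t=1}^T \ell(f_t(x_t), y_t) - \inf_{f \in \F} \sum_{t=1}^T \ell(f(x_t), y_t) &= \sum_{t=1}^T \bigl(\ell(f_t(x_t), y_t) - \ell(f_{t+1}(x_t), y_t)\bigr) \\
&\quad + \Bigl(\sum_{t=1}^T \ell(f_{t+1}(x_t), y_t) - \inf_{f \in \F}\sum_{t=1}^T \ell(f(x_t), y_t)\Bigr).
\end{align*}
The second bracket is exactly the ``be-the-leader'' quantity controlled by Lemma~\ref{lem:btl-real} (applied with the perturbations $\eta \hat\omega_{t,n}$ used in~\eqref{eq:ft-hat-define}), which in expectation is at most $2\eta\,\ee[\sup_{f\in\F}\hat\omega_{1,n}(f)]$ up to an additive $\erma(T+1)$ overhead that can be absorbed into the algorithm's approximation error.

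For the first (``stability'') sum, the key observation is a standard tangent-sequence substitution. By~\eqref{eq:ft-hat-define}, the iterate $f_t$ is a measurable function of the past observations $(x_s, y_s)_{s<t}$ and the fresh Gaussian perturbation $\hat\omega_{t,n}$ only; in particular it is independent of $(x_t, y_t)$ conditional on the history. Since $(x_t', y_t')$ is by construction equidistributed with $(x_t, y_t)$ conditional on the history and is drawn independently of $\hat\omega_{t,n}$, conditioning on the history and on $\hat\omega_{t,n}$ (so that $f_t$ becomes deterministic) and then taking expectations yields $\ee[\ell(f_t(x_t), y_t)] = \ee[\ell(f_t(x_t'), y_t')]$. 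Consequently,
\begin{align*}
\ee\bigl[\ell(f_t(x_t), y_t) - \ell(f_{t+1}(x_t), y_t)\bigr] = \ee\bigl[\ell(f_t(x_t'), y_t') - \ell(f_{t+1}(x_t), y_t)\bigr],
\end{align*}
and adding and subtracting $\ee[\ell(f_{t+1}(x_t'), y_t')]$ decomposes this into the two asserted summands: a tangent-sequence stability term $\ee[\ell(f_t(x_t'),y_t') - \ell(f_{t+1}(x_t'),y_t')]$ and a ``leave-one-out'' term $\ee[\ell(f_{t+1}(x_t'),y_t') - \ell(f_{t+1}(x_t),y_t)]$ that measures how sensitively $f_{t+1}$ depends on the actual round-$t$ sample.

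Summing over $t \in [T]$ and combining with the BtL bound yields the claimed inequality~\eqref{eq:btl}. The argument is essentially bookkeeping; the only subtlety is the independence structure exploited in the tangent-sequence swap, which requires that $\hat\omega_{t,n}$ be generated freshly at each round, independently of both the history and of the tangent copy $(x_t', y_t')$. This is precisely the reason the algorithm resamples the perturbation at every step, and I do not anticipate any further technical obstacles in the proof.
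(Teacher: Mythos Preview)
Your proposal is correct and follows essentially the same approach as the paper: apply Lemma~\ref{lem:btl-real} to control the be-the-leader piece, use independence of $f_t$ from $(x_t,y_t)$ to swap in the tangent sample, and then add and subtract $\ell(f_{t+1}(x_t'),y_t')$ to obtain the two stability terms. The paper's proof is identical in structure, with the $\erma(T+1)$ overhead left explicit rather than absorbed.
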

\begin{proof}
  By Lemma \ref{lem:btl-real}, we have
    \begin{equation}
      \E \left[ \sum_{t=1}^T \ell(f_{t+1}(x_t), y_t) - \inf_{f \in \MF}\sum_{t=1}^T \ell(f(x_t), y_t) \right] \leq \zeta \cdot (T+1) + 2 \cdot \E \left[ \sup_{f \in \MF} \hat \omega_{1,n} \right]
    \end{equation}
    Adding and subtracting $\ell(f_t(x_t), y_t)$ from both sides and rearranging yields
    \begin{align}
      \ee\left[\reg_T(f_t)\right] \leq \ee\left[\sum_{t = 1}^T \ell(f_t(x_t), y_t) - \ell(f_{t+1}(x_t), y_t)\right] + 2 \eta \ee\left[\sup_{f \in \F} \hat{\omega}_{1,n}(f)\right] + \zeta T
    \end{align}
    Now, note that $f_t$ is independent of $(x_t, y_t)$ be construction, so $\ee\left[\ell(f_t(x_t), y_t)\right] = \ee\left[\ell(f_t(x_t'), y_t')\right]$.  Adding and subtracting $\ell(f_{t+1}(x_t'), y_t')$ yields
    \begin{align}
      \ee\left[\ell(f_t(x_t), y_t) - \ell(f_{t+1}(x_t), y_t)\right] = \ee\left[\ell(f_t(x_t'), y_t') - \ell(f_{t+1}(x_t'), y_t')\right] + \ee\left[\ell(f_{t+1}(x_t'), y_t') - \ell(f_{t+1}(x_t), y_t)\right]
    \end{align}
    Applying linearity of expectation concludes the proof.
\end{proof}
The classic decomposition in \eqref{eq:btl} allows for the control of each term independently.  For the first, empirical process theory allows us to control $\ee\left[\sup \hat{\omega}_{1,n}(f)\right]$ by the entropy of $\F$.  The last term, called ``generalization error'' in \cite{haghtalab2022oracle}, can be controlled in the case of linear loss by appealing to standard uniform deviations bounds; this is done in Appendix \ref{subsec:generalizationerror}.  The key term is the middle one, whose control guarantees that $f_t$ and $f_{t+1}$ are close in an appropriate sense.  We now present this bound.

  \subsection{Regret Bound Using the Wasserstein Distance}\label{subsec:wasserstein}

  In this section we provide our bound on the middle term of \eqref{eq:btl}.  In particular, for any $t$, we show that $\ee\left[\ell(f_t(x_t'), y_t') - \ell(f_{t+1}(x_t'), y_t')\right]$ is small.  We leverage the fact that $\ell$ is Lipschitz in the first coordinate and use this fact along with the smoothness of $x_t'$ to reduce to showing that $\norm{f_t - f_{t+1}}_{L^2(\mu)}$ is small in expectation over the perturbation.  

  We first argue that it suffices to consider $\F$ such that
    \begin{equation}\label{eq:infnorm}
        \inf_{f \in \F} \norm{f}_{L^2(\mu)} \geq \frac 23
    \end{equation}
  Indeed, take any $\F$ and any $\mu$.  Enlarge $\cX$ to $\cX \cup \{x^\ast\}$, where $x^\ast$ is a new point such that $f(x^\ast) = 1$ for all $f \in \F$.  Let $\widetilde{\mu} = \frac 13 \mu + \frac 23 \delta_{x^\ast}$.  Then if $p_t$ is $\sigma$-smooth with respect to $\mu$ then it is $\frac \sigma 3$-smooth with respect to $\widetilde{\mu}$.  Moreover, $\norm{f}_{L^2(\widetilde{\mu})} \geq \frac 23$ for all $f \in \F$, and since all $f$ take the same value on $x^\ast$, an ERM oracle for the original class clearly yields an ERM oracle for the new class with domain $\MX \cup \{ x^\ast\}$ (the oracle can simply ignore all points of the form $(x^\ast, y)$).  Thus, at the cost of shrinking $\sigma$ by a factor of $3$, we will suppose this lower bound throughout this section.

  In Lemma \ref{lem:stabilityftplprob} below, we show that if $f_t, f_{t+1}$ are defined with respect to a common noise process $\omega(\cdot)$, then they are close with high probability. In the lemma, we consider an arbitrary Lipschitz loss function $\ell$, and define $L_t(f) := \sum_{s=1}^t \ell(f(x_s), y_s)$.
  \begin{lemma}\label{lem:stabilityftplprob}
  Fix any $\zeta > 0$, $t \in \BN$, $\ell$ as above, and an arbitrary sequence $(x_1, w_1), \ldots, (x_{t-1}, w_{t-1}) \in \MX \times \BR$. 
  Let $\omega$ denote a Gaussian process on a separable class $\F$ with covariance $\Sigma_{fg} = \ee_{X \sim \mu}\left[f(X)g(X)\right]$ for some measure $\mu$ on $\cX$ and, by abuse of notation, let $\omega(f)$ denote a single sample from this process.  Suppose that $f_t$ satisfies
    \begin{equation}
        L_{t-1}(f_t) + \eta \omega(f_t) \leq  \inf_{f \in \F} L_{t-1}(f) + \eta \omega(f) + \erma,
      \end{equation}
      and for each $(x,w) \in \MX \times [-1,]$, there is some $f_{t+1,x,w}$ such that
          \begin{equation}
        L_{t,x,w}(f_{t+1,x,w}) + \eta \omega(f_{t+1,x,w}) \leq  \inf_{f \in \F} L_{t,x,w}(f) + \eta \omega(f) + \erma.
      \end{equation}
      Suppose further that $f_t, f_{t+1,x,w}$ are measurable with respect to the $\sigma$-algebra generated by $\omega$. \footnote{Here we write $L_{t,x,w}(f) = \sum_{s=1}^{t-1} \ell(f(x_s), w_s) + \ell(f(x), w)$.}
    Then,
    \begin{equation}
        \pp\left(\sup_{(x,w) \in \MX \times [-1,1]} \norm{f_t - f_{t+1,x,w}}_{L^2(\mu)} > \alpha\right) \leq \frac{8(L+2\erma)^2}{\alpha^4\eta^2 \inf_{f \in \F}\norm{f}_{L^2(\mu)}^6} + \frac{4(L+2\erma)}{\alpha^2 \eta\inf_{f \in \F}\norm{f}_{L^2(\mu)}^4} \ee\left[\sup_{f\in \F} \omega(f)\right].\label{eq:bound-wasserstein}
    \end{equation}
\end{lemma}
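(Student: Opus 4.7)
The plan is to convert the bound on $\sup_{(x,w)}\|f_t - f_{t+1,x,w}\|_{L^2(\mu)}$ into a bound on the $L^2(\mu)$-diameter of a single random sublevel set, then apply Markov and Chebyshev inequalities. First I would subtract the two approximate-optimality inequalities defining $f_t$ and each $f_{t+1,x,w}$. Since the only difference between the two objectives is the single-round term $\ell(f(x),w)$, which has range at most $2L$ across $\F$ (by $L$-Lipschitzness of $\ell$ and $\F \subset [-1,1]^\cX$), rearranging gives $H(f_{t+1,x,w}) - H(f_t) \in [-\erma, 2L+\erma]$ for every $(x,w)$, where $H(f) := L_{t-1}(f) + \eta\omega(f)$ is a single random functional independent of $(x,w)$. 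Combined with $\erma$-approximate optimality of $f_t$, both $f_t$ and every $f_{t+1,x,w}$ lie in the random sublevel set $S := \{f \in \F : H(f) \leq \inf_\F H + 2L + 2\erma\}$. Consequently $\sup_{(x,w)}\|f_t - f_{t+1,x,w}\|_{L^2(\mu)} \leq \mathrm{diam}_{L^2(\mu)}(S)$, and it suffices to bound $\Pr[\mathrm{diam}_{L^2(\mu)}(S) > \alpha]$.

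Next, I would derive a pointwise decomposition $\mathrm{diam}_{L^2(\mu)}(S)^2 \leq X + Y$, where $X$ is a ``rigid'' term arising from the $O((L+\erma)/\eta)$ width by which the Gaussian differences $\omega(f)-\omega(g)$ for $f,g \in S$ are pinned by the optimality conditions, and $Y$ is a ``fluctuating'' term controlled by $\sup_{f \in \F}\omega(f)$. The construction leverages the Gaussian covariance identity $\E[\omega(f)\omega(g)] = \langle f, g\rangle_{L^2(\mu)}$ and the variance lower bound $\|f\|_{L^2(\mu)} \geq \inf_f\|f\|_{L^2(\mu)} \geq 2/3$ (thanks to the preliminary reduction above the lemma) to normalize $\omega(f)/\|f\|_{L^2(\mu)}$ to unit variance at every direction. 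I would choose the split so that $\E[X^2] \lesssim (L+\erma)^2/(\eta^2 (\inf_f\|f\|_{L^2(\mu)})^6)$ and $\E[Y] \lesssim (L+\erma) \cdot \E[\sup_f \omega(f)]/(\eta(\inf_f\|f\|_{L^2(\mu)})^4)$, with the powers of $\inf_f\|f\|_{L^2(\mu)}$ arising from the normalization. Given this, Chebyshev applied to $X$ gives $\Pr[X > \alpha^2/2] \leq 4\E[X^2]/\alpha^4$, Markov applied to $Y$ gives $\Pr[Y > \alpha^2/2] \leq 2\E[Y]/\alpha^2$, and a union bound yields exactly \eqref{eq:bound-wasserstein}.

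The main obstacle is the pointwise decomposition with the correct moment bounds. The core difficulty is that the maximizing pair of points in $S$ depends on $\omega$, so one cannot directly invoke $\E[(\omega(f) - \omega(g))^2] = \|f - g\|^2_{L^2(\mu)}$, which holds only for deterministic $f,g$. The most promising route is a Cauchy--Schwarz argument in the RKHS associated with the covariance $\Sigma_{fg} = \langle f,g\rangle_{L^2(\mu)}$, combined with the optimality-induced pinning of $|\omega(f) - \omega(g)|$ for $f, g \in S$, and absorbing the worst-case discrepancy between individual Gaussian evaluations and typical $L^2(\mu)$-norms into the global supremum $\sup_f\omega(f)$. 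Careful accounting of the powers of $\inf_f\|f\|_{L^2(\mu)}$ will be needed to recover the exact form of the bound stated in the lemma.
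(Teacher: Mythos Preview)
Your first reduction is correct and matches the paper's: both $f_t$ and every $f_{t+1,x,w}$ lie in the random sublevel set $S = \{f : H(f) \leq \inf_{\F} H + 2L + 2\zeta\}$, so the event in question is contained in $\{\exists\, g \in S : \|f_t-g\|_{L^2(\mu)} > \alpha\}$, which is exactly the paper's event $\{|A_t| > 0\}$.

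The second step, however, has a genuine gap. You propose a pointwise bound $\mathrm{diam}_{L^2(\mu)}(S)^2 \leq X + Y$ with the stated moment controls, but give no construction of $X,Y$, and the suggested route does not produce one. The ``pinning'' you invoke only says $|\eta(\omega(f)-\omega(g)) + L_{t-1}(f)-L_{t-1}(g)| \leq 2L+2\zeta$ for $f,g \in S$; a small realized value of $\omega(f)-\omega(g)$ at the \emph{random} pair realizing the diameter says nothing about $\|f-g\|_{L^2(\mu)}$, because a Gaussian increment can be small regardless of its variance. No Cauchy--Schwarz or RKHS identity converts a pointwise bound on the increment into an $L^2(\mu)$ bound when $f,g$ depend on $\omega$, so the Markov/Chebyshev step has nothing to act on.

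The paper's argument is of an entirely different kind and is the ``novel anti-concentration inequality'' advertised in the text. For each candidate value $f$ of $f_t$ one conditions on $\Omega_t(f) := L_{t-1}(f)+\eta\omega(f) = y$; the conditional mean of $\Omega_t(g)$ is $L_{t-1}(g) + \tfrac{\langle f,g\rangle}{\|f\|^2}(y-L_{t-1}(f))$, while the conditional covariance is independent of $y$. Shifting $y$ by $\Delta = 4(L+2\zeta)/\alpha^2$ therefore shifts the conditional law of $\Omega_t(g)$ only through a mean shift of $\Delta\langle f,g\rangle/\|f\|^2$, which for $\|f-g\|>\alpha$ falls short of $\Delta/\|f\|^2$ by at least $2(L+2\zeta)$. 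This lets one lower-bound $\Pr(|A_t|=0)$ by the same event at the shifted conditioning value, and the deficit is then the integral of the Gaussian density difference $q_f(y)-q_f(y-\Delta/\|f\|^2)$ against $\Pr(f_t=f \mid \Omega_t(f)=y)$. The two terms in \eqref{eq:bound-wasserstein} arise from the quadratic and linear pieces of this density difference (not from Chebyshev and Markov on two random variables); the linear piece yields $\E[\omega(f_t)/\|f_t\|^4]$, which is then bounded by $\E[\sup_f\omega(f)]/\inf_f\|f\|^4$. This mean-shift mechanism is the entire content of the proof, and your outline does not approach it.
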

\begin{proof}
  \nc{\sel}[1]{f_t({#1})}
  \nc{\aand}{\mbox{ and }}
  \nc{\far}[2]{\mathcal{D}_{#1}({#2})}
    As $\F$ is seperable, it suffices to take a countable dense subset and assume that $\F$ is countable. By assumption we can write $f_t = \sel{\omega}$ for some measurable function $\sel{\cdot}$ (where measurability is with respect to the product topology on $\BR^\MF$). 
  
    Let
    \begin{equation}
        A_t = \left\{g \in \F | \norm{g - f_t}_{L^2(\mu)} > \alpha \text{ and } L_{t-1}(g) + \eta \omega(g) \leq L_{t-1}(f_t) + \eta \omega(f_t) + 2 L + \erma \right\}
    \end{equation}
    Note that for any $g$ for which $L_{t-1}(g) + \eta \omega(g) > L_{t-1}(f_t) + \eta \omega(f_t) + 2L + \erma$ and for any $(x,w) \in \MX \times [-1,1]$, 
    \begin{align}
        L_{t,x,w}(f_t) + \eta \omega(f_t) &= L_{t-1}(f_t) + \eta \omega(f_t) + \ell(f_t(x), w) \\
        &< L_{t-1}(g) + \eta \omega(g) + \ell(f_t(x), w) -2L-\erma \\
                                    &= L_t(g) + \eta \omega(g) + \ell(f_t(x), w) - \ell(g(x), w) -2L-\erma\\
      & \leq L_t(g) + \eta \omega(g)-\erma,
    \end{align}
    where the final inequality follows because  $\abs{\ell(f_t(x), w) - \ell(g(x), w)} \leq 2 L$ as $\ell$ is $L$-Lipschitz. 
    Suppose that $g \not\in A_t$.  Then either $\norm{g - f_t}_{L^2(\mu)} \leq \alpha$ or, using the above display, for all  $x,w$, $L_{t,x,w}(f_t) + \eta \omega(f_t) + \erma < L_{t,x,w}(g) + \eta \omega(g)$, meaning that $f_{t+1,x,w}$ cannot be equal to $g$ for any choice of $x,w$.   Hence, the event that $\sup_{(x,w) \in \MX \times [-1,1]} \| f_t - f_{t+1,x,w} \|_{L^2(\mu)} > \alpha$ implies that for some $(x,w)$, $f_{t+1,x,w} \in A_t$. Thus, it suffices to bound the probability that $A_t$ is nonempty.

    Let $\far{\alpha}{f} := \{ g \in \MF | \norm{g-f}_{L^2(\mu)} > \alpha \}$. 
    As $\F$ is assumed countable, we have
    \begin{align}
      & \pp(\abs{A_t} = 0) \\
      &\geq \sum_{f \in \F} \pp\left(\sel{\omega} = f \mbox{ and } \inf_{g \in \far{\alpha}{f}} L_{t - 1}(g) + \eta \omega(g) - (2L+4\erma)  \geq L_{t - 1}(f) + \eta \omega(f)-\erma\right) \\
        &= \sum_{f \in \F} \ee_{y} \left[\pp\left[\sel{\omega} = f \mbox{ and } \inf_{g \in \far{\alpha}{f}} L_{t - 1}(g) + \eta \omega(g) - (2L + 4\erma) \geq y -\erma| L_{t - 1}(f) + \eta \omega(f) = y \right]\right]\label{eq:sum-exp-prob},
    \end{align}
    where in \eqref{eq:sum-exp-prob} the expectation is over the distribution of $y=L_{t-1}(f) + \eta \omega(f)$. 
    We now fix an $f$ and let, for all $g \in \MF$, 
    \begin{equation}
        \Omega_t(g) = L_{t - 1}(g) + \eta \omega(g).
    \end{equation}
    Note that the process $\Omega_t$ is a Gaussian process and, conditioning on $\Omega_t(f) = y$ remains a Gaussian process.  
    Then conditioned on $\Omega_t(f) = y$, $\Omega_t$ has mean
    \begin{equation}
        m_{f,y}(g) = L_{t-1}(g) + \frac{\ee_{X \sim \mu}\left[f(X) g(X)\right]}{\norm{f}_{L^2(\mu)}^2} \left(y - L_{t-1}(f)\right)
    \end{equation}
    and covariance $\Sigma^f$; critically, $\Sigma^f$ does not depend on $y$.  Let
    \begin{align}
        \gamma(g) = \frac{4(L+2\erma)}{\alpha^2 \norm{f}_{L^2(\mu)}^2} \ee_{X \sim \mu}[f(X)g(X)] && \beta(g) = \frac {4(L+2\erma)}{\alpha^2 \norm{f}_{L^2(\mu)}} - \gamma(g)
    \end{align}
Then we have
\begin{equation}
  \label{eq:mean-shift}
        m_{f, y+ \frac {4(L+2\erma)}{\alpha^2}} = m_{f, y} + \gamma.
    \end{equation}
    Now suppose that $\norm{g - f}_{L^2(\mu)} > \alpha$, i.e., $g \in \far{\alpha}{f}$.  Then
    \begin{align}
        \ee_{X \sim \mu}\left[f(X) g(X)\right] = \frac{\norm{f}_{L^2(\mu)}^2 + \norm{g}_{L^2(\mu)}^2 }{2} - \frac 12 \norm{f - g}_{L^2(\mu)}^2 \leq 1 - \frac{\alpha^2}{2}
    \end{align}
    by $\norm{f}_\infty \leq 1$ for all $f \in \F$.  Thus for all such $g$,
    \begin{equation}
        \beta(g) = \frac {4(L+2\erma)}{\alpha^2 \norm{f}_{L^2(\mu)}^2} \left(1 - \ee_{X \sim \mu}[f(X)g(X)]\right) \geq \frac{2(L+2\erma)}{\norm{f}_{L^2(\mu)}^2} \geq 2(L+2\erma).
    \end{equation}
    We now fix $y$ and note that 
    \begin{align}
      & \pp\left(\sel{\omega} =f \aand \inf_{g \in \far{\alpha}{f}} \Omega_{t}(g) - 2(L+2\erma)\geq y-\erma\ |\ \Omega_t(f) = y \right) \\
      &\geq \pp\left(\sel{\omega}=f\aand \inf_{g \in \far{\alpha}{f}} \Omega_t(g) - \beta(g) \geq y-\erma \ | \ \Omega_t(f) = y \right) \\
        &= \pp\left(\sel{\omega}=f\aand\inf_{g \in \far{\alpha}{f}} \Omega_t(g) - \beta(g) - \gamma(g) + \gamma(g)  \geq y-\erma\ | \ \Omega_t(f) = y\right) \\
        &= \pp\left(\sel{\omega}=f\aand\inf_{g \in \far{\alpha}{f}} \Omega_t(g) + \gamma(g)  \geq y-\erma + \frac {4(L+2\erma)}{\alpha^2 \norm{f}_{L^2}(\mu)^2}\ | \ \Omega_t(f) = y\right) \\
        &= \pp\left(\sel{\omega}=f\aand\inf_{g \in \far{\alpha}{f}} \Omega_t(g)\geq y-\erma + \frac {4(L+2\erma)}{\alpha^2 \norm{f}_{L^2(\mu)}^2}\ | \ \Omega_t(f) = y+ \frac {4(L+2\erma)}{\alpha^2 \norm{f}_{L^2(\mu)}^2}\right)
    \end{align}
    where the inequality follows from the control of $\chi_B$ by $\beta$, the second equality follows from $\gamma + \beta = \frac {4(L+2\erma)}{\alpha^2 \norm{f}_{L^2(\mu)}^2}$, and the last equality follows from the fact that a Gaussian process is determined only by its covariance and mean (in particular, we are using \eqref{eq:mean-shift}). 

    Note that $L_{t-1}(f) + \eta \omega(f)$ is a Gaussian random variable with mean $L_{t-1}(f)$ and variance $\eta ^2 \norm{f}_{L^2(\mu)}^2$.  Denote by $q_f(y)$ the density of this distribution with respect to the Lebesgue measure on $\rr$.    Now, we compute 
    \begin{align}
      & \pp(\abs{A_t} = 0)\\
      &= \sum_{f \in \F} \int_{-\infty}^\infty q_f(y) \cdot \pp\left(\sel{\omega} =f \aand \inf_{g \in \far{\alpha}{f}} \Omega_t(g) - 2 (L+2\erma) \geq y-\erma\ |\ \Omega_t(f) =y \right) dy\\
        &\geq \sum_{f \in \F}   \int_{-\infty}^\infty q_f(y) \cdot \pp\left(\sel{\omega} =f \aand \inf_{g \in \far{\alpha}{f}} \Omega_t(g) \geq y-\erma + \frac {4(L+2\erma)}{\alpha^2 \norm{f}_{L^2(\mu)}^2}  \ | \  \Omega_t(f) = y + \frac{4(L+2\erma)}{\alpha^2 \norm{f}_{L^2(\mu)}^2} \right) dy\\
        &= \sum_{f \in \F}  \int_{-\infty}^\infty  q_f(y) \cdot  \pp\left(\sel{\omega} =f \aand \inf_{g \in  \far{\alpha}{f}} \Omega_t(g) \geq y-\erma \ | \ \Omega_t(f) = y \right)dy \\
        & + \sum_{f \in \F}  \int_{-\infty}^\infty  q_f(y) \cdot  \pp\left(\sel{\omega} =f \aand \inf_{g \in \far{\alpha}{f}} \Omega_t(g) \geq y-\erma+ \frac{4(L+2\erma)}{\alpha^2 \norm{f}_{L^2(\mu)}^2} \ | \ \Omega_t(f) = y + \frac{4(L+2\erma)}{\alpha^2 \norm{f}_{L^2(\mu)}^2} \right) dy\\
        & - \sum_{f \in \F} \int_{-\infty}^\infty  q_f(y) \cdot  \pp\left(\sel{\omega} =f \aand \inf_{g \in  \far{\alpha}{f}} \Omega_t(g) \geq y-\erma \ | \ \Omega_t(f) = y \right)dy.
    \end{align}
    For the first term, we have
    \begin{align}
      & \int_{-\infty}^\infty q_f(y) \cdot  \pp\left(\sel{\omega}=f \aand \inf_{g \in \far{\alpha}{f}} \Omega_t(g) \geq y-\erma \ | \ \Omega_t(f) = y \right)dy  \\
      & = \int_{-\infty}^\infty q_f(y) \cdot \pp \left( f_t(\omega) = f \ | \ \Omega_t(f) = y \right)dy = \pp\left(\sel{\omega} = f\right),
    \end{align}
    where the first equality above follows since, conditioned on $\Omega_t(f) = y$, $\sel{\omega}$ implies that $\inf_{g \in \far{\alpha}{f}} \Omega_t(g) \geq \inf_{g \in \MF} \Omega_t(g) \geq y-\erma$. 
Hence
    \begin{align}
       \sum_{f \in \F}  \int_{-\infty}^\infty  q_f(y) \cdot  \pp\left(\sel{\omega} =f \aand \inf_{g \in  \far{\alpha}{f}} \Omega_t(g) \geq y-\erma \ | \ \Omega_t(f) = y \right)dy = \sum_{f \in \MF} \pp(\sel{\omega} = f ) = 1\nonumber.
    \end{align}
    For the second and third terms, using that
      \begin{align}
q_f(z) = \frac{1}{\sqrt{2\pi \cdot \eta^2 \norm{f}_{L^2(\mu)}^2 }} \cdot \exp \left( - \frac{1}{2} \cdot \frac{(z - L_{t-1}(f))^2}{\eta^2 \norm{f}_{L^2(\mu)}^2 }\right)\nonumber,
      \end{align}
we observe that
    \begin{align}
        q_f(y) - q_f\left(y - \frac{4(L+2\erma)}{\alpha^2 \norm{f}_{L^2(\mu)}^2}\right) &= q_f(y) \left(1 - \exp\left(\frac{\left(y - L_{t-1}(f)\right)^2}{2 \eta^2 \norm{f}_{L^2(\mu)}^2} -  \frac{\left(y - L_{t-1}(f) - \frac{4(L+2\erma)}{\alpha^2 \norm{f}_{L^2(\mu)}^2}\right)^2}{2 \eta^2 \norm{f}_{L^2(\mu)}^2}\right)\right) \\
        &\leq \frac{q_f(y)}{2 \eta^2 \norm{f}_{L^2(\mu)}^2} \left( \left(y - L_{t-1}(f) - \frac{4(L+2\erma)}{\alpha^2 \norm{f}_{L^2(\mu)}^2}\right)^2 -  \left(y - L_{t-1}(f)\right)^2\right) \\
        &= \frac{q_f(y)}{2 \eta^2 \norm{f}_{L^2(\mu)}^2} \left(\frac{16(L+2\erma)^2}{\alpha^4 \norm{f}_{L^2(\mu)}^4} - \frac{8(L+2\erma)}{\alpha^2 \norm{f}_{L^2(\mu)}^2} \left(y - L_{t- 1}(f)\right) \right)
    \end{align}
    Thus we have
    \begin{align}
      & - \sum_{f \in \F}  \int_{-\infty}^\infty  q_f(y) \cdot  \pp\left(\sel{\omega} =f \aand \inf_{g \in \far{\alpha}{f}} \Omega_t(g) \geq y-\erma+ \frac{4(L+2\erma)}{\alpha^2 \norm{f}_{L^2(\mu)}^2} \ | \ \Omega_t(f) = y + \frac{4(L+2\erma)}{\alpha^2 \norm{f}_{L^2(\mu)}^2} \right) dy\\
        &\qquad + \sum_{f \in \F} \int_{-\infty}^\infty  q_f(y) \cdot  \pp\left(\sel{\omega} =f \aand \inf_{g \in  \far{\alpha}{f}} \Omega_t(g) \geq y-\erma \ | \ \Omega_t(f) = y \right)dy \\
        &= \sum_{f \in \F} \int_{-\infty}^\infty \left(q_f(y) - q_f\left(y - \frac{4(L+2\erma)}{\alpha^2 \norm{f}_{L^2(\mu)}^2}\right)\right) \pp\left(\sel{\omega} = f \aand \inf_{g \in \far{\alpha}{f}} \Omega_t(g) \geq y-\erma \ | \ \Omega_t(f) = y \right) d y \\
        &\leq \sum_{f \in \F} \int_{-\infty}^\infty \frac{q_f(y)}{2 \eta^2 \norm{f}_{L^2(\mu)}^2} \left(\frac{16(L+2\erma)^2}{\alpha^4 \norm{f}_{L^2(\mu)}^4} - \frac{8(L+2\erma)}{\alpha^2 \norm{f}_{L^2(\mu)}^2} \left(y - L_{t- 1}(f)\right) \right) \pp\left(\sel{\omega} = f | \Omega_t(f) = y \right) d y \\
        &\leq \frac{8(L+2\erma)^2}{\alpha^4\eta^2 \inf_{f \in \F}\norm{f}_{L^2(\mu)}^6} \sum_{f \in \F} \pp(f_t = f) \\
        &\quad - \frac{4 (L + 2 \zeta)}{\alpha^2 \eta^2} \sum_{f \in \F} \int_{-\infty}^\infty \frac{y - L_{t-1}(f)}{\norm{f}^4} \pp\left(f_t(\omega) = f | \Omega(f_t) = y\right)q_f(y) d y 
        \\
        \\
        &= \frac{8(L+2\erma)^2}{\alpha^4\eta^2 \inf_{f \in \F}\norm{f}_{L^2(\mu)}^6} -\frac{4 (L + 2 \zeta)}{\alpha^2 \eta^2} \sum_{f \in \F} \int_{-\infty}^\infty \frac{y - L_{t-1}(f)}{\norm{f}^4} \pp\left(f_t(\omega) = f | \Omega(f_t) = y\right)q_f(y) d y \\
        &\stackrel{\hypertarget{eqstar}{(\ast)}}{\leq} \frac{8(L+2\erma)^2}{\alpha^4\eta^2 \inf_{f \in \F}\norm{f}_{L^2(\mu)}^6} + \frac{4(L+2\erma)}{\alpha^2 \eta \inf_{f \in \F} \norm{f}_{L^2(\mu)}^4} \ee\left[\sup_{f\in \F} \omega(f)\right],
    \end{align}
    where the first inequality uses the previous computation, the second follows by linearity, and the last equality follows because $f_t \in \F$ is distinct.  To see that inequality \hyperlink{eqstar}{$(\ast)$} holds, note that, by definition, $(y - L_{t-1}(f)) \stackrel{d}{=} \eta \omega(f)$ and thus,
    \begin{align}
      \frac{1}{\eta} \sum_{f \in \F} \int_{-\infty}^\infty \frac{y - L_{t-1}(f)}{\norm{f}^4} \pp\left(f_t(\omega) = f | \Omega(f_t) = y\right)q_f(y) d y = \ee\left[\frac{\omega(f_t)}{\norm{f_t}^4}\right]
    \end{align}
    We now have
    \begin{align}
      - \ee\left[\frac{\omega(f_t)}{\norm{f_t}^4}\right] &\leq - \ee\left[\inf_{f \in \F} \frac{\omega(f)}{\norm{f}^4}\right] \\
      &\stackrel{(a)}{=} \abs{\ee\left[- \inf_{f \in \F} \frac{\omega(f)}{\norm{f}^4}\right]} \\
      &\stackrel{(b)}{\leq} \ee\left[\abs{- \inf_{f \in \F} \frac{\omega(f)}{\norm{f}^4}  }\right] \\
      &\stackrel{(c)}{=}\ee\left[\sup_{f \in \F} \frac{\abs{\omega(f)}}{\norm{f}^4}\right] \\
      &\stackrel{(d)}{\leq} \frac{1}{\inf_{f \in \F} \norm{f}^4} \ee\left[\sup_{f \in \F} \abs{\omega(f)}\right] \\
      &\stackrel{(e)}{\leq} \frac{1}{\inf_{f \in \F} \norm{f}^4} \ee\left[\sup_{f \in \F} \omega(f)\right]
    \end{align}
    Note that by Jensen's inequality,
    \begin{equation}
      \ee\left[\inf_{f \in \F} \frac{\omega(f)}{\norm{f}^4}\right] \leq \inf_{f \in \F} \ee\left[\frac{\omega(f)}{\norm{f}^4}\right] = 0
    \end{equation}
    and so (a) holds.  Then (b) follows from Jensen's inequality, (c) follows from the symmetry of the Gaussian, (d) follows by linearity, and (e) follows from the Sudakov-Fernique inequality \citep{sudakov1971gaussian,fernique1975regularite} applied to the contraction $\abs{\cdot}$.  The result follows.
  \end{proof}
  Note that Lemma \ref{lem:stabilityftplprob} holds for an arbitrary measure $\mu$ on $\cX$ and applies even in the case where $x_t, w_t$ are adversarially chosen.  To apply this result, we choose $\mu$ to be the empirical measure on the perturbation samples $X_{t,i}$.  We consider two cases.  First, we suppose that we are in a classification setting, where we get better rates.  We then prove the more general setting where $\F$ is real-valued.
\begin{lemma}\label{lem:stabilityftplclassification}
    Suppose that we are in the smoothed online setting, $f_t$ is chosen so as to satisfy \eqref{eq:ft-hat-define}, and the empirical distribution $\hat \mu_{t,n}$ satisfies 
  \begin{align}
\sup_{f,f' \in \MF} \left| \norm{f-f'}_{L^2(\mu)}^2 - \norm{f-f'}_{L^2(\hat \mu_{t,n})}^2 \right| \leq \Delta\label{eq:delta-asm}.
  \end{align}
  Suppose further that for all $f \in \F$ and all $x \in \cX$, $f(x) \in \{\pm 1\}$.  Then
    \begin{align}
      \ee\left[\ell(f_t(x_t), y_t) - \ell(f_{t+1}(x_t'), y_t') \right] \leq & \frac{30 (L+2\erma)^3 \log \eta}{\sigma \eta} \ee\left[1 + \sup_{f \in \F} \hat\omega_{t,n}(f)\right] + \frac{2L \Delta}{\sigma}
    \end{align}
\end{lemma}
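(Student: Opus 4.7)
The central strategy is to reduce the stated bound to an application of Lemma \ref{lem:stabilityftplprob} with the underlying measure chosen to be the empirical measure $\hat\mu_{t,n}$ (so that the Gaussian process $\hat\omega_{t,n}$ has exactly the right covariance structure), and then transfer the resulting $L^2(\hat\mu_{t,n})$ bound to $L^2(\mu)$ via the concentration assumption \eqref{eq:delta-asm}.

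The first step is a coupling observation needed because $f_t$ is defined via $\hat\omega_{t,n}$ while $f_{t+1}$ is defined via the independent process $\hat\omega_{t+1,n}$, so Lemma \ref{lem:stabilityftplprob} is not directly applicable. I would introduce an auxiliary iterate $\tilde f_{t+1}$ satisfying
\begin{equation}
L_t(\tilde f_{t+1}) + \eta \hat\omega_{t,n}(\tilde f_{t+1}) \leq \inf_{f \in \MF} L_t(f) + \eta \hat\omega_{t,n}(f) + \zeta,
\end{equation}
i.e. the ERM including $(x_t, y_t)$ but perturbed by the \emph{same} noise process as $f_t$. Since $\hat\omega_{t,n}$ and $\hat\omega_{t+1,n}$ are i.i.d.\ conditional on the history through time $t$, we have $\tilde f_{t+1} \stackrel{d}{=} f_{t+1}$ given the history, so $\ee[\ell(f_t(x_t'), y_t') - \ell(f_{t+1}(x_t'), y_t')] = \ee[\ell(f_t(x_t'), y_t') - \ell(\tilde f_{t+1}(x_t'), y_t')]$.

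Next, the classification structure $f \in \{\pm 1\}$ gives $|f_t(x) - \tilde f_{t+1}(x)| = 2 \cdot \mathbf{1}[f_t(x) \neq \tilde f_{t+1}(x)]$, so $L$-Lipschitzness of $\ell$ in the first argument yields
\begin{equation}
\ee\bigl[\ell(f_t(x_t'), y_t') - \ell(\tilde f_{t+1}(x_t'), y_t')\bigr] \leq 2L \cdot \ee\bigl[\mathbf{1}[f_t(x_t') \neq \tilde f_{t+1}(x_t')]\bigr].
\end{equation}
Conditioning on everything except $x_t'$ and invoking $\sigma$-smoothness of the law of $x_t'$ with respect to $\mu$, the indicator expectation is at most $\frac{1}{\sigma} \pp_{X \sim \mu}(f_t(X) \neq \tilde f_{t+1}(X)) = \frac{1}{4\sigma} \|f_t - \tilde f_{t+1}\|_{L^2(\mu)}^2$, where I used that $\pp_\mu(f \neq g) = \tfrac14 \|f-g\|_{L^2(\mu)}^2$ for $\{\pm 1\}$-valued $f,g$. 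Assumption \eqref{eq:delta-asm} then gives $\|f_t - \tilde f_{t+1}\|_{L^2(\mu)}^2 \leq \|f_t - \tilde f_{t+1}\|_{L^2(\hat\mu_{t,n})}^2 + \Delta$, so the problem reduces to bounding $\ee \|f_t - \tilde f_{t+1}\|_{L^2(\hat\mu_{t,n})}^2$.

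To do the latter, I would apply Lemma \ref{lem:stabilityftplprob} with the Gaussian process taken to be $\omega = \hat\omega_{t,n}$ and underlying measure $\hat\mu_{t,n}$: this is valid because $\mathrm{Cov}(\hat\omega_{t,n}(f), \hat\omega_{t,n}(g)) = \ee_{X \sim \hat\mu_{t,n}}[f(X)g(X)]$ by construction. The supremum over $(x,w) \in \MX \times [-1,1]$ in Lemma \ref{lem:stabilityftplprob} handles the dependence on the particular observation $(x_t, y_t)$. In the binary case, $\|f\|_{L^2(\hat\mu_{t,n})} = 1$ for every $f \in \MF$, so the factors $\inf_f \|f\|_{L^2(\hat\mu_{t,n})}^{-k}$ disappear, and Lemma \ref{lem:stabilityftplprob} yields
\begin{equation}
\pp\bigl(\|f_t - \tilde f_{t+1}\|_{L^2(\hat\mu_{t,n})}^2 > \alpha^2\bigr) \leq \frac{8(L+2\zeta)^2}{\alpha^4 \eta^2} + \frac{4(L+2\zeta)}{\alpha^2 \eta} \ee\bigl[\sup_{f \in \MF} \hat\omega_{t,n}(f)\bigr].
\end{equation}
Since $\|f_t - \tilde f_{t+1}\|_{L^2(\hat\mu_{t,n})}^2 \leq 4$, I would integrate the tail $\ee X \leq \alpha_0^2 + \int_{\alpha_0^2}^{4} \pp(X > s)\,ds$ with $X := \|f_t - \tilde f_{t+1}\|_{L^2(\hat\mu_{t,n})}^2$, choosing $\alpha_0^2 \asymp (L+2\zeta)/\eta$ to balance the $1/(\alpha^2 \eta^2)$ contribution against $\alpha_0^2$. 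The $1/(\alpha^2 \eta)$ contribution integrates to a $\log(4\eta/(L+2\zeta))$ factor times $\ee[\sup \hat\omega_{t,n}]$, producing the $\log \eta$ factor in the target bound. Combining this expectation bound with the $1/\sigma$ factor from smoothness, the $2L$ Lipschitz factor, and the additive $\Delta$, and absorbing constants into the leading $30$ gives the claimed inequality.

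The main obstacle I anticipate is step one (the coupling between independent perturbations), which is conceptually delicate but clean once stated, together with the bookkeeping of the tail integration to obtain the precise $\log\eta/\eta$ scaling; once these are in place the rest is a direct chain of Lipschitz $\to$ smoothness $\to$ $L^2(\mu)$ $\to$ $L^2(\hat\mu_{t,n})$ reductions terminating in Lemma \ref{lem:stabilityftplprob}.
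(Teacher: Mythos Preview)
Your plan is correct and matches the paper's proof almost step for step: the same coupling of $f_{t+1}$ to a version $\tilde f_{t+1}$ using the noise $\hat\omega_{t,n}$, the same reduction via Lipschitzness and the binary-value identity to $\|f_t-\tilde f_{t+1}\|_{L^2(\mu)}^2$, the same transfer to $\|\cdot\|_{L^2(\hat\mu_{t,n})}^2$ via \eqref{eq:delta-asm}, and the same appeal to Lemma~\ref{lem:stabilityftplprob} with $\inf_f\|f\|_{L^2(\hat\mu_{t,n})}=1$. The only cosmetic differences are that the paper uses a dyadic partition $\alpha_i=2^{(1-i)/2}$ where you integrate the tail continuously (equivalent computations), and that you should state explicitly the ghost-sample identity $\E[\ell(f_t(x_t),y_t)]=\E[\ell(f_t(x_t'),y_t')]$ to match the left-hand side of the lemma as written.
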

\begin{proof}
  Let $(x_t', y_t') \in \MX \times [-1,1]$ be a sample distributed independently and identically to $(x_t, y_t)$ conditioned on $(x_1, y_1), \ldots, (x_{t-1}, y_{t-1}), f_1, \ldots, f_{t-1}$. Since $f_t$ is selected independently of $(x_t, y_t)$, it is immediate that
  \begin{align}
\E[\ell(f_t(x_t), y_t)] = \E[\ell(f_t(x_t'), y_t')].\label{eq:ghost-sample-eq}
  \end{align}
  Therefore, it suffices to bound
  \begin{align}
\E[\ell(f_t(x_t'), y_t') - \ell(f_{t+1}(x_t'), y_t')].\label{eq:ft-xprimes}
  \end{align}

    Let us now fix any values of $S := \{(x_1, y_1), \ldots, (x_{t-1}, y_{t-1}), f_1, \ldots, f_{t-1}\}$. By Lemma \ref{lem:stabilityftplprob}, there is a joint distribution $\nu$ over $(f_t', f_{t+1}')$, so that, conditioned on $S$:
  \begin{enumerate}
  \item The marginal distribution of $f_t'$, conditioned on $S$, is equal to the marginal distribution of $f_{t}$, conditioned on $S$.
  \item  The marginal distribution of $f_{t+1}'$, conditioned on $S$, is equal to the marginal distribution of $f_{t+1}'$, conditioned on $S$.
  \item It holds that
    \begin{align}
  \hspace{-1cm}      \pp_\nu\left( \norm{f_t' - f_{t+1}'}_{L^2(\mu)} > \alpha\right) \leq \frac{8(L+2\erma)^2}{\alpha^4\eta^2 \inf_{f \in \F}\norm{f}_{L^2(\hat \mu_{t,n})}^6} + \frac{4(L+2\erma)}{\alpha^2 \eta\inf_{f \in \F}\norm{f}_{L^2(\mu)}^4} \ee\left[\sup_{f\in \F} \hat\omega_{t,n}(f)\right].\label{eq:nu-wasserstein}
    \end{align}
  \end{enumerate}
  In particular, this joint distribution $\nu$ is constructed by setting $f_t'$ to equal $f_t$ from \eqref{eq:ft-hat-define} and then defining $f_{t+1}'$ so that
  \begin{align}
     L_{t-1}(f_{t+1}')+ \ell(f_{t+1}'(x_t), y_t) + \eta \hat \omega_{t,n}(f_{t+1}') \leq \argmin_{f \in \MF} L_{t-1}(f) + \ell(f(x_t), y_t) +\eta \cdot \hat \omega_{t,n}(f) + \zeta.
  \end{align}
  Note that $\hat \omega_{t,n}$ has been used here as opposed to $\hat \omega_{t+1,n}$. Since $\hat \omega_{t,n}$ and $\hat \omega_{t+1,n}$ have the same distribution, the first two requirements of $\nu$ above are immediate. To see that the third holds, we note that, in the notation of Lemma \ref{lem:stabilityftplprob}, $f_{t+1}'$ is exactly $f_{t+1,x,w}$ with $x = x_t, w = y_t$, and thus \eqref{eq:nu-wasserstein} is immediate from \eqref{eq:bound-wasserstein} (with the distribution $\mu$ set to $\hat \mu_{t,n}$ and the Gaussian process $\omega$ set to $\hat \omega_{t,n}$).

By the first two conditions above of the coupling $\nu$ and since $(x_t', y_t')$ is drawn independently from $(f_t', f_{t+1}')$, it holds that $\E[\ell(f_t(x_t'), y_t')] = \E[\ell(f_t'(x_t'), y_t')]$ and $\E[\ell(f_{t+1}(x_t'), y_t')] = \E[\ell(f_{t+1}'(x_t'), y_t')]$.

Fix any $0 < \beta < \alpha$.    By $L$-Lipschitzness of $\ell$, the fact that $f_t', f_{t+1}' \in \{\pm 1\}$, and the fact that $(x_t', y_t')$ are drawn independently from $f_t', f_{t+1}'$, we have
    \begin{align}
        \ee_{\nu,\ x_t' \sim p_t,\ y_t'}&\left[(\ell(f_t'(x_t'), y_t') - \ell(f_{t+1}'(x_t'), y_t')) \cdot \chi_{\beta \leq \norm{f_t' - f_{t+1}'}_{L^2(\hat \mu_{t,n})} \leq \alpha }\right] \label{eq:alpha-beta-loss}\\
        &\leq L \ee_{\nu,\ x_t' \sim p_t} \left[\abs{f_t'(x_t') - f_{t+1}'(x_t')}\cdot \chi_{\beta \leq \norm{f_t' - f_{t+1}'}_{L^2(\hat \mu_{t,n})} \leq \alpha }\right] \\
                                        &= L\ee_{\nu}\left[\E_{x_t' \sim p_t}[(f_t'(x_t') - f_{t+1}'(x_t'))^2 \condt f_t' ,f_{t+1}'] \cdot \chi_{\beta \leq \norm{f_t' - f_{t+1}'}_{L^2(\hat \mu_{t,n})} \leq \alpha }\right] \\
      \leq & \frac{L}{\sigma} \cdot \ee_{\nu}\left[\E_{x_t' \sim \mu}[(f_t'(x_t') - f_{t+1}'(x_t'))^2 \condt f_t', f_{t+1}'] \cdot \chi_{\beta \leq \norm{f_t' - f_{t+1}'}_{L^2(\hat \mu_{t,n})} \leq \alpha }\right] \\
        &\leq \frac{L \cdot (\alpha^2+\Delta)}{\sigma} \pp_\nu( \norm{f_t' - f_{t+1}'}_{L^2(\hat{\mu}_{t,n})} > \beta)
    \end{align}
    Set $S = \lceil \log \min \{ \eta, 1/\Delta \} \rceil$ and let $\alpha_i = 2^{\frac{1 - i}{2}}$.  Then, noting that $\norm{f}_{L^2(\hat \mu_n)} = 1$ for all $f \in \F$, we see, using \eqref{eq:nu-wasserstein},
    \begin{align}
      &\ee_{x_t' \sim p_t}\left[\ell(f_t(x_t'), y_t') - \ell(f_{t+1}(x_t'), y_t')\right] \\
      &= \ee_{x_t' \sim p_t} \left[ \ell(f_t'(x_t'), y_t') - \ell(f_{t+1}'(x_t'), y_t') \right] \\
         &\leq \ee_{x_t' \sim p_t} \left[ \left(\ell(f_t'(x_t'), y_t') - \ell(f_{t+1}'(x_t'), y_t')\right) \cdot  \chi_{ \norm{f_t' - f_{t+1}'}_{L^2(\hat \mu_{t,n})} \leq \alpha_S} \right] \\
         &\quad + \sum_{i = 0}^S  \ee_{x_t' \sim p_t} \left[ \left(\ell(f_t'(x_t'), y_t') - \ell(f_{t+1}'(x_t'), y_t')\right) \cdot \chi_{ \alpha_i < \norm{f_t' - f_{t+1}'}_{L^2(\hat \mu_{t,n})} \leq \sqrt{2} \alpha_i} \right] \\
         &\leq \frac{L (\alpha_S^2+\Delta)}{\sigma} + \sum_{i = 0}^S \left(\frac{8(L+2\erma)^2}{\alpha_i^4\eta^2 } + \frac{4(L+2\erma)}{\alpha_i^2 \eta } \ee\left[\sup_{f\in \F} \hat\omega_{t,n}(f)\right] \right) \frac{L (\alpha_i^2+\Delta)}{\sigma} \\
         &\leq \frac{4L}{\sigma} \cdot \left( \frac{1}{\eta} + \Delta \right) + \sum_{i = 0}^S \left(\frac{8(L+2\erma)^2 }{\alpha_i^2\eta^2 } + \frac{4(L+2\erma)}{ \eta} \ee\left[\sup_{f\in \F} \hat\omega_{t,n}(f)\right] \right) \frac{2L}{\sigma} \\
         &\leq \frac{4L}{\sigma} \cdot \left( \frac{1}{\eta} + \Delta \right) + \sum_{i = 0}^S \left(\frac{8(L+2\erma)^2}{\eta } + \frac{4(L+2\erma)}{ \eta } \ee\left[\sup_{f\in \F} \hat\omega_{t,n}(f)\right] \right) \frac{2L}{\sigma} \\
         & \leq \frac{30 (L+2\erma)^3 \log \eta}{\sigma \eta} \ee\left[1 + \sup_{f \in \F} \hat\omega_{t,n}(f)\right] + \frac{2L\Delta}{\sigma},
    \end{align}
    where the second inequality follows by the above argument (setting $\beta = 0$ for the first term) and from \eqref{eq:nu-wasserstein}, the third inequality follows from $\Delta \leq \alpha_i^2$ for all $i \leq S$, the penultimate inequality follows from $\frac 1{\alpha_i^2} \leq \frac 1\eta$ for $i \leq S$ and the last inequality follows from $S \leq \log \eta$.  The result follows from the above display and \eqref{eq:ghost-sample-eq}.
  \end{proof}  
We now prove a more general result that has worse dependence on $\eta$.
\begin{lemma}\label{lem:stabilityftplregression}
    Suppose that we are in the smoothed online setting, $f_t$ is chosen so as to satisfy \eqref{eq:ft-hat-define}, and the empirical distribution $\hat \mu_{t,n}$ satisfies 
  \begin{align}
\sup_{f,f' \in \MF} \left| \norm{f-f'}_{L^2(\mu)}^2 - \norm{f-f'}_{L^2(\hat \mu_{t,n})}^2 \right| \leq \Delta\label{eq:delta-asm-reg}.
  \end{align}
  Suppose further that $\inf_{f \in \MF} \norm{f}_{L^2(\hat \mu_{t,n})}^2 \geq 1/2$. 
  Then
    \begin{align}
      \ee\left[\ell(f_t(x_t), y_t) - \ell(f_{t+1}(x_t'), y_t')\right] \leq &  \frac{1200(L+2\erma)^3 \log \eta}{ \sqrt{\sigma\eta}} \ee\left[1 + \sup_{f \in \F} \hat\omega_{t,n}(f)\right] + 4L \cdot \sqrt{\frac{\Delta}{\sigma}}
    \end{align}
\end{lemma}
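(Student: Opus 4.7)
The approach mirrors that of Lemma~\ref{lem:stabilityftplclassification}, with one crucial modification: since $\MF$ is now real-valued rather than $\{\pm 1\}$-valued, the identity $|f(x) - f'(x)| = (f(x)-f'(x))^2/2$ that powers the classification argument is unavailable. In its place I will use Cauchy--Schwarz to move between the $L^1$ and $L^2$ norms, at the cost of a square root (which is exactly where the improved $\sqrt{\sigma\eta}$-dependence degrades to the weaker $\sqrt{\sigma\eta}$ in the denominator rather than $\sigma\eta$).

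First, as in Lemma~\ref{lem:stabilityftplclassification}, I introduce a tangent sample $(x_t', y_t')$ drawn i.i.d.\ to $(x_t, y_t)$ conditionally on the past so that $\E[\ell(f_t(x_t),y_t)] = \E[\ell(f_t(x_t'),y_t')]$, and then invoke Lemma~\ref{lem:stabilityftplprob} with $\omega = \hat\omega_{t,n}$. Note that the covariance of $\hat\omega_{t,n}$ is the inner product in $L^2(\hat\mu_{t,n})$, so the lemma is applied with $\mu$ therein replaced by $\hat\mu_{t,n}$. This produces a coupling $\nu$ of $(f_t', f_{t+1}')$ whose marginals agree with those of $f_t$ and $f_{t+1}$ and such that, using the assumption $\inf_{f\in\MF}\|f\|_{L^2(\hat\mu_{t,n})}^2 \geq 1/2$,
\begin{equation}
\pp_\nu\bigl(\|f_t' - f_{t+1}'\|_{L^2(\hat\mu_{t,n})} > \alpha\bigr) \;\leq\; \frac{64(L+2\erma)^2}{\alpha^4 \eta^2} + \frac{16(L+2\erma)}{\alpha^2 \eta}\,\E\bigl[\sup_{f\in\MF} \hat\omega_{t,n}(f)\bigr].
\end{equation}

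Next I use $L$-Lipschitzness of $\ell$, smoothness of $p_t$ with respect to $\mu$, Jensen/Cauchy--Schwarz, and the assumption \eqref{eq:delta-asm-reg} to push $L^1$-difference under $p_t$ up to $L^2$-difference under $\hat\mu_{t,n}$:
\begin{equation}
\E_{x_t'\sim p_t}|f_t'(x_t')-f_{t+1}'(x_t')| \leq \tfrac{1}{\sqrt\sigma}\|f_t'-f_{t+1}'\|_{L^2(\mu)} \leq \tfrac{1}{\sqrt\sigma}\bigl(\|f_t'-f_{t+1}'\|_{L^2(\hat\mu_{t,n})} + \sqrt{\Delta}\bigr),
\end{equation}
where the last step uses $\sqrt{a+b}\leq \sqrt{a}+\sqrt{b}$. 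Multiplying by $L$ and taking expectation under $\nu$, the $\sqrt{\Delta}$ term cleanly separates off as $L\sqrt{\Delta/\sigma}$, which is the source of the second term in the statement; crucially, it does not pick up any $\log\eta$ factor.

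It remains to bound $\E_\nu\|f_t'-f_{t+1}'\|_{L^2(\hat\mu_{t,n})}$ by a peeling argument analogous to the classification case. Set $\alpha_i = 2^{(1-i)/2}$ and $S = \lceil \log\eta\rceil$, so that $\alpha_S \asymp 1/\sqrt{\eta}$ and $\alpha_i \geq 1/\sqrt\eta$ for $i \leq S$. Writing
\begin{equation}
\E_\nu\|f_t'-f_{t+1}'\|_{L^2(\hat\mu_{t,n})} \leq \alpha_S + \sum_{i=0}^S \sqrt{2}\alpha_i\,\pp_\nu\bigl(\|\cdot\|_{L^2(\hat\mu_{t,n})} > \alpha_i\bigr)
\end{equation}
and plugging in the coupling tail bound, each summand is $O\bigl(\alpha_i^{-3}\eta^{-2} + \alpha_i^{-1}\eta^{-1}\E[\sup\hat\omega_{t,n}]\bigr)$; the choice of $\alpha_S$ makes every such term $\lesssim \eta^{-1/2}(1 + \E[\sup\hat\omega_{t,n}])$ up to constants depending on $(L+2\erma)$, and summing $S = O(\log\eta)$ terms gives $O((L+2\erma)^2 \log(\eta)/\sqrt\eta)\cdot(1+\E[\sup\hat\omega_{t,n}])$. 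Multiplying through by $L/\sqrt\sigma$ and tracking constants yields the claimed $1200(L+2\erma)^3\log(\eta)/\sqrt{\sigma\eta}$ factor. The main obstacle is purely bookkeeping: making sure (i) the Cauchy--Schwarz step gives $\sqrt{\Delta/\sigma}$ rather than $\Delta/\sigma$ (with the $\log\eta$ factor attached only to the Gaussian term, not the $\Delta$ term), and (ii) the exponent in the peeling, namely choosing $\alpha_S \asymp \eta^{-1/2}$ (versus $\eta^{-1}$ in the binary case), balances the truncation error against the summed tail so that the resulting $\eta$-dependence is the advertised $1/\sqrt\eta$.
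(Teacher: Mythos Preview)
Your proposal is correct and follows essentially the same route as the paper: the tangent-sample/coupling setup from Lemma~\ref{lem:stabilityftplprob}, Lipschitzness plus smoothness plus Cauchy--Schwarz to pass to $L^2(\hat\mu_{t,n})$, and a dyadic peeling with cutoff $\alpha_S \asymp \eta^{-1/2}$. The only organizational difference is that you split off the $\sqrt{\Delta}$ contribution \emph{before} peeling and then bound $\E_\nu\|f_t'-f_{t+1}'\|_{L^2(\hat\mu_{t,n})}$ directly, whereas the paper keeps $\Delta$ inside the shell bound $\sqrt{(\alpha_i^2+\Delta)/\sigma}$ and chooses $S=\lceil\log\min\{\sqrt\eta,1/\sqrt\Delta\}\rceil$ so that $\alpha_i^2\geq\Delta$ on every shell; both arrangements give the same final estimate.
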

\begin{proof}
  Exactly as in the proof of Lemma \ref{lem:stabilityftplclassification}, we introduce the independent sample $(x_t', y_t')$, as well as the coupling $\nu$ over $(f_t', f_{t+1}')$. In particular, \eqref{eq:ghost-sample-eq} and \eqref{eq:nu-wasserstein} continue to hold.
  Next, we bound the expression in \eqref{eq:alpha-beta-loss} as in the proof of Lemma \ref{lem:stabilityftplclassification} but this time applying Jensen's inequality: 
    \begin{align}
        \ee_{x_t' \sim p_t}&\left[(\ell(f_t(x_t'), y_t') - \ell(f_{t+1}(x_t'), y_t'))\chi_{\beta \leq \norm{f_t - f_{t+1}}_{L^2(\mu_n)} \leq \alpha }\right] \\
        &\leq L \ee_{x_t' \sim p_t} \left[\abs{f_t(x_t) - f_{t+1}(x_t)}\chi_{\beta \leq \norm{f_t - f_{t+1}}_{L^2(\mu_n)} \leq \alpha }\right] \\
        &\leq L \pp\left(\sup_{x,y} \norm{f_t - f_{t+1, x, y}}_{L^2(\hat{\mu}_n)} > \beta\right)\sqrt{\ee_{x_t' \sim p_t}\left[(f_t(x_t') - f_{t+1}(x_t'))^2\chi_{\norm{f_t - f_{t+1}}_{L^2(\mu_n)} \leq \alpha }\right]} \\
        &\leq L \pp\left(\sup_{x,y} \norm{f_t - f_{t+1, x, y}}_{L^2(\hat{\mu}_n)} > \beta\right) \sqrt{\frac{\alpha^2 + \Delta}{\sigma}}. 
    \end{align}
    Setting $S = \lceil \log \min \{\sqrt \eta, 1/\sqrt\Delta \} \rceil$ and $\alpha_i = 2^{1 - i}$ for $0 \leq i \leq S$, we have:
    \begin{align}
        &\ee_{x_t' \sim p_t}\left[\ell(f_t(x_t'), y_t') - \ell(f_{t+1}(x_t'), y_t')\right] \\
        &\leq \ee_{x_t' \sim p_t} \left[\left( \ell(f_t(x_t'), y_t') - \ell(f_{t+1}(x_t'), y_t')\right)\cdot \chi_{ \norm{f_t - f_{t+1}}_{L^2(\hat\mu_n)} \leq \alpha_S} \right] \\
        &\quad + \sum_{i = 0}^S \pp\left(\norm{f_{t} - f_{t+1}}_{L^2(\hat \mu_n)} > \alpha_i \right) \ee_{x_t \sim p_t} \left[ \left(\ell(f_t(x_t'), y_t') - \ell(f_{t+1}(x_t'), y_t') \right) \cdot \chi_{ \norm{f_t - f_{t+1}}_{L^2(\hat \mu_n)} \leq {2} \alpha_i} \right] \\
        &\leq L \cdot \sqrt{\frac{\alpha_S^2 + \Delta}{\sigma}} + \sum_{i = 0}^S \left(\frac{8(L+2\erma)^2}{\alpha_i^4\eta^2 \inf_{f \in \F} \norm{f}_{L^2(\mu)}^6} + \frac{4(L+2\erma)}{\alpha_i^2 \eta \inf_{f \in \F} \norm{f}_{L^2(\mu)}^4} \ee\left[\sup_{f\in \F} \hat\omega_n(f)\right] \right) L \cdot \sqrt{\frac{\alpha_i^2 + \Delta}{\sigma}} \\
        &\leq L \cdot \sqrt{\frac{8(\Delta + 1/\eta)}{\sigma}} + \sum_{i = 0}^S \left(\frac{512(L+2\erma)^2}{\alpha_i^3 \eta^2 } + \frac{64(L+2\erma)}{\alpha_i \eta} \ee\left[\sup_{f\in \F} \hat\omega_n(f)\right] \right) L \cdot \sqrt{\frac{2}{\sigma}} \\
        &\leq  L \cdot \sqrt{\frac{8(\Delta + 1/\eta)}{\sigma}} + \sum_{i = 0}^S \left(\frac{512(L+2\erma)^2}{\sqrt{\eta} } + \frac{64(L+2\erma)}{ \sqrt{\eta} } \ee\left[\sup_{f\in \F} \hat\omega_n(f)\right] \right) L \cdot \sqrt{\frac{2}{\sigma}}  \\
        & \leq \frac{1200(L+2\erma)^3 \log \eta}{ \sqrt{\sigma\eta}} \ee\left[1 + \sup_{f \in \F} \hat\omega_n(f)\right] + 4L \cdot \sqrt{\frac{\Delta}{\sigma}},
   \end{align}
   where we used the fact that $\norm{f}_{L^2(\hat\mu_n)} \geq \frac 12$, $\alpha_i^2 \geq \Delta$, and $\frac 1{\alpha_i} \leq \sqrt{\eta}$ for all $i \leq S$.
 \end{proof}
Finally, we need to verify that $\norm{\cdot}_{L^2(\mu)}$ and $\norm{\cdot}_{L^2(\mu_n)}$ are close together, a key condition of Lemmas \ref{lem:stabilityftplclassification} and \ref{lem:stabilityftplregression}.  The below standard result shows that this condition holds in high probability.
\begin{lemma}
  \label{lem:f-l2mu}
There is a constant $C > 0$ so that the following holds.  Consider any distribution $\mu$ over $\MX$, suppose $x_1, \ldots, x_n \sim \mu$ are sampled independently, and define $\hat \mu_n := \frac{1}{n} \sum_{i=1}^n \delta_{x_i}$. For any $\delta > 0$, with probability at least $1-\delta$ over the $x_i$, we have
  \begin{align}
\sup_{f,f' \in \MF} \left| \norm{f-f'}_{L^2(\mu)}^2 - \norm{f-f'}_{L^2(\hat \mu_n)}^2 \right| \leq \frac{C}{\sqrt{n}} \cdot \left( \frac 1n\gc{n}{\MF} + \sqrt{\frac{\log\left( \frac 1\delta\right)}n }\right).
  \end{align}
\end{lemma}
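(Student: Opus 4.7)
The plan is to apply the standard symmetrization plus concentration machinery to the squared-difference class $\MG := \{(f - f')^2 : f, f' \in \MF\}$. Since each $f \in \MF$ takes values in $[-1,1]$, every $g \in \MG$ takes values in $[0,4]$, and the quantity we want to bound is exactly $\sup_{g \in \MG} | \E_\mu g - \E_{\hat \mu_n} g |$.

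First, I would bound the expectation $\E \sup_{g \in \MG}|\E_\mu g - \E_{\hat \mu_n} g|$ by $\frac{C}{n} \cdot \E[\R_n(\MG)]$ via the standard symmetrization argument (introducing a ghost sample and independent Rademacher signs). Next, since the map $t \mapsto t^2$ is $4$-Lipschitz on $[-2,2]$, the Ledoux-Talagrand contraction inequality gives $\R_n(\MG) \lesssim \R_n(\MF - \MF) \lesssim \R_n(\MF)$, and the universal relation $\R_n(\MF) \lesssim \gc{n}{\MF}$ converts this into a bound in terms of the Gaussian complexity $\gc{n}{\MF}$. Putting these together yields
\begin{equation}
\E \sup_{g \in \MG}|\E_\mu g - \E_{\hat \mu_n} g| \lesssim \frac{1}{n} \cdot \E[\gc{n}{\MF}].
\end{equation}

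To move from expectation to high probability, I would apply McDiarmid's bounded-differences inequality to the function $(x_1, \dots, x_n) \mapsto \sup_{g \in \MG}|\E_\mu g - \E_{\hat \mu_n} g|$: resampling any single $x_i$ changes this supremum by at most $\sup_{g \in \MG}\|g\|_\infty / n \leq 4/n$, so with probability at least $1-\delta$ the supremum is within $O(\sqrt{\log(1/\delta)/n})$ of its mean. Combining with the previous bound yields
\begin{equation}
\sup_{f,f' \in \MF} \left| \norm{f-f'}_{L^2(\mu)}^2 - \norm{f-f'}_{L^2(\hat \mu_n)}^2 \right| \lesssim \frac{\gc{n}{\MF}}{n} + \sqrt{\frac{\log(1/\delta)}{n}},
\end{equation}
which is (up to constants) the content of the stated bound, after factoring out the common $1/\sqrt n$ in the display (a standard application of Talagrand's inequality in place of McDiarmid's could recover any further polynomial improvement hidden in the prefactor).

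No single step is a serious obstacle: the main subtlety is choosing the right function class to apply symmetrization to (namely $\MG$, rather than $\MF$ itself) and then invoking contraction to reduce back to $\R_n(\MF)$; once that is set up, the remaining steps are textbook. If one instead wants the tighter prefactor of $1/\sqrt n$ appearing outside the parentheses in the statement, it can be obtained by a Talagrand-style localization argument exploiting that $\mathrm{Var}_\mu((f-f')^2) \lesssim \norm{f-f'}_{L^2(\mu)}^2 \leq 4$, combined with the fact that we already control the expectation by $\gc{n}{\MF}/n$; this replaces McDiarmid by Talagrand's concentration inequality for empirical processes.
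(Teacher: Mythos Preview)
Your approach is essentially identical to the paper's: it invokes a standard uniform-deviation bound (citing Wainwright, Theorem 4.10) for the class of squared differences, obtaining a bound $C\bigl(\tfrac{1}{n}\R_n(\MG) + \sqrt{\log(1/\delta)/n}\bigr)$, then applies contraction (the square being $2$-Lipschitz on $[-1,1]$) and the Rademacher--Gaussian comparison to finish. Your observation about the extraneous $1/\sqrt{n}$ prefactor in the display is on point---the paper's own proof yields only $\tfrac{1}{n}\gc{n}{\MF} + \sqrt{\log(1/\delta)/n}$, and this is exactly how the bound is used downstream (see the definition of $\Delta_n$ in the proof of Proposition~\ref{prop:ftplclassificationbound}), so the prefactor in the statement appears to be a typo rather than something your argument is missing.
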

\begin{proof}
 Write $\MF^2 = \{ x \mapsto f(x)^2 : f \in \MF \}$.  Standard results in empirical processes, such as \cite[Theorem 4.10]{wainwright2019high} guarantee that with probability at least $1 - \delta$,
  \begin{equation}
    \sup_{f,f' \in \F} \abs{\norm{f - f'}_{L^2(\mu)}^2 - \norm{f - f'}_{L^2(\mu_n)}^2} \leq C \left(\frac 1n \R_n(\F^2) + \sqrt{\frac{\log \left(\frac 1\delta\right)}n} \right)
  \end{equation}
  Noting that $\F$ has image in $[-1,1]$ and thus the square is $2$-Lipschitz, we may apply contraction and the bound of Rademacher complexity by Gaussian complexity to conclude the proof.
\end{proof}

\subsection{Bounding the generalization error}\label{subsec:generalizationerror}

In this section, we bound the final term in \eqref{eq:btl}.  This term was called the ``Generalization Error'' in \cite{haghtalab2022oracle} and our control of this quantity follows a similar general approach as their Lemma 4.5.  For our proof, we require the following variant of the coupling approach of Lemma \ref{lem:coupling}:
  \begin{lemma}[Lemma 4.6 of \cite{haghtalab2022oracle}]
    \label{lem:variant-coupling}
    Fix a distribution $\mu$ on a set $\MX$ and suppose that $p \in \p(\sigma, \mu)$. Suppose that $X_1, \ldots, X_m \sim \mu$ are iid. Then there is an external probability space with sample space $\Omega$ and measure $\nu$ which produces a sample $R \sim \nu$ so that the following holds. There is a measurable function $I : \MX^m \times \Omega \ra [m]$ so that, for some event $\ME = \ME(X_1, \ldots, X_m, R)$ with $\Pr(\ME) \geq 1-(1-\sigma)^m$, $(X_I | \ME, (X_i)_{i \neq I}) \sim p$ (in words, conditioned on the event $\ME$ and the value of any measurable function of $(x_i)_{i \neq I}$, $X_I$ has conditional distribution $p$).
  \end{lemma}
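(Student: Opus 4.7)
The plan is to reverse-engineer the coupling of Lemma \ref{lem:coupling}: there we first sampled the smooth variable and then embedded it into an iid $\mu$-sequence, whereas here we are handed the iid sequence $X_1,\ldots,X_m$ and must extract a smooth index using external randomness only. I would take $\Omega = [0,1]^m$ with $\nu$ the product of Lebesgue measures, writing $R = (U_1,\ldots,U_m)$.

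First I would define acceptance indicators $B_j := \mathbf{1}[U_j \leq \sigma \cdot (dp/d\mu)(X_j)]$; this is well defined because $\sigma \cdot dp/d\mu \leq 1$ almost surely by the smoothness hypothesis. Let $\ME := \{\exists j :\ B_j = 1\}$ and, on $\ME$, let $I$ be the \emph{smallest} index $j$ with $B_j = 1$ (setting $I = 1$ on $\ME^c$). Since $\E[B_j] = \sigma \cdot \E_{X \sim \mu}[(dp/d\mu)(X)] = \sigma$ and the pairs $(X_j, U_j)$ are mutually independent across $j$, the $B_j$ are independent Bernoulli($\sigma$) variables, so $\Pr(\ME^c) = (1-\sigma)^m$, yielding the required tail bound.

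The heart of the argument is verifying the conditional statement $(X_I \mid \ME, (X_j)_{j \neq I}) \sim p$. Writing $\pi(x) := \sigma \cdot (dp/d\mu)(x)$, the product form afforded by independence gives, for each $i \in [m]$, the joint density
\begin{equation*}
\Pr(X_1 \in dx_1, \ldots, X_m \in dx_m,\, I = i) = \prod_{j<i}(1-\pi(x_j))\,\mu(dx_j) \cdot \pi(x_i)\,\mu(dx_i) \cdot \prod_{j>i}\mu(dx_j).
\end{equation*}
Since $\int_A \pi(x)\,\mu(dx) = \sigma \cdot p(A)$ for every measurable $A \subseteq \MX$, normalizing in $dx_i$ shows that the conditional law of $X_i$ given $\{I=i\}$ and $(X_j)_{j \neq i}$ equals $p$, uniformly in $i$ and in the observed $X_{-i}$. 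Aggregating over $i$ (which is precisely what conditioning on $\ME$ amounts to) delivers the stated conditional distributional identity.

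The only subtle point, and the main place where care is required, is the choice of selection rule for $I$. The seemingly more natural rule ``draw $I$ uniformly from $\{j :\ B_j = 1\}$'' would introduce a factor of $1/|S|$ into the joint density and destroy the clean product form above, forcing one into a more intricate symmetrization. The sequential ``first success'' rule sidesteps this at the (harmless) cost of asymmetry in $i$, since the conditional computation goes through pointwise for each $i$.
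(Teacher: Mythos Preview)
Your argument is correct. The paper does not supply its own proof of this lemma; it is simply cited as Lemma~4.6 of \cite{haghtalab2022oracle}, so there is nothing to compare against. Your construction is the natural ``inverse'' of the rejection-sampling coupling already used in Lemma~\ref{lem:coupling}, and each step (the Bernoulli($\sigma$) law of the $B_j$, the joint density on $\{I=i\}$, and the normalization in $x_i$) checks out. The aggregation over $i$ is justified because on $\{I=i\}\subset\ME$ the conditional law of $X_i$ given $(X_j)_{j\neq i}$ is identically $p$, and the indexed tuple $(X_j)_{j\neq I}$ determines $I$, so conditioning on it is the same as conditioning on the pair $(I,(X_j)_{j\neq I})$.

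One small remark on your closing comment: the uniform-from-$S$ selection rule would in fact also work. The extra factor one picks up is $\sum_{S'\subseteq [m]\setminus\{i\}} \frac{1}{|S'|+1}\prod_{j\in S'}\pi(x_j)\prod_{j\notin S',\,j\neq i}(1-\pi(x_j))$, which depends only on $(x_j)_{j\neq i}$ and therefore cancels upon normalizing in $x_i$; so the conditional law is still $p$. Your first-success rule is cleaner to write down, but the uniform rule (which is what Lemma~\ref{lem:coupling} uses) is not actually problematic here.
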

  We restrict our focus to linear loss $\ell(f(x), y) = y f(x)$. and provide the following bound:

  \begin{lemma}
    \label{lem:gen-error}
    Suppose that we are in the smoothed online setting. Fix any $t \in [T-1]$ and suppose that $f_{t+1}$ is chosen so as to satisfy (\ref{eq:ft-hat-define}), with the process $\hat \omega_{n,t}(\cdot)$ defined as in \eqref{eq:hat-omega}, and the parameters $\eta, n$ satisfy $\eta / \sqrt{n} \geq L$. Furthermore, let $(x_t', y_t')$ be an independent sample drawn from the conditional distribution of $(x_t, y_t)$ given $\{ (x_s, y_s) \}_{s \leq t-1}$ and $\{ f_s \}_{s \leq t-1}$. Then, for some constant $c_0 \in (0,1)$, it holds that
    \begin{align}
\E[\ell(f_{t+1}(x_t'), y_t') - \ell(f_{t+1}(x_t), y_t)] \leq  4L \cdot \frac{ \log T}{c_0 \sigma n} \cdot \rc{c_0 \sigma n / (2 \log T)}{\MF} + 2\zeta + \frac{2Ln\sigma }{T^2}.
    \end{align}
\end{lemma}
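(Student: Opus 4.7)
The plan is to use the coupling from Lemma~\ref{lem:variant-coupling} to identify $x_t$ with one of the samples $X_{t+1, i}$ underlying the Gaussian perturbation, and then absorb the data point $(x_t, y_t)$ into a shift of a single Gaussian coordinate. Applying Lemma~\ref{lem:variant-coupling} to the smooth conditional law of $x_t$ and the sample $(X_{t+1, 1}, \ldots, X_{t+1, n}) \sim \mu^{\otimes n}$ produces a random index $I \in [n]$ and an event $\ME$ with $\pp(\ME) \geq 1 - (1-\sigma)^n$ on which $x_t = X_{t+1, I}$, with $x_t$ still having conditional law $p_t$ given the other $X_{t+1, j}$. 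Choosing $n \asymp \log T / \sigma$ makes $\pp(\ME^c) \leq 1/T^2$; since $|\ell| \leq L$, the contribution of $\ME^c$ is at most $2L \pp(\ME^c)$, which accounts for the $\frac{2Ln\sigma}{T^2}$ residual (the exact prefactor matching the statement comes from tracking constants in the choice of $n$).

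On $\ME$, linearity of the loss gives $L_t(f) + \eta \hat\omega_{t+1, n}(f) = L_{t-1}(f) + \tfrac{\eta}{\sqrt n}\sum_i \gamma'_{t+1, i} f(X_{t+1, i})$, where $\gamma'_{t+1, j} = \gamma_{t+1, j}$ for $j \neq I$ and $\gamma'_{t+1, I} = \gamma_{t+1, I} + y_t \sqrt n / \eta$. Thus $f_{t+1}$ equals, up to $\zeta$-slack, the approximate ERM of $L_{t-1}$ perturbed by the shifted Gaussian. I would then introduce the ``ghost'' iterate $\tilde f_{t+1}$ defined as the approximate ERM of $L_{t-1} + \eta \hat\omega_{t+1,n}$ using the original (unshifted) $\gamma$. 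Since $\tilde f_{t+1}$ depends only on the filtration through time $t-1$ together with $(\gamma_{t+1,\cdot}, X_{t+1,\cdot})$, it is independent of both $(x_t, y_t)$ and $(x_t', y_t')$, so $\E[y_t \tilde f_{t+1}(x_t) - y_t' \tilde f_{t+1}(x_t')] = 0$. Hence the generalization error reduces to bounding $\E[y_t (f_{t+1}(x_t) - \tilde f_{t+1}(x_t)) - y_t'(f_{t+1}(x_t') - \tilde f_{t+1}(x_t'))]$ on $\ME$, plus an additive $2\zeta$ from the two approximate optima.

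To control this discrepancy I would apply the Girsanov-type identity $\E[F(\gamma')] = \E[F(\gamma)\cdot e^{a\gamma_I - a^2/2}]$ with $a := y_t\sqrt n / \eta$, whose magnitude is controlled via the hypothesis $\eta/\sqrt n \geq L$. Subtracting $1$ from the Radon-Nikodym factor and using $\E_{\gamma_I}[\gamma_I] = 0$ kills all contributions in which the integrand is independent of $\gamma_I$; what remains, at leading order, is a covariance of the form $\tfrac{\sqrt n}{\eta} \E[y_t \gamma_I \tilde f_{t+1}(X_{t+1, I})]$, with higher-order terms controlled by standard Gaussian moment bounds.

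The main obstacle, and the source of the factor $\rc{c_0 \sigma n / (2\log T)}{\MF}$, is controlling this covariance. My approach would be to partition $[n]$ into $K = \lceil 2\log T / c_0 \rceil$ blocks of size $m = c_0 \sigma n / (2\log T)$. Using a symmetric variant of the coupling from Lemma~\ref{lem:variant-coupling}, the index $I$ is approximately uniform on the block it falls into, so within each block the covariance may be rewritten as an average over the block members. A standard symmetrization, exploiting the sign-symmetry of the centered Gaussian coordinates $\gamma_{t+1, i}$, bounds this average by the Gaussian complexity $\tfrac{1}{m}\gc{m}{\MF} \lesssim \tfrac{\log m}{m}\rc{m}{\MF}$; summing over the $K$ blocks, folding in the Lipschitz constant $L$, and using $\eta/\sqrt n \geq L$ reproduces the claimed bound. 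The trickiest point is to carry the $\ME$-conditioning and the joint dependence between $I$, the Gaussian coordinates, and the ERM optimizer $\tilde f_{t+1}$ through the symmetrization cleanly, while keeping the $\zeta$-slack contribution no worse than the claimed $2\zeta$ term.
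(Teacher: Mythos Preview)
Your proposal contains a genuine gap: the Girsanov/change-of-measure route does not deliver the claimed bound, and it misses the key structural idea that makes the paper's proof work.

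The paper's argument proceeds as follows. It partitions the $n$ perturbation samples into $n/m$ groups of size $m = 2\log T/(c_0\sigma)$ and applies the coupling of Lemma~\ref{lem:variant-coupling} \emph{separately to each group}, thereby producing $n/m$ conditionally i.i.d.\ pairs $(x_{I_j},\hat y_{I_j}) \sim p_t\odot q_t$. Crucially, the coupling is applied not to $x_t$ alone but to the \emph{pair} $(x_t,\sign(y_t))$, using the thresholded Gaussians $\thr(\gamma_i)\in\{-1,0,1\}$ as surrogate signs; this is what ties the label direction to the perturbation and is why the constant $c_0=\Pr(\gamma\ge 1)$ appears. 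The heart of the proof is then an exchangeability argument, Lemma~\ref{lem:switch-sample}: because in the objective defining $f_{t+1}$ the two terms $y_t f(x_t)$ and $\tfrac{\eta|\gamma_{I_j}|}{\sqrt n}\sign(\hat y_{I_j})f(x_{I_j})$ play symmetric roles (the hypothesis $\eta/\sqrt n\ge L$ ensures the latter coefficient dominates), one obtains $\E[y_t f_{t+1}(x_t)\mid\ME]\ge\E[\hat y_{I_j} f_{t+1}(x_{I_j})\mid\ME]-2\zeta$. Only after this step does one symmetrize over the $n/m$ i.i.d.\ coupled samples to produce $\rc{n/m}{\MF}$; this is also where the residual $\tfrac{2Ln\sigma}{T^2}$ arises, from the union bound over the $n/m$ coupling events.

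Your approach applies the coupling once over all $n$ samples, yielding a single index $I$. This creates two problems. First, there is then no pool of i.i.d.\ $p_t$-samples to symmetrize against; your block partition satisfies $K\cdot m=\sigma n\neq n$ and so does not even cover $[n]$, and even with corrected arithmetic the assertion that ``$I$ is approximately uniform on the block it falls into'' has no support from Lemma~\ref{lem:variant-coupling} (the coupling biases $I$ toward large values of $\tfrac{dp_t}{d\mu}$). Second, and more fundamentally, the Girsanov step produces covariances such as $\tfrac{\sqrt n}{\eta}\,\E[\gamma_I\,\tilde f_{t+1}(X_{t+1,I})]$ in which $\tilde f_{t+1}$ depends on \emph{every} Gaussian coordinate; the proposed symmetrization over $\gamma_i$ in a block therefore cannot decouple the supremum from the minimizer. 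The exchangeability lemma finesses exactly this difficulty: it never introduces a ghost $\tilde f_{t+1}$, but instead compares two evaluations of the \emph{same} $f_{t+1}$ at exchangeable inputs. Without that idea (or an equivalent replacement), your chain of inequalities does not close, and the residual $\tfrac{2Ln\sigma}{T^2}$---which in the paper comes from $n/m$ coupling failures, not from any choice of $n$---does not emerge from your single-coupling construction either.
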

\begin{proof}
  Fix any realization of $(x_1, y_1), \ldots, (x_{t-1}, y_{t-1}), f_1, \ldots, f_{t-1}$. Recalling the definition of smoothed adversary, let $p_t$ denote the conditional distribution of $x_t$ (which is the same as the conditional distribution of $x_t'$) given $(x_1, y_1), \ldots, (x_{t-1}, y_{t-1})$. Also let $q_t(\cdot | x_t)$ denote the conditional distribution of $y_t$ given $x_t$ (and conditioned on the fixed values of $(x_s, y_s),\ s  < t$, which are omitted for clarity). Recall that we make no smoothness assumption on $q_t$. We denote the distribution of $(x_t, y_t)$, where $x_t \sim p_t$ and $y_t \sim q_t(\cdot | x_t)$ as $p_t \odot q_t$. Furthermore let $\tilde p_t$ denote the conditional distribution of $(x_t, \sign(y_t))$ given $(x_1, y_1), \ldots, (x_{t-1}, y_{t-1})$ (i.e., where $x_t \sim p_t$ and $y_t \sim q_t(\cdot | x_t)$). Set $c_0 := \Pr_{\gamma \sim \MN(0,1)} (\gamma \geq 1) > 0$, where $\MN(0,1)$ is the standard normal distribution. Let $\tilde \mu \in \Delta(\MX \times \{-1,0,1\})$ denote the product of $\mu$ and the distribution over $\{-1,0,1\}$ which puts mass $c_0$ on $1,-1$ and mass $1-2c_0$ on $0$.  For any measurable subset $\MA \subset \MX$ and and $b \in \{-1,1\}$, we have, from $\sigma$-smoothness of $p_t$ that for each $b \in \{-1,1\}$,
  \begin{align}
\frac{\tilde p_t(\MA \times \{ b \})}{\tilde \mu(\MA \times \{ b \})} = \frac{p_t(\MA) \cdot \Pr_{(x,y) \sim \tilde p_t}(y = b | x \in \MA)}{\mu(\MA) \cdot c_0} \leq \frac{1}{c_0} \cdot \frac{p_t(\MA)}{\mu(\MA)} \leq \frac{1}{c_0\sigma}\nonumber,
  \end{align}
  meaning that $\tilde p_t \in \p(c_0\sigma, \tilde \mu)$.

  Define the function $\thr : \BR \ra \{-1,0,1\}$ as follows:
  \begin{align}
    \thr(y) := \begin{cases}
      1 &: y \geq 1 \\
      0 &: y \in (-1,1) \\
      -1 &: y \leq -1.
      \end{cases}
  \end{align}
  Recall the i.i.d.~samples $(x_i, \gamma_i)$, $i \in [n]$ defining the process $\hat \omega_n(\cdot)$ in \eqref{eq:hat-omega}; note that $(x_i, \thr(\gamma_i)) \sim \tilde \mu$ by the definition of $\tilde \mu$ and $\thr$. For $i \in [n]$ set $z_i = (x_i, \thr(\gamma_i))$. 
  Fix $m = 2 \log T \cdot \frac{1}{c_0 \sigma}$. We divide the i.i.d.~sample $(z_1, \ldots, z_n)$ into $n/m$ groups of $m$ samples each: the first group consists of $(z_1, \ldots, z_m)$, the second consists of $(z_{m+1}, \ldots, z_{2m})$, and so on. By Lemma \ref{lem:variant-coupling}, for each group index $0 \leq j < n/m$, letting $\Omega_j$ denote the sample space of the external probability space in the statement (of Lemma \ref{lem:variant-coupling}) and $R_j \in \Omega_j$ denote the corresponding random variable, there is a function $I_j : (\MX \times \{-1,0,1\})^m \times \Omega_j$ so that for some event $\ME_j = \ME_j(z_{jm +1}, \ldots, z_{jm+m}, R_j)$ occuring with probability at least $1- (1-c_0 \sigma)^m$, letting $I_j =I_j(z_{jm+1}, \ldots, z_{jm+m}, R_j)$,
  \begin{align}
(z_{I} | \ME_j, (z_{jm+i} : i \neq I)) \sim p_t\nonumber.
  \end{align}
  In particular, we have applied Lemma \ref{lem:variant-coupling} with $\mu$ set to $\tilde \mu$ and $p$ set to $\tilde p_t$. 
  Write $\ME := \cap_{0 \leq j < n/m} \ME_j$, so that $\Pr(\ME) \geq 1 - (n/m) \cdot (1-c_0 \sigma)^m$. Let $\MI \in [n]^{n/m}$ be the (random) vector defined as $\MI = (I_0, \ldots, I_{n/m-1})$, and let $\bar \MI \in [n]^{n-(n/m)}$ be the vector defined as $(i \in [n] : i \not \in \MI)$.  Since the individual groups $(z_1, \ldots, z_m), (z_{m+1}, \ldots, z_{2m}), \ldots$ are mutually independent, it follows that
  \begin{align}
((z_i)_{i \in \MI} | \ME, (z_i : i \in \bar \MI)) \sim \tilde p_t^{\otimes m}\label{eq:pt-iid},
  \end{align}
  i.e., conditioned on all $z_i,\ i \in \bar \MI$, the distribution of $z_i,\ i \in \MI$ is i.i.d.~according to $\tilde p_t$. Let us write the vector $(z_i)_{i \in \MI}$ as $w \in (\MX \times \{-1,1\})^{n/m}$. 
  Note that by independence of $\hat{\omega}_{n,t}$ across $t$, the distribution of $(x_t, y_t)$ is independent of $z_1, \dots, z_n, \ME, \MI$; further, $x_t \sim p_t$ and $y_t \sim q_t(\cdot | x_t)$.

  For each $i \in [n]$, let $\hat y_i \in \BR$ denote an independent sample from $q_t(\hat y_i | x_i)$ conditioned on $\sign(\hat y_i) = \sign(\gamma_i)$. 
  Recalling the definition of the (random) index $I_j^{}$ above, we have $z_{I_j^{}} = (x_{I_j^{}}, \thr(\gamma_{I_j^{}}))$. 
  Recalling that $(x_{I_j^{}}, \thr(\gamma_{I_j^{}})) \sim \tilde p_t$ conditioned on $\ME, (z_i : i \in \bar \MI)$ (which follows from \eqref{eq:pt-iid}), which in particular means that $\sign(\gamma_{I_j^{}}) = \thr(\gamma_{I_j^{}}) \in \{-1,1\}$, it follows that $(x_{I_j^{}}, \hat y_{I_j^{}})$ has the same distribution as $(x_t, y_t)$ (namely, $p_t \odot q_t$) and both are independent, conditioned on $((z_i : i \neq I_j^{}), \ME)$. In particular, conditioned on the event $\ME$, the distributions of the following vectors in $(\MX \times \BR)^{n+1}$ are the same:
  \begin{align}
    ((x_t, y_t), (x_{I_j^{}}, \hat y_{I_j^{}}), (z_i : i \neq I_j^{})) \stackrel{d}{=} ((x_{I_j^{}}, \hat y_{I_j^{}}), (x_t, y_t), (z_i : i \neq I_j^{})),
  \end{align}
  where $\stackrel{d}{=}$ denotes equality in distribution and the above notation means that the entries $(z_i : i \neq I_j^{})$ are concatenated to the other entries. 
  Since the values of $\gamma_i$, $i \in [n]$ are independent and identically distributed conditioned on $(z_1, \ldots, z_n)$, 
  it follows that conditioned on the event $\ME$, the distributions of the following vectors are the same:
  \begin{align}
    ((x_t, y_t), (x_{I_j^{}}, \hat y_{I_j^{}}), ((x_i, \gamma_i) : i \neq I^{}_j)) \stackrel{d}{=} ((x_{I_j^{}}, \hat y_{I_j^{}}), (x_t, y_t), ((x_i, \gamma_i) : i \neq I^{}_j)).
  \end{align}
  Furthermore, note that under the event $\ME$, we have that $\gamma_{I^{}_j} = |\gamma_{I^{}_j}| \cdot \thr(\gamma_{I_j^{}}) = |\gamma_{I^{}_j}| \cdot \sign(\hat y_{I_j^{}})$ and $|\gamma_{I^{}_j}| \geq 1$ (as $\thr(\gamma_{I_j^{}}) \in \{-1,1\}$ under the event $\ME$). From \eqref{eq:ft-hat-define}, $f_{t+1}$ is a $\zeta$-approximate minimizer (among $f \in \MF$) of
  \begin{align}
    & \sum_{s=1}^t \ell(f(x_s), y_s) + \sum_{i=1}^n \frac{\eta \gamma_i}{\sqrt n} \cdot f(x_i)\\
    =&  y_t \cdot f(x_t) + \frac{\eta \cdot |\gamma_{I^{}_j}|}{\sqrt n} \cdot \sign(\hat y_{I_j^{}}) \cdot f(\hat x) + \sum_{i \neq I^{}_j} \frac{\eta \gamma_i}{\sqrt n} \cdot f(x_i) + \sum_{s=1}^{t-1} y_s \cdot f(x_s).
  \end{align}
  Since $\eta / \sqrt{n} \geq L$ by assumption, it follows from Lemma \ref{lem:switch-sample} that $\E[y_t \cdot f_{t+1}(x_t) \condt \ME] \geq \E[\hat y_{I_j^{}} \cdot f_{t+1}(x_{I_j^{}}) \condt \ME] - 2\zeta$. 

  Further, letting $(x_{t,1}', y_{t,1}'), \ldots, (x_{t,n/m}', y_{t,n/m}')$ denote i.i.d.~samples from the distribution of $(x_t, y_t)$ (independent of $(x_t, y_t)$), it is immediate that for all $0 \leq j < n/m$,
  \begin{align}
\E[y_{t,j}' \cdot f_{t+1}(x_{t,j}') \condt \ME] = \E[y_t' \cdot f_{t+1}(x_t') \condt \ME].
  \end{align}
  Then it follows that
  \begin{align}
    & \frac{n}{m} \cdot \E[y_t' \cdot f_{t+1}(x_t') - y_t \cdot f_{t+1}(x_t) - 2\zeta \condt \ME]\nonumber\\
    \leq & \E \left[ \sum_{j=0}^{n/m-1} y_{t,j}' \cdot f_{t+1}(x_{t,j}') - \sum_{j=0}^{n/m-1} \hat y_{I_j^{}} \cdot f_{t+1}(x_{I_j^{}}) \condt \ME \right]\nonumber\\
    \leq & \E \left[ \sup_{f \in \MF} \sum_{j=0}^{n/m-1} y_{t,j}' \cdot f (x_{t,j}') - \hat y_{I_j^{}} \cdot f(x_{I_j^{}}) \condt \ME\right]\nonumber\\
    \leq & \E_{(\bar x_j, \bar y_j), (\bar x_j', \bar y_j') \sim p_t \odot q_t \ :\ 0 \leq j < n/m} \left[ \sup_{f \in \MF} \sum_{j=0}^{n/m-1} \bar y_j' \cdot f(\bar x_j') - \bar y_j \cdot f(\bar x_j) \right]\label{eq:use-resampling}\\
    \leq & 2 \cdot \E_{(\bar x_j, \bar y_j) \sim p_t \odot q_t, \ \ep_j \sim \mathrm{Unif}(\pm 1) \ : \ 0 \leq j < n/m} \left[ \sup_{f \in \MF} \sum_{j=0}^{n/m-1} \ep_j \cdot \bar y_j \cdot f(\bar x_j)\right]\\
    \leq & 2L \cdot \rc{n/m}{\MF}\label{eq:contraction-bary}.
  \end{align}
  where \eqref{eq:use-resampling} follows since, conditioned on $\ME$, $(x_{t,j}', y_{t,j'}), (x_{I_j^{}},\hat y_{I_j^{}})$, $0 \leq j < n/m$ are all mutually independent (here we are using \eqref{eq:pt-iid} as well as the definition of the labels $\hat y_i$). Furthermore, in \eqref{eq:contraction-bary} above, we are using the contraction inequality for Rademacher complexity.

  By our choice of $m = 2 \log T \cdot \frac{1}{c_0 \sigma}$, we have that $\Pr(\ME) \geq  1 - (n/m) \cdot (1 - c_0 \sigma)^m \geq 1 - (n/m) \cdot \exp(-c_0 \sigma m) \geq 1 - \frac{n}{mT^2} \geq 1 - \frac{n\sigma}{T^2}$. Then we see that
  \begin{align}
\E[y_t' \cdot f_{t+1} (x_t') - y_t \cdot f_{t+1}(x_t)] \leq 2L \cdot \frac{m}{n} \cdot \rc{n/m}{\MF} + 2\zeta + \frac{2Ln\sigma }{T^2}.
  \end{align}
\end{proof}

  \begin{lemma}
    \label{lem:switch-sample}
Fix $L \geq 1$, $\zeta \geq 0$.    Consider random variables $x, x', x_1, \ldots, x_n \in \MX$, $y, y' \in [-L,L]$, $y_1, \ldots, y_n \in \BR$ drawn according to some distribution $Q$, and a constant $\gamma \geq L$. Suppose $h_1 \in \MF$ is a function of $(x, y), (x',y'), (x_1, y_1), \ldots, (x_n, y_n)$ satisfying 
    \begin{align}
      y \cdot h_1(x) + \gamma \cdot h_1(x') \cdot \sign(y') + \sum_{i=1}^n h_1(x_i) \cdot y_i & \leq  \min_{f \in \MF} y \cdot f(x) + \gamma \cdot f(x') \cdot \sign(y') + \sum_{i=1}^n f(x_i) \cdot y_i + \zeta \nonumber.
    \end{align}
Suppose that the distribution of the $(n+2)$-tuples $$((x,y), (x',y'), (x_1, y_1), \ldots, (x_n, y_n))$$ and $$((x',y'), (x,y), (x_1, y_1), \ldots, (x_n, y_n))$$ are identical. 
Then $\E_Q[y \cdot h_1(x)] \geq \E_Q[y' \cdot h_1(x')] - 2\zeta$. 
\end{lemma}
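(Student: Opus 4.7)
My plan is to exploit the hypothesized distributional symmetry by introducing a swap companion of $h_1$. Let $\phi$ denote the measurable selection rule producing $h_1 = \phi((x,y),(x',y'),(x_1,y_1),\ldots,(x_n,y_n))$, and define $h_2 := \phi((x',y'),(x,y),(x_1,y_1),\ldots,(x_n,y_n))$ by applying the same rule to the tuple with the first two pairs interchanged. Then $h_2$ is, by construction, a $\zeta$-approximate minimizer of the swapped objective $F_2(f) := y' f(x') + \gamma \sign(y) f(x) + \sum_i y_i f(x_i)$, and the symmetry hypothesis gives the key distributional identity $(h_1, x, y, x', y', (x_i,y_i)_i) \stackrel{d}{=} (h_2, x', y', x, y, (x_i,y_i)_i)$. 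Consequently $\E_Q[y\, h_2(x)] = \E_Q[y'\, h_1(x')]$, so the lemma reduces to showing $\E_Q[y(h_1(x) - h_2(x))] \geq -2\zeta$.

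Next I will chain the two approximate-optimality inequalities: $F_1(h_1) - F_1(h_2) \leq \zeta$ (using $h_2$ as witness in the infimum defining $h_1$) and $F_2(h_2) - F_2(h_1) \leq \zeta$ (using $h_1$ as witness for $h_2$). Adding them cancels the common $\sum_i y_i f(x_i)$ contribution, and after substituting the identities $y - \gamma \sign(y) = -(\gamma-|y|)\sign(y)$ and $\gamma\sign(y') - y' = (\gamma-|y'|)\sign(y')$, valid precisely because $|y|, |y'| \leq L \leq \gamma$, the result is the pointwise bound
\[
(\gamma-|y'|)\sign(y')\bigl(h_1(x')-h_2(x')\bigr) - (\gamma-|y|)\sign(y)\bigl(h_1(x)-h_2(x)\bigr) \leq 2\zeta.
\]
Taking expectations and invoking the swap identity $\E[(\gamma-|y'|)\sign(y')(h_1(x')-h_2(x'))] = -\E[(\gamma-|y|)\sign(y)(h_1(x)-h_2(x))]$ yields the intermediate estimate $\E[(\gamma-|y|)\sign(y)(h_1(x)-h_2(x))] \geq -\zeta$.

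The main obstacle is promoting this $(\gamma-|y|)$-weighted estimate to the $|y|$-weighted conclusion, since the pairing above is naturally symmetric in $h_1 \leftrightarrow h_2$ and so only controls the residual weight $\gamma - |y|$. My intended route is to introduce an auxiliary $\zeta$-approximate minimizer $h_3$ of the swap-symmetric objective $F_3(f) := \gamma\sign(y)f(x) + \gamma\sign(y')f(x') + \sum_i y_i f(x_i)$, for which the symmetry hypothesis makes $h_3$ distributionally invariant under interchanging the first two data pairs. Running the analogous dual-optimality comparison between the pairs $(h_1, h_3)$ and between $(h_2, h_3)$ produces companion one-sided bounds whose expected forms can be combined via the decomposition $y = \gamma\sign(y) - (\gamma-|y|)\sign(y)$ applied to $h_1(x) - h_2(x) = (h_1(x) - h_3(x)) - (h_2(x) - h_3(x))$. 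The swap-invariance of $h_3$ then lets the $\gamma\sign(y)$-weighted pieces telescope to zero in expectation, while the two $(\gamma-|y|)$-weighted residual pieces each contribute at most $\zeta$, yielding the desired $-2\zeta$ slack. I expect the delicacy of this final step to be in tracking signs through the telescoping so that the one-sided bounds aggregate in the correct direction, which is exactly where the hypothesis $\gamma \geq L$ must be used to license every sign-identity substitution.
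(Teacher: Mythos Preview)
Your first two steps are correct and match the paper: defining $h_2$ by applying the same selection rule to the swapped tuple, establishing the joint identity $(h_1,h_2,x,y,x',y')\stackrel{d}{=}(h_2,h_1,x',y',x,y)$, reducing the goal to $\E[y(h_1(x)-h_2(x))]\ge -2\zeta$, and chaining the two approximate-optimality inequalities to the pointwise bound
\[
(\gamma-|y'|)\sign(y')\bigl(h_1(x')-h_2(x')\bigr)-(\gamma-|y|)\sign(y)\bigl(h_1(x)-h_2(x)\bigr)\le 2\zeta.
\]
After taking expectations and applying the swap identity this indeed gives $\E[(\gamma-|y|)\sign(y)(h_1(x)-h_2(x))]\ge -\zeta$, and you correctly flag that this controls the wrong weight.

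The gap is in your proposed bridge via $h_3$. Every optimality comparison you can run between $(h_1,h_3)$ or $(h_2,h_3)$ produces an inequality that \emph{couples an $x$-term with an $x'$-term}: e.g.\ $F_1(h_1)\le F_1(h_3)+\zeta$ gives $y(h_1(x)-h_3(x))+\gamma\sign(y')(h_1(x')-h_3(x'))\le\zeta$, and $F_3(h_3)\le F_3(h_1)+\zeta$ gives $\gamma\sign(y)(h_1(x)-h_3(x))+\gamma\sign(y')(h_1(x')-h_3(x'))\ge -\zeta$. The swap symmetry (even with $h_3$ swap-invariant) sends $\gamma\sign(y)(h_2(x)-h_3(x))$ to $\gamma\sign(y')(h_1(x')-h_3(x'))$: it permutes $x\leftrightarrow x'$ rather than matching $x$-terms with $x$-terms. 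Hence the ``$\gamma\sign(y)$-weighted pieces'' do not telescope in expectation; every relation you can write down still carries an uncontrolled companion term at the primed point, and no linear combination of the available one-sided bounds isolates $\E[y(h_1(x)-h_2(x))]$.

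The paper closes this gap by a different device: rather than introducing an auxiliary minimizer, it conditions on $(|y|,|y'|)=(a,b)$ \emph{before} invoking exchangeability. Freezing $a,b$ turns the weights into constants, lets one rewrite the $h_2$-terms via the swap as $h_1$-terms conditioned on the swapped event $(|y|,|y'|)=(b,a)$, and then averages the resulting inequality with its $a\leftrightarrow b$ counterpart. The point is that the combined coefficient $2-\gamma/a-\gamma/b$ is symmetric in $a,b$ and nonpositive (this is precisely where $\gamma\ge L\ge\max\{a,b\}$ enters), so one can divide by it, reverse the inequality, and obtain the conclusion conditionally on $\{|y|,|y'|\}=\{a,b\}$; integrating over the unordered pair finishes. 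This conditioning-then-symmetrizing step is the idea your sketch is missing.
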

\begin{proof}
  Define $h_2$ as the function $h_1$ applied to the sequence $(x', y'), (x,y), (x_1, y_1), \ldots, (x_n, y_n)$, so that
  \begin{align}
    y' \cdot h_2(x') + \gamma \cdot h_2(x) \cdot \sign(y) + \sum_{i=1}^n h_2(x_i) \cdot y_i &\leq \min_{f \in \MF} y' \cdot f(x') + \gamma \cdot f(x) \cdot \sign(y) + \sum_{i=1}^n f(x_i) \cdot y_i\nonumber.
  \end{align}
  By definition of $h_1$, we have that
  \begin{align}
    & y \cdot h_1(x) + \frac{\gamma}{|y'|} \cdot h_1(x') \cdot y' + \sum_{i=1}^n h_1(x_i) \cdot y_i \nonumber\\
    \leq & y \cdot h_2(x) + \frac{\gamma}{|y'|} \cdot h_2(x') \cdot y' + \sum_{i=1}^n h_2(x_i) \cdot y_i + \zeta.
  \end{align}
  By definition of $h_2$, we have that
  \begin{align}
    & y' \cdot h_2(x') + \frac{\gamma}{|y|} \cdot h_2(x) \cdot y + \sum_{i=1}^n h_2(x_i) \cdot y_i\nonumber\\
    \leq & y' \cdot h_1(x') + \frac{\gamma}{|y|} \cdot h_1(x) \cdot y + \sum_{i=1}^n h_1(x_i) \cdot y_i + \zeta\nonumber.
  \end{align}
  Adding the two previous displays and simplifying gives
  \begin{align}
    & h_1(x) \cdot y \cdot (1- \gamma/|y|) + h_1(x') \cdot y' \cdot (\gamma/|y'| - 1)\\
    & \leq h_2(x) \cdot y \cdot (1 - \gamma/|y|) + h_2(x') \cdot y' \cdot (\gamma/|y'| - 1) + 2\zeta.
  \end{align}

  Since $(x,y)$ and $(x', y')$ are exchangable, we have by definition of $h_1, h_2$ that for all constants $a,b \in \BR$,
  \begin{align}
    & \E[h_2(x) \cdot y \cdot (1-\gamma/|y|) + h_2(x') \cdot y'\cdot (\gamma/|y'| - 1) \ | \ |y| = a, |y'| = b] \\
    &= \E[h_1(x') \cdot y' \cdot (1 - \gamma/|y'|) + h_1(x) \cdot y \cdot (\gamma/|y| - 1)\ | \ |y'| = a, |y| = b]\nonumber.
  \end{align}
  Combining the two above displays and rearranging gives
  \begin{align}
    & \E[ h_1(x) \cdot y \cdot (1 - \gamma/a) + h_1(x') \cdot y' \cdot (\gamma/b - 1) \ | \ |y| = a, |y'| = b]\\
    \leq & \E[h_1(x') \cdot y' \cdot (1 - \gamma/a) + h_1(x) \cdot y \cdot (\gamma/b - 1) \ | \ |y| = b, |y'| = a] + 2\zeta.
  \end{align}
  Interchanging the roles of $a,b$, we get
    \begin{align}
    & \E[ h_1(x) \cdot y \cdot (1 - \gamma/b) + h_1(x') \cdot y' \cdot (\gamma/a - 1) \ | \ |y| = b, |y'| = a]\\
    \leq & \E[h_1(x') \cdot y' \cdot (1 - \gamma/b) + h_1(x) \cdot y \cdot (\gamma/a - 1) \ | \ |y| = a, |y'| = b] + 2\zeta.
    \end{align}
    Exchangeability of $(x,y)$ and $(x',y')$ implies that $\Pr(|y| = a, |y'| = b) = \Pr(|y| = b, |y'| = a)$, and thus, by averaging the two above displays, we get
    \begin{align}
      & \E[h_1(x) \cdot y \cdot (2 - \gamma / a - \gamma/b) \ |\  \{ |y|, |y'|\}  = \{a,b\}] \\
        & \leq \E[h_1(x') \cdot y' \cdot (2 - \gamma / a - \gamma / b) \ |\ \{ |y|, |y'|\} = \{a,b\}] + 2\zeta\nonumber.
    \end{align}
    Using that $\gamma \geq L \geq \max\{a,b\}$ gives that
    \begin{align}
\E[h_1(x) \cdot y - h_1(x') \cdot y'\ | \ \{ |y|, |y'| \} = \{a,b\} ] \geq -2\zeta\nonumber.
    \end{align}

     Taking expectation over $\{ |y|, |y'| \}$ gives that $\E[h_1(x) \cdot y - h_1(x') \cdot y'] \geq -2\zeta$, as desired.

\end{proof}

\subsection{Conclusion of Proof}\label{subsec:conclude}
We are now ready to start putting everything together.  We first consider the case of binary labels.
\begin{proposition}\label{prop:ftplclassificationbound}
  Let $\F \subset \{\pm 1\}^{\cX}$ be a binary-valued function class and suppose that we are in the smoothed online learning setting, i.e., the conditional distribution of $x_t$ given the history is $\sigma$-smooth with respect to some measure $\mu$ on $\cX$.  Let $\ell(\yhat, y) = - \yhat y$ be indicator loss.  For each $1 \leq t \leq T$ and any $n$, define for any $f \in \F$,
  \begin{equation}
    \hat{\omega}_{t,n}(f) = \frac 1{\sqrt{n}}\sum_{i = 1}^n \gamma_i f(x_i)
  \end{equation}
  where $x_i \sim \mu$ are independent ant $\gamma_i \sim N(0,1)$ are independent standard normal random variables.  Let $f_t \in \F$ such that
  \begin{equation}
    L_{t-1}(f_t) + \eta \hat{\omega}_{t,n}(f_t) \leq \inf_{f \in \F} L_{t-1}(f) + \eta \hat{\omega}_{t,n}(f) + \zeta
  \end{equation}
  Then, if $\vc(\F) \leq d$, we have for $\eta = \sqrt{\frac{T\log(TL/\sigma)}{\sigma}}$ and $n = T/\sqrt{\sigma}$ that the regret satisfies:
  \begin{equation}
    \ee\left[\reg(f_t)\right] \lesssim \erma T + \sqrt{\frac{Td \log(T/\sigma)}{\sigma} }
  \end{equation}
\end{proposition}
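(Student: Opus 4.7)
The plan is to apply the regret decomposition of Lemma \ref{lem:btl}, which bounds
\[
\ee[\reg_T] \leq 2\eta\,\ee\!\Bigl[\sup_{f\in\F}\hat\omega_{1,n}(f)\Bigr] + \sum_{t=1}^T \ee[\ell(f_t(x_t'),y_t') - \ell(f_{t+1}(x_t'),y_t')] + \sum_{t=1}^T \ee[\ell(f_{t+1}(x_t'),y_t') - \ell(f_{t+1}(x_t),y_t)] + \zeta T,
\]
and to bound each of the three sums using machinery already developed in the paper. Note that indicator loss is $L$-Lipschitz with $L = 1$, and since $\F$ is $\{\pm 1\}$-valued, $\|f\|_{L^2(\mu)} = \|f\|_{L^2(\hat\mu_{t,n})} = 1$ for every $f \in \F$, so the lower bound $\inf_f \|f\|_{L^2(\hat\mu_{t,n})} \geq 1/2$ needed for Section \ref{subsec:wasserstein} holds automatically.

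For the perturbation term, write $\sup_f \hat\omega_{1,n}(f) = n^{-1/2}\sup_f \sum_i \gamma_i f(X_{1,i})$, which is $n^{-1/2}$ times a conditional Gaussian complexity. Standard VC bounds give $\gc{n}{\F} \lesssim \sqrt{dn\log n}$, so $\ee[\sup_f \hat\omega_{1,n}] \lesssim \sqrt{d\log n}$, and the perturbation contribution is at most $O(\eta\sqrt{d\log n})$.

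For the stability term, I invoke Lemma \ref{lem:stabilityftplclassification}. Its hypothesis is a $\Delta$-deviation bound between $\|\cdot\|_{L^2(\mu)}^2$ and $\|\cdot\|_{L^2(\hat\mu_{t,n})}^2$; I obtain it via Lemma \ref{lem:f-l2mu} applied at each $t$ with failure probability $1/T^2$, then union bound over $t$ to conclude that $\Delta \lesssim (\sqrt{d\log n}+\sqrt{\log T})/n$ holds simultaneously for all $t$ with probability at least $1-1/T$. On the complement, the trivial per-round bound of $2$ contributes $O(1)$ to the expected regret. Plugging into Lemma \ref{lem:stabilityftplclassification} with $L = 1$ and summing over $t$ produces a stability contribution of order $T\sigma^{-1}\eta^{-1}(\log\eta)\sqrt{d\log n} + T\Delta/\sigma$.

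For the generalization-error term, indicator loss is linear in $\widehat y$, so Lemma \ref{lem:gen-error} applies provided $\eta/\sqrt n \geq 1$, which is verified for the proposed choices since $\eta/\sqrt n = \sqrt{\log(T/\sigma)/\sqrt\sigma} \geq 1$ for $\sigma \leq 1$. Summing the per-round bound $O(\log T\cdot\rc{c_0\sigma n/\log T}{\F}/(\sigma n) + \zeta + n\sigma/T^2)$ over $t$ and using the VC bound $\rc{m}{\F} \lesssim \sqrt{dm}$ gives a contribution of order $T\sqrt{d\log T/(\sigma n)} + \zeta T + n\sigma/T$.

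The last step is to substitute $\eta = \sqrt{T\log(TL/\sigma)/\sigma}$ and $n = T/\sqrt\sigma$ and verify that each piece is $O\bigl(\sqrt{Td\log(T/\sigma)/\sigma}\bigr)$ up to polylogarithmic factors: the perturbation and stability (first) pieces are exactly balanced at $\sqrt{Td\log(T/\sigma)/\sigma}$ by this choice of $\eta$, while the $T\Delta/\sigma$ piece contributes only $\sqrt{d\log n/\sigma}$, the generalization piece contributes $\sqrt{Td\log T/\sqrt\sigma}$, and the stray $n\sigma/T = \sqrt\sigma$ is negligible. The main obstacle is bookkeeping: balancing the $\eta$-dependence between the perturbation $\eta\sqrt{d\log n}$ and the stability $T(\sigma\eta)^{-1}\sqrt{d\log n}$ is precisely what pins down the choice $\eta \asymp \sqrt{T/\sigma}$, but tracking the polylog factors from Lemma \ref{lem:f-l2mu}, the union bound on $\Delta$, and the $\log\eta$ factor in Lemma \ref{lem:stabilityftplclassification} requires care to ensure the final bound collapses to a single $\log(T/\sigma)$ inside the square root as claimed.
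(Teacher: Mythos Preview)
Your proposal is correct and follows essentially the same route as the paper's proof: the Be-the-Leader decomposition of Lemma \ref{lem:btl}, the Gaussian-complexity bound on the perturbation, Lemma \ref{lem:stabilityftplclassification} together with Lemma \ref{lem:f-l2mu} for stability, and Lemma \ref{lem:gen-error} for the generalization-error term, followed by the parameter substitution. Your observation that $\|f\|_{L^2(\hat\mu_{t,n})}=1$ automatically for binary-valued $f$ is a nice simplification over the paper, which instead passes through the $x^\ast$-augmentation of \eqref{eq:infnorm} and a Hoeffding event $\ME_1$ to secure $\inf_f\|f\|_{L^2(\hat\mu_{t,n})}\geq 1/2$; that machinery is only needed in the real-valued case and is redundant here.
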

\begin{proof}
  By Hoeffding's inequality and \eqref{eq:infnorm} for some constant $C > 0$, as long as $n \geq C \log \frac{1}{\delta}$, an i.i.d.~sample $x_1, \ldots, x_n \sim \mu$ contains at least $n/2$ copies of $x^\ast$ with probability $1-\delta$, meaning that $\inf_{f \in \MF} \norm{f}_{L^2(\hat \mu_n)} \geq \frac 12$ with probability $1-\delta$; let this probability $1-\delta$ event be denoted $\ME_1$.

    Further, for a sufficiently large constant $C > 0$, by Lemma \ref{lem:f-l2mu}, with probability $1-\delta$ over the sample $x_1, \ldots, x_n \sim \mu$, it holds that
    \begin{align}
\sup_{f,f' \in \MF} \left| \norm{f-f'}_{L^2(\mu)}^2 - \norm{f-f'}_{L^2(\hat \mu_n)}^2 \right| \leq \Delta_n := C \cdot \left( \frac 1n\gc{n}{\MF} + \sqrt{\frac{\log(1/\delta)}{n}}\right).\label{eq:define-deltan}
    \end{align}
    Let this event (i.e., that \eqref{eq:define-deltan} holds) be denoted $\ME_2$.

    Finally, it holds that with probability $1-\delta$ over the sample $x_1, \ldots, x_n \sim \mu$,
    \begin{align}
\left| \E\left[ \sup_{f \in \MF} \hat \omega_n(f) \right] - \frac 1{\sqrt{n}}\gc{n}{\MF}  \right| \leq C \sqrt{\log\left(\frac 1\delta\right)}\label{eq:omegan-conv}.
    \end{align}
    Let this event (i.e., that \eqref{eq:omegan-conv} holds) be denoted $\ME_3$.

    The event $\ME := \ME_1 \cap \ME_2 \cap \ME_3$ occurs with probability $1-3\delta$; taking $\delta = 1/T$, the contribution to expected regret on the complement of $\ME$ is at most $6 = O(1)$. Thus, it suffices to bound regret in expectation conditioned on the event $\ME$, which is what we proceed to do. 
  
    In particular, we use Lemma \ref{lem:btl} to decompose the regret into three terms:
    \begin{align}
      \ee\left[\reg_T(f_T) | \ME \right] &\leq \zeta T + 2 \eta \left(\frac 1{\sqrt{n}} \gc{n}{\F} + C \sqrt{\log T}\right) + T \max_{t \leq T} \ee\left[\ell(f_t(x_t'), y_t') - \ell(f_{t+1}(x_t'), y_t')\right] \\
      &+ T \max_{t \leq T} \ee\left[\ell(f_{t+1}(x_t'), y_t') - \ell(f_{t+1}(x_t), y_t)\right]
    \end{align}
    We can now apply Lemma \ref{lem:gen-error} and Lemma \ref{lem:stabilityftplclassification} coupled with Lemma \ref{lem:f-l2mu} to control $\Delta$.  In particular, we note that as we assume that $\vc(\F) \leq d$, we have
    \begin{equation}
      \ee\left[\sup_{f \in \F} \hat\omega_{t,n}(f)\right] \lesssim \sqrt{d}, \qquad \frac 1{\sqrt{n}} \gc{n}{\MF} \lesssim \sqrt{d}
    \end{equation}
    Then applying Lemmata  \ref{lem:stabilityftplclassification} and \ref{lem:gen-error} we may conclude that
      \begin{align}
       \ee\left[\sup_{f \in \F} \sum_{t = 1}^T \ell(f_t(x_t), y_t) - \ell(f(x_t), y_t)\right] 
        & \lesssim \erma T + \eta \sqrt{d} + T \cdot \frac{(1+\erma)^3 \log \eta}{\sigma \eta} \cdot \sqrt{d} + \frac{2T\Delta_n}{\sigma}\\
        &\quad+ \frac{T \log T}{\sigma n} \R_{\frac{\sigma n}{\log T}}(\F) + \frac{n \sigma}{T} + \zeta T
      \end{align}
      Now, noting that for any $m \in \mathbb{N}$, the assumption that $\vc(\F) \leq d$ implies that
      \begin{equation}
        \frac 1m \R_m(\F) \lesssim \sqrt{\frac dm}
      \end{equation}
      we get
      \begin{align}
        \ee\left[\reg_T(f_t)\right] & \lesssim \erma T + \eta \sqrt{d} + T \cdot \frac{(1+\erma)^3 \log \eta}{\sigma \eta} \cdot \sqrt{d} + \frac{2T\Delta_n}{\sigma}\\
        &\quad+ \sqrt{\frac{d T \log T}{\sigma n}} + \frac{n \sigma}{T} + \zeta T
      \end{align}
      We may choose $\eta = \sqrt{\frac{T\log(TL/\sigma)}{\sigma}}$ and $n = T/\sqrt{\sigma}$ to get a regret bound
      \begin{equation}
        \ee\left[\reg(f_t)\right] \lesssim \erma T + \sqrt{\frac{Td \log(T/\sigma)}{\sigma} }
      \end{equation}
      concluding the proof.
\end{proof}
Note that Proposition \ref{prop:ftplclassificationbound} suffices to prove Theorem \ref{thm:ftplclassification} in the case of binary values.

We now turn to the more challenging case of arbitrary labels.  To understand the difficulty, note that Lemma \ref{lem:gen-error} requires that the loss be linear.  If we assume that the labels $y_t$ are drawn in some smooth manner from a distribution $q_t(\cdot | x_t)$ so that the pair $(x_t, y_t) \sim \widetilde{p}_t$ with $\widetilde{p}_t$ being $\sigma$-smooth with respect to a distribution $\widetilde{\mu}$ on $\cX \times [-1,1]$, then we can reduce to the linear case by replacing $\F : \cX \to [-1,1]$ by $\ell \circ \F: \cX \times [-1,1] \to [-1,1]$ with functions in $\ell \circ \F$ consisting of maps of the form $(x, y) \mapsto \ell(f(x), y)$ for any $f \in \F$.  In the following result, we make use of this observation to bound the regret in the smoothed label setting, for arbitrary loss functions.
\begin{proposition}\label{prop:basesmoothregret}
  Let $\F$ be a function class mapping $\cX \to [-1,1]$ and suppose we are in the smoothed online learning setting with smooth labels, i.e., suppose that for all $t$, the adaptive adversary chooses a distribution $\widetilde{p}_t$ on $\cX \times [-1,1]$, $\sigma$ smooth with respect to some distribution $\widetilde{\mu}$, and samples $(x_t, y_t) \sim \widetilde{p}_t$.  Let $\ell: [-1,1] \times [-1,1] \to [-1,1]$ be a loss function that is $L$-Lipschitz in the first argument.  For each $1 \leq t \leq T$ and any $n$, let
  \begin{align}
    \hat{\omega}_{t,m}(f) =  \frac 1{\sqrt m}\sum_{i = 1}^m \gamma_i f(x_i)  && \hat{\omega}_{t,n}'(f) = \sum_{j = 1}^n \gamma_j' \ell(f(x_j'), y_j')
  \end{align}
  where $x_i \sim \mu$ and $(x_j', y_j') \sim \widetilde{\mu}$ are independent and $\gamma_i, \gamma_j' \sim N(0,1)$ are independent standard normal random variables.  Let $f_t \in \F$ such that
  \begin{equation}
    L_{t-1}(f_t) + \eta \hat{\omega}_{t,m}(f_t) + \hat{\omega}_{t,n}(f) \leq \inf_{f \in \F} L_{t-1}(f) + \eta \hat{\omega}_{t,n}(f) + \hat{\omega}_{t,n}(f) + \zeta \label{eq:ftplsmoothdef}
  \end{equation}
  Then,
  \begin{align}
    \ee\left[\reg_T(f_t)\right] &\lesssim \left(\frac L{\sqrt{m}} \gc{m}{\F} + L \gc{n}{\F} + \sqrt{\log T}\right) \left(2 \eta + T \frac{(1 + \zeta)^3 \log \eta}{\sqrt{\eta \sigma}}\right) \\
    &\quad + \frac{L^2T \log T}{\sigma n} \R_{\frac{\sigma n}{\log T}}(\F) + \frac{n \sigma}{T} + \zeta T \label{eq:basesmoothregret}
  \end{align}
  In particular, if $\vc(\F, \delta) \lesssim \delta^{-p}$ for $p < 2$, we may choose $\eta = T^{2/3} \sigma^{-1/3}$ and $n = T / \sigma$ to get
  \begin{equation}
    \ee\left[\reg_T(f_t)\right] \lesssim T^{\frac 23} \sigma^{- \frac 13} \log\left(\frac T\sigma\right)
  \end{equation}
\end{proposition}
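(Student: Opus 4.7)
The plan is to extend the argument of Proposition \ref{prop:ftplclassificationbound} from binary labels to the real-valued smooth-label setting, by treating the two-perturbation process $\Omega_t(f) := \eta \hat\omega_{t,m}(f) + \hat\omega'_{t,n}(f)$ in a unified manner and by reducing the generalization-error analysis to the linear-loss setting on the composite class $\G := \{(x,y) \mapsto \ell(f(x), y) : f \in \F\}$.

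Since independent Gaussian processes sum to a Gaussian process, $\Omega_t$ is itself a centered Gaussian process, and the Be-the-Leader decomposition of Lemma \ref{lem:btl} applies verbatim with $\eta \hat\omega_{t,n}$ replaced by $\Omega_t$. This produces a perturbation-size term $2\ee[\sup_f \Omega_1(f)]$ (bounded by $2\eta \cdot \tfrac{1}{\sqrt m} \gc{m}{\F} + 2L \gc{n}{\F} + \sqrt{\log T}$ using subadditivity of suprema, contraction for the $\hat\omega'_{1,n}$ piece, and Gaussian concentration for the deviation term), a stability term $\sum_t \ee[\ell(f_t(x_t'), y_t') - \ell(f_{t+1}(x_t'), y_t')]$, and a generalization-error term $\sum_t \ee[\ell(f_{t+1}(x_t'), y_t') - \ell(f_{t+1}(x_t), y_t)]$.

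For the stability term, I would first perform the $x^\ast$-augmentation of $\cX$ from the start of Appendix \ref{subsec:wasserstein}, enlarging $\F$ so that $\inf_f \|f\|_{L^2(\hat\mu_{t,m})} \geq 1/2$, and then apply (an extension of) Lemma \ref{lem:stabilityftplregression} to $\Omega_t$. The anti-concentration argument driving Lemma \ref{lem:stabilityftplprob} only relies on $\Omega_t$ being a centered Gaussian process whose covariance admits a nontrivial lower bound on $\|f\|^2$, a property preserved under independently mixing in $\hat\omega'_{t,n}$. This yields a per-step contribution of order $\tfrac{(L+2\zeta)^3 \log \eta}{\sqrt{\sigma \eta}} \cdot \ee[\sup_f \Omega_1(f)]$. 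For the generalization-error term, I would recast the problem on $\G$ with identity loss $\ell'(g, z) := g(z)$, which is $1$-Lipschitz and linear in $g$. The samples $z_t := (x_t, y_t) \sim \widetilde p_t$ are $\sigma$-smooth with respect to $\widetilde\mu$ by hypothesis, and $\hat\omega'_{t,n}(f) = \sum_j \gamma_j' g_f(z_j')$ is precisely the canonical Gaussian perturbation on $\G$ required by Lemma \ref{lem:gen-error}. Applying that lemma on $\G$ (with Lipschitz constant $1$) and converting $\rc{m}{\G} \lesssim L \cdot \rc{m}{\F}$ via Ledoux--Talagrand contraction produces the $\tfrac{L^2 T \log T}{\sigma n} \rc{\sigma n / \log T}{\F} + \tfrac{n \sigma}{T}$ contribution.

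Summing the three pieces recovers \eqref{eq:basesmoothregret}; the parametric rate for $\vc(\F, \delta) \lesssim \delta^{-p}$ with $p < 2$ then follows by plugging in Dudley's bound $\gc{n}{\F}, \rc{n}{\F} \lesssim \sqrt n$ and balancing $\eta \sim T^{2/3} \sigma^{-1/3}$, $n \sim T/\sigma$. The main obstacle is the stability step: extending the density-ratio computation of Lemma \ref{lem:stabilityftplprob} to the combined process $\Omega_t$ requires showing that conditioning on $\Omega_t(f) = y$ and shifting by $\tfrac{4(L+2\zeta)}{\alpha^2 \|f\|^2}$ still yields the $O(1/(\eta^2 \alpha^4))$ decay in the probability that the near-optimizer set $A_t$ is nonempty; this works because only the $\eta \hat\omega_{t,m}$ piece contributes an $\eta$-dependence to the conditional variance, while the mixed covariance with $\hat\omega'_{t,n}$ only strengthens the lower bound on $\|f\|^2$. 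A secondary subtlety worth noting is that the Lemma \ref{lem:gen-error} reduction only requires joint $\sigma$-smoothness of $(x_t, y_t) \sim \widetilde p_t$ and makes no marginal smoothness assumption on $y_t$, which matches the hypothesis exactly.
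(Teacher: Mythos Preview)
Your decomposition via Lemma \ref{lem:btl} and your treatment of the generalization-error term (passing to the composite class $\G = \ell \circ \F$ with identity loss and invoking Lemma \ref{lem:gen-error}) match the paper. The difference lies in the stability term, and here your approach takes a harder route than necessary.

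You propose to treat the combined process $\Omega_t = \eta \hat\omega_{t,m} + \hat\omega'_{t,n}$ as a single Gaussian process and extend the anti-concentration argument of Lemma \ref{lem:stabilityftplprob} to it. This is not straightforward: the covariance of $\Omega_t$ is $\eta^2 \langle f,g\rangle_{L^2(\hat\mu_m)} + \langle \ell\circ f, \ell\circ g\rangle_{L^2(\hat{\widetilde\mu}_n)}$, so when you condition on $\Omega_t(f) = y$, the induced mean shift on $g$ involves the \emph{ratio} of this mixed inner product to the mixed norm of $f$. The step where $\|f-g\|_{L^2(\mu)} > \alpha$ forces $\beta(g) \geq 2(L+2\zeta)$ relies on the covariance being exactly $\langle f,g\rangle_{L^2(\mu)}$; with the mixed covariance, the gap $\|f-g\|_{L^2(\mu)} > \alpha$ does not directly control $\langle \ell\circ f, \ell\circ g\rangle$, and your claim that the second piece ``only strengthens the lower bound on $\|f\|^2$'' does not by itself give the needed inequality.

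The paper sidesteps all of this by a conditioning trick. Since $\hat\omega_{t,m}$ and $\hat\omega'_{t,n}$ are independent, one may couple $f_t$ and $f_{t+1}$ so that they share the \emph{same} realization of both processes, then condition on $\hat\omega'_{t,n}$ and absorb it into a modified cumulative loss $\widetilde L_{t}(f) := L_t(f) + \hat\omega'_{t,n}(f)$. Lemma \ref{lem:stabilityftplregression} then applies verbatim to the single-process perturbation $\eta\hat\omega_{t,m}$, with no modification of the anti-concentration argument. Symmetrically, for the generalization-error term the paper re-couples so that $\hat\omega_{t,m} = \hat\omega_{t+1,m}$ while $\hat\omega'_{t,n}$ and $\hat\omega'_{t+1,n}$ are drawn independently, conditions on $\hat\omega_{t,m}$, and applies Lemma \ref{lem:gen-error} to $\hat\omega'_{t,n}$ alone. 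The freedom to use \emph{different} couplings for the two terms (since only marginals matter in expectation) is what makes the two-perturbation architecture work cleanly; your unified-$\Omega_t$ framing obscures this and forces you into the harder analysis.
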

\begin{proof}
  The proof proceeds in a similar manner to that of Proposition \ref{prop:ftplclassificationbound} except we replace $x$ by $(x, y)$, $\F$ by $\ell \circ \F$ and $\ell$ by the identity.  More formally, let $\ell \circ \F = \left\{(x,y) \mapsto \ell(f(x), y) | f \in \F\right\}$ and note that \eqref{eq:define-deltan}, \eqref{eq:omegan-conv}, and $\inf_{f \in \F} \norm{\ell \circ f}_{L^2(\hat{\mu}_n)} \geq \frac 12$ all hold with probability at least $1 - 4\delta$, just as in the proof of the earlier proposition.  Letting $\delta = T^{-2}$, let $\ME$ denote the event that all of these hold, i.e., for all $1 \leq t \leq T$,
  \begin{align}
    \sup_{f, f' \in \F} \abs{\norm{\ell \circ f - \ell \circ f'}_{L^2(\hat{\mu}_n)}^2 - \norm{\ell \circ f - \ell \circ f'}_{L^2(\mu)}^2} &\leq \Delta_n := C \left(\frac 1n \gc{n}{\ell \circ \F} + \sqrt{\frac{\log T}{n}}\right) \\
    \sup_{f, f' \in \F} \abs{\norm{f - f'}_{L^2(\hat{\mu}_m)}^2 - \norm{ f -  f'}_{L^2(\mu)}^2} &\leq \Delta_m \\
    \abs{\ee\left[\sup_{f \in \F} \hat{\omega}_{t,m}(f)\right] - \frac 1{\sqrt{n}}\gc{n}{\F}} 
    &\leq C \sqrt{\log T} \\
    \abs{\ee\left[\sup_{f \in \F} \hat{\omega}_{t,n}(f)\right] - \frac 1{\sqrt{n}}\gc{n}{\ell \circ \F}} &\leq C \sqrt{\log T}  \\
    \inf_{f \in \F} \norm{\ell \circ f}_{L^2(\hat{\mu}_n)} \geq \frac 12
  \end{align}
  The expected regret on the complement of $\ME$ is at most $4 T \delta \leq 4$ by boundedness of the loss.  We may now apply Lemma \ref{lem:btl} to get
  \begin{align}
    \ee\left[\reg_T(f_t); \ME\right] &\lesssim \ee\left[\sup_{f \in \F} \hat{\omega}_{t,m}(f); \ME\right] + \ee\left[\sup_{f \in \F} \hat{\omega}_{t,n}'(f); \ME\right] \\
    &\quad +T \max_{1 \leq t \leq T} \ee\left[\ell(f_{t}(x_t'), y_t') - \ell(f_{t+1}(x_t'), y_t');\ME\right] \\
    &\quad + T \max_{1 \leq t \leq T} \ee\left[\ell(f_{t+1}(x_t'), y_t') - \ell(f_{t+1}(x_t), y_t); \ME\right]
  \end{align}
  The first two terms are bounded by the restriction to $\ME$.  To control the third term, we consider a coupling where $\hat{\omega}_{t,n}' = \hat{\omega}_{t+1,n}'$ and $\hat{\omega}_{t,m} = \hat{\omega}_{t+1,m}$ and note that by the independence of $\hat{\omega}_{t,n}'$ and $\hat{\omega}_{t,m}$, we may condition on the value of the former and let
  \begin{equation}
    \widetilde{L}_{t}(f) = L_t(f) + \hat{\omega}_{t,n}'(f)
  \end{equation}
  We may then apply Lemma \ref{lem:stabilityftplregression} to the resulting expression and get
  \begin{align}
    T \max_{1 \leq t \leq T} \ee\left[\ell(f_{t}(x_t'), y_t') - \ell(f_{t+1}(x_t'), y_t');\ME\right] \lesssim T\frac{(L + 2 \zeta)^3 \log \eta}{\sqrt{\sigma \eta}} \ee\left[\sup_{f \in \F} \hat{\omega}_{t,m}(f)\right] + 4 L T \sqrt{\frac{\Delta_m}{\sigma}}
  \end{align}
  Note that this is further controlled using $\ME$ to bound the expected supremum of $\hat{\omega}_{t,m}$.

  To take care of the last term, we consider a coupling where $\hat{\omega}_{t,m} = \hat{\omega}_{t+1,m}$ but $\hat{\omega}_{t,n}'$ and $\hat{\omega}_{t+1,n}'$ are independent.  We may now condition on $\hat{\omega}_{t,m}$ as in the previous paragraph and apply Lemma \ref{lem:gen-error} to get
  \begin{align}
    T \max_{1 \leq t \leq T} \ee\left[\ell(f_{t+1}(x_t'), y_t') - \ell(f_{t+1}(x_t), y_t); \ME\right] \lesssim 4 L T \frac{\log T}{\sigma n} \R_{\frac{\sigma n}{2 \log T}}(\ell \circ \F) + \zeta T + \frac{2 L n \sigma}{T}
  \end{align}
  To conclude, we apply contraction to note that for all $k \in \mathbb{N}$,
  \begin{align}
    \gc{m}{\ell \circ \F} \leq L \gc{m}{\F} && \R_m(\ell \circ \F) \leq L \R_m(\F)
  \end{align}
  This proves the first statement.

  To prove the second statement, note that if $\vc(\F, \delta) \ll \delta^{-2}$, then
  \begin{align}
    \max\left(\R_k(\F), \gc{k}{\F}\right) \lesssim \sqrt{k} \label{eq:parametricrate}
  \end{align}
  The result then follows by a direct computation.
\end{proof}
While Proposition \ref{prop:basesmoothregret} attains no-regret, the assumption that the labels $y_t$ are drawn in a smoothed manner is much stronger than desired.  In order to mitigate this issue, we apply a discretization scheme.
\begin{proposition}\label{prop:generalftplregret}
  Let $\F$ be a function class mapping $\cX \to [-1,1]$ and suppose that we are in the smoothed online learning setting, where $x_t \sim p_t$ are drawn from a distribution that is $\sigma$-smooth with respect to $\mu$.  Suppose that $\ell: [-1,1] \times [-1,1] \to [-1,1]$ is a loss function that is $L$-Lipschitz in \emph{both} arguments.  Consider the following processes:
  \begin{align}
    \hat{\omega}_{t,m}(f) = \frac 1{\sqrt{n}} \sum_{ i = 1}^m \gamma_i f(x_i) && \hat{\omega}_{t,n}'(f) = \sum_{j = 1}^n \gamma_j' \ell(f(x_j'), y_j')
  \end{align}
  where $x_i, x_j' \sim \mu$, $\gamma_i \sim N(0,1)$ and $y_j'$ are uniform on $[-1,1] \cap \epsilon \mathbb{Z}$ for some fixed $\epsilon > 0$.  Suppose that $f_t$ is chosen such that
  \begin{equation}
    L_{t-1}(f_t) + \eta \hat{\omega}_{t,m}(f_t) + \hat{\omega}_{t,n'}(f_t)  \leq \inf_{f \in \F} L_{t-1}(f) + \eta \hat{\omega}_{t,m}(f) + \hat{\omega}_{t,n'}(f) + \zeta
  \end{equation}
  Then 
  \begin{align}
    \ee\left[\reg_T(f_t)\right] &\lesssim \left(\frac L{\sqrt{m}} \gc{m}{\F} + L \gc{n}{\F} +  \sqrt{\log T}\right) \left(2 \eta + T \frac{(1 + \zeta)^3 \log \eta}{\sqrt{\eta \sigma}}\right) \\
    &\quad + \frac{L^2T \log T}{\epsilon \sigma n} \R_{\frac{\epsilon \sigma n}{\log T}}(\F) + \frac{\epsilon n \sigma }{T} + (\zeta + L \epsilon) T
  \end{align}
\end{proposition}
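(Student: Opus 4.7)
The plan is to reduce to Proposition \ref{prop:basesmoothregret} by discretizing the labels onto an $\epsilon$-grid, while carefully exploiting the fact that its stability argument uses only the $x_t$-marginal smoothness $\sigma$, whereas its generalization-error argument uses the joint smoothness of the context-label pair. This way we avoid paying the $\epsilon$ price twice.

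For each $t$, let $\tilde y_t \in \epsilon\mathbb{Z}\cap[-1,1]$ denote the nearest grid point to $y_t$. Since $\ell$ is $L$-Lipschitz in its second argument, replacing $y_t$ by $\tilde y_t$ uniformly in $f$ costs at most $2L\epsilon T$ in expected regret. Set $\widetilde\mu := \mu\otimes\unif(\epsilon\mathbb{Z}\cap[-1,1])$ and let $N := |\epsilon\mathbb{Z}\cap[-1,1]|\le 3/\epsilon$. Writing $\tilde p_t$ for the conditional law of $(x_t,\tilde y_t)$ given the history and using that $p_t \in \p(\sigma,\mu)$, for any measurable $A \subset \cX\times(\epsilon\mathbb{Z}\cap[-1,1])$ we compute
\begin{equation}
\tilde p_t(A) \le \tfrac{1}{\sigma}\,\mu(\{x:(x,\tilde y_t(x))\in A\}) \le \tfrac{N}{\sigma}\,\widetilde\mu(A),
\end{equation}
so $\tilde p_t \in \p(\epsilon\sigma/3,\widetilde\mu)$, while its $x$-marginal is still $\sigma$-smooth with respect to $\mu$. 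The perturbation samples $(x_j',y_j')$ in the definition of $\hat\omega_{t,n}'$ are exactly i.i.d. draws from $\widetilde\mu$, so the FTPL iterates for the discretized problem coincide with those defined in \eqref{eq:ftplsmoothdef}.

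We now re-derive the bound of Proposition \ref{prop:basesmoothregret} while keeping the two smoothness scales $\sigma$ and $\epsilon\sigma$ separate. Lemma \ref{lem:btl} decomposes the expected regret into a perturbation term, a stability term, and a generalization-error term. The perturbation term is bounded by the displayed Gaussian complexities via Lemma \ref{lem:f-l2mu}, tail estimates for suprema of Gaussian processes, and the Rademacher-versus-Gaussian comparison. For the stability term, condition on $\hat\omega_{t,n}'$, absorb it into the shifted cumulative loss $\widetilde L_{t-1}(f) := L_{t-1}(f)+\hat\omega_{t,n}'(f)$, and apply Lemma \ref{lem:stabilityftplregression} to the residual Gaussian process $\eta\hat\omega_{t,m}$; because this lemma uses only the $\sigma$-smoothness of the $x_t$-marginal (and not of the pair), it contributes the factor $T(1+\zeta)^3\log\eta/\sqrt{\eta\sigma}$ with the original $\sigma$. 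For the generalization-error term, instead condition on $\hat\omega_{t,m}$ and apply Lemma \ref{lem:gen-error} to the discretized pair $(x_t,\tilde y_t)$ viewed as the ``context'' in $\cX\times(\epsilon\mathbb{Z}\cap[-1,1])$ with linear objective $\ell(f(\cdot),\cdot)$; the joint smoothness $\epsilon\sigma/3$ produces the Rademacher term $L^2 T\log T/(\epsilon\sigma n)\cdot\R_{\epsilon\sigma n/\log T}(\F)$ (after a contraction to pass from $\ell\circ\F$ back to $\F$) and the residual $\epsilon n\sigma/T$. Summing with the discretization penalty $2L\epsilon T$ and the oracle tolerance $\zeta T$ gives the claim.

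The main obstacle is verifying that Lemma \ref{lem:gen-error} extends to the discretized regime. The lemma is stated and proved for pairs $(x_t,\sign(y_t))$ with reference measure $\mu\otimes\unif(\{-1,1\})$ and with the linear loss $\ell(f(x),y)=yf(x)$, whereas we need it for pairs $(x_t,\tilde y_t)$ with reference $\widetilde\mu$ and with the augmented ``function'' $(x,y)\mapsto\ell(f(x),y)$ acting as a linear objective on the new context space. The binary/sign structure enters its proof only through the invocation of Lemma \ref{lem:variant-coupling} (which applies to any smooth distribution relative to any reference) and the exchange argument of Lemma \ref{lem:switch-sample} (whose hypothesis on the perturbation coefficient is preserved provided $\eta/\sqrt{n}\gtrsim L$, matching the setting here once we condition on the event $|\gamma_j'|\gtrsim 1$ produced by the thresholding trick). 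With the perturbation labels $y_j'$ drawn uniformly on the grid rather than as signs of Gaussians and the smoothness constant replaced by $\epsilon\sigma/3$, both steps go through essentially verbatim; the resulting Rademacher complexity is that of $\ell\circ\F$, which reduces to $L\cdot\R_{\cdot}(\F)$ by the contraction principle, producing the stated bound.
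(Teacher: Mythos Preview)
Your overall strategy matches the paper's: discretize labels, observe that $(x_t,\tilde y_t)$ is $\Theta(\epsilon\sigma)$-smooth with respect to $\mu\otimes\unif(S^\epsilon)$, decompose via Lemma~\ref{lem:btl}, handle stability with only the $x$-marginal smoothness $\sigma$, and handle generalization error with the joint smoothness $\epsilon\sigma$. That part is correct.

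There is, however, a real gap in how you reconcile the algorithm with the discretized analysis. You write that ``the FTPL iterates for the discretized problem coincide with those defined in \eqref{eq:ftplsmoothdef},'' but this is false: the algorithm's cumulative loss $L_{t-1}(f)=\sum_{s<t}\ell(f(x_s),y_s)$ uses the \emph{true} labels, not $\tilde y_s$. If you discretize all past labels at once, $f_t$ is only a $(\zeta+2L\epsilon(t-1))$-approximate minimizer of the fully discretized objective, and propagating that through the regret bound yields an extra $L\epsilon T^2$ term, which destroys the result. The paper avoids this by discretizing \emph{only the step-$t$ label}: since $|\ell(f(x_t),y_t)-\ell(f(x_t),y_t^\epsilon)|\le L\epsilon$ uniformly in $f$, the actual $f_{t+1}$ is a $(\zeta+2L\epsilon)$-approximate minimizer of $L_{t-1}(f)+\ell(f(x_t),y_t^\epsilon)+\text{perturbations}$, with $L_{t-1}$ left untouched. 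Lemma~\ref{lem:gen-error} only cares about the step-$t$ sample and treats $L_{t-1}$ as a fixed function, so this per-step adjustment is enough and costs only the additive $L\epsilon T$ you want.

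Relatedly, your ``main obstacle'' is misidentified. You spend the last paragraph arguing that Lemma~\ref{lem:gen-error} must be extended from sign-valued labels to grid-valued labels. No such extension is needed: once the current label is discretized and $\zeta$ is replaced by $\zeta+2L\epsilon$, you are exactly in the setting of Proposition~\ref{prop:basesmoothregret} (context $(x,y^\epsilon)$, class $\ell\circ\F$, linear loss), and Lemma~\ref{lem:gen-error} applies as-is via the same reduction used there. The paper's proof is accordingly a few lines long; the work you describe in that paragraph is unnecessary.
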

\begin{proof}
  Let $S^\epsilon = \epsilon \mathbb{Z} \cap [-1,1]$ and, for any $y \in [-1,1]$, let $y^\epsilon$ be the projection of $y$ into $S^\epsilon$.  By assumption, we have $\abs{\ell(\cdot, y) - \ell(\cdot, y^\epsilon)} \leq L \epsilon$.  The key observation is that $(x_t, y_t^\epsilon)$ is $(\epsilon \sigma / 2)$-smooth with respect to $\mu \otimes \unif(S^\epsilon)$ by the fact that $\abs{S^\epsilon} \leq 2/\epsilon$.  We may now apply Lemma \ref{lem:btl} and note that the first term, the magnitude of the perturbation, is unchanged.  For the second term, we note that
  \begin{equation}
    \ee\left[\ell(f_t(x_t), y_t) - \ell(f_{t+1}(x_t'), y_t')\right] \leq \ee\left[\ell(f_t(x_t), y_t^\epsilon) - \ell(f_{t+1}(x_t'), (y_t')^\epsilon)\right] + 2 L \epsilon
  \end{equation}
  which is in turn controlled by Lemma \ref{lem:stabilityftplregression} by the same reasoning as the analogous statement in the proof of Proposition \ref{prop:basesmoothregret}.  To bound the generalization error, we note that, again by the Lipschitz assumption,
  \begin{equation}
    \ee\left[\ell(f_{t+1}(x_t'), y_t') - \ell(f_{t+1}(x_t), y_t)\right] \leq \ee\left[\ell(f_{t+1}(x_t'), (y_t')^\epsilon) - \ell(f_{t+1}(x_t), y_t^\epsilon)\right] + 2 L \epsilon \label{eq:discretization1}
  \end{equation}
  Now, note that
  \begin{align}
    L_{t-1}(f_{t+1}) + \ell(f_{t+1}(x_t), y_t^\epsilon) + \eta \hat{\omega}_{t+1,n}(f_{t+1}) &\leq L_{t-1}(f_{t+1}) + \ell(f_{t+1}(x_t), y_t) + \eta \hat{\omega}_{t+1,n}(f_{t+1}) + L \epsilon \\
     &\leq \inf_{f \in \F} L_{t-1}(f) + \ell(f(x_t), y_t) + \eta \hat{\omega}_{t+1,n}(f) + \zeta +  L \epsilon \\
    &\leq \inf_{f \in \F} L_{t-1}(f) + \ell(f(x_t), y_t^\epsilon) + \eta \hat{\omega}_{t+1,n}(f) + \zeta +  2 L \epsilon
  \end{align}
  Noting again that $(x_t, y_t^\epsilon)$ is $(\epsilon \sigma / 2)$-smooth with respect to $\mu \otimes \unif(S^\epsilon)$, we apply Lemma \ref{lem:gen-error}, adjusting $\zeta$ to $\zeta + 2 L \epsilon$, to get
  \begin{equation}
    \ee\left[\ell(f_{t+1}(x_t'), (y_t')^\epsilon) - \ell(f_{t+1}(x_t), y_t^\epsilon)\right]  \leq 8 \frac{\log T}{c_0 \sigma \epsilon n} \R_{c_0 \sigma \epsilon n / (4 \log T)}(\F) + \frac{n \sigma \epsilon}{T^2} + 2 \zeta + 2 L \epsilon
  \end{equation}
  after noting that $\abs{S^\epsilon} \leq \frac 2\epsilon$.  Combining this with \eqref{eq:discretization1} gives
  \begin{equation}
    \ee\left[\ell(f_{t+1}(x_t'), y_t') - \ell(f_{t+1}(x_t), y_t)\right]  \leq 4 \frac{\log T}{c_0 \sigma \epsilon n} \R_{c_0 \sigma \epsilon n / (2 \log T)}(\F) + \frac{2 n \sigma \epsilon}{T^2} + 2 \zeta + 4 L \epsilon
  \end{equation}
  Plugging back in to Lemma \ref{lem:btl} concludes the proof.
\end{proof}
As a corollary, we have the following bounds.
\begin{corollary}\label{cor:ftplparameters}
  Suppose we are in the setting of Proposition \ref{prop:generalftplregret} and, furthermore, $\vc(\F, \delta) \lesssim \delta^{-p}$ for some $p < 2$.  Then if $\eta = T^{2/3} \sigma^{-1/3}$, $n = \sqrt{T/\sigma}$, and $\epsilon = T^{- 1/3}$, we have
  \begin{equation}
    \ee\left[\reg_T(f_t)\right] \lesssim T^{\frac 23} \sigma^{- \frac 13} + \zeta T
  \end{equation}
  If $p \geq 2$, we may choose $n = T$, $\epsilon = (\sigma T)^{- \frac 1{p+1}}$, and $\eta = T^{\frac 2p}$ to yield $\ee\left[\reg_T(f_T)\right] = o(T)$.
\end{corollary}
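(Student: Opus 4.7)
The proof is essentially a bookkeeping exercise: we substitute the specified values of $\eta$, $n$, and $\epsilon$ into the regret bound of Proposition~\ref{prop:generalftplregret} and verify, term-by-term, that each contribution is dominated by the claimed rate.

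For the case $p < 2$, we use the ``parametric-rate'' bounds $\gc{k}{\F}, \R_k(\F) \lesssim \sqrt{k}$ established in \eqref{eq:parametricrate} (a consequence of $\vc(\F,\delta) \lesssim \delta^{-p}$ with $p<2$ via Dudley's entropy integral). The perturbation-size factor $\frac{L}{\sqrt m}\gc{m}{\F} + L\gc{n}{\F} + \sqrt{\log T}$ is then $O(L + L\sqrt{n} + \sqrt{\log T})$; with $n = \sqrt{T/\sigma}$ this is $O(L(T/\sigma)^{1/4})$, which (for $T\sigma \geq 1$) is at most $O(L T^{2/3}\sigma^{-1/3})$. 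The stability factor $2\eta + T(1+\zeta)^3\log\eta/\sqrt{\eta\sigma}$ is minimized, up to logarithms, exactly at $\eta = T^{2/3}\sigma^{-1/3}$, where both summands are of order $T^{2/3}\sigma^{-1/3}\log T$. Combined, the first two terms of the bound are $O(T^{2/3}\sigma^{-1/3}\log T)$ up to the additive $\zeta T$. The generalization-error term $\frac{L^2 T\log T}{\epsilon\sigma n}\R_{\epsilon\sigma n/\log T}(\F) \lesssim L^2 T \sqrt{\log T}/\sqrt{\epsilon \sigma n}$ and the $\epsilon n \sigma / T$ term are routine to check, as is the discretization contribution $(\zeta + L\epsilon)T = \zeta T + L T^{2/3}$. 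The choice $\epsilon = T^{-1/3}$ ensures the latter fits within the claimed rate.

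For $p \geq 2$, the parametric rate is no longer available; instead we use the non-parametric bound $\R_k(\F), \gc{k}{\F} \lesssim k^{1-1/p}$, which again follows from Dudley's integral combined with the covering-number estimate of Theorem~\ref{thm:rudelson}. Plugging this into Proposition~\ref{prop:generalftplregret}, with $n = T$, $\eta = T^{2/p}$, and $\epsilon = (\sigma T)^{-1/(p+1)}$, one checks that every term is of the form $T^{1-\alpha}$ for some $\alpha = \alpha(p) > 0$ (up to polylogarithmic factors), so the regret is $o(T)$. For instance, the discretization term $L\epsilon T = L T^{1 - 1/(p+1)} \sigma^{-1/(p+1)}$ is of this form, and the generalization-error term scales comparably after substituting the non-parametric Rademacher bound.

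The main obstacle is simply the bookkeeping: the bound in Proposition~\ref{prop:generalftplregret} is a sum of five qualitatively different contributions (perturbation size, stability, generalization error, the auxiliary $\epsilon n\sigma/T$, and the discretization term $(\zeta + L\epsilon) T$), and the chosen parameters are precisely those that balance them. The delicacy is greatest in the $p \geq 2$ regime, where the nonparametric generalization term $\epsilon^{-1}(\epsilon\sigma n)^{-1/p}\cdot T\log T$ must be traded against the discretization error $L\epsilon T$; the prescription $\epsilon = (\sigma T)^{-1/(p+1)}$ equalizes these (up to logarithms), and $\eta = T^{2/p}$ then minimizes the remaining stability factor subject to the perturbation constraint.
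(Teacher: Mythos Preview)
Your approach is exactly the paper's: both proofs simply invoke Proposition~\ref{prop:generalftplregret} together with the complexity bounds $\gc{k}{\F},\R_k(\F)\lesssim\sqrt{k}$ for $p<2$ (equation \eqref{eq:parametricrate}) and $\lesssim k^{1-1/p}$ for $p\ge 2$, and declare the rest to be computation. The paper gives no further detail than that.

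However, the detailed bookkeeping you supply does not actually verify the claim. In the $p<2$ case you correctly bound the first factor by $L(T/\sigma)^{1/4}$ and the second factor by $T^{2/3}\sigma^{-1/3}\log T$, but then assert that their \emph{product} is $O(T^{2/3}\sigma^{-1/3}\log T)$; in fact the product is $L\,T^{11/12}\sigma^{-7/12}\log T$, which is strictly larger. Likewise, the generalization-error term you call ``routine'' is not: with $n=\sqrt{T/\sigma}$ and $\epsilon=T^{-1/3}$ one has $\epsilon\sigma n = T^{1/6}\sigma^{1/2}$, so $T\sqrt{\log T}/\sqrt{\epsilon\sigma n}=T^{11/12}\sigma^{-1/4}\sqrt{\log T}$, again exceeding $T^{2/3}\sigma^{-1/3}$. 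A similar issue arises in the $p\ge 2$ case: with $n=T$ the first factor is $\sim L\,T^{1-1/p}$ and the second is $\gtrsim T^{1-1/p}\sigma^{-1/2}$, so their product is $\gtrsim T^{2-2/p}\sigma^{-1/2}\ge T$ for all $p\ge 2$.

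The underlying issue is that the bound in Proposition~\ref{prop:generalftplregret}, taken literally as a product, is looser than what its proof actually yields: tracing through Lemmas~\ref{lem:btl} and~\ref{lem:stabilityftplregression}, the contribution $L\gc{n}{\F}$ from the second perturbation enters \emph{additively} (it arises from the be-the-leader term for $\hat\omega'_{t,n}$, which carries no $\eta$), not multiplicatively against $\eta + T/\sqrt{\eta\sigma}$. To make your argument rigorous you should work with that sharper additive decomposition rather than the displayed product form, and revisit the choice of $n$ (e.g., $n\asymp T/\sigma$ as in Proposition~\ref{prop:basesmoothregret}) so the generalization term lands at $T^{2/3}$.
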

\begin{proof}
  The first statement follows immediately from Proposition \ref{prop:generalftplregret} and \eqref{eq:parametricrate}.

  The second statement holds by direct computation and the fact that for all $k$,
  \begin{equation}
      \max\left(\gc{k}{\F}, \R_k(\F)\right) \lesssim k^{1 - \frac 1p}
  \end{equation}
\end{proof}
We see that Corollary \ref{cor:ftplparameters} contains Theorem \ref{thm:generalftpl}.

Finally, at the cost of a slightly worse regret bound, we may simplify the algorithm by considering a single perturbation.  Note that in Corollary \ref{cor:ftplparameters}, we may tune $\eta$ and $n$ independently because we have two distinct perturbations.  In the case where the perturbations are the same, Lemma \ref{lem:gen-error} tells us that $\eta \geq \sqrt n$.  We thus have the following regret bound for the simpler algorithm:
\begin{corollary}\label{cor:ftplsimplealgo}
  Suppose we are in the situation of Proposition \ref{prop:generalftplregret} and $\vc(\F, \delta) \lesssim \delta^{-p}$ for some $p < 2$.  Suppose that $f_t$ is chosen such that
  \begin{equation}
    L_{t-1}(f_t) + \frac \eta{\sqrt{n}} \hat{\omega}_{t,n}'(f_t) \leq \inf_{f \in \F} L_{t-1}(f) + \frac{\eta}{\sqrt{n}} \hat{\omega}_{t,n}(f) + \zeta
  \end{equation}  
  Then if we set $\eta = T^{5/12} \sigma^{-1/4}$, $n = \eta^2$, and $\epsilon = T^{- 3/4} \sigma^{-1/4}$, we have
  \begin{equation}
    \ee\left[\reg_T(f_t)\right] \lesssim T^{\frac 34} \sigma^{- \frac 14} \log\left(\frac T \sigma\right) + \zeta T
  \end{equation}
\end{corollary}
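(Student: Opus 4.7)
The plan is to follow the proof of Proposition \ref{prop:generalftplregret} almost verbatim, observing that the single perturbation $\frac{\eta}{\sqrt{n}}\hat\omega_{t,n}'$ can simultaneously play the roles previously filled by the two independent perturbations $\eta\hat\omega_{t,m}$ and $\hat\omega_{t,n}'$, at the cost of coupling the two scales that were previously tuned independently. Starting from the Be-the-Leader decomposition (Lemma \ref{lem:btl}), I would split the expected regret into three pieces: the perturbation magnitude $2\cdot \frac{\eta}{\sqrt n}\ee[\sup_f \hat\omega_{t,n}'(f)]$, a stability term $\sum_t \ee[\ell(f_t(x_t'),y_t') - \ell(f_{t+1}(x_t'),y_t')]$, and a generalization term $\sum_t \ee[\ell(f_{t+1}(x_t'),y_t') - \ell(f_{t+1}(x_t),y_t)]$.

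First, as in Proposition \ref{prop:generalftplregret}, I would discretize the true labels to the grid $\epsilon\BZ \cap [-1,1]$ at cost $L\epsilon T$, so that the pair $(x_t,y_t^\epsilon)$ becomes $(\epsilon\sigma/2)$-smooth with respect to $\mu \otimes \unif(\epsilon\BZ \cap [-1,1])$. Applying contraction together with $\gc{n}{\F}\lesssim \sqrt{n}$ (from $\vc(\F,\delta)\lesssim \delta^{-p}$ with $p<2$) bounds the perturbation magnitude by $O(L\eta)$. For stability, the crucial observation is that the full perturbation $\frac{\eta}{\sqrt n}\hat\omega_{t,n}'$ is exactly $\eta$ times a unit-normalized Gaussian process over $\ell\circ\F$ with empirical covariance $\hat\Sigma_{fg}=\frac{1}{n}\sum_j \ell(f(x_j'),y_j')\ell(g(x_j'),y_j')$, so Lemma \ref{lem:stabilityftplregression} applies directly with this $\eta$ and with Lemma \ref{lem:f-l2mu} controlling the empirical-to-population discrepancy $\Delta_n$, giving total stability contribution on the order of $T(1+\zeta)^3 \log\eta / \sqrt{\sigma\eta}$ up to logarithmic factors.

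The generalization term is where the single-perturbation constraint bites: I would invoke Lemma \ref{lem:gen-error} applied to $\ell\circ\F$ with per-example perturbation coefficient $\eta/\sqrt n$, and its hypothesis $\eta/\sqrt n \geq L$ is the extra constraint absent from Proposition \ref{prop:generalftplregret}. This yields a generalization contribution of order $\frac{L^2 T\log T}{\epsilon\sigma n}\R_{\epsilon\sigma n/\log T}(\F) + (L\epsilon+\zeta)T$, which under $\vc(\F,\delta)\lesssim \delta^{-p}$ with $p<2$ simplifies to $LT\sqrt{\log T / (\epsilon\sigma n)} + (L\epsilon+\zeta)T$. Assembling the pieces produces
\[
\ee[\reg_T(f_t)] \lesssim L\eta + \frac{T(1+\zeta)^3\log\eta}{\sqrt{\sigma\eta}} + LT\sqrt{\frac{\log T}{\epsilon\sigma n}} + (\zeta+L\epsilon)T,
\]
subject to $\eta \geq L\sqrt n$.

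The main obstacle, and the source of the worse exponent relative to Theorem \ref{thm:generalftpl}, is precisely this coupling: Proposition \ref{prop:generalftplregret} tuned the stability scale $\eta$ and the generalization sample size $n$ independently, whereas here Lemma \ref{lem:gen-error} forces $n \leq \eta^2/L^2$. Saturating the constraint with $n = \eta^2$ (taking $L\lesssim 1$) and then balancing the three surviving terms — $\eta$ against $T/\sqrt{\sigma\eta}$ against $T/\sqrt{\epsilon\sigma n}=T/(\eta\sqrt{\epsilon\sigma})$ — against the $L\epsilon T$ discretization error yields the stated choices $\eta = T^{5/12}\sigma^{-1/4}$, $n=\eta^2=T^{5/6}\sigma^{-1/2}$, $\epsilon = T^{-3/4}\sigma^{-1/4}$, and the claimed $T^{3/4}\sigma^{-1/4}\log(T/\sigma) + \zeta T$ regret bound after absorbing logarithmic factors.
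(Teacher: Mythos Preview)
Your high–level plan is exactly the paper's: rerun the proof of Proposition \ref{prop:basesmoothregret} with the single perturbation $\frac{\eta}{\sqrt n}\hat\omega'_{t,n}$ (which forces the extra constraint $\eta\ge\sqrt n$ coming from Lemma \ref{lem:gen-error}), and then discretize as in Proposition \ref{prop:generalftplregret}. The gap is in your stability step. You correctly observe that $\frac{\eta}{\sqrt n}\hat\omega'_{t,n}$ is $\eta$ times a normalized Gaussian process on $\ell\circ\MF$, but then you write the stability contribution as $T(1+\zeta)^3\log\eta/\sqrt{\sigma\eta}$. That would require Lemma \ref{lem:stabilityftplprob}/\ref{lem:stabilityftplregression} to deliver smallness of $\|f_t-f_{t+1}\|_{L^2(\mu)}$; it does not. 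The covariance of your process is $\langle\ell\circ f,\ell\circ g\rangle_{L^2(\hat{\widetilde\mu}_n)}$, so the anti–concentration argument only controls $\|\ell\circ f_t-\ell\circ f_{t+1}\|_{L^2(\widetilde\mu)}$, and to turn this into a loss difference at $(x_t',y_t')$ you must use the \emph{joint} smoothness of the discretized pair, i.e.\ $\epsilon\sigma/2$, not the marginal $\sigma$. (There is no reverse Lipschitz inequality giving $\|f-g\|_{L^2(\mu)}\lesssim\|\ell\circ f-\ell\circ g\|_{L^2(\widetilde\mu)}$ for a general Lipschitz $\ell$.) This is precisely the mechanism by which losing the separate $\hat\omega_{t,m}$ perturbation over $\MF$ costs you: in the two–perturbation proof the stability term keeps the marginal $\sigma$, while in the single–perturbation proof it inherits $\epsilon\sigma$.

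Two symptoms confirm the slip. First, if your stability bound with $\sigma$ were valid, balancing $\eta$ against $T/\sqrt{\sigma\eta}$ already gives $\eta=T^{2/3}\sigma^{-1/3}$ and overall regret $T^{2/3}\sigma^{-1/3}$, the same rate as Theorem \ref{thm:generalftpl}; there would be no degradation to $T^{3/4}\sigma^{-1/4}$, contradicting the corollary. Second, your own terms do not balance at the parameters you quote: with $\eta=T^{5/12}\sigma^{-1/4}$ one gets $T/\sqrt{\sigma\eta}=T^{19/24}\sigma^{-3/8}$ and $T/(\eta\sqrt{\epsilon\sigma})=T^{23/24}\sigma^{-1/8}$, neither of which is $T^{3/4}\sigma^{-1/4}$. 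Replacing $\sigma$ by $\epsilon\sigma$ in the stability term and rebalancing (set $n=\eta^2$, then equate $\eta$, $T/\sqrt{\epsilon\sigma\eta}$, and $\epsilon T$) yields the claimed $T^{3/4}\sigma^{-1/4}$ rate.
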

\begin{proof}
  Note that Proposition \ref{prop:basesmoothregret} may be proved with a single perturbation in much the same way, with the caveat that $\eta \geq \sqrt{n}$, and achieve the regret bound given in \eqref{eq:basesmoothregret} with $m = n$.  This proof may then be extended by discretization in the same way as Proposition \ref{prop:generalftplregret}, again with the caveat that $\eta \geq \sqrt{n}$.  We may use \eqref{eq:parametricrate} to control the size of the Gaussian and Rademacher complexities as before and then tune the parameters such that $\eta \geq \sqrt{n}$.  Plugging in the assumed parameters yields the desired result.
\end{proof}
Note that Corollary \ref{cor:ftplsimplealgo} suffices to prove the more general case of Theorem \ref{thm:ftplclassification}.

\section{Proofs from Section \ref{sec:lowerbound}}
\label{app:aldous}
\nc{\val}{\textsf{Val}}
\nc{\br}{\textsf{BR}}
In this section we prove the lower bounds on oracle-efficient algorithms from Section \ref{sec:lowerbound}. The proof structure closely follows that from \cite{hazan2016computational}, but some additional work is required since our setup allows more powerful algorithms than \cite{hazan2016computational}: in particular, the ERM oracle allows (possibly negative) real-valued weights to be attached to each pair $(x_i, y_i)$. In Section \ref{sec:aldous-modified} we recall the definition of Aldous' problem and introduce a slight variant; a known oracle lower bound for Aldous' problem forms the basis for our hardness results. In Section \ref{sec:nash-br-oracle} we introduce an intermediate problem, namely that of approxiating the Nash value in a two-player zero-sum game given a value oracle and best response oracles; we then show an oracle lower bound for this problem by reducing from Aldous' problem. Using this result, in Section \ref{sec:oracle-hardness-proof}, we prove Theorem \ref{thm:oracle-lb} and Corollary \ref{cor:stat-comp-gap} using a standard reduction from finding Nash equilibria to no-regret learning \citep{freund1999adaptive}.

\subsection{Modified Aldous' Problem}
\label{sec:aldous-modified}
We begin by recalling the definition of Aldous' problem and a slight variation we will use. 
Consider a function $\phi : \{ 0,1\}^d \ra \BZ$; for all such functions in this section, we assume that $|\phi(x)| \leq 2^{O(d)}$ for all $x \in \{0,1\}^d$. A point $x \in \{0,1\}^d$ is a \emph{local maximum} if $\phi(x) \geq \phi(x')$ for all $x'$ of Hamming distance at most 1 from $x$. The function $\phi$ is \emph{globally consistent} if it has a single local maximum (i.e., the only local maximum is also a global maximum). \emph{Aldous' problem} is the following problem: suppose we are given a globally consistent function $\phi : \{0,1\}^d \ra \BN$ with black-box oracle access in the sense that we can query a value $x \in \{0,1\}^d$ and the oracle will respond with the value $\phi(x)$. The objective is to determine whether the maximum value of $\phi$ is even or odd (with a minimum number of oracle queries). The following lower bound on the number of oracle calls needed to solve Aldous' problem is known:
\begin{theorem}[\cite{aaronson2006lower,aldous1983minimization,hazan2016computational}]
\label{thm:aa}
There is a constant $c > 0$ so that the following holds. Fix any $d \in \BN$, and consider any randomized algorithm for Aldous' problem that makes at most $c \cdot 2^{d/2} / d^2$ oracle queries in the worst case. Then there is a globally consistent function $\phi : \{0,1\}^d \ra \BN$ so that the algorithm cannot determine with probability higher than $2/3$ whether the maximum value of $\phi$ over $\{0,1\}^d$ is even or odd.
\end{theorem}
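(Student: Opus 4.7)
The plan is to apply Yao's minimax principle and exhibit a distribution over globally consistent functions $\phi$ on $\{0,1\}^d$ for which any deterministic query algorithm making fewer than $c \cdot 2^{d/2}/d^2$ queries errs with probability at least $1/3$ on the parity question. Concretely, I would sample the hard instance by first choosing a uniformly random starting vertex $x_0 \in \{0,1\}^d$ and running a simple random walk $x_0, x_1, \ldots, x_T$ on the Boolean hypercube, where $T$ is drawn from a small window around $2^{d/2}/d$; then I would set $\phi(x_i) := i$ along the self-avoiding portion of the walk, and extend $\phi$ to the rest of $\{0,1\}^d$ by a monotone rule (e.g.~$\phi(x) := \max_i \{i - C \cdot \mathrm{dist}(x, x_i)\}$ for a large constant $C$) that preserves global consistency with unique maximum at $x_T$. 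Randomizing $T$ over a range in which both parities are equally likely makes the parity of $\max \phi = T$ a fair coin independent of everything the algorithm can plausibly learn without locating $x_T$.

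The core analytic claim is then that no adaptive algorithm making $q = o(2^{d/2}/d^2)$ queries to $\phi$ can, with non-trivial bias, find any point close to $x_T$, and hence cannot correlate its output with the parity of $T$. Following Aldous' original random-walk argument, I would couple the algorithm's interaction with the oracle to an ``exploration'' process on the hypercube: each query either lands in the unexplored ``bulk'' (where $\phi$ is determined by a short nearest-ridge-point computation, statistically independent of the tail of the walk) or lands within Hamming distance $O(d)$ of the revealed portion of the ridge. A hitting-time / mixing calculation for random walks on $\{0,1\}^d$, together with a union bound over the $q$ queries and their $O(d)$-neighborhoods, shows that with high probability the algorithm never touches the last $\Omega(2^{d/2}/d)$ steps of the walk, hence never sees a value of $\phi$ whose distribution depends on the parity of $T$.

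For the sharper $\widetilde \Omega(2^{d/2})$ bound I would instead invoke Aaronson's polynomial method: the acceptance probability of any $q$-query algorithm is a multilinear polynomial of degree $\leq q$ in the indicator variables of the oracle's responses, and an appropriate symmetrization reduces the problem to lower-bounding the degree of a univariate polynomial that approximates the parity of the endpoint-time of a random walk. Standard Markov/Bernstein-type inequalities, together with the spectral gap of the hypercube walk, then yield $q = \Omega(2^{d/2}/d^2)$. The main obstacle is precisely this last step: controlling the dependence of the induced low-degree polynomial on the parity of $T$ while keeping the randomization over $(x_0, T, \text{walk})$ rich enough that the constructed $\phi$ is always globally consistent. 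Handling these two constraints simultaneously — avoiding collisions in the random walk so that $\phi$ really is globally consistent, while keeping the parity of $T$ essentially hidden until a ridge query is made — is where the delicate $d^2$ factor loss arises, and is the reason the bound is stated as $c \cdot 2^{d/2}/d^2$ rather than the information-theoretic $2^{d/2}$.
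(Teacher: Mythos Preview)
The paper does not prove this theorem at all: it is stated with citations to \cite{aaronson2006lower,aldous1983minimization,hazan2016computational} and used as a black box. There is no ``paper's own proof'' to compare against; the authors simply invoke the known lower bound for Aldous' problem and then build on it (Corollary~\ref{cor:aa} onward).

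Your sketch is a reasonable outline of the ideas in the cited works --- Yao's principle plus Aldous' random-walk hard distribution, and Aaronson's polynomial method for the tighter exponent --- but as written it is a plan, not a proof. Several steps are asserted rather than argued: that the monotone extension $\phi(x) := \max_i \{ i - C\cdot \mathrm{dist}(x,x_i)\}$ yields a globally consistent function (this can fail when the walk self-intersects or nearly self-intersects, and the ``self-avoiding portion'' qualifier does not by itself fix it); that the hitting-time coupling cleanly separates ``bulk'' queries from ``ridge'' queries; and that the symmetrization in the polynomial-method argument reduces to a univariate degree lower bound tied to the parity of $T$. Each of these is where the actual work in the original papers lies. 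If you were asked to supply a proof here, you would either need to carry out those arguments in full or, more appropriately given how the paper treats the result, simply cite the references as the authors do.
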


For our purposes we require a lower bound applying to a slightly more restricted class of functions than Theorem \ref{thm:aa}, specified in Definintion \ref{def:doubly-con} below.
\begin{definition}
\label{def:doubly-con}
We say that a function $\phi : \{0,1\}^d \ra \BZ$ is \emph{min-max consistent} if it has both a single local maximum and a single local minimum. 
\end{definition}

As an immediate corollary of Theorem \ref{thm:aa} we get an exponential lower bound for local search with min-max consistent functions:
\begin{corollary}
\label{cor:aa}
There is a constant $c' > 0$ so that the following holds. Consider any randomized algorithm for Aldous' problem that makes at most $c' \cdot 2^{d/2} / d^2$ oracle queries in the worst case. Then there is a min-max consistent function $\phi : \{0,1\}^d \ra \BZ$ so that the algorithm cannot determine with probability higher than $2/3$ whether the maximum value of $\phi$ over $\{0,1\}^d$ is even or odd.
\end{corollary}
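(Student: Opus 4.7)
The plan is to reduce from Theorem \ref{thm:aa} by taking a globally consistent $\phi : \{0,1\}^d \to \BN$ and constructing a min-max consistent $\tilde\phi : \{0,1\}^{d+1} \to \BZ$ on one extra bit in such a way that (i) each oracle call to $\tilde\phi$ can be answered using at most one oracle call to $\phi$, and (ii) the parity of $\max \tilde\phi$ equals the parity of $\max \phi$. Since $2^{(d+1)/2}/(d+1)^2 = \Theta(2^{d/2}/d^2)$, any algorithm that is too fast on min-max consistent instances yields one that is too fast on globally consistent instances, violating Theorem \ref{thm:aa}.

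The concrete construction is as follows. Write points of $\{0,1\}^{d+1}$ as $(x,b)$ with $x \in \{0,1\}^d, b \in \{0,1\}$, and set
\begin{align}
\tilde\phi(x,0) = \phi(x), \qquad \tilde\phi(x,1) = -2^{d+10} - \mathrm{wt}(x),
\end{align}
where $\mathrm{wt}(x) = \sum_i x_i$. Since $\phi \geq 0$ on the $b=0$ layer and $\tilde\phi < 0$ on the $b=1$ layer, every point $(x,0)$ has its $b$-neighbor $(x,1)$ strictly below it, so local maxima of $\tilde\phi$ can only lie on the $b=0$ layer; on that layer $\tilde\phi$ agrees with $\phi$, so the unique local maximum of $\tilde\phi$ is $(x^\ast,0)$ where $x^\ast$ is the unique local maximum of $\phi$, and $\max\tilde\phi = \phi(x^\ast)$, preserving parity. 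Symmetrically, on the $b=1$ layer moving in the direction of any coordinate $x_i = 0 \to 1$ strictly decreases $\tilde\phi$, so the only local minimum on that layer is $(\mathbf{1},1)$; and every $(x,0)$ has a strictly smaller $b$-neighbor $(x,1)$, so no point on $b=0$ is a local minimum. Hence $\tilde\phi$ is min-max consistent with unique local extrema at $(x^\ast,0)$ and $(\mathbf{1},1)$, and $|\tilde\phi| = 2^{O(d)}$.

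To simulate a $\tilde\phi$-oracle: a query at $(x,0)$ is answered by one call to the $\phi$-oracle, and a query at $(x,1)$ is answered with no oracle calls since $\mathrm{wt}(x)$ is computable from $x$. Hence any algorithm $\MA$ for min-max consistent Aldous' problem that uses $Q$ oracle calls to $\tilde\phi$ yields an algorithm for globally consistent Aldous' problem that uses at most $Q$ calls to $\phi$ and outputs the same answer. Choosing $c' := c/2$ (say), a bound $Q \leq c' \cdot 2^{(d+1)/2}/(d+1)^2$ implies $Q \leq c \cdot 2^{d/2}/d^2$ for all sufficiently large $d$, so by Theorem \ref{thm:aa} there exists a globally consistent $\phi$ on which the induced algorithm errs with probability $> 1/3$, and then $\MA$ errs on the corresponding $\tilde\phi$, completing the proof. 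No step presents a real obstacle; the only thing to be careful about is the constant adjustment and verifying that both the $b=0$ and $b=1$ layers contribute exactly one local extremum.
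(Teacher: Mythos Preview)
Your proof is correct and follows essentially the same approach as the paper: extend a globally consistent $\phi$ on $\{0,1\}^d$ to a min-max consistent function on $\{0,1\}^{d+1}$ by adding a bit, arrange that the extra layer has a unique local minimum, and then reduce. The paper uses $\phi'(x,1) = -\phi(x)$ on the bottom layer (so the unique local minimum sits above the unique local maximum of $\phi$), while you use the Hamming-weight ramp $-2^{d+10}-\mathrm{wt}(x)$; both constructions work, and yours even saves the oracle call when the query lands on $b=1$, though this makes no asymptotic difference. One small remark: with $c'=c/2$ your inequality $c'\cdot 2^{(d+1)/2}/(d+1)^2 \le c\cdot 2^{d/2}/d^2$ in fact holds for \emph{all} $d\ge 1$, not just sufficiently large $d$, since $1/(d+1)^2 < 1/d^2$ and $1/\sqrt{2}<1$.
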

\begin{proof}
Suppose to the contrary that $\MA$ is a (randomized) algorithm that makes at most $c' \cdot 2^{d/2}/d^2$ oracle queries in the worst case and determines, for any min-max consistent function $\phi : \{0,1\}^d \ra \BN$, the parity of its maximum value with probability at least 2/3.

Consider a globally consistent function $\phi : \{0,1\}^d \ra \BN$. We define a min-max consistent function $\phi' : \{0,1\}^{d+1} \ra \BZ$ as follows: for $x \in \{0,1\}^{d+1}$,
\begin{align}
\phi'(x) = \begin{cases}
    \phi(x_1, \ldots, x_d) \quad & x_{d+1} = 0 \\
    -\phi(x_1, \ldots, x_d) \quad & x_{d+1} = 1.
\end{cases}\label{eq:fprime-define}.
\end{align}
To see that $\phi'$ is min-max consistent, note that any local maximum $x^\st = (x_1^\st, \ldots, x_{d+1}^\st)$ of $\phi'$ must satisfy $x^\st_{d+1} = 0$, and furthermore, the point $(x^\st_1, \ldots, x^\st_d) \in \{0,1\}^d$ must be a local maximum of $\phi$. Thus $\phi'$ has a single local maximum. Similarly, for any local minimum $x_\st$ of $\phi'$, we must have  $x_{\st, d+1} = 1$ and $(x_{\st,1}, \ldots, x_{\st,d})$ is a local maximum of $\phi$; clearly there is a unique such point $x_\st \in \{0,1\}^{d+1}$.

We use $\MA$ to determine the parity of the maximum value of $\phi$ using in the worst case no more than $c' 2^{(d+1)/2}/(d+1)^2$ oracle queries (to $\phi$): we run the algorithm $\MA$ with the function $\phi'$, and for each oracle query $x \in \{0,1\}^{d+1}$, we can return the value of $\phi'(x)$ per \eqref{eq:fprime-define} using a single oracle query to $\phi$. By assumption $\MA$ determines the parity of the maximum value of $\phi'$, which is the same as the parity of the maximum value of $\phi$, with probability at least 2/3.

Letting $c$ be the constant of Theorem \ref{thm:aa}, as long as $c'$ is chosen so that $c' \cdot 2^{(d+1)/2}/(d+1)^2 < c \cdot 2^{d/2} /d^2$ for all $d$, we get a contradiction to Theorem \ref{thm:aa}, as desired.
\end{proof}

\subsection{Hardness of Computing Nash Equilibria with Best-Response Oracles}
\label{sec:nash-br-oracle}
Fix $N \in \BN$ which is a power of 2, and define $d = \log_2 N$. Throughout this section, we identify each vertex $v$ of the $d$-dimensional hypercube $\{0,1\}^d$ with the integer in $[N]$ whose binary representation corresponds to $v$. Let $\phi : [N] \ra \BZ$ be a min-max consistent input (Definition \ref{def:doubly-con}) to Aldous' problem, with maximum value $\phi^\st = \max_{i \in [N]} \{ \phi(i)\}$. We construct a 0-sum game with value $\lambda = \lambda(\phi^\st)$, with
\begin{align}
\lambda(k) = \begin{cases}
    -1 \quad & \mbox{ if $k$ is even} \\
    1 \quad & \mbox{ if $k$ is odd}.
    \end{cases}\nonumber
\end{align}
Further, for a subset $V \subset [N]$ (identified with the corresponding subset of the hypercube), let $\Gamma(V) \subset [N]$ denote the set of neighbors of $V$ in the hypercube (including the elements of $V$).

Given the function $\phi$, we construct the following game matrix $G^\phi \in \{-1,1\}^{N \times N}$:
\begin{align}
\forall i,j \in [N], \qquad G_{ij}^\phi = \begin{cases}
    \lambda(\phi(i)) \qquad & \mbox{ if $i,j$ are local maxima of $\phi$}\\
    -1 \qquad & \mbox{ if $\phi(i) \geq \phi(j)$ (and the first case does not apply)} \\
    1 \qquad & \mbox{ otherwise}.
    \end{cases}\label{eq:gf-game}
\end{align}
We let $k^\st := \max_{i \in [N]} \{ \phi(i) \}$ denote the (unique) global maximum of $\phi$. As an intermediate problem between Aldous' problem and the problem of oracle-efficient online (smoothed) learning, we consider the problem of approximating the Nash equilibrium value in the two-player zero-sum game induced by the matrix $G$, given access to the following 3 oracles: 
\begin{itemize}
\item The value oracle $\val(i,j)$ returns $G_{ij}^\phi$ for $i,j \in [N]$.
\item The best response oracle $\br^1(q)$, for $q \in \BR^N$, returns
\begin{align}
    \br^1(q) = \begin{cases}
    \argmin_{i \in \Gamma(\supp(q))} \{ e_i^\t G^\phi q\} \quad & k^\st \not \in \supp(q) \\
    \argmin_{i \in [N]} \{ e_i^\t G^\phi q\} \quad & \mbox{ otherwise}.\end{cases}\label{eq:br1}
\end{align}

\item The best response oracle $\br^2(p)$, for $p \in \BR^n$, returns
\begin{align}
    \br^2(p) = \begin{cases}
    \argmax_{j \in \Gamma(\supp(p))} \{ p^\t G^\phi e_j \} \quad & k^\st \not \in \supp(p)  \\
    \argmax_{j \in [N]} \{ p^\t G^\phi e_j\} \quad & \mbox{ otherwise}.\end{cases}\label{eq:br2}
\end{align}
\end{itemize}

We define computation given access to the above oracles $\val, \br^1, \br^2$ in an analogous way as to how computation was defined with respect to the ERM oracle in Section \ref{sec:erm-oracle}: $p$ is represented as a list of atoms $\{ (i, p_i) : p_i > 0\}$ and $q$ is represented as a list of atoms $\{ (j, q_j) : q_j > 0 \}$, and changing a single atom on either list takes unit time. Further, calling any of the oracles $\val, \br^1, \br^2$ takes unit time. Next we show that given the oracles $\val, \br^1, \br^2$, computing the approximate Nash equilibrium value of an $N \times N$ game $G$ cannot be done in $o(\sqrt{N})$ time (up to logarithmic factors). 

To begin, we establish some basic properties of the game $G^\phi$ constructed in \eqref{eq:gf-game}. 
\begin{lemma}
\label{lem:gf-lambda}
For any globally consistent function $\phi$, the minimax value of $G^\phi$ is $\lambda$.
\end{lemma}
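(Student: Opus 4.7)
The plan is to exhibit concrete pure strategies for both players that certify the value is exactly $\lambda$. Let $i^\st \in [N]$ denote the unique local (hence global) maximum of $\phi$, so that $\phi^\st = \phi(i^\st)$ and $\lambda = \lambda(\phi(i^\st))$. I would first collect two trivial consequences of global consistency: (a) for every $i \neq i^\st$, the index $i$ is not a local maximum, and (b) $\phi(i) < \phi(i^\st)$ for every $i \neq i^\st$. These facts are exactly what is needed to read off all entries of row $i^\st$ and column $i^\st$ of $G^\phi$ from the three-case definition in \eqref{eq:gf-game}.

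Next I would show the row player can guarantee at most $\lambda$ by playing $p = e_{i^\st}$. By the definition of $G^\phi$, the row $e_{i^\st}^\t G^\phi$ has entry $\lambda$ at position $i^\st$ (since both indices are the unique local maximum, falling under the first case of \eqref{eq:gf-game}) and $-1$ at every other position (since for $j \neq i^\st$, $i^\st$ and $j$ cannot both be local maxima, and $\phi(i^\st) \geq \phi(j)$, which puts us in the second case). Hence $\max_{q \in \Delta} e_{i^\st}^\t G^\phi q = \max(\lambda, -1) = \lambda$, whether $\lambda = 1$ or $\lambda = -1$.

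Symmetrically, I would show the column player can guarantee at least $\lambda$ by playing $q = e_{i^\st}$. The column $G^\phi e_{i^\st}$ equals $\lambda$ at $i^\st$ and $1$ at every $i \neq i^\st$ (for such $i$, $i$ is not a local maximum while $i^\st$ is, so the first case does not apply, and $\phi(i) < \phi(i^\st)$ puts us in the third case with value $1$). Therefore $\min_{p \in \Delta} p^\t G^\phi e_{i^\st} = \min(\lambda, 1) = \lambda$, again in both parities.

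Combining the two inequalities via the minimax inequality $\min_p \max_q p^\t G^\phi q \le \lambda \le \max_q \min_p p^\t G^\phi q$, and invoking von Neumann's minimax theorem to get equality, yields $\val(G^\phi) = \lambda$. The argument is essentially bookkeeping with the three cases of \eqref{eq:gf-game}; I do not anticipate a genuine obstacle, only the need to be careful that the tie-breaking case ``both $i$ and $j$ local maxima'' degenerates (by uniqueness of the local maximum) to the single entry $(i^\st, i^\st)$, which is precisely what allows a pure strategy on $i^\st$ to certify the value on both sides.
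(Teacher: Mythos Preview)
Your proposal is correct and follows essentially the same approach as the paper: both identify the pure strategy profile $(i^\st, i^\st)$ at the unique global maximum and verify that row $i^\st$ of $G^\phi$ consists of $\lambda$ and $-1$'s while column $i^\st$ consists of $\lambda$ and $1$'s, which certifies the value on both sides. A minor note: you do not actually need von Neumann's theorem, since weak duality $\max_q \min_p \leq \min_p \max_q$ already sandwiches both quantities at $\lambda$.
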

\begin{proof}
Let $k^\st = \argmax_{i \in [N]} \{ \phi(i) \}$ denote the global maximum of $\phi$. We show that the pure strategy profile $(k^\st, k^\st)$ is a Nash equilibrium of the game $G^\phi$. The payoff with this profile is $\lambda(\phi(k^\st)) = \lambda$. For any $i \in [N]$, the strategy profile $(i, k^\st)$ generates a payoff of either $\lambda$ (in the case $i = k^\st$) or $1 \geq \lambda$ since for all $i \neq k^\st$, $\phi(i) < \phi(k^\st)$. Thus there is no useful deviation for player 1. Similar, for any $j \in [N]$, the strategy profile $(k^\st, j)$ generates a payoff of either $\lambda$ (in the case that $j = k^\st$) or of $-1 \leq \lambda$ since for all $j \neq k^\st$, $\phi(j) < \phi(k^\st)$. Thus $(k^\st, k^\st)$ is a Nash equilibrium, meaning that its value is the value of the game.
\end{proof}

\begin{lemma}
  Fix any min-max consistent function $\phi : [N] \ra \BZ$. The oracles $\val$ and $\br^1, \br^2$ are correct value and best-response oracles for the game $G^\phi$, in that:
  \begin{align}
\val(i,j) = G^\phi_{ij}, \quad \br^1(q) = \argmin_{i \in [N]} \{ e_i^\t G^\phi q \}, \quad \br^2(p) = \argmax_{j \in [N]} \{ p^\t G^\phi e_j \}.\nonumber
  \end{align}
\end{lemma}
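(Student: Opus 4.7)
The correctness of $\val$ is immediate from its definition. For the best-response oracles, I need only verify that the restricted argmin (resp.\ argmax) over $\Gamma(\supp(q))$ (resp.\ $\Gamma(\supp(p))$) in the first case of \eqref{eq:br1} and \eqref{eq:br2} agrees with the unconstrained argmin (resp.\ argmax) over $[N]$. The other case of each oracle is trivial because the optimization is explicitly over all of $[N]$.

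For $\br^1$, I would fix a distribution $q \in \Delta^N$ with $k^\st \not\in \supp(q)$ and exhibit an element of $\Gamma(\supp(q))$ achieving the global minimum. Since every entry of $G^\phi$ lies in $\{-1,1\}$, we have $e_i^\t G^\phi q \geq -1$ for all $i \in [N]$, so it suffices to find $i \in \Gamma(\supp(q))$ with $e_i^\t G^\phi q = -1$. To this end, let $j^\diamond \in \argmax_{j \in \supp(q)} \phi(j)$. Because $\phi$ is min-max consistent and $k^\st$ is its unique local maximum, and because $j^\diamond \neq k^\st$ by assumption, $j^\diamond$ is not a local maximum of $\phi$; hence it has a hypercube-neighbor $j^{\diamond\diamond}$ with $\phi(j^{\diamond\diamond}) > \phi(j^\diamond) \geq \phi(j)$ for every $j \in \supp(q)$. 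By definition $j^{\diamond\diamond} \in \Gamma(\supp(q))$. Moreover, no $j \in \supp(q)$ is a local maximum of $\phi$ (again because the unique local max $k^\st$ is not in $\supp(q)$), so the ``both local maxima'' branch of \eqref{eq:gf-game} cannot trigger for the pair $(j^{\diamond\diamond}, j)$ regardless of whether $j^{\diamond\diamond}$ itself equals $k^\st$; consequently $G^\phi_{j^{\diamond\diamond}, j} = -1$ for all $j \in \supp(q)$, and therefore $e_{j^{\diamond\diamond}}^\t G^\phi q = -\sum_{j} q_j = -1$, as desired.

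The argument for $\br^2$ is symmetric. For $p \in \Delta^N$ with $k^\st \not\in \supp(p)$, let $i^\diamond \in \argmax_{i \in \supp(p)} \phi(i)$; by the same reasoning $i^\diamond$ is not a local maximum, so it has a neighbor $i^{\diamond\diamond} \in \Gamma(\supp(p))$ with $\phi(i^{\diamond\diamond}) > \phi(i) $ for every $i \in \supp(p)$. No $i \in \supp(p)$ is a local maximum, so $G^\phi_{i, i^{\diamond\diamond}} = 1$ for all such $i$, yielding $p^\t G^\phi e_{i^{\diamond\diamond}} = 1$, which matches the trivial upper bound on $p^\t G^\phi e_j$ over $j \in [N]$.

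The only delicate point, and the step I would flag as the main potential pitfall, is the interaction with the ``both local maxima'' clause of $G^\phi$, which could in principle change the sign of a payoff; this is defused by the observation that min-max consistency forces every $j \in \supp(q)$ (resp.\ $i \in \supp(p)$) to be a non-local-maximum, so this clause never activates for the pairs arising in the calculation above. Under the convention that oracles return an arbitrary element of the argmin/argmax set, we thus conclude that both restricted optima coincide with the corresponding unrestricted optima, completing the proof.
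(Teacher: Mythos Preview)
Your argument is clean but it proves the wrong statement: you restrict to $q,p\in\Delta^N$, whereas the oracles $\br^1,\br^2$ are defined (and must be verified) for arbitrary $q,p\in\BR^N$. This generality is essential, because in the reduction of Theorem~\ref{thm:oracle-lb} the vector $q$ is built from ERM weights $w_i\in\BR$, which can be negative.

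For signed $q$ the ``achieve the extremal value $-\sum_j q_j$'' strategy collapses. Concretely, take $\supp(q)=\{j_1\}$ with $q_{j_1}=-1$ and $j_1\neq k^\st$. Then $e_i^\t G^\phi q=-G^\phi_{i,j_1}$ is minimized by any $i$ with $\phi(i)<\phi(j_1)$, giving value $-1$; but your $j^{\diamond\diamond}$ has $\phi(j^{\diamond\diamond})>\phi(j_1)$ and so yields $+1$. The point is that when coordinates of $q$ can be negative, the optimal row is no longer the one dominating all of $\supp(q)$ in $\phi$-value.

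The paper's proof avoids this by observing that (once $k^\st\notin\supp(q)$) the quantity $e_i^\t G^\phi q$ depends on $i$ only through the position of $\phi(i)$ relative to the multiset $\{\phi(j):j\in\supp(q)\}$, and then arguing that every such relative position is already realized by some $i\in\Gamma(\supp(q))$. The ``$\phi(i^\st)$ above all of $\supp(q)$'' case uses the unique-local-max property, as in your argument; but the ``$\phi(i^\st)$ below all of $\supp(q)$'' case requires the unique-local-\emph{min} property of min--max consistency, which your proof never invokes. This is the missing idea.
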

\begin{proof}
The oracle $\val$ is clearly valid as a value oracle for the game $G^\phi$ since $\val(i,j) = G_{ij}^\phi$ for all $i,j \in [N]$ by definition. Furthermore, the best response oracles $\br^1(q),\ \br^2(p)$ are clearly valid for $G^\phi$ in the case that $k^\st \in \supp(q)$ or $k^\st \in \supp(p)$, respectively. We next verify that they are valid in the remaining case.

We begin by considering the best response oracle $\br^2$: fix some input $p \in \BR^N$ with $k^\st \not \in \supp(p)$, let $j = \br^2(p)$ denote  the output of the oracle defined above, and set $v = \max_{j \in [N]} p^\t G^\phi e_j$ to be the value of player 2's best response to $p$. Choose some $j^\st \in [N]$ so that $p^\t G^\phi e_{j^\st} = v$. Since $k^\st \not \in \supp(p)$, we have, for all $j \in [N]$,
\begin{align}
p^\t G^\phi e_j = \sum_{i \in \supp(p): \phi(i) < \phi(j)} p_i  -\sum_{i \in \supp(p): \phi(i) \geq \phi(j)} p_i \label{eq:pge}.
\end{align}
We consider the following cases regarding the value of $\phi(j^\st)$:
\begin{enumerate}
\item $\phi(j^\st) = \phi(i)$ for some $i \in \supp(p)$. Then since $p^\t G^\phi e_j$ only depends on $j$ through $\phi(j)$ (as is evident from \eqref{eq:pge}, it follows that $v = p^\t G^\phi e_{j^\st} = p^\t G^\phi e_i \leq p^\t G^\phi e_j$, as desired.    
\item $\phi(j^\st) > \max_{i \in \supp(p)} \{ \phi(i) \}$. Since $k^\st \not \in \supp(p)$, and $\phi$ is min-max consistent, there is some $j' \in \Gamma(\supp(p))$ so that $\phi(j') > \max_{i \in \supp(p)} \{ \phi(i)\}$. It is evident from \eqref{eq:pge} that $p^\t G^\phi e_{j'} = p^\t G^\phi e_{j^\st} = v$, which implies, by definition of $j$ and since $j' \in \Gamma(\supp(p))$, that $p^\t G^\phi e_j \geq p^\t G^\phi e_{j^\st}$, as desired.
\item Suppose the previous two cases do not hold. Choose $j' \in \supp(p)$ with $\phi(j')$ as small as possible so that $\phi(j') \geq \phi(j^\st)$. It is again evident from \eqref{eq:pge} that $p^\t G^\phi e_{j^\st} = p^\t G^\phi e_{j'} \leq p^\t G^\phi e_j$, as desired.
\end{enumerate}

We next consider the best response oracle $\br^1$: fix some input $q \in \BR^N$ with $k^\st \not \in \supp(q)$, let $i = \br^2(q)$ denote the output of the oracle defined above, and set $v = \min_{i \in [N]} e_i^\t G^\phi q$ to be the value of player 1's best response to $q$. Choose some $i^\st \in [N]$ so that $e_{i^\st}^\t G^\phi q = v$. Since $k^\st \not \in \supp(q)$, we have, for all $i \in [N]$,
\begin{align}
e_i^\t G^\phi q = \sum_{j \in \supp(q) : \phi(j) > \phi(i)} q_j - \sum_{j \in \supp(q) : \phi(j) \leq \phi(i)} q_j \label{eq:egq}.
\end{align}
We consider the following cases regarding the value of $\phi(i^\st)$:
\begin{enumerate}
\item $\phi(i^\st) = \phi(j)$ for some $j \in \supp(q)$. Then since $e_i^\t G^\phi q$ only depends on $i$ through $\phi(i)$ (as is evident from \eqref{eq:egq}), it follows that $v = e_{i^\st}^\t G^\phi q = e_j^\t G^\phi q \geq e_i^\t G^\phi q$, as desired.         
\item $\phi(i^\st) < \min_{j \in \supp(q)}\{ \phi(j) \}$. It cannot be the case that $k_\st \in \supp(q)$ since then we would have $\phi(i^\st) < \phi(k_\st)$. Therefore, since $\phi$ is min-max consistent, there is some $i' \in \Gamma(\supp(q))$ so that $\phi(i') < \min_{j \in \supp(q)} \{ \phi(j) \}$. It follows that $e_{i^\st}^\t G^\phi q \leq e_{i'}^\t G^\phi q = e_{i^\st}^\t G^\phi q = v$, as desired.
\item Suppose the previous two cases do not hold. Choose $i' \in \supp(q)$ with $\phi(i')$ as large as possible so that $\phi(i') \leq \phi(i^\st)$. It is again evident from \eqref{eq:egq} that $e_{i^\st}^\t G^\phi q = e_{i'}^\t G^\phi q \geq e_i^\t G^\phi q$, as desired.
\end{enumerate}
\end{proof}

\begin{lemma}
\label{lem:nash-oracle-lb}
There is a constant $c_0 > 0$ so that the following holds. Fix any $N \in \BN$. Any randomized algorithm $\MA$ for approximating the equilibrium of $N \times N$ $\{-1,1\}$-valued zero-sum games with the oracles $\br^1, \br^2, \val$ cannot guarantee with probability greater than $2/3$ that the algorithm $\MA$'s output value is at most $1/4$ from the game's true value in time $c_0 \cdot \sqrt{N} / \log^3 N$. 
\end{lemma}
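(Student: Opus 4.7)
The plan is to reduce from Aldous' problem in the form of Corollary \ref{cor:aa}: assume for contradiction that $\MA$ is a randomized algorithm using $\val, \br^1, \br^2$ that approximates the Nash value of every $N \times N$ zero-sum $\{-1,1\}$-valued game to within $1/4$ with probability at least $2/3$ in time $t := c_0 \sqrt{N}/\log^3 N$. A standard median-of-$O(1)$-runs amplification will let us boost this success probability to at least $1 - \delta$ for any constant $\delta > 0$ of our choosing, at the cost of a constant-factor blow-up in $c_0$. From such an $\MA$ we will construct a randomized algorithm $\MA'$ for Aldous' problem on min-max consistent functions $\phi : \{0,1\}^d \to \BZ$ (where $d = \log_2 N$) whose $\phi$-query complexity is $O(dt) = O(\sqrt{N}/\log^2 N)$; choosing $c_0$ smaller than the constant $c'$ of Corollary \ref{cor:aa} will then yield the desired contradiction.

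\paragraph{The reduction.}
Given $\phi$, $\MA'$ will form the game $G^\phi$ of \eqref{eq:gf-game}, whose Nash value is $\lambda(\phi(k^\st)) \in \{-1,+1\}$ by Lemma \ref{lem:gf-lambda}, and then run $\MA$ on $G^\phi$, intercepting each oracle call and answering it with $\phi$-queries; finally it will output $+1$ if $\MA$'s approximation is positive (``odd'') and $-1$ otherwise (``even''). To answer $\val(i,j)$, we read $\phi(i), \phi(j)$ and return the value of $G^\phi_{ij}$ prescribed by \eqref{eq:gf-game} under the (pretended) assumption that neither argument is a local maximum. To answer $\br^1(q)$ (resp.\ $\br^2(p)$), we compute $e_i^\t G^\phi q$ (resp.\ $p^\t G^\phi e_j$) for each $i \in \Gamma(\supp(q))$ (resp.\ $j \in \Gamma(\supp(p))$) from the $\phi$-values on $\supp \cup \Gamma(\supp)$, and return the argmin (resp.\ argmax); in other words, we always use the first branch of \eqref{eq:br1}/\eqref{eq:br2}. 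All $\phi$-values are cached so each vertex is queried at most once. Because each support atom is written and each $\val$-call is made in unit time, the set $\mathcal{T}$ of vertices ever appearing in a support or as an argument of $\val$ satisfies $|\mathcal{T}| \leq O(t)$, so the total number of distinct $\phi$-queries is bounded by $|\mathcal{T} \cup \Gamma(\mathcal{T})| \leq O(dt) = O(\sqrt{N}/\log^2 N)$, as required.

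\paragraph{Correctness and the main obstacle.}
The shortcut simulation is faithful on the event $\{k^\st \notin \mathcal{T}\}$: if $k^\st$ never appears in any support passed to a best-response oracle, the first branch of \eqref{eq:br1}/\eqref{eq:br2} really is the correct one, and the ``both-local-maxima'' case of \eqref{eq:gf-game} cannot arise since it requires $i = j = k^\st$. To ensure this event has probability close to one, we will run the reduction against the hard distribution $\mathcal{D}$ underlying the Yao-style proof of Corollary \ref{cor:aa}, constructed (e.g.\ by applying a uniformly random hypercube symmetry to a fixed min-max consistent template such as $v \mapsto d - \|v - v_0\|_1$) so that $k^\st$ is essentially uniform over $[N]$; a routine hitting-set argument applied to $\MA'$'s decision tree of $\phi$-queries then shows $\Pr_{\phi \sim \mathcal{D}, r}[k^\st \in \mathcal{T}] = O(dt/N) = o(1)$. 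Combining, $\Pr_{\phi \sim \mathcal{D}, r}[\MA'\text{ correct}] \geq (1-\delta)(1-o(1)) > 2/3$ for small enough $\delta$; on the other hand, by Corollary \ref{cor:aa} applied to $\MA'$ (whose $O(dt)$ queries lie below the bound $c'\sqrt{N}/d^2$ once $c_0$ is chosen small) together with Yao's minimax principle, we must have $\Pr_{\phi \sim \mathcal{D}, r}[\MA'\text{ correct}] \leq 2/3$, yielding the contradiction. The main obstacle will be this correctness step: because $\mathcal{T}$ is built adaptively from oracle responses that themselves depend on $\phi$ (and hence on $k^\st$), controlling its dependence on $k^\st$ via the structure of $\mathcal{D}$ and the limited information leaked by each $\pm 1$- or index-valued oracle response is delicate and is the part of the proof most sensitive to the negative real weights allowed in $p,q$.
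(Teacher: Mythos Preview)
Your overall reduction---embed Aldous' problem in the game $G^\phi$, simulate $\MA$'s oracle calls using $\phi$-queries on $\Gamma(\supp)$, and count $O(d\,t)$ total $\phi$-queries---is the same as the paper's. The gap is in the correctness step, exactly where you flagged it.

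The probabilistic route you outline does not go through as written. First, the template $\phi(v)=d-\|v-v_0\|_1$ (even after a random hypercube symmetry) is not a hard instance: hill-climbing finds $v_0$ in $O(d)$ $\phi$-queries, so against this $\mathcal D$ the algorithm \emph{will} place $k^\st$ in $\mathcal T$. Second, the ``routine hitting-set'' bound $\Pr[k^\st\in\mathcal T]=O(dt/N)$ is a non-adaptive union bound; $\mathcal T$ is adaptive (each $\br$ response steers subsequent atoms), so this step needs a real argument tied to an actual hard distribution for Aldous' problem, and even then the parity lower bound of Corollary~\ref{cor:aa} only yields $\Pr[k^\st\in\mathcal T]\le 1/3$, which by itself is not small enough to push the success probability strictly above $2/3$ via your decomposition.

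The paper sidesteps all of this with a one-line observation you may be missing: since answering $\br^1(q)$ already requires querying $\phi$ on all of $\Gamma(\supp(q))$, one can \emph{check for free} whether any vertex in $\supp(q)$ is a local maximum (i.e., whether $k^\st\in\supp(q)$). If so, output the parity of $\phi(k^\st)$ and halt early; otherwise the first branch of \eqref{eq:br1} is provably the correct response and the simulation remains faithful. The same goes for $\br^2$, and $\val(i,j)$ is handled by additionally querying the neighbors when $i=j$. With this check, the reduced algorithm is correct for \emph{every} min-max consistent $\phi$ with the same probability $>2/3$ that $\MA$ enjoys---no distribution over $\phi$, no Yao argument, no concern about adaptivity or negative weights. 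In effect, the event you wanted to make rare is turned into an automatic win.
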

\begin{proof}
  Fix any $N \in \BN$. At the cost of a constant factor (and by a standard padding argument) we may assume that $N$ is a power of 2. 
Let $\MA$ be an algorithm as in the theorem statement, and suppose for the purpose of contradiction that with probability greater than $2/3$, for any $N \times N$, $\{-1,1\}$-valued zero-sum game, $\MA$'s output value is at most $1/4$ away from the game's value and $\MA$ runs in time $c_0 \cdot \sqrt{N} / \log^{3}N$ for some constant $c_0 > 0$.

We use the algorithm $\MA$ to derive a contradiction to Corollary \ref{cor:aa}. Accordingly, let $\phi : [N] \ra \BZ$ be a min-max consistent function to which we can make black-box value queries. Consider the $N \times N$ $\{-1,1\}$-valued game $G^\phi$ defined in \eqref{eq:gf-game}. We run algorithm $\MA$ on the game $G^\phi$, simulating the oracles $\val, \br^1, \br^2$ as follows:
\begin{itemize}
\item The value oracle $\val(i,j)$ can be simulated using at most $\log(N) + 1$ queries to $\phi$ (namely, to $\phi(i)$ and $\phi(j)$, as well as, in the case that $i = j$, to all neighbors of $i$ to check whether it is a local maximum).
\item Fix some $q \in \BR^N$, and set $m_q := |\supp(q)|$; the best response oracle $\br^1(q)$ may be simulated as follows:
\begin{enumerate}
\item \label{it:phi-j-query} Query the value of $\phi(j)$ for all $j \in \Gamma(\supp(q))$; this requires $m_q \cdot (\log(N) + 1)$ oracle queries to $\phi$. 
\item By comparing, for each $j \in \supp(q)$, the value of $\phi(j)$ to the value of $\phi(j')$ for each neighbor $j'$ of $j$ (all of which were queried in the previous step), we may check if $k^\st \in \supp(q)$. 
\item If $k^\st \in \supp(q)$, then output the parity of $\phi(k^\st)$ and terminate the algorithm early.
\item Otherwise, if $k^\st \not \in \supp(q)$, then using the queried values of $\phi(j)$, $j \in \Gamma(\supp(q))$, we may compute $\br^1(q)$ per \eqref{eq:br1} -- here we use that $\argmin_{i \in \Gamma(\supp(q))} \{ e_i^\t G^\phi q\}$ may be computed entirely from the values of $\phi(j)$ for $j \in \Gamma(\supp(q))$. 
\end{enumerate}
\item For $p \in \BR^N$ and $m_p := |\supp(p)|$, the best response oracle $\br^2(p)$ may be simualted analogously to above, using at most $m_p \cdot (\log(N)+1)$ oracle queries to $\phi$. 
\end{itemize}
If none of the calls to $\br^1, \br^2$ terminates early, then given the output $\hat v \in \BR$ of the algorithm $\MA$, we simply output the sign of $\hat v$.

Write $\phi^\st = \max_{j \in [N]} \{ \phi (j) \}$. We claim that the resulting algorithm described above outputs with probability at least 2/3, $-1$ if $\phi^\st$ is even and $1$ if $\phi^\st$ is odd. To see this, we argue as follows: with probability 2/3 over the randomness of the algorithm $\MA$, one of the following must occur:
\begin{itemize}
\item Some call to either $\br^1, \br^2$ causes the algorithm to terminate early, in which case it is clear that the algorithm correctly outputs the parity of the maximum value of $\phi$.
\item The output of the algorithm $\MA$ is within $1/4$ of the value of the game $G^\phi$, which we denote by $\lambda \in \{-1,1\}$. By Lemma \ref{lem:gf-lambda}, $\lambda$ is equal to $-1$ if $\phi^\st$ is even and $1$ if $\phi^\st$ is odd. Thus, the output of the algorithm $\MA$ is $-1$ if $\phi^\st$ is even and $1$ if $\phi^\st$ is odd.
\end{itemize}
Having verified correctness (with probability at least 2/3) of the algorithm above to find the parity of $\phi^\st$, we proceed to analyze its oracle cost. The algorithm $\MA$ is assumed to take time $c_0 \cdot \sqrt{N} / \log^3(N)$, for some sufficiently small constant $c_0$. 
Let us denote the number of oracle calls $\MA$ makes to $\val$ by $\omega_\val$; further, denote the total time consumed by all oracle calls $\MA$ makes to $\br^2(p)$ (including the oracle calls themselves and the time spent writing the input atoms $(i, p_i)$) by $\omega_{\br^2}$; define $\omega_{\br^1}$ similarly for the oracle $\br^1$. By the definition of our oracle model above, it holds that $\omega_\val + \omega_{\br^1} + \omega_{\br^2} \leq c_0 \cdot \sqrt{N} / \log^3(N)$. 

Since each call by $\MA$ to $\val$ makes at most $\log(N)+1$ oracle queries to $\phi$, the total number of oracle calls to $\phi$ as a result of calls to the $\val$ oracle by $\MA$ is bounded above by $(\log(N)+1) \cdot \omega_\val$. Similarly, since we can store the result of oracle calls to $\phi$ for previously used atoms $(i, p_i)$ or $(j, q_j)$ (in step \ref{it:phi-j-query} above), 
the total number of oracle calls to $\phi$ as a result of calls to the $\br^2$ oracle by $\MA$ is bounded above by $(\log(N)+1) \cdot \omega_{\br^2}$. Using similar reasoning for calls to $\br^1$, we get that the total number of oracle calls to $\phi$ in our algorithm above is at most
\begin{align}
(\log(N)+1) \cdot  \left( \omega_\val + \omega_{\br^1} + \omega_{\br^2}\right) \leq (\log(N)+1) \cdot c_0 \sqrt{N}/\log^3(N) < c' \cdot \sqrt{N} / \log^2(N)\nonumber,
\end{align}
where $c'$ is the constant of Corollary \ref{cor:aa} (as long as the constant $c_0$ is chosen sufficiently small). This is a contradiction to the conclusion of Corollary \ref{cor:aa}, thus completing the proof of Lemma \ref{lem:nash-oracle-lb}. 
\end{proof}

\subsection{Hardness of oracle-efficient proper no-regret learning}
\label{sec:oracle-hardness-proof}
In this section we use the oracle lower bounds for finding Nash equilibria in two-player zero-sum games to derive oracle lower bounds for no-regret online learning against a $\sigma$-smooth adversary.

In particular, we first prove Theorem \ref{thm:oracle-lb}, stated below with precise logarithmic factors. The proof is a standard reduction from finding Nash equilibria in two-player zero-sum games to no-regret learning \citep{freund1999adaptive}, but we provide the details for completeness:
\begin{customthm}{\ref{thm:oracle-lb}}[Restated, precise]
For some constant $c > 0$, we have the following:  fix any $T \in \BN$ and $\sigma \in (0,1]$. 
In the ERM oracle model, any randomized algorithm cannot guarantee expected regret  smaller than $\frac{T}{200}$ against a $\sigma$-smooth online adversary and any $\MF$ with $|\MF| \leq 1/\sigma$ over $T$ rounds in total time smaller than $c \cdot \frac{1/\sqrt{\sigma}}{\log^3 1/\sigma}$; further, this result holds even for binary-valued classes.
\end{customthm}

\begin{proof}[Theorem \ref{thm:oracle-lb}]
  Fix $T,\sigma$ as in the theorem statement; at the cost of a constant factor we may assume that $1/\sigma$ is an integer. 
  Suppose $\MA$ is an algorithm which guarantees expected regret smaller than $\frac{T}{200}$ against all $\sigma$-smooth adversaries in time $\leq c \cdot \frac{1/\sqrt{\sigma}}{\log^3 1/\sigma}$. By Markov's inequality, for any $\sigma$-smooth adversary, the regret of $\MA$ is bounded above by $\frac{T}{20}$ with probability at least $9/10$.

Set $N := 1/\sigma$, and consider any $N \times N$ $\{-1,1\}$-valued zero-sum game, represented by a game matrix $G \in \{-1,1\}^{N \times N}$, with entries $G_{f_1,f_2}$, $f_1, f_2 \in [N]$; as a matter of convention we suppose that the min-player chooses the first coordinate $f_1$ and the max-player chooses the second coordinate $f_2$. Now consider the following procedure for approximating the Nash equilibrium value of $G$ (we will show below how to implement the below using the oracles $\val, \br^1, \br^2$ introduced in the previous section):
  \begin{enumerate}
  \item Initialize instances $\MA_1, \MA_2$ of the algorithm $\MA$ given the time horizon $T$; for $\MA_1$ the function class is $\{ f_2 \mapsto G_{f_1, f_2} : f_1 \in [N]\}$, and for $\MA_2$ the function class is $\{ f_1 \mapsto G_{f_1,f_2} : f_2 \in [N]\}$. 
    The loss functions of the algorithms are given as follows:
    \begin{itemize}
    \item The loss function of $\MA_1$ is $\ell(\hat y, y) = \hat y$; thus $\MA_1$ incurs loss of $G_{f_1, f_2}$ for predicting $f_1$ when it observes $f_2$. 
    \item  The loss function of $\MA_2$ is $\ell(\hat y, y) = -\hat y$; thus $\MA_2$ incurs loss of $-G_{f_1, f_2}$ for predicting $f_2$ when it observes $f_1$.
    \end{itemize}
  \item For $t = 1, 2, \ldots, T$:
    \begin{enumerate}
    \item Let the algorithms $\MA_1, \MA_2$ produce (random) decisions $f_{1,t}, f_{2,t} \in [N]$, respectively.
    \item Update $\MA_1$ with the context $f_{2,t}$.
    \item Update $\MA_2$ with the context $f_{1,t}$.
    \end{enumerate}
  \item Define mixed strategies $\bar f_{i,T} := \frac 1T \sum_{t=1}^T f_{i,t}$ for $i = 1,2$.
  \item Output the value $\hat v := \bar f_{1,T}^\t G \bar f_{2,T} = \frac{1}{T^2} \cdot \sum_{t,s=1}^T G_{f_{1,t}, f_{2,s}}$.\label{it:compute-vhat}
  \end{enumerate}
  Note that we do not need to specify the labels $y_t$ for either algorithm $\MA_1, \MA_2$ above, since their loss functions do not depend on the true labels $y_t$. By the union bound, with probability at least $4/5$, the regret of both $\MA_1, \MA_2$ is bounded above by $T/20$; in particular, with probability at least $4/5$ we have:
  \begin{align}
\sum_{t=1}^T G_{f_{1,t}, f_{2,t}} - \min_{f_1\in [N]} \sum_{t=1}^T G_{f_1, f_{2,t}} \leq \frac{T}{20}, \qquad \max_{f_2 \in [N]} \sum_{t=1}^T G_{f_{1,t}, f_2} - \sum_{t=1}^T G_{f_{1,t}, f_{2,t}} \leq \frac{T}{20}\nonumber.
  \end{align}
  Adding the two preceding equations, we obtain
  \begin{align}
\max_{f_2 \in [N]} \bar f_{1,T}^\t G e_{f_2} - \min_{f_1 \in [N]} e_{f_1}^\t G \bar f_{2,T} \leq \frac{1}{10},\label{eq:value-gap}
  \end{align}
  where $e_f,\ f \in [N]$ denotes the unit vector corresponding to $f$. Set $\ep := 1/10$. Letting $(f_1^\st, f_2^\st)$ denote a Nash equilibrium of $G$ and $v^\st := (f_1^\st)^\t G f_2^\st$ denotes the value of $G$, we have
  \begin{align}
    & v^\st - \ep \leq \bar f_T^\t G f_2^\st - \ep \leq \max_{f_2 \in [N]} \bar f_{1,T}^\t G e_{f_2} - \ep \stackrel{\eqref{eq:value-gap}}{\leq}  \min_{f_1 \in [N]} e_{f_1}^\t G \bar f_{2,T} \leq \bar f_{1,T}^\t G \bar f_{2,T} \nonumber\\
    \leq & \max_{f_2 \in [N]} \bar f_{1,T}^\t G e_{f_2} \stackrel{\eqref{eq:value-gap}}{\leq} \min_{f_1 \in [N]} e_{f_1}^\t G \bar f_{2,T} + \ep \leq (f_1^\st)^\t G \bar f_{2,T} + \ep   \leq v^\st + \ep \nonumber.
  \end{align}
  Thus we have $| \hat v - v^\st| \leq \ep = 1/10$, meaning that the above procedure determines the game $G$'s value up to error $1/10$.

  We next analyze the time complexity of the above procedure, which involves showing how to implement it efficiently using the oracles $\br^1, \br^2, \val$:
  \begin{itemize}
  \item Each time $\MA_1$ makes an ERM oracle call of the form $\argmin_{f_1 \in [N]} \sum_{i=1}^m w_i \cdot \ell_i(G_{f_1, f_{2,i}}, y_i)$, we do the following: we may assume without loss of generality that all $f_{2,i}$ are distinct. Now write $\ell_{i,1} := \ell_i(1, y_i),\ \ell_{i,-1} := \ell_i(-1,y_i)$. This ERM call may be simulated by the oracle call $\br^1(q)$, where
$ 
q_{f_{2,i}} = w_i \cdot \frac{\ell_{i,1} - \ell_{i,-1}}{2}\nonumber
$
for all $i \in [m]$, and $q_{f_2} = 0$ for all other $f_2$.
\item Each time $\MA_2$ makes an ERM oracle call of the form $\argmin_{f_2 \in [N]} \sum_{i=1}^m w_i \cdot \ell_i(G_{f_{1,i}, f_2})$, we define $\ell_{i,1},\ell_{i,-1}$ as above and simulate it using the oracle call $\br^2(p)$ where $p_{f_{1,i}} = w_i \cdot \frac{\ell_{i,-1} - \ell_{i,1}}{2}$ for all $i \in [m]$ and $p_{f_1} = 0$ for all other $f_2$. 
\item It only remains to show how the estimation of $\hat v$ in step \ref{it:compute-vhat} can be implemented efficiently: for any $f_1, f_2 \in [N]$, the value $G_{f_1,f_2}$ can be queried with a single oracle call as $\val(f_1, f_2)$, so $\hat v$ may trivially be computed in time $O(T^2)$. We may in fact obtain a stronger bound as follows: fix $\delta > 0$ and a sufficiently large constant $C > 0$, and for $1 \leq j \leq C \log(1/\delta)$ sample i.i.d.~pairs $(i_j^1, i_j^2)$ uniformly from $[T] \times [T]$, and output $\hat v' := \frac{1}{C \log 1/\delta} \sum_{j=1}^{C \log 1/\delta} G_{f_{1,i_j^1}, f_{2, i_j^2}}$ tuples, for a total of $O(C \log 1/\delta)$ time (including the oracle calls to $\val$). By the Chernoff bound, we have that $| \hat v' - \hat v| \leq 1/100$ with probability $1-\delta$, as long as $C$ is sufficiently large. 
\end{itemize}
As long as $\delta$ in the third bullet above satisfies $\delta \leq 4/5 - 2/3$, we have established that there is an algorithm that with probability $2/3$ estimates the value $v^\st$ of $G$ up to accuracy of $1/9$. Furthermore, it is straightforward to see that implementing the oracle calls to $\br^1, \br^2, \val$ as described above only lead to a constant factor blowup in the total time. It is also evident that since the space of contexts for both $\MA_1, \MA_2$ is $[N]$, arbitrary adaptive adversaries (in particular, the adversaries faced by $\MA_1, \MA_2$ above) are $1/N$-smooth with respect to the uniform distribution on $[N]$. Thus, by the assumed time complexity upper bound of $\MA$, we have that the algorithm to estimate $v^\st$ runs in time $c' \cdot \frac{\sqrt{N}}{\log^3(N)}$ for some constant $c'$, which can be made arbitrarily small by choosing $c$ to be arbitrarily small. This contradicts Lemma  \ref{lem:nash-oracle-lb}. 

\end{proof}

Now we prove Corollary \ref{cor:stat-comp-gap} (restated below with precise logarithmic factors), which is a straightforward consequence of Theorem \ref{thm:oracle-lb}:
\begin{customcor}{\ref{cor:stat-comp-gap}}[Restated, precise]
  Fix any $\alpha \geq 1$, $\ep < 1/200, \sigma \in (0,1]$, and $d \geq \log 1/\sigma$. 
  Any algorithm whose total time in the ERM oracle model over $T$ rounds is bounded as $T^\alpha$ requires that $T \geq {\Omega}\left(\max \left\{ \frac{d}{\ep^2}, \frac{\sigma^{-{1}/({2\alpha})}}{\log^3 1/\sigma} \right\} \right)$ to achieve regret $\ep T$ for classes $\MF$ of VC dimension at most $d$ against a $\sigma$-smooth adversary.

  Furthermore, any algorithm which achieves regret $\ep T$ for classes of VC dimension at most $d$ against a $\sigma$-smooth adversary must have computation time at least $\Omega \left( \max \left\{ \frac{d}{\ep^2}, \frac{\sigma^{-{1}/2}}{\log^3 1/\sigma} \right\} \right)$. 
\end{customcor}
\begin{proof}[Corollary \ref{cor:stat-comp-gap}]
  Fix any $\ep < 1/200, \sigma \in (0,1]$, and $d \geq \log 1/\sigma$, as in the statement of the corollary. We begin by proving the first statement of the lemma. We consider the following cases:

  \textbf{Case 1.} $d / \ep^2 > \sigma^{- \frac{1}{2\alpha}}/\log^31/\sigma$. For any fixed distribution $Q$ on $\MX \times \{-1,1\}$, consider the i.i.d.~adversary which chooses $(x_t, y_t)$ according to $Q$ for each $t$. An online-to-batch reduction \citep{cesa2004generalization,shalev2011online} establishes that if an online algorithm can achieve expected regret at most $\ep T$, then there is an offline algorithm that achieves expected error at most $\ep$ given $T$ samples from $Q$. But \cite{vapnik1974theory} shows that for any binary function class $\MF$ with $\vc(\MF) = d$, no algorithm using only $c \cdot d/\ep^2$ samples (for a sufficiently small constant $c$) can achieve expected error at most $\ep$ for all distributions $Q$ whose $\MX$-marginal is uniform on a shattered set of $\MF$ of size $d$. Taking $\mu$ to be such a uniform marginal, we see that there is no online algorithm (regardless of oracle efficiency) that achieves regret $\leq \ep T$ against any 1-smooth adversary with respect to $\mu$ if $T < c \cdot d/\ep^2$.

  \textbf{Case 2.} $d / \ep^2 \leq \sigma^{-\frac{1}{2\alpha}}/\log^31/\sigma$.
  Consider any algorithm in the ERM oracle model, $\MA$, whose total computation time over $T$ rounds is bounded above by $T^\alpha$, and suppose that for some value of $T$, $\MA$ achieves regret at most $\ep T$ against a $\sigma$-smooth adversary for any class of VC dimension at most $d$. Since any class $\MF$ with $|\MF| \leq 1/\sigma$ must have $\vc(\MF) \leq \log 1/\sigma \leq d$, and since $\ep < 1/200$, by Theorem \ref{thm:oracle-lb}, we must have that $T^\alpha \geq \Omega \left( \frac{1/\sqrt{\sigma}}{\log^3 1/\sigma} \right)$. Thus $T \geq \Omega \left( \frac{\sigma^{-1/(2\alpha)}}{\log^3 1/\sigma} \right)$, as desired.

  The second statement of the corollary follows from the above casework by noting that, in Case 1, the computation time is at least the number of rounds $T \geq \Omega(d/\ep^2)$, and in Case 2, we get immediately from Theorem \ref{thm:oracle-lb} that the computation time is $\Omega(\sigma^{-1/2}/\log^3 1/\sigma)$. 
\end{proof}

\subsection{Lower bound on oracle calls for approximate ERM oracle}
One limitation of the lower bounds of Theorem \ref{thm:oracle-lb} and Corollary \ref{cor:stat-comp-gap} is that they only lower bound the total computation time in the ERM oracle model and thus, for instance, do not rule out an algorithm which makes a single ERM oracle call with a large number of points $(x_i, y_i)$. In this section we amend this issue, showing a lower bound on the number of ERM oracle calls that any proper online learning algorithm obtaining sublinear regret must make. To obtain this result, we have to slightly weaken the oracle, namely by working with the \emph{approximate ERM oracle model} (i.e., where we have $\erma > 0$ in Definition \ref{def:oracle}).

First, we need a slight variant of Lemma \ref{lem:nash-oracle-lb}, which establishes a lower bound on the number of oracle calls (which in general is less than computation time), but under the additional assumption that all oracle calls to $\br^1, \br^2$ are made with small-support vectors.
\begin{lemma}
  \label{lem:nash-apx-oracle-lb}
There is a constant $c_0 \in (0,1)$ so that the following holds. Fix $N, S \in \BN$. Any randomized algorithm $\MA$ for approximating the equilibrium of $N \times N$ $\{-1,1\}$-valued zero-sum games with the oracles $\val, \br^1, \br^2$ cannot guarantee with probability greater than $2/3$ that $\MA$'s output value is at most $1/4$ from the game's true value with fewer than $\frac{1}{S} \cdot c_0 \cdot \sqrt{N}/ \log^3 N$ oracle calls, assuming that each oracle call to $\br^1, \br^2$ is made on a vector of support at most $S$.  
\end{lemma}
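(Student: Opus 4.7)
The plan is to mirror the proof of Lemma \ref{lem:nash-oracle-lb} almost verbatim, with the only real change being that we account for \emph{number of oracle calls} rather than total computation time, using the support bound $S$ to convert one into the other. Given a hypothetical algorithm $\MA$ that approximates the value of any $N \times N$ $\{-1,1\}$-valued zero-sum game within $1/4$ with probability $\geq 2/3$ using at most $\tfrac{c_0}{S} \cdot \sqrt N / \log^3 N$ oracle calls (each to $\br^1, \br^2$ having support at most $S$), I will derive a contradiction to Corollary \ref{cor:aa}. Namely, given any min-max consistent $\phi : [N] \to \BZ$ (with $N$ a power of $2$ by a padding argument), I run $\MA$ on the game matrix $G^\phi$ defined in \eqref{eq:gf-game}, simulating the oracles $\val, \br^1, \br^2$ with black-box queries to $\phi$ exactly as in the proof of Lemma \ref{lem:nash-oracle-lb}. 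If any call to $\br^1, \br^2$ reveals (via checking neighbors) that $k^\star \in \supp(q) \cup \supp(p)$, we short-circuit and output the parity of $\phi(k^\star)$; otherwise we output the sign of $\MA$'s estimate $\hat v$, which by Lemma \ref{lem:gf-lambda} and the accuracy assumption correctly identifies the parity of $\phi^\star$ with probability at least $2/3$.

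The key counting step is then as follows. Each call to $\val(i,j)$ can be implemented with at most $\log(N)+1$ queries to $\phi$ (to read $\phi(i), \phi(j)$, and possibly all neighbors of $i$ when $i = j$ to check the local-maximum condition). Each call to $\br^1(q)$ with $|\supp(q)| \leq S$ can be implemented with at most $S \cdot (\log(N)+1)$ queries to $\phi$: we query $\phi(j)$ for every $j \in \Gamma(\supp(q))$, and from these values alone we can (i) detect whether $k^\star \in \supp(q)$, and (ii) when $k^\star \not\in \supp(q)$, compute the $\argmin$ in \eqref{eq:br1}. The same analysis applies to $\br^2$. Summing over the at most $T$ oracle calls made by $\MA$, the total number of $\phi$-queries is at most $T \cdot S \cdot (\log(N)+1)$.

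If $T \leq \tfrac{c_0}{S} \cdot \sqrt N / \log^3 N$ for a sufficiently small constant $c_0 > 0$, then the total number of queries to $\phi$ is bounded by $c' \sqrt N / \log^2 N$, where $c'$ is the constant of Corollary \ref{cor:aa}. Since the resulting algorithm correctly computes the parity of $\phi^\star = \max_i \phi(i)$ with probability $\geq 2/3$, this directly contradicts Corollary \ref{cor:aa}. The main (minor) subtlety to verify is that the simulations of $\br^1, \br^2$ are still correct when their inputs can have arbitrary real-valued (including negative) entries, but this follows immediately from the fact that \eqref{eq:br1}--\eqref{eq:br2} and the accompanying verification in the proof of Lemma \ref{lem:nash-oracle-lb} depend only on the values $\phi(j)$ for $j \in \Gamma(\supp(q))$ (or $\Gamma(\supp(p))$), independent of the sign pattern of the weights. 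No essentially new ideas beyond the original reduction are required, so I do not anticipate a significant obstacle.
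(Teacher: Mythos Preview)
Your proposal is correct and matches the paper's proof essentially verbatim: the paper also mirrors Lemma~\ref{lem:nash-oracle-lb}, observes that each $\br^1$/$\br^2$ call on a vector of support at most $S$ can be simulated with at most $S\cdot(\log(N)+1)$ queries to $\phi$, and then bounds the total number of $\phi$-queries by $S\cdot(\log(N)+1)\cdot\gamma < c'\sqrt{N}/\log^2 N$ to contradict Corollary~\ref{cor:aa}. Your additional remarks on correctness of the simulation for real-valued inputs and on handling $\val$ calls separately are fine but not needed beyond what the original lemma already establishes.
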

\begin{proof}
  The proof exactly mirrors that of Lemma \ref{lem:nash-oracle-lb}, with the exception of the analysis of how the oracles $\br^1, \br^2$ are simulated using oracle calls to the min-max consistent function $\phi : [N] \to \BZ$. In particular, for any $q \in \BR^N$, $\br^1(q)$ and $\br^2(p)$ may each be simulated using at most $S \cdot (\log(N)+ 1)$ oracle calls to $\phi$ assuming that $q, p$ have at most $S$ nonzero values. 

  Thus, if $\gamma \leq \frac{1}{S} c_0 \sqrt{N} /\log^3 N$ denotes the total number of oracle calls to $\val, \br^1, \br^2$, then the total number of oracle calls to $\phi$ is at most
  \begin{align}
S \cdot (\log(N)+1) \cdot \gamma \leq S \cdot (\log(N) +1) \cdot \frac{1}{S} \cdot c_0 \sqrt{N} /\log^3 N < c' \cdot \sqrt{N} / \log^2(N),\nonumber
  \end{align}
  where $c'$ is the constant of Corollary \ref{cor:aa} (as long as $c_0$ is chosen sufficiently small). This gives the desired contradiction to Corollary \ref{cor:aa}. 
\end{proof}

Given Lemma \ref{lem:nash-apx-oracle-lb} we may prove in a manner analogously to Theorem \ref{thm:oracle-lb} a lower bound on the number of oracle calls for any no-regret algorithm in the approximate ERM oracle model:
\begin{theorem}
  \label{thm:oracle-lb-apx}
  For some constant $c > 0$ we have the following: fix any $T \in \BN$, $\sigma,\erma \in (0,1]$. In the $\erma$-approximate ERM oracle model, any randomized algorithm cannot guarantee expected regret smaller than $\frac{T}{200}$ against a $\sigma$-smooth online adversary and any $\MF$ with $|\MF| \leq 1/\sigma$ over $T$ rounds in using fewer than $c\erma^2  \cdot \frac{1/\sqrt\sigma}{\log^4 1/\sigma}$ oracle calls; further, this result holds even for binary-valued classes.
\end{theorem}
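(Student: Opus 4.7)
The plan is to mirror the proof of Theorem \ref{thm:oracle-lb}, reducing from smoothed online learning to the problem of approximating the Nash equilibrium value of the game $G^{\phi}$ from Section \ref{sec:nash-br-oracle}, and then appealing to Lemma \ref{lem:nash-apx-oracle-lb} in place of Lemma \ref{lem:nash-oracle-lb}. The only substantive change is the way we simulate each ERM call made by the algorithm $\MA$ using the oracles $\val, \br^1, \br^2$: we must show that a single $\erma$-approximate ERM call can always be implemented by a single call to $\br^1$ or $\br^2$ on a vector of \emph{small support}, so that Lemma \ref{lem:nash-apx-oracle-lb} can be invoked with $S = \widetilde{O}(1/\erma^2)$.

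Fix $N = 1/\sigma$ and, as in the proof of Theorem \ref{thm:oracle-lb}, set up two copies $\MA_1, \MA_2$ of the hypothesized algorithm playing against each other on the game matrix $G^{\phi}$. Suppose $\MA_1$ makes an $\erma$-approximate ERM call with data $(f_{2,i}, y_i, w_i, \ell_i)$, $i \in [m]$; following the proof of Theorem \ref{thm:oracle-lb} this call corresponds, in exact form, to $\br^1(q)$ for the signed vector $q_{f_{2,i}} = w_i \cdot (\ell_{i,1} - \ell_{i,-1})/2$. Crucially, because the oracle is $\erma$-approximate, we need only return an $f_1$ satisfying $e_{f_1}^{\top} G^{\phi} q \leq \min_{f_1'} e_{f_1'}^{\top} G^{\phi} q + \erma \cdot \sum_i |w_i|$; equivalently, after rescaling, we need an $\erma$-approximate best response to the signed probability measure $\tilde q = q/\|q\|_1$. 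The key step is a standard importance-sampling lemma: sample $S = O(\erma^{-2} \log(N/\delta))$ coordinates i.i.d.\ from $|\tilde q|$, form the empirical signed measure $\hat q$, and return $\br^1(\hat q)$. A Hoeffding and union bound over the at most $N = 1/\sigma$ pure strategies of $G^{\phi}$ shows that with probability at least $1-\delta$, $\br^1(\hat q)$ is an $\erma$-approximate best response to $\tilde q$, and hence yields a valid $\erma$-approximate ERM answer. Taking $\delta = 1/(10T)$ and union-bounding over all $O(T)$ oracle calls ensures the simulation is faithful with probability $9/10$, while keeping $S = O(\erma^{-2} \log(T/\sigma))$. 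Calls made by $\MA_2$ are handled symmetrically via $\br^2$, and $\val$ queries to evaluate $\hat v$ need no modification.

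Assembling the reduction as in the proof of Theorem \ref{thm:oracle-lb}: if $\MA$ achieves expected regret $\leq T/200$ with $\gamma$ oracle calls against any $\sigma$-smooth adversary and any binary class of size $\leq 1/\sigma$, then the simulated Nash approximation procedure outputs a value within $1/4$ of the true value of $G^\phi$ with probability at least $2/3$, while using $O(\gamma)$ calls to $\val, \br^1, \br^2$, each of which has support at most $S = O(\erma^{-2}\log(T/\sigma))$. Applying Lemma \ref{lem:nash-apx-oracle-lb} with this $S$ and $N = 1/\sigma$ forces
\begin{equation}
\gamma \;\geq\; \Omega\!\left(\frac{1}{S} \cdot \frac{\sqrt{N}}{\log^3 N}\right) \;=\; \Omega\!\left(\frac{\erma^2 \cdot \sigma^{-1/2}}{\log^3(1/\sigma)\cdot \log(T/\sigma)}\right),
\end{equation}
which, after absorbing the extra $\log T$ factor into a single $\log^4(1/\sigma)$ (valid since any meaningful regret bound requires $T = \mathrm{poly}(1/\sigma)$), gives the claimed lower bound $c \erma^2 \cdot (1/\sqrt\sigma)/\log^4(1/\sigma)$.

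The main obstacle will be verifying the importance-sampling step with enough care: in particular, ensuring that the sign structure of $q$ and the weighted losses are preserved under subsampling, that the $\sum_i |w_i|$ scaling in the definition of an $\erma$-approximate ERM oracle matches the $\erma$ used in the Hoeffding bound, and that the union bound over the $N$ pure strategies (rather than an infinite function class) suffices because $G^{\phi}$ only has $N$ distinct rows and columns. Everything else is bookkeeping parallel to the proof of Theorem \ref{thm:oracle-lb}.
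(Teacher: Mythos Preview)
Your approach is essentially identical to the paper's: both reduce to Lemma \ref{lem:nash-apx-oracle-lb} by simulating each $\erma$-approximate ERM call via importance sampling (draw $S = O(\erma^{-2}\log(1/\delta))$ indices from $|w|/\|w\|_1$, form the empirical signed vector, and pass it to $\br^1$ or $\br^2$), then verify the result is a valid $\erma$-approximate ERM answer via a Hoeffding/Chernoff bound and a union bound over the $N$ pure strategies.

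The only substantive discrepancy is in your choice of $\delta$ and the set you union bound over. You take $\delta = 1/(10T)$ and union bound over ``all $O(T)$ oracle calls,'' which leaves you with $S = O(\erma^{-2}\log(T/\sigma))$ and a residual $\log T$ factor that you then try to absorb by asserting $T = \mathrm{poly}(1/\sigma)$. That last assertion is not justified: the theorem is stated for arbitrary $T \in \BN$. The paper avoids this by observing that the number of ERM calls to union bound over is not $T$ but the total number of oracle calls $\gamma$, which by the contradiction hypothesis is at most $c\erma^2 \sigma^{-1/2}/\log^4(1/\sigma) \leq N$. Hence one may take $\delta = \Theta(1/N^2)$, giving $S = O(\erma^{-2}\log N)$ with no $T$-dependence, and the final inequality $\gamma \geq c_0 \sqrt{N}/(S \log^3 N)$ yields exactly the stated bound. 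With this one correction your argument goes through.
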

\begin{proof}
  We use the notation from the proof of Theorem \ref{thm:oracle-lb}. 
  The proof exactly follows that of Theorem \ref{thm:oracle-lb}, except for how the ERM oracle calls are simulated. To describe this difference, recall the definition of $N = 1/\sigma$ to denote the size of the given game $G$, set $\delta = \frac{1}{100 N^2}$, and write $S := \frac{C \log 1/\delta}{\erma^2}$. Then the $\erma$-approximate ERM oracles are simulated as follows:
  \begin{itemize}
  \item To make an ERM oracle call of the form
    \begin{align}
      \label{eq:f1-oracle-ex}
      \argmin_{f_1 \in [N]} \sum_{i=1}^m w_i \cdot \ell_i(G_{f_1,f_{2,i}}, y_i),
    \end{align}
    we do the following:
    \begin{enumerate}
    \item Draw $S$ i.i.d.~samples $i_1, \ldots, i_S$ from the distribution over $[m]$ whose mass at $i$ is proporitional to $|w_i|$.
    \item Use the procedure as in the proof of Theorem \ref{thm:oracle-lb} to make the ERM oracle call
      \begin{align}\label{eq:erm-sampled}
\argmin_{f_1 \in [N]} \sum_{j=1}^S \mathrm{sign}(w_{i_j}) \cdot \ell_{i_j}(G_{f_1, f_{2,i_j}}, y_{i_j}).\end{align}
      Notice that this will lead to an oracle call $\br^1(q)$ for some distribution $q$ of support size at most $S$.
    \end{enumerate}
  \item We perform the same sampling procedure for an ERM oracle call of the form $\argmin_{f_2 \in [N]} \sum_{i=1}^m w_i \cdot \ell_i(G_{f_{1,i}, f_2}, y_i)$, which leads to an oracle call $\br^2(p)$ for some distribution $p$ of support size at most $S$. 
  \end{itemize}
  We claim that each such oracle call of the above form, with probability at least $1-N \cdot \delta$, satisfies the requirement of $\erma$-approximate ERM oracle. To establish this, we simply note that by the Chernoff bound and union bound, with probability $1-N \cdot \delta$, we have, for each oracle call of the form \eqref{eq:f1-oracle-ex}, for $W := \sum_{i=1}^m |w_i|$,
  \begin{align}
     & \sup_{f_1 \in [N]} \left| \sum_{i=1}^m \frac{w_i}{W} \cdot \ell_i(G_{f_1,f_{2,i}}, y_i) - \frac 1S \sum_{j=1}^S  \mathrm{sign}(w_{i_j}) \cdot \ell_{i_j}(G_{f_1,f_{2,i_j}}, y_{i_j}) \right| \leq \erma,\nonumber
  \end{align}
  which implies that the result of \eqref{eq:erm-sampled} returns some $\hat f_1$ which is ${\erma}{W}$ within the minimum of \eqref{eq:f1-oracle-ex}. A similar argument applies to the ERM oracle calls taking a minimum over $f_2 \in [N]$.

  Since the total number of oracle calls of each of the algorithms $\MA_1, \MA_2$ in the proof of Theorem \ref{thm:oracle-lb} is at most $c \cdot \frac{1/\sigma}{\log^41/\sigma} \leq N$, we have that with probability $1-2N^2 \delta$, all oracle calls simulated as above are actually $\erma$-approximate ERM oracle calls. By the assumption of the theorem statement, it follows that with probability at least $2/3$ we can approximate the value of $G$ up to accuracy of $1/4$. Further, the number of oracle calls made to $\val, \br^1, \br^2$ is at most
  \begin{align}
c\erma^2  \cdot \frac{1/\sqrt \sigma}{\log^41/\sigma} \leq \frac{1}{S} \cdot c_0 \frac{\sqrt{N}}{\log^3 N},
  \end{align}
  where $c_0$ is the constant of Lemma \ref{lem:nash-apx-oracle-lb} (as long as $c$ is sufficiently small), and each to $\br^1, \br^2$ is with a vector that has support size at most $S$. But this contradicts the statement of Lemma \ref{lem:nash-apx-oracle-lb}, completing the proof. 
\end{proof}

Finally, as a corollary of Theorem \ref{thm:oracle-lb-apx}, we have the following analogue of Corollary \ref{cor:stat-comp-gap}, which shows a regret lower bound for any algorithm which makes polynomially many oracle queries to an oracle whose accuracy is an inverse polynomial. Notice that the upper bound of Theorem \ref{thm:generalftpl} obtains a regret bound under a $\erma$-approximate oracle that is the same as that under an exact oracle (up to a constant factor), as long as $\erma < o \left( \frac{1}{T^2 \log T} \right)$; thus the assumption of $1/T^\alpha$-approximate oracle (for $\alpha$ constant) in the below corollary seems very reasonable. 
\begin{corollary}
  \label{cor:sc-gap-apx}
  Fix any $\alpha \geq 1, \ep < 1/200, \sigma \in (0,1]$, and $d \geq \log 1/\sigma$. Any algorithm making at most $T^\alpha$ oracle calls over $T$ rounds to a $1/T^\alpha$-approximate oracle requires that $T \geq \widetilde{\Omega}\left( \max \left\{ \frac{d}{\ep^2}, \sigma^{- \frac{1}{6\alpha}} \right\} \right)$ to achieve regret $\ep T$ for classes $\MF$ of VC dimension $d$ against a $\sigma$-smooth adversary.
\end{corollary}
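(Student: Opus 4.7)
The plan is to mirror the two-case argument used to prove Corollary \ref{cor:stat-comp-gap}, substituting Theorem \ref{thm:oracle-lb-apx} in place of Theorem \ref{thm:oracle-lb} for the oracle-complexity case.

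Concretely, fix $\alpha, \ep, \sigma, d$ as in the statement and consider an algorithm $\MA$ making at most $T^\alpha$ calls to a $\zeta$-approximate ERM oracle with $\zeta = 1/T^\alpha$. In the first case, where $d/\ep^2 \geq \sigma^{-1/(6\alpha)}$ (up to logarithmic factors), I would appeal to the standard statistical lower bound: by online-to-batch conversion \citep{cesa2004generalization}, any algorithm achieving regret $\ep T$ against the i.i.d.~adversary that picks $(x_t,y_t)$ from a fixed distribution $Q$ whose $\MX$-marginal is uniform on a shattered set of size $d$ (which is $1$-smooth with respect to that uniform measure) would yield an offline learner for $\MF$ with expected error $\ep$ from $T$ samples. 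The classical VC lower bound \citep{vapnik1974theory} then forces $T \geq \Omega(d/\ep^2)$, regardless of oracle complexity.

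In the second case, where $\sigma^{-1/(6\alpha)} > d/\ep^2$, I apply Theorem \ref{thm:oracle-lb-apx} directly to any binary class $\MF$ with $|\MF| \leq 1/\sigma$ (which satisfies $\vc(\MF) \leq \log(1/\sigma) \leq d$). Since $\ep < 1/200$, an algorithm achieving regret $\ep T$ also achieves regret $< T/200$, so the theorem's hypothesis applies: the number of oracle calls required is at least
\begin{equation}
c \cdot \zeta^2 \cdot \frac{\sigma^{-1/2}}{\log^4 (1/\sigma)} = c \cdot \frac{\sigma^{-1/2}}{T^{2\alpha}\log^4(1/\sigma)}.
\end{equation}
Combining this lower bound with the hypothesis that $\MA$ uses at most $T^\alpha$ oracle calls yields
\begin{equation}
T^\alpha \geq c \cdot \frac{\sigma^{-1/2}}{T^{2\alpha}\log^4(1/\sigma)}, \qquad \text{i.e.,} \qquad T^{3\alpha} \geq \widetilde{\Omega}\bigl(\sigma^{-1/2}\bigr),
\end{equation}
so $T \geq \widetilde{\Omega}(\sigma^{-1/(6\alpha)})$, as desired. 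Taking the maximum of the two cases gives the claimed bound.

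There is no real obstacle here; the only thing to double-check is that Theorem \ref{thm:oracle-lb-apx} is applicable in the stated regime, namely that $\zeta = 1/T^\alpha \in (0,1]$ (trivially true for $T \geq 1$) and that the binary class of size $1/\sigma$ used there has VC dimension at most $d$ (which holds since $d \geq \log(1/\sigma)$). The $\widetilde{\Omega}$ notation absorbs the $\log^4(1/\sigma)$ factor raised to the $1/(3\alpha)$ power, so the final bound reads $T \geq \widetilde{\Omega}(\max\{d/\ep^2, \sigma^{-1/(6\alpha)}\})$ exactly as stated.
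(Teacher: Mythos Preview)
Your proposal is correct and essentially identical to the paper's proof: the paper also splits into two cases (referring back to Corollary \ref{cor:stat-comp-gap} for the $d/\ep^2$-dominant case via online-to-batch plus the VC lower bound) and, in the oracle-complexity case, invokes Theorem \ref{thm:oracle-lb-apx} with $\erma = 1/T^\alpha$ to get $T^\alpha \geq \Omega(T^{-2\alpha} \cdot \sigma^{-1/2}/\log^4(1/\sigma))$ and hence $T \geq \widetilde{\Omega}(\sigma^{-1/(6\alpha)})$. Your added sanity checks (that $\erma \in (0,1]$ and that $\vc(\MF) \leq \log(1/\sigma) \leq d$) are appropriate and match the reasoning the paper uses implicitly.
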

\begin{proof}
  The proof is identical to that of Corollary \ref{cor:stat-comp-gap} for $\frac{d}{\ep^2} > \frac{\sigma^{-1/(6\alpha)}}{\log^4 1/\sigma}$. 

  For $\frac{d}{\ep^2} \leq \frac{\sigma^{-1/(6\alpha)}}{\log^4 1/\sigma}$, we note that any algorithm making $T^\alpha$ oracle calls to a $1/T^\alpha$-approximate ERM oracle over $T$ rounds, which achieves regret at most $\ep T$ against a $\sigma$-smooth adversary must, by Theorem \ref{thm:oracle-lb-apx}, have $T^\alpha \geq \Omega \left( T^{-2\alpha} \cdot \frac{1/\sqrt \sigma}{\log^4 1/\sigma} \right)$, i.e., $T \geq \Omega \left( \frac{\sigma^{-\frac{1}{6\alpha}}}{\log^4 1/\sigma} \right)$. 
\end{proof}


\section{Proof of Theorem \ref{thm:cbregret}}\label{app:cbregret}
We first prove a basic lemma about how smoothness behaves with product distributions.
\begin{lemma}\label{lem:smoothproduct}
    Suppose that $p$ is $\sigma$-smooth with respect to $\mu$ on $\cX$ and for any $x \in \cX$, $p_x' = p'(\cdot | x)$ is $\sigma'$-smooth with respect to $\mu'$ on $\cX'$.  Then $q(x, a) = p(x) p'(a | x)$ is $\sigma \sigma'$-smooth with respect to $\mu \otimes \mu'$.
\end{lemma}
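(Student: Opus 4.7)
The plan is to compute the Radon--Nikodym derivative of the joint distribution $q$ with respect to the product measure $\mu \otimes \mu'$ and show it is bounded pointwise by $1/(\sigma\sigma')$. First I would note that since $p \ll \mu$ and $p'_x \ll \mu'$ for every $x$, the joint distribution $q(x,a) = p(x) p'(a\mid x)$ satisfies $q \ll \mu \otimes \mu'$, so the Radon--Nikodym derivative exists.

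The key step is the factorization
\begin{equation}
\frac{dq}{d(\mu \otimes \mu')}(x,a) = \frac{dp}{d\mu}(x) \cdot \frac{dp'_x}{d\mu'}(a),
\end{equation}
which follows by verifying that the right-hand side integrates against any product-indicator $\chi_A(x)\chi_B(a)$ correctly: integrating against $\mu \otimes \mu'$ and using Fubini yields $\int_A \left(\int_B \frac{dp'_x}{d\mu'}(a)\, d\mu'(a)\right) \frac{dp}{d\mu}(x)\, d\mu(x) = \int_A p'_x(B)\, dp(x) = q(A \times B)$. By a standard monotone class / $\pi$-$\lambda$ argument, this determines the Radon--Nikodym derivative on the product $\sigma$-algebra.

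Finally, I would apply the two smoothness hypotheses pointwise: $\frac{dp}{d\mu}(x) \leq 1/\sigma$ almost surely in $\mu$, and for each $x$, $\frac{dp'_x}{d\mu'}(a) \leq 1/\sigma'$ almost surely in $\mu'$. Multiplying gives
\begin{equation}
\esssup_{(x,a)} \frac{dq}{d(\mu \otimes \mu')}(x,a) \leq \frac{1}{\sigma \sigma'},
\end{equation}
which is the desired conclusion. There is no serious obstacle here; the only mild subtlety is justifying the product-form factorization of the Radon--Nikodym derivative, but this is standard measure theory (essentially the disintegration / Fubini construction of the joint law from a marginal and a Markov kernel).
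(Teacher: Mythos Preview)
Your proposal is correct and essentially follows the same approach as the paper: both arguments reduce to computing $q$ on product sets via Fubini and applying the two smoothness bounds in turn. The only cosmetic difference is that the paper works directly at the level of measures of rectangles (bounding $q(A\times A') \leq \frac{1}{\sigma\sigma'}(\mu\otimes\mu')(A\times A')$), whereas you phrase the same computation in terms of the factored Radon--Nikodym derivative; your version is a bit more explicit about the measure-theoretic justification.
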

\begin{proof}
    Let $A \subset \cX$ and $A' \subset\cX'$ be measurable.  Then
    \begin{align}
        q(A \times A') = \ee_{x \sim p}\left[p'(A' | x)\chi_{x \in A}\right] \leq \left(\frac{1}{\sigma'} \mu'(A')\right)p(A) \leq \frac{1}{\sigma \sigma'} \mu(A) \otimes \mu(A')
    \end{align}
    The result follows.
\end{proof}
Note that any distribution on $[K]$ is $\frac 1K$-smooth with respect to $\text{Unif}([K])$.  Thus, by Lemma \ref{lem:smoothproduct}, independent of how $a_t$ is chosen, we may assume that $(x_t, a_t)$ is sampled from a distribution that is $\frac{\sigma}{K}$-smooth with respect to $\mu \otimes \text{Unif}([K])$.  Define the random quantity
\begin{equation}
    \reg_{Sq}(T) = \sum_{t = 1}^T (\yhat_t - \ell_t(a_t))^2 - \inf_{f \in \F} \sum_{t = 1}^T (f(x_t, a_t) - \ell_t(a_t))^2
\end{equation}
Then, by \cite[Theorem 1]{foster2020beyond}, with probability at least $1 - \delta$ over the randomization over actions, if we run \textsf{SquareCB} with parameter $\gamma$, we have
\begin{equation}
    \reg_{CB}(T) \leq \frac{\gamma}{2} \reg_{Sq}(T) + 4 \gamma \log\left(\frac 2\delta\right) + \frac{2 KT}{\gamma} + \sqrt{2T \log\left(\frac{2}{\delta}\right)}
\end{equation}
Setting $\delta = \frac 1T$ and noting that the regret is always at most $T$, we have
\begin{equation}\label{eq:conbanditregret}
    \ee\left[\reg_{CB}(T)\right] \leq \frac{\gamma}{2} \ee\left[\reg_{Sq}(T)\right] + 4 \gamma \log (2 T) + \frac{2 KT}{\gamma} + \sqrt{2 T \log \left(2 T\right)} + 1
\end{equation}
If we set $\yhat_t$ to be the prediction given by the relaxation-based algorithm from \eqref{eq:fastrelaxation}, setting $k = \frac{3K}{\sigma} \log T$ then we know that
\begin{equation}
    \ee\left[\reg_{Sq}(T)\right] \leq \frac{7 L K \log T}{\sigma}  \R_T(\F)
\end{equation}
can be achieved with $O\left(T^{\frac 32} \log T \right)$ calls to the ERM oracle.  Letting
\begin{equation}
    \gamma = 12 \log (T) \sqrt{\frac{T \sigma }{L \R_T(\F)}}
\end{equation}
concludes the proof after noting that we may take $L = 2$ for the square loss in the range $[0,1]$.

If we instead use the FTPL algorithm of Theorem \ref{thm:generalftpl}, then $\vc(\F, \alpha) \lesssim \alpha^{- p}$ implies
\begin{equation}
    \ee\left[\reg_{Sq}(T)\right] \leq \widetilde{O}\left( \left(\frac{T\sqrt{K}}{\sqrt{\sigma}}\right)^{\max\left(1 - \frac{1}{3(p - 1)}, \frac{2}{3}\right)} \right)
\end{equation}
Plugging into \eqref{eq:conbanditregret} and minimizing over $\gamma$ yields a regret of
\begin{equation}
    \ee\left[\reg_{CB}(T)\right] \leq \widetilde{O}\left(T^{\max \left(1 - \frac{1}{6(p-1)}, \frac 56\right)} K^{\max\left(\frac 34 - \frac{1}{12 (p - 1)}  , \frac 23\right)} \sigma^{- \frac 14} \right)
\end{equation}

Note that for any $p < \infty$, this is $o(T)$ and so the result holds.


\end{document}